\algnewcommand\algorithmicinput{\textbf{Input:}}
\algnewcommand\algorithmicoutput{\textbf{Output:}}
\algnewcommand\Input{\item[\algorithmicinput]}
\algnewcommand\Output{\item[\algorithmicoutput]}
\DeclareOldFontCommand{\bf}{\normalfont\bfseries}{\mathbf}
\newcommand{\interp}{\mathcal{I}}     
\newcommand{\interpDom}{\Delta^\mathcal{I}}    
\newcommand{\interpMap}{\cdot^{\mathcal{I}}}  
\newcommand{\interpw}{\mathcal{I}_w}     
\newcommand{\interpDomw}{\Delta^{\interpw}}    
\newcommand{\interpMapw}{\cdot^{\interpw}}  
\newcommand{\abox}{\mathcal{A}}     
\newcommand{\tbox}{\mathcal{T}}     
\renewcommand{\Box}{\operatorname{Box}}
\newcommand{\Vol}{\operatorname{MVol}}
\newcommand{\contains}{\operatorname{Disjoint}}
\newcommand{\concept}{\textsf}
\newcommand{\rel}{\textsf}
\newcommand{\insta}{\textsf}
\relax\fontsize{12}{15}}{\textsc{\MakeTextLowercase{\thesection}}}{1em}{\spacedlowsmallcaps}
\renewcommand{\Box}{\operatorname{Box}}
\newcommand{\QA}{\operatorname{QA}}
\newcommand{\Con}{\operatorname{\textbf{Con}}}
\newcommand{\set}[1]{\mathcal{#1}}
\providecommand{\sT}{\ensuremath{\set{T}}}
\providecommand{\sR}{\ensuremath{\set{R}}}
\providecommand{\sV}{\ensuremath{\set{V}}}
\providecommand{\sTT}{\ensuremath{\widehat{\set{T}}}}
\renewcommand{\vec}[1]{{\bf{#1}}}
\providecommand{\rhat}{\ensuremath{\widehat{r}}}
\providecommand{\Rhat}{\ensuremath{\widehat{\set{R}}}}
\providecommand{\Ghat}{\ensuremath{\widehat{G}}}
\newcommand{\submissiondate}{Sep. 2, 2024}
\newtheorem{theorem}{Theorem}[]
\newtheorem{proposition}{Proposition}[]
\newtheorem{corollary}{Corollary}[]
\newtheorem{lemma}{Lemma}[]
\newtheorem{definition}{Definition}[]
\numberwithin{equation}{chapter}
\numberwithin{figure}{chapter}
\begin{document}

\storeareas\ClassicThesisDefault
\newgeometry{outer=2.8cm, inner=3.7cm, top=68pt}
\KOMAoptions{headwidth=412pt}

\begin{titlepage}
  \begin{center}
    \large
    \hfill
    \vfill
    \begingroup
    \color{Maroon}\spacedallcaps{ Geometric Relational Embeddings } \\
    \bigskip
    \endgroup

    \vfill
    Von der Fakultät Informatik, Elektrotechnik und Informationstechnik der Universität Stuttgart
    zur Erlangung der Würde eines Doktors der
    Naturwissenschaften (Dr. rer. nat.) genehmigte Abhandlung

    \vfill
    Vorgelegt von

    \spacedlowsmallcaps{Bo Xiong}

    aus Nanchang, China

    \vfill
    \begin{tabular}{ll}
      Hauptberichter:             & Prof. Dr. Steffen Staab \\
      Mitberichter:               & Prof. Dr. Mathias Niepert \\
                                  & Prof. Dr. Isabel Valera  \\
      & \\
      Tag der mündlichen Prüfung: & 24.07.2024
    \end{tabular}

    \vfill
    Institut für Künstliche Intelligenz (KI)
    der Universität Stuttgart

    2024
    \vfill
  \end{center}
\end{titlepage}

\cleardoublepage

\pagenumbering{roman}
\pagestyle{plain}

\pdfbookmark[1]{Abstract}{Abstract}
\begingroup
\let\clearpage\relax
\let\cleardoublepage\relax
\let\cleardoublepage\relax

\chapter*{Abstract}

In classical AI, symbolic knowledge is typically represented as relational data within a graph-structured framework, a.k.a., relational knowledge bases (KBs). 
Relational KBs suffer from incompleteness and numerous efforts have been dedicated to KB completion. 
One prevalent approach involves mapping relational data into continuous representations within a low-dimensional vector space, referred to as relational representation learning. This facilitates the preservation of relational structures, allowing for effective inference of missing knowledge from the embedding space.
Nevertheless, existing methods employ pure-vector embeddings and map each relational object, such as entities, concepts, or relations, as a simple point in a vector space (typically Euclidean $\mathbb{R}$). While these pure-vector embeddings are simple and adept at capturing object similarities, they fall short in capturing various discrete and symbolic properties inherent in relational data.

This thesis surpasses conventional vector embeddings by embracing geometric embeddings to more effectively capture the relational structures and underlying discrete semantics of relational data. Geometric embeddings map data objects as geometric elements, such as points in hyperbolic space with constant negative curvature or convex regions (e.g., boxes, disks) in Euclidean vector space, offering superior modeling of discrete properties present in relational data.
Specifically, this dissertation introduces various geometric relational embedding models capable of capturing: 1) complex structured patterns like hierarchies and cycles in networks and knowledge graphs; 2) intricate relational/logical patterns in knowledge graphs; 3) logical structures in ontologies and logical constraints applicable for constraining machine learning model outputs; and 4) high-order complex relationships between entities and relations. 

Our results obtained from benchmark and real-world datasets demonstrate the efficacy of geometric relational embeddings in adeptly capturing these discrete, symbolic, and structured properties inherent in relational data, which leads to performance improvements over various relational reasoning tasks.


\endgroup
\cleardoublepage

\begingroup
\begin{otherlanguage}{ngerman}
  \pdfbookmark[1]{Zusammenfassung}{Zusammenfassung}
  \chapter*{Zusammenfassung}

In der klassischen Künstlichen Intelligenz wird symbolisches Wissen in der Regel als relationale Daten in einem graphenstrukturierten Rahmen repräsentiert, auch als relationale Wissensdatenbanken (KBs) bekannt. Relationale KBs leiden unter Unvollständigkeit und Rauschen, und zahlreiche Bemühungen wurden der Vervollständigung von KBs gewidmet. Ein verbreiteter Ansatz besteht darin, relationale Daten in kontinuierliche Repräsentationen in einem niederdimensionalen Vektorraum abzubilden, bekannt als relationales Repräsentationslernen. Dies erleichtert die Bewahrung relationaler Strukturen und ermöglicht eine direkte Inferenz von fehlendem Wissen aus dem Einbettungsraum.

Dennoch verwenden bestehende Methoden reine Vektor-Einbettungen und ordnen jedes relationale Objekt, wie Entitäten, Konzepte oder Relationen, als einfachen Punkt in einem Vektorraum (typischerweise euklidisch $\mathbb{R}$) zu. Obwohl diese reinen Vektor-Einbettungen einfach sind und Objektähnlichkeiten gut erfassen können, sind sie weniger geeignet, um verschiedene diskrete und symbolische Eigenschaften, die in KBs vorhanden sind, zu erfassen.

Diese Dissertation übertrifft herkömmliche Vektoreinbettungen, indem sie geometrische Einbettungen annimmt, um die relationalen Strukturen und die zugrunde liegende diskrete Semantik relationaler Daten effektiver zu erfassen. Geometrische Einbettungen ordnen Datenobjekte als geometrische Elemente zu, wie Punkte im hyperbolischen Raum mit konstanter negativer Krümmung oder konvexe Bereiche (z. B. Boxen, Scheiben) im euklidischen Vektorraum, und bieten damit eine überlegene Modellierung diskreter Eigenschaften, die in relationalen Daten vorhanden sind.

Insbesondere führt diese Dissertation verschiedene Modelle geometrischer relationer Einbettungen ein, die in der Lage sind, folgendes zu erfassen: 1) komplexe strukturierte Muster wie Hierarchien und Zyklen in Netzwerken und Wissensgraphen; 2) komplexe relationale/logische Muster in Wissensgraphen; 3) logische Strukturen in Ontologien und logische Einschränkungen, die zur Einschränkung von Ausgaben von maschinellem Lernen verwendet werden können; und 4) komplexe Beziehungen höherer Ordnung zwischen Entitäten und Relationen.

Unsere Ergebnisse, die aus Benchmark- und realen Datensätzen gewonnen wurden, zeigen die Wirksamkeit geometrischer relationer Einbettungen bei der geschickten Erfassung dieser diskreten, symbolischen und strukturierten Eigenschaften, die in relationalen Daten vorhanden sind.

\end{otherlanguage}
\endgroup

\vfill

\cleardoublepage

\pdfbookmark[1]{Acknowledgments}{acknowledgments}


\bigskip

\begingroup
\let\clearpage\relax
\let\cleardoublepage\relax
\let\cleardoublepage\relax
\chapter*{Acknowledgments}

It would not be possible without the collaboration with my co-authors, fellow colleagues, supervisors, and the support from my family.

Foremost, I would like to thank Steffen Staab, my primary supervisor, who consistently sets the highest standards for "what is good research". In the early stages of my research, Prof. Staab displayed exceptional patience and kindness, especially when my English-speaking and scientific presentation skills were less proficient. His insightful feedback on my research and critical thinking has been invaluable, and he has provided hands-on guidance in paper writing, teaching me how to conduct impactful and meaningful research.

Meanwhile, I am equally grateful to Mathias Niepert and Isabel Valera, who served as members of my Thesis Advisory Committee (TAC). Their constructive comments and discussions during my TAC meetings greatly shaped the quality of the research.

I would like to thank some fellow co-authors whose guidance greatly influenced my research: Michael Cochez for discussions on learning with hyperbolic geometry; Shirui Pan for discussions on graph neural networks in various geometric spaces; Nico Potyka, a former postdoc in my research group, for discussions on Description Logic and ontology embeddings.

I also extend my heartfelt thanks to many collaborators who have significantly contributed to the development of my work. I acknowledge the valuable contributions of Mojtaba Nayyeri, Shichao Zhu, Trung-Kien Tran, Carl Yang, Daniel Daza, Zifeng Ding, Chengjin Xu, Chuan Zhou, Jiaying Lu, Ming Jing, Linhao Luo, Yuqicheng Zhu, Yunjie He, Zihao Wang, Özge Erten, Cosimo Gregucci, Daniel Hernandez, and many others. Their collaboration and insights have enriched my academic pursuits.

My Ph.D. project was mainly supported by two parties: 1) KnowGraphs (Knowledge Graphs at Scale), in which I worked as the ESR 5 (Early Stage Researcher) for three years; 2) the International Max Planck Research School for Intelligent Systems (IMPRS-IS), from which my Ph.D. thesis advisory commitment (TAC) team was formed.  I would like to express my sincere gratitude to KnowGraphs and IMPRS-IS for providing essential support and instruments in facilitating my research journey. 

In addition, my research journey was enriched by two secondments. I am deeply thankful to Michel Dumontier of Maastricht University in the Netherlands, who generously supported and supervised my exploration of ontology embeddings for Cell ontology. I am equally appreciative of Roberto Navigli at Babelscape in Rome, who served as an excellent host during my secondment. My time in Rome was not only academically enriching but also personally fulfilling, as I had the opportunity to make new friends and cherish the experiences that contributed to my growth. 

Last but not least, I would like to thank my family. I would not be able to finish my Ph.D. without their truly love.

\endgroup

\cleardoublepage

\pagestyle{scrheadings}
\pdfbookmark[1]{\contentsname}{tableofcontents}
\setcounter{tocdepth}{2} 
\setcounter{secnumdepth}{3} 
\manualmark
\markboth{\spacedlowsmallcaps{\contentsname}}{\spacedlowsmallcaps{\contentsname}}
\tableofcontents
\automark[section]{chapter}
\renewcommand{\chaptermark}[1]{\markboth{\spacedlowsmallcaps{#1}}{\spacedlowsmallcaps{#1}}}
\renewcommand{\sectionmark}[1]{\markright{\textsc{\thesection}\enspace\spacedlowsmallcaps{#1}}}

\cleardoublepage

\let\OOOchapter\chapter
\renewcommand{\chapter}[1]{\OOOchapter{\texorpdfstring{#1}{\thechapter~#1}}}
\let\OOOsection\section
\renewcommand{\section}[1]{\OOOsection{\texorpdfstring{#1}{\thesection~#1}}}
\newcommand{\sectionX}[2]{\OOOsection[\texorpdfstring{#1}{\thesection~#1}]{#2}}

\pagenumbering{arabic}
\pagestyle{scrheadings}

\ClassicThesisDefault
\KOMAoptions{headwidth=438pt}

\cleardoublepage
\chapter{Introduction}
\label{chap_intro}

\section{Background, Motivation, and Challenges }

Representation learning plays a pivotal role in modern machine learning, providing the capability to acquire compact, continuous, and lower-dimensional representations for a variety of real-world data, including images \cite{DBLP:conf/iccv/HuaBW07}, words \cite{DBLP:conf/emnlp/ToutanovaCPPCG15}, and documents \cite{DBLP:conf/bibm/0001DCBRD20}. These distributional representations allow for the discrimination of relevant distinctions while disregarding irrelevant variations among these objects. They serve as valuable inputs for various machine learning tasks, such as image recognition \cite{DBLP:conf/iccv/HuaBW07}, text categorization \cite{DBLP:conf/emnlp/ToutanovaCPPCG15}, and disease diagnosis \cite{DBLP:conf/bibm/0001DCBRD20}.

The predominant approach in many current works maps objects into a low-dimensional vector space, typically represented in the Euclidean space $\mathbb{R}^d$. We refer to this representation as a plain vector embedding. The rationale behind using plain vector embeddings is their ability to preserve 'similarities' between objects through pairwise distances or inner products in the vector space. For example, images from the same categories or words occurring in similar linguistic contexts are mapped to vectors that are 'near' in the embedding space.

Unlike the approaches prevalent in modern machine learning, classical AI, such as knowledge representations and reasoning \cite{DBLP:conf/ecai/LakemeyerN92} and statistical relational learning \cite{popescul2003statistical}, relies on symbolic knowledge. Symbolic knowledge is often represented as structured and relational data that delineate semantic relationships among entities and/or concepts. Typically, this representation takes the form of a set of factual statements, each encapsulating a fact that describes a semantic relationship involving two or more entities and/or classes. Additionally, with the aid of complex mathematical constructs, symbolic knowledge can also be described as a set of logical statements, each describing a logical relationship among various entities and/or concepts. These factual and logical statements together form a symbolic knowledge base, storing relational knowledge over a specific domain. Such symbolic knowledge plays a crucial role in various applications, including biomedical \cite{rector1996galen,gene2015gene} and intelligent systems \cite{stephanopoulos1996intelligent}.

\begin{figure}[t!]
\begin{center}
\centerline{\includegraphics[width=\columnwidth]{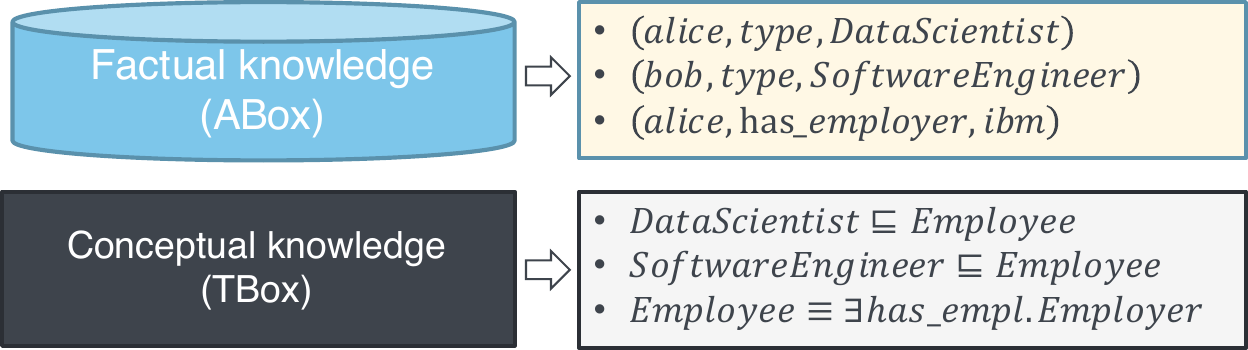}}
\caption{A schematic illustration of a symbolic knowledge base. It consists of a ABox describing the factual knowledge over entities and a TBox describing the conceptual knowledge over concepts.}
\label{fig:kb}
\end{center}
\vskip -0.3in
\end{figure}

The relational knowledge expressed in a symbolic knowledge base can be categorized into two categories as shown in Fig \ref{fig:kb}: 

\begin{itemize}[leftmargin=0.5cm]

    \item \textbf{Factual knowledge} describes relationships among different entities. 
    This is typically described in a knowledge graph, where factual knowledge is represented by a set of factual statements in the form of triples \((h, r, t)\), with \(h, t\) being the entities and \(r\) being the label of a binary relation between them. Beyond binary relations, a knowledge graph can be extended to represent high-order or multi-fold relations among multiple entities, such as the co-authorship relationships in a scientific network. 
    
    \item \textbf{Conceptual knowledge} describes relationships among different concepts. This is typically described in concept ontologies, where  concepts are organized in a concept hierarchy with hierarchical and conceptual relationships such as "is\_a" or "has\_part". By applying complex mathematical constructs such as intersection, existential, and universal quantifications, ontologies can be extended to also describe logical relationships among concepts.
\end{itemize}

Following the conventions of the semantic web community, we call the part of the factual knowledge an ABox (assertional knowledge) and the part of conceptual knowledge a TBox (terminological knowledge).

Relational representation learning acts as a bridge, transforming discrete and symbolic relational data into continuous and low-dimensional representations. This approach connects classical knowledge representation with modern vector-based machine learning. The benefits of using vector representations for relational data are twofold: 1) Vector representations capture not only the relational structure suitable for classical reasoning but also the similarity and analogical structure between entities/concepts, facilitating analogical and similarity-based reasoning. 2) The learned vector representations are robust to incomplete and noisy data, making them more suitable for reasoning tasks in real-world scenarios. In the real world, many relational datasets, such as Freebase \cite{DBLP:conf/sigmod/BollackerEPST08} and DBpedia \cite{DBLP:conf/semweb/AuerBKLCI07}, are curated through human efforts. Despite the already substantial volume of this data, it remains incomplete and noisy.

Vector representations have proven beneficial for various instances of relational data, spanning graphs, knowledge or multi-relational graphs, and ontologies:
\begin{itemize}[leftmargin=0.5cm]
    \item \textbf{Graph embeddings} map nodes in graphs into vectors while preserving the graph structure \cite{kipf2016semi}. Typical approaches include random walk-based approaches \cite{DBLP:journals/pvldb/FangKLWFLYC23} and graph neural networks \cite{chen2020simple}. These learned embeddings can be used to predict missing node types (i.e.,m node classification), missing edges between nodes (i.e., link prediction), and graph-level properties (i.e., graph classifications).

    \item \textbf{Knowledge or multi-relational graph embeddings} map both entities and relations into vector space while preserving their relational structures in the embedding space \cite{DBLP:conf/nips/Kazemi018}. This is achieved by modeling relations between entities as functions (i.e., functional methods) \cite{DBLP:conf/nips/BordesUGWY13} or by modeling the plausibility of a fact as a three-way interaction (i.e., semantic matching methods) \cite{NickelTK11}. Knowledge graph embeddings can effectively infer missing relational facts, even in incomplete knowledge graphs, which is a process known as knowledge graph completion.

    \item \textbf{Ontology embeddings} map concepts in ontologies into vectors, while preserving the ontological structure \cite{DBLP:conf/kr/Gutierrez-Basulto18}. Unlike knowledge graph embeddings that focus on factual knowledge, ontology embeddings consider ontological knowledge. Encoding the logical structure in a standard embedding space is challenging but crucial for ontology embeddings.

\end{itemize}

\begin{figure}
    \centering
    \includegraphics[width=\columnwidth]{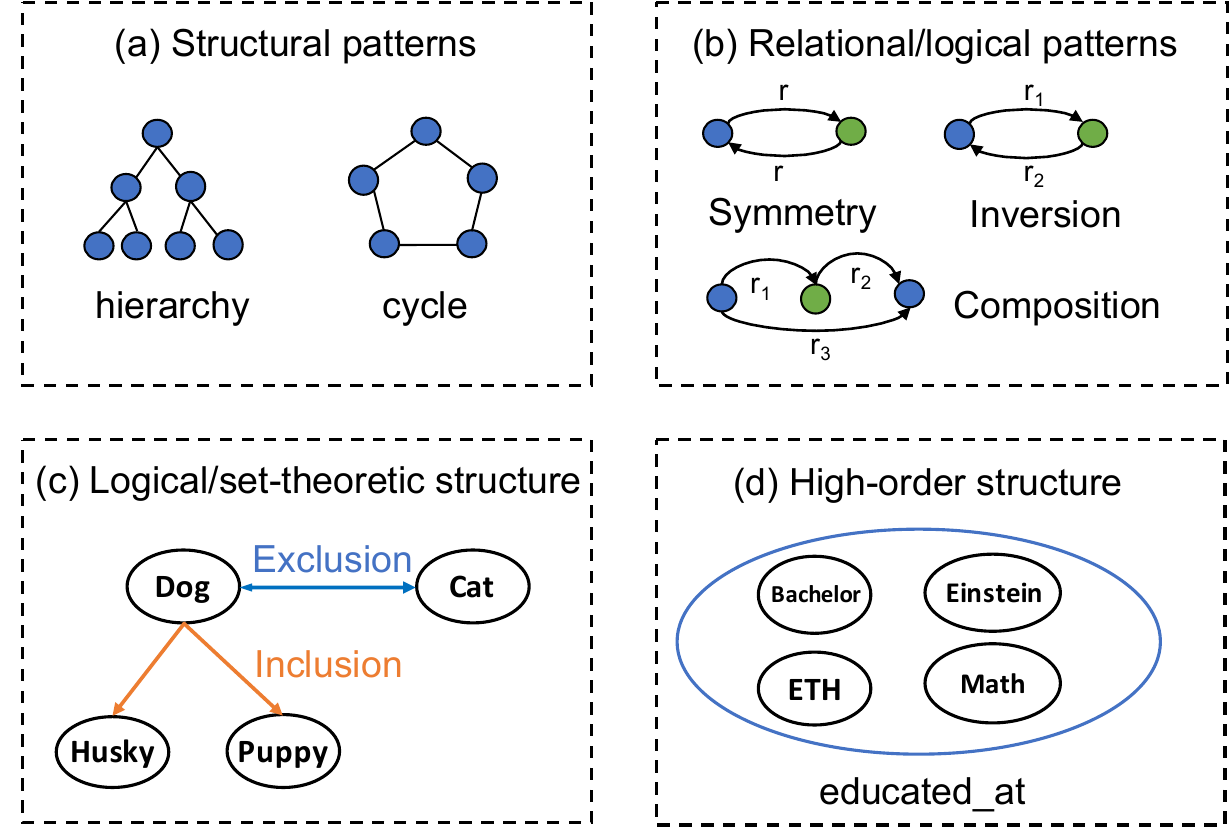}
    \caption{A schematic illustration of the discrete properties in relational data. (a) Structural patterns include hierarchical and cyclic structures in graphs; (b) Relational patterns are logical rules/implication over relations; (c) A logical/set-theoretical structure described by logical or set operators (inclusion and exclusion). (d) A high-order structure described by multi-fold relations among multiple entities.  }
    \label{fig:properties}
\end{figure}

\textbf{Challenges.} These learned representations enable effective inference of missing knowledge directly from the embedding space. However, most relational representation learning approaches consider plain Euclidean vectors as the embeddings, which is similar to the embeddings of image and text data, but they may fall short in capturing crucial properties of relational data that are not easily modeled in a plain, low-dimensional vector space. These properties are typically structural, discrete, and symbolic, while plain vector embeddings are designed to capture similarity only. 
Fig \ref{fig:properties} shows some examples of these discrete properties. 
We elaborate these properties as follow:

\noindent
\begin{itemize}[leftmargin=0.5cm]
    \item \textbf{Structural patterns.} 
    Relational data exhibit highly complex structural patterns such as hierarchies and cycles. Typical examples include WordNet \cite{DBLP:journals/cacm/Miller95} describing the word sense hierarchy and Gene ontology \cite{ashburner2000gene} describing the hierarchy of gene functions. The plain Euclidean-based vector embeddings are designed for flat data  but struggle with modeling data with complex structural patterns. 
    For example, the number of nodes in a hierarchy grows exponentially while the volume of Euclidean space only grows linearly w.r.t the radius. 
    
    \item \textbf{Relational/logical patterns.} Multi-relational data or knowledge graphs, describe relational facts between entities. These relations exhibit many relational or logical patterns such as symmetry (e.g., \emph{has\_friend}), inversion (\emph{is\_director\_of} and \emph{is\_directed\_by}) and implication (e.g., \emph{mother\_of} $\rightarrow$ \emph{parent\_of}). 
    Modeling these relational or logical patterns is of great importance to the embeddings as it not only imropves expressiveness of knowledge graph embedding models, but more importantly, facilitates guaranteed generalization capability, i.e., once the patterns are learned, facts that adhere these patterns can be inferred. 
    
    \item \textbf{Logical/set-theoretic structure.} Relational data may have been defined by applying set-theoretic operators such as set inclusion and set exclusion or logical operators such as logical intersection and negation. This is especially useful when describing conceptual/schematic knowledge over concepts. In this sense, conceptual knowledge is described as logical statements expressed in the form of Description Logic (DL) languages. 
    Embedding the logical structure expressed in the logical statements in DL is challenging as the logical structures in the statements are supposed to preserved in the embedding space, which is beyond the similarity that is preservation in plain vector embeddings.  Furthermore, the logical structure in relational data can also be used to describe relational constraints over the output of machine learning models. Embedding these constraints is important for many multi-label machine learning models as it guarantees predictive coherence of predictions. 
    
    \item \textbf{High-order structure.} Relational data may describe high-order relational facts, where each fact can describe a complex multi-fold relationship between multiple entities and/or relations. 
    For example, in hyper-relational or n-ary relational knowledge graphs, each triple fact is contextualized by a set of qualifiers with each qualifier being an relation-entity pair. Another example is the nested relational knowledge graphs, in which a fact can describe a relationship over other facts, a.k.a., facts over facts. Most knowledge graph embedding approaches are designed for triple-based knowledge graphs and fail to capture high-order knowledge. Modeling these high-order knowledge is challenging as logical properties may exist not only in the triple level but also in the high-order structure level. For example, in a query of hyper-relational facts, attaching qualifiers to a fact may only narrow down the answer set but never enlarge it, a.k.a., qualifier monotonicity. In nested relational knowledge graphs, nested relational facts may express some logical rules. 
    
\end{itemize}

        
Going beyond plain vector embeddings, \emph{geometric relational embeddings} replace the plain vector representations with more advanced geometric objects, such as convex regions \cite{DBLP:conf/ijcai/KulmanovLYH19,ren2019query2box}, probabilistic density functions \cite{wang2022dirie,ren2020beta}, geometric elements of non-Euclidean manifolds \cite{chami2020low}, and their combinations \cite{suzuki2019hyperbolic}. 
Different from plain vector embeddings, geometric relational embeddings provide a rich geometric inductive bias for modeling various discrete properties of relational data while being still able to capture similarity. 
For example, embedding ontological concepts as convex regions allows for modeling not only similarity of concepts but also set-based and logical operators over concepts, such as set inclusion, set intersection \cite{xiong2022faithful} and logical negation \cite{zhang2021cone}. 
This is very useful for concept embeddings in ontologies. 
On the other hand, representing data on non-Euclidean Riemannian manifolds allows for capturing complex structural patterns, such as representing hierarchies in hyperbolic space \cite{chami2020low} and cycles in spherical space. 

Geometric relational embeddings have been successfully applied in many relational reasoning tasks including but not limited to knowledge graph (KG) completion \cite{abboud2020boxe},  ontology/hierarchy reasoning \cite{vilnis2018probabilistic}, hierarchical multi-label classification \cite{patel2021modeling}, and logical query answering \cite{ren2020beta}.
However, different downstream applications require varying capabilities from underlying embeddings and, hence, appropriate choice requires a sufficiently precise understanding of embeddings' characteristics and task requirements.

\section{Research Questions}

We investigate the following research questions (\textbf{RQs}):

\textbf{RQ 1:} How to faithfully modeling complex graph structural patterns such as hierarchies and cycles in graph data and what are the suitable geometric inductive biases for these structural patterns? how to develop the corresponding graph neural network components that are suitable for modeling these structural patterns (Chapter \ref{sec:qgcn}).

\textbf{RQ 2:} For multi-relational or knowledge graphs that simultaneously exhibit both complex graph structural patterns (e.g., hierarchies and cycles) and complex relational patterns (e.g., symmetry, anti-symmetry, inversion, and composition), how to faithfully modeling both of these patterns in a single embedding space? (Chapter \ref{sec:ultrae}).

\textbf{RQ 3:} For ontological data where facts are expressed as logical statements/axioms with Description Logic languages, how to faithfully represent these logical statements/axioms while preserving the underlying logical structure with embeddings? what are the geometric inductive biase for representing concepts?  (Chapter \ref{chap_ontological}).

\textbf{RQ 4:} How to represent relational constraints for machine learning models such that the model can produce outputs that are logically coherent to the relational constraints? (Chapter \ref{chap_logical}).

\textbf{RQ 5:} How to embed relational data with high-order relational structure such as hyper-relational knowledge graphs and nested relational knowledge graphs, in a way that the underlying logical properties such as logical patterns inherent in the relational data can be still modeled? (Chapter \ref{chap_hyper}).

\section{Thesis Contributions and Outline}

To address these limitations, this dissertation goes beyond vector embeddings and proposes various geometric embeddings that faithfully model various discreate properties of different types of relational data. Geopmetric embeddings, instead of mapping relational objects as plain vectors in Euclidean space, encode relational objects as geometric elements (e.g., balls, boxes, and convex cones) or as elements in non-Euclidean manifolds (e.g., hyperbolic or spherical space). The primal advantage of geometric embeddings is the faithful encoding of discreate properties of relational data.  In this thesis, we addressed several encoding issues of vector embeddings over relational data with geometric embeddings (Cf. Fig. \ref{fig:bigmap-thesis}). Our contributions and the remaining content of the thesis are summarised as below:

\begin{description}[leftmargin=1.5cm]

\item In Chapter \ref{chap_foundation}, we introduce some necessary preliminaries, foundational concepts, and related works that are relevant to this dissertation.  

\item In Chapter \ref{chap_structral}, we introduce pseudo-Riemannian manifold embeddings. This chapter makes the following two contributions.
\begin{itemize}
    \item We present a principled framework, pseudo-Riemannian GCN, which generalizes GCNs into pseudo-Riemannian manifolds with indefinite metrics, providing more flexible inductive biases to accommodate complex graphs with mixed topologies. We also defined neural network operations in pseudo-Riemannian manifolds with novel geodesic tools, to stimulate the applications of pseudo-Riemannian geometry in geometric deep learning. Extensive evaluations on three standard tasks demonstrate that our model outperforms baselines that operate in Riemannian manifolds. 
    \item We proposes UltraE, an ultrahyperbolic KG embedding method in a pseudo-Riemannian manifold that interleaves hyperbolic and spherical geometries, allowing for simultaneously modeling multiple hierarchical and non-hierarchical structures in KGs. We derive a relational embedding by exploiting the pseudo-orthogonal transformation, which is decomposed into various geometric operators including circular rotations/reflections and hyperbolic rotations, allowing for inferring complex relational patterns in KGs. On three standard KG datasets, UltraE outperforms many previous Euclidean and non-Euclidean counterparts, especially in  low-dimensional setting.
\end{itemize}

\begin{figure}[t!]
\begin{center}
\centerline{\includegraphics[width=\columnwidth]{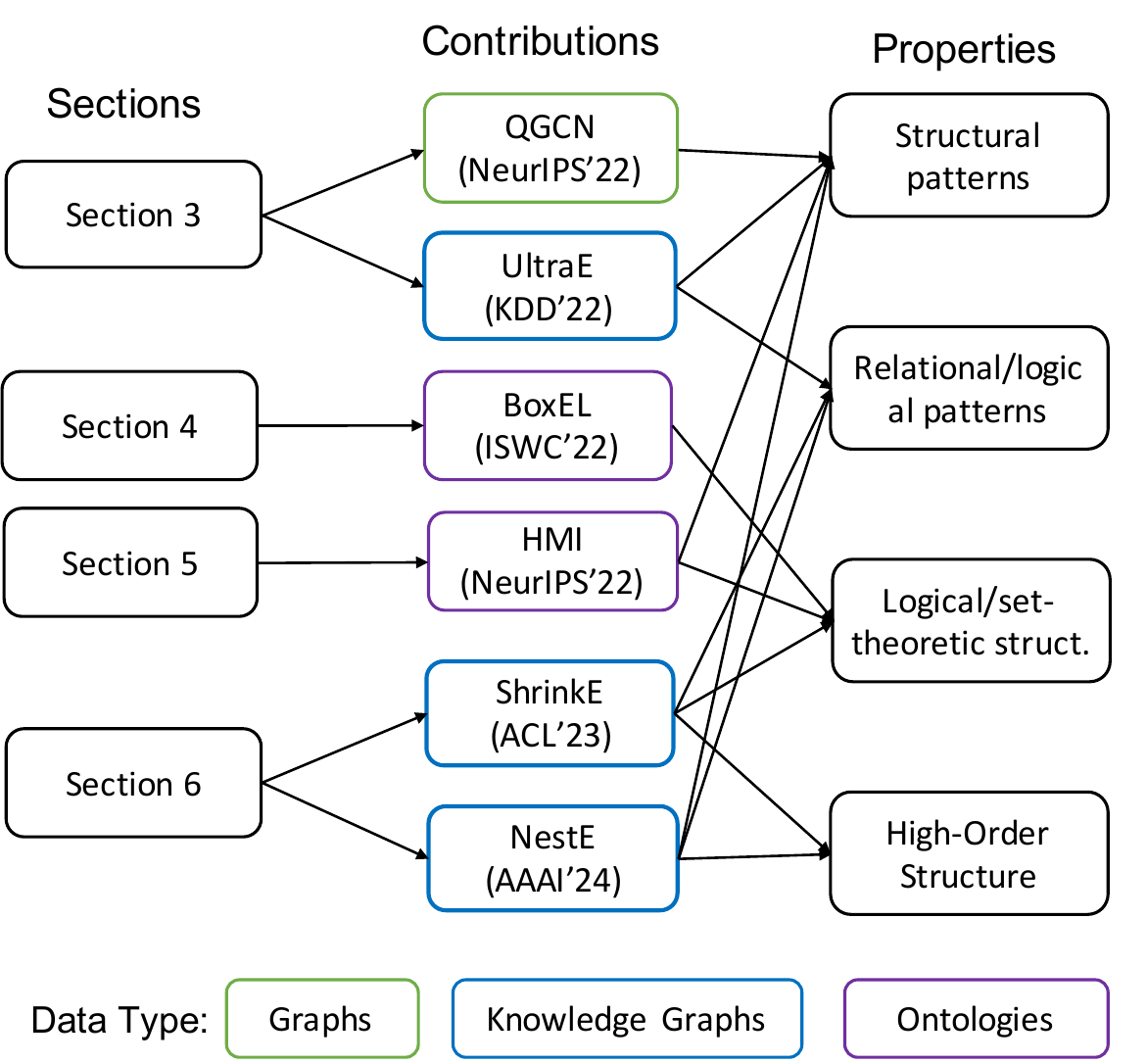}}
\caption{An overview of the proposed methodologies, the corresponding properties and the relational data types.}
\label{fig:bigmap-thesis}
\end{center}
\vskip -0.3in
\end{figure}

\item In Chapter \ref{chap_ontological}, we focus on ontology embeddings and make the following contribution. 
\begin{itemize}
    \item We propose BoxEL, a geometric knowledge base embedding method that explicitly models the logical structure expressed by the theories of $\mathcal{EL}^{++}$. Different from the standard KGEs that simply ignore the analytical guarantees, BoxEL provides \textit{soundness} guarantee for the underlying logical structure by incorporating background knowledge into machine learning tasks, offering a more reliable and logic-preserved fashion for knowledge base reasoning. 
    The empirical results further demonstrate that BoxEL outperforms previous KGEs and $\mathcal{EL}^{++}$ embedding approaches on subsumption reasoning over three ontologies and predicting protein-protein interactions in a real-world biomedical knowledge base.
\end{itemize}

\item In Chapter \ref{chap_logical}, we exploit ontology embeddings on improving machine learning models and make the following contribution. 
\begin{itemize}
    \item We focus on a structured multi-label prediction task whose output is supposed to respect the implication and exclusion constraints. We show that such a problem can be formulated in a hyperbolic Poincaré ball space whose linear decision boundaries (Poincaré hyperplanes) can be interpreted as convex regions. The implication and exclusion constraints are geometrically interpreted as insideness and disjointedness, respectively. Experiments on $12$ datasets show significant improvements in mean average precision and lower constraint violations, even with an order of magnitude fewer dimensions than baselines. 
\end{itemize}

\item In Chapter \ref{chap_hyper}, we introduce two geometric embeddings for high-order relational knowledge graphs.
\begin{itemize}
    \item We present \emph{ShrinkE}, a geometric hyper-relational KG embedding method aiming to explicitly model these patterns. ShrinkE models the primal triple as a spatial-functional transformation from the head into a relation-specific box. Each qualifier ``shrinks'' the box to narrow down the possible answer set and, thus, realizes qualifier monotonicity. The spatial relationships between the qualifier boxes allow for modeling core inference patterns of qualifiers such as implication and mutual exclusion. Experimental results demonstrate ShrinkE's superiority on three benchmarks of hyper-relational KGs. 
    
    \item We propose FactE, a family of hypercomplex embeddings capable of embedding both atomic and nested factual knowledge. This framework effectively captures essential logical patterns that emerge from nested facts. Empirical evaluation demonstrates the substantial performance enhancements achieved by FactE compared to existing baseline methods. Additionally, our generalized hypercomplex embedding framework unifies previous algebraic (e.g., quaternionic) and geometric (e.g., hyperbolic) embedding methods, offering versatility in embedding diverse relation types. 
\end{itemize}

\item In Chapter~\ref{chap_conclusion}, we conclude the whole dissertation, summarize the limitation, and foresee some future works.

\end{description}

\subsection{Publications}
This dissertation contains material in papers that were published already in conference proceedings as listed below. 
Unless explicitly indicated, these papers were significantly contributed by the author in terms of idea generation, experimentation, paper writing and so on. 

Published research papers that directly contribute to this dissertation:

\begin{itemize}

\item \textbf{B. Xiong,} M. Nayyeri, L. Luo, Z. Wang, S. Pan, S. Staab. \textit{NestE: Modeling Nested-Relational Structure for Knowledge Graph Reasoning.} The 38th Annual Conference on Artificial Intelligence. \textbf{AAAI 2024}. \cite{xiong_neste_aaai24}

\item \textbf{B. Xiong,} M. Nayyeri, S. Pan, S. Staab. \textit{Shrinking Embeddings for Hyper-Relational Knowledge Graphs.}. In Proceedings of the 60th Annual Meeting of the Association for Computational Linguistics. \textbf{ACL 2023}.\cite{xiong2023shrinking}

\item \textbf{B. Xiong,} M. Cochez, M. Nayyeri, S. Staab. \textit{Hyperbolic Embedding Inference for Structured Multilabel Prediction.} Advances in Neural Information Processing Systems. \textbf{NeurIPS 2022}. \cite{DBLP:conf/nips/XiongCNS22}

\item \textbf{B. Xiong$^\star$,} S. Zhu$^\star$, N. Potyka, S. Pan, C. Zhou, S. Staab. \textit{Pseudo-Riemannian Graph Convolutional Networks}. Advances in Neural Information Processing Systems. \textbf{NeurIPS 2022}.\cite{DBLP:conf/nips/XiongZPP0S22}

\item \textbf{B. Xiong,} N. Potyka, T. Tran, M. Nayyeri, S. Staab. \textit{Faithful Embeddings for EL++ Knowledge Bases}. International Semantic Web Conference. \textbf{ISWC 2022}. \cite{xiong2022faithful}

\item \textbf{B. Xiong}, S. Zhu, M. Nayyeri, C. Xu, S. Pan, C. Zhou, S. Staab. \textit{Ultrahyperbolic Knowledge Graph Embeddings}. In Proceedings of The 28th ACM SIGKDD Conference on Knowledge Discovery and Data Mining. \textbf{SIGKDD 2022}. \cite{DBLP:conf/kdd/XiongZNXP0S22}

\end{itemize}

Unpublished survey/position paper that contributes to the foundation chapter of the dissertation:

\begin{itemize}
    \item \textbf{B. Xiong,} M. Nayyeri, M. Jin, Y. He, M. Cochez, S. Pan, S. Staab. \textit{Geometric Relational Embeddings: A Survey}. Technique Report, 2023. \cite{DBLP:journals/corr/abs-2304-11949}
    
\end{itemize}

Published tutorial proposals related to this dissertation:

\begin{itemize}
    \item  \textbf{Bo Xiong}, Mojtaba Nayyeri, Daniel Daza, Michael Cochez. \textit{Reasoning beyond Triples: Recent Advances in Knowledge Graph Embeddings.} Tutorial at The 32nd ACM International Conference on Information and Knowledge Management. \textbf{CIKM 2023}. \cite{DBLP:conf/cikm/XiongNDC23}
    
    \item Min Zhou, Menglin Yang, \textbf{Bo Xiong}, Hui Xiong, Irwin King. \textit{Hyperbolic Graph Neural Networks: A Tutorial on Methods and Applications.} Tutorial at The 28th ACM SIGKDD Conference on Knowledge Discovery and Data Mining. \textbf{SIGKDD 2023}. \cite{DBLP:conf/kdd/00060XXK23}
\end{itemize}

Published papers that were co-authored by the author but not included in this dissertation: 

\begin{itemize}

\item E. Lee, \textbf{B. Xiong}, C. Yang and J. Ho. \textit{HypMix: Hyperbolic Representation Learning for Graphs with Mixed Hierarchical and Non-hierarchical Structures.} The 33rd ACM International
Conference on Information and Knowledge Management. \textbf{CIKM 2024}. \cite{HypMixEric}

\item Y. He, D. Hernandaza, M. Nayyeri, \textbf{B. Xiong}, Y. Zhu, E. Kharlamov, S. Staab. Generating SROI Ontologies via Knowledge Graph Query Embedding Learning. Proceedings of the 27th European Conference on Artificial Intelligence. \textbf{ECAI 2024}. \cite{PConE2024Yunjie}

\item H. Lv, Z. Chen, Y. Yang, S. Pan, \textbf{B. Xiong}, Y. Tan, C. Yang. \textit{Enhancing Semantic and Structure Modeling of Diseases for Diagnosis Prediction.} Proceedings of the
American Medical Informatics Association Informatics Summit. \textbf{AMIA 2024.} \cite{Hang2024AMIA}

\item M. Yang, A. Feng, \textbf{B. Xiong}, J. Liu, I. King, R. Ying. \textit{Enhancing LLM Complex Reasoning Capability through Hyperbolic Geometry.} Workshop on LLMs and Cogni-
tion. International Conference on Machine Learning. \textbf{LLM \& Cognition@ICML, 2024.} \cite{ICMLWorkshop2024Menglin}

\item Z. Ding, H. Cai, J. Wu, Y. Ma, R. Liao, \textbf{B. Xiong}, V. Tres. \textit{Zero-Shot Relational Learning on Temporal Knowledge Graphs with Large Language Models.} Proceedings of the 2024 Conference of the North American Chapter of the Association for Computational Linguistics. \textbf{NAACL 2024}. \cite{DBLP:conf/naacl/DingCWMLXT24}

\item D. Zhou, H. Yang, \textbf{B. Xiong}, Y. Ma, Evgeny Kharlamov. \textit{Alleviating Over-Smoothing via Aggregation over Compact Manifolds. The Pacific-Asia Conference on Knowledge Discovery and Data Mining.} \textbf{PAKDD 2024.} \cite{DBLP:conf/pakdd/ZhouYXMK24}

\item Y. Jia, Y. Song, \textbf{B. Xiong}, J. Cheng, W. Zhang, S. X. Yang, S. Kwong. Hierarchical Perception-Improving for Decentralized Multi-Robot Motion Planning in Complex Scenarios. IEEE Transactions on Intelligent Transportation Systems (TITS), 2024 \cite{DBLP:journals/tits/JiaSXCZYK24}

\item Y. Tan, H. Lv, Z. Zhou, W. Guo, \textbf{B. Xiong}, W. Liu, C. Chen, S. Wang, and C. Yang. \textit{Logical Relation Modeling and Mining in Hyperbolic Space for Recommendation.} In Proceedings of The 40th IEEE International Conference on Data Engineering. \textbf{ICDE 2024}.\cite{Yanchao2024LogicR}
 
\item M. Nayyeri, \textbf{B. Xiong,} M.Mohammadi, M. Mahfuja Akter, M. Mohtashim Alam, J. Lehmann, S. Staab. \textit{Knowledge Graph Embeddings using Neural Ito Process: From Multiple Walks to Stochastic Trajectories.}. In Findings of the 60th Annual Meeting of the Association for Computational Linguistics. \textbf{Findings of ACL 2023}.\cite{DBLP:conf/acl/NayyeriXMAA0S23}

\item J. Lu, J. Shen,  \textbf{B. Xiong}, W. Ma, S. Staab, and C. Yang. \textit{HiPrompt: Few-Shot Biomedical Knowledge Fusion via Hierarchy-Oriented Prompting}. In Proceedings of The 46th International ACM SIGIR Conference on Research and Development in Information Retrieval. \textbf{SIGIR 2023}. \cite{DBLP:conf/sigir/jiayinglu}

\item Y. Zhu, N. Potyka, \textbf{B. Xiong}, K. Tran, M. Nayyeri, S. Staab and E. Kharlamov. International Semantic Web Conference. Poster \& Demo track of the International Semantic Web Conference. \textbf{ISWC 2023}. \cite{DBLP:conf/semweb/ZhuPXTNSK23}

\item Y. He, M. Nayyeri, \textbf{B. Xiong}, Y. Zhu, E. Kharlamov and S. Staab. Can Pattern Learning Enhance Complex Logical Query Answering? Poster \& Demo track opf the International Semantic Web Conference. \textbf{ISWC 2023}.  \cite{DBLP:conf/semweb/HeNXZKS23}

\item C. Xu, F. Su, \textbf{B. Xiong}, J. Lehmann. \textit{Time-aware Entity Alignment using Temporal Relational Attention}. In Proceedings of the ACM Web Conference. \textbf{WWW 2022}.\cite{DBLP:conf/www/XuSX022}

\item B. Xiong, P. Bao, Y. Wu. Learning semantic and relationship joint embedding for author name disambiguation. Neural Computing and Applications 33 (6), 1987-1998. 2021. \cite{DBLP:journals/nca/XiongBW21}

\end{itemize}

Preprint papers that were co-authored by the author but not included in this dissertation.

\begin{itemize}

    \item R. Chen, W. Jiang, C. Qin, I.S. Rawal, C. Tan, D. Choi, \textbf{B. Xiong}, B. Ai. LLM-Based Multi-Hop Question Answering with Knowledge Graph Integration in Evolving Environments. arXiv preprint arXiv:2408.15903. \cite{chen2024llm}

    \item Y. Zhu, N. Potyka, M. Nayyeri, \textbf{B. Xiong}, Y. He, E. Kharlamov, S. Staab. Predictive Multiplicity of Knowledge Graph Embeddings in Link Prediction. arXiv preprint arXiv:2408.08226. \cite{zhu2024predictive}

    \item Y. Zhu, N. Potyka, J. Pan, \textbf{B. Xiong}, Y. He, E. Kharlamov, S. Staab. Conformalized Answer Set Prediction for Knowledge Graph Embedding. arXiv preprint arXiv:2408.08248. \cite{zhu2024conformalized} 

    \item Y. Zhu, N. Potyka, \textbf{B. Xiong}, T.K. Tran, M. Nayyeri, E. Kharlamov, S. Staab. Approximating Probabilistic Inference in Statistical EL with Knowledge Graph Embeddings. arXiv preprint arXiv:2407.11821. \cite{zhu2024approximating}

    \item L. Luo, J. Ju, \textbf{B. Xiong}, Y, Li, G. Haffari, and S. Pan. \textit{ChatRule: Mining Logical Rules with Large Language Models for Knowledge Graph Reasoning.} arXiv preprint arXiv:2309.01538, 2023. \cite{DBLP:journals/corr/abs-2309-01538}
        
    \item M. Jin, G. Shi, Y. Li, Q. Wen, \textbf{B. Xiong}, T. Zhou, S. Pan. How Expressive are Spectral-Temporal Graph Neural Networks for Time Series Forecasting? arXiv preprint arXiv:2305.06587. \cite{jin2023expressive}

    \item J. Wang, B. Wang, M. Qiu, S. Pan, \textbf{B. Xiong}, H. Liu, L. Luo, T. Liu, Y. Hu, B. Yin. A survey on temporal knowledge graph completion: Taxonomy, progress, and prospects. arXiv preprint arXiv:2308.02457. \cite{wang2023survey}

\end{itemize}

\cleardoublepage

\chapter{Foundations}
\label{chap_foundation}

In this chapter, we introduce necessary preliminaries, foundational concepts, and related works that are relevant to this dissertation.

\section{Relational Data}

In this dissertation, our primary focus is \emph{relational data} that describes diverse relationships between entities and/or concepts in a graph-structured format. We choose to model relational data as a graph because it offers greater flexibility for integrating new sources of data \cite{hogan2021knowledge}. This is in contrast to the standard relational data model where a schema must be pre-defined and adhered to at each step.
 Graph-structured data models have found extensive use in organizing various real-world relational data, such as information networks, knowledge graphs, and biomedical ontologies.

\subsection{Graph-Structured Data Models}

There are several graph-structured data models, such as directed edge-labeled graphs, heterogeneous graphs, and property graphs, which we introduce as follows.

\textbf{Directed edge-labeled graphs}, also called multi-relational graphs \cite{DBLP:conf/icml/NickelTK11}, are one of the graph-structured data models. A directed edge-labeled graph is defined by a set of nodes representing entities or concepts, like \emph{New York} and \emph{USA}, and a set of directed labeled edges connecting these nodes, with each labeled edge representing a relationship between these connected nodes, such as (\emph{New York}, \emph{CityOf}, \emph{USA}). Formally, a directed edge-labeled graph is defined as:

\begin{definition}[Directed edge-labeled graph \cite{hogan2021knowledge}]
A directed edge-labeled graph is a tuple $G=(V, E, L)$, where $V \subseteq \Con$ is a set of nodes, $L \subseteq \Con$ is a set of edge labels, and $E \subseteq V \times L \times V$ is a set of edges, where $\Con$ denotes a countably infinite set of constants.
\end{definition}

Note that this definition is very flexible, as we do not assume that $V$ and $L$ are disjoint. In principle, a node can also serve as an edge label, and nodes and edge labels can be present without any associated edge. Moreover, although the edge is directional, bidirectional edges can be simply represented with two edges with inverse directions.

One limitation of this definition is that it does not distinguish between nodes and the type of nodes but rather expresses the type as a relation, e.g., (\emph{New York}, \emph{Type}, \emph{City}).

\textbf{Heterogeneous graphs} or heterogeneous information networks \cite{DBLP:conf/cikm/HusseinYC18} represent relational data as a set of nodes and a set of edges, with each node and edge associated with a type or label. A heterogeneous graph is formally defined as follows.

\begin{definition}[Heterogeneous graph \cite{hogan2021knowledge}]
A heterogeneous graph is a tuple $G=(V, E, L, l)$, where $V \subseteq \Con$ is a set of nodes, $L \subseteq \Con$ is a set of edge/node labels, $E \subseteq V \times L \times V$ is a set of edges, and $l: V \rightarrow L$ maps each node to a label, where $\Con$ denotes a countably infinite set of constants.
\end{definition}

In contrast to a directed edge-labeled graph, a heterogeneous graph encodes the type of a node as part of the node itself, rather than modeling types of nodes with a \emph{Type} relation. Hence, one of the main advantages of a heterogeneous graph is that it allows for explicit distinction between nodes and the types of nodes. This is particularly useful when the type of each node is unique. However, a heterogeneous graph cannot express multiple types for a single node (i.e., many-to-one mapping).

\textbf{Property graphs} constitute a graph-structured data model that provides additional flexibility when modeling complex relations. Notably, a property graph allows for annotating more intricate details to each edge, such as the degree and major obtained by a person from a university. This is represented by a set of property–value pairs associated with edges. Unlike both directed edge-labeled graphs and heterogeneous graphs, annotating edges with additional properties is not straightforward. A property graph is defined as follows.

\begin{definition}[Property graph \cite{hogan2021knowledge}]
A property graph is a tuple $G=(V, E, L, P, U, e, l, p)$, where $V \subseteq \Con$ is a set of node ids, $E \subseteq \Con$ is a set of edge ids, $L \subseteq \Con$ is a set of labels, $P \subseteq \Con$ is a set of properties, $U \subseteq \Con$ is a set of values, $e: E \rightarrow V \times V$ maps an edge id to a pair of node ids, $l: V \cup E \rightarrow 2^L$ maps a node or edge id to a set of labels, and $p: V \cup E \rightarrow 2^{P \times U}$ maps a node or edge id to a set of property-value pairs.
\end{definition}

In contrast to directed edge-labeled graphs and heterogeneous graphs, a property graph allows a node or edge to have several values for a given property. Property graphs can be converted to/from directed edge-labeled graphs, and this process is called reification.

In summary, each of these three models has its advantages and disadvantages. Directed edge-labeled graphs offer a simple model, while property graphs provide a more flexible choice. The selection of a model typically depends on practical factors such as available implementations for different models, etc. For a detailed discussion, we recommend readers refer to \cite{hogan2021knowledge}.

\subsection{Graphs, Knowledge Graphs, and Ontologies}

We now introduce some popular instances of relational data that have been considered in the machine learning community, including (homogeneous) graphs, knowledge graphs, and ontologies.

\textbf{Homogeneous graphs} can be viewed as a special case of directed edge-labeled graphs or heterogeneous graphs in which there is only one type of edges and only one type of nodes. Specifically, a homogeneous graph is defined as follows.

\begin{definition}[Homogeneous graph]
A homogeneous graph is a tuple $G=(V,E)$, where $V$ is a set of nodes, and $E \subseteq V \times V$ is a set of edges, with each edge connecting two nodes.
\end{definition}

A homogeneous graph is an \emph{undirected} graph if all edges are bidirectional, meaning that if $(V_i, V_j) \in E$ holds, then $(V_j, V_i) \in E$ also holds. For example, in co-author networks, the co-authorship relationship is an \emph{undirected} edge. Otherwise, the graph is called a \emph{directed} graph. For example, in citation networks, the citation relationship is a directed edge.

Note that a homogeneous graph is the minimal model of graph-structured data. To allow for multiple node types, homogeneous graphs can be extended to single-relational graphs, defined as,

\begin{definition}[Single-relational graph]
    A single-relational graph is a tuple $G=(V, E, L, l)$, where $V \subseteq \Con$ is a set of nodes, $L \subseteq \Con$ is a set of node labels, $E \subseteq V \times V$ is a set of edges, and $l: V \rightarrow L$ maps each node to a label, where $\Con$ denotes a countably infinite set of constants.
\end{definition}

\textbf{Node classification.} Single-relational graphs are useful for applications that involve only one type of edges but multiple types of nodes, such as node classification. Given a single-relational graph $G=(V, E, L, l)$ and the known labeling of nodes, \emph{node classification} aims to classify the missing labeling of nodes based on the node features and the graph's structure.

\begin{equation}
f_{nc}: V \rightarrow L
\end{equation}

For example, in a social network, node classification could involve predicting the communities of users based on their connections and shared content.

\textbf{Link prediction} is a task where the objective is to predict the likelihood of the existence of an edge (link) between two nodes in a graph or the specific edge label. Namely,

\begin{equation}
f_{lp}: V \times {0,1} \quad \text{or} \quad f_{lp}: V \times V \rightarrow L
\end{equation}

Link prediction is often used to predict missing or future connections in a graph. For instance, in a citation network, link prediction could involve predicting potential future collaborations between authors based on their previous co-authorships.

\textbf{Graph classification} involves assigning a label or category to an entire graph. The goal is to learn a model that can distinguish between different types or classes of graphs. Given a graph and a set of graph labels $L_G$, graph classification is defined as

\begin{equation}
f_{gc}: G \rightarrow L_G
\end{equation}

For example, in chemical informatics, graph classification might be used to predict whether a molecular graph represents a toxic or non-toxic compound based on its structure.

\textbf{Graph reconstruction} aims to reconstruct a graph by preserving the pair-wise distance of nodes. One straightforward method is to minimize the graph distortion  \cite{bachmann2020constant, Ganea2018} given by,
\begin{equation}
    \frac{1}{|V|^{2}} \sum_{u,v}\left(\left(\frac{d\left(\mathbf{u},\mathbf{v}\right)}{d_{G}(u,v)}\right)^{2}-1\right)^{2},
\end{equation}
where $d\left(\mathbf{u},\mathbf{v}\right)$ is the distance function in the embedding space and $d_{G}(u,v)$ is the graph distance (the length of shortest path) between node $u$ and $v$, $|V|$ is the number of nodes in graph.
The objective function is to preserve all pairwise graph distances. 
Motivated by the fact that most of graphs are partially observable, we can also minimize an alternative loss function \cite{law2020ultrahyperbolic, nickel2017poincare} that preserves local graph distance, given by,
\begin{equation}
    \mathcal{L}(\Theta)=\sum_{(u, v) \in \mathcal{D}} \log \frac{e^{-d(\mathbf{u},\mathbf{v})}}{\sum_{\mathbf{v}^{\prime} \in \mathcal{E}(u)} \exp^{-d\left(\mathbf{u},\mathbf{v}^{\prime}\right)}},
\end{equation}
where $\mathcal{D}$ is the connected relations in the graph, $\mathcal{E}(u)=\{v|(u,v) \notin D \cup {u}\}$ is the set of negative examples for node $u$, $d(\mathbf{u},\mathbf{v})$ is the distance function in the embedding space. 
For evaluation, we apply the mean average precision (mAP) to evaluate the graph reconstruction task. mAP is a local metric that measures the average proportion of the nearest points of a node which are actually its neighbors in the original graph. The mAP is defined as Eq.~(\ref{eq:map}).
\begin{equation}\label{eq:map}
    \operatorname{mAP}(f)=\frac{1}{|V|} \sum_{u \in V} \frac{1}{\operatorname{deg}(u)} \sum_{i=1}^{\left|\mathcal{N}_{u}\right|} \frac{\left|\mathcal{N}_{u} \cap R_{u, v_{i}}\right|}{\left|R_{u, v_{i}}\right|},
\end{equation}
where $f$ is the embedding function, $|V|$ is the number of nodes in graph, $\operatorname{deg}(u)$ is the degree of node $u$, $\mathcal{N}_u$ is the one-hop neighborhoods in the graph, $\mathbb{R}_{u,v}$ denotes whether two nodes $u$ and $v$ are connected.

\textbf{Knowledge graphs} can be viewed as an instance of directed edge-labeled graphs or  multi-relational graphs in which nodes represent entities while edges represent relations between those entities. Like directed edge-labeled graphs, in the classical form of knowledge graphs, there is also no explicit distinction between entities and concepts. 

\begin{definition}[Knowledge graph]
    A knowledge graph is a tuple $G=(V, R, E)$, where $V$ is a set of entities, $R$ is a set of relations, and $E \subseteq V \times R \times V$ is a set of triples each describing a relationship between two entities. 
\end{definition}

A knowledge graph serves as a structured representation of knowledge in a graph-based format. In this formalization, the components of the knowledge graph are:

\begin{itemize}
    \item \textbf{Entities ($V$):} Entities represent the fundamental building blocks or objects within the knowledge graph. These could be people, places, events, or any distinguishable item of interest.

    \item \textbf{Relations ($R$):} Relations define the connections or associations between entities in the knowledge graph. These relationships capture the nuanced connections that exist in the real world, expressing how entities are related to one another.

    \item \textbf{Triples ($E$):} The set of triples $E \subseteq V \times R \times V$ describe relationships between entities. Each triple consists of a subject entity, a relation, and an object entity, collectively forming a statement about the knowledge contained in the graph.

\end{itemize}

Knowledge graphs provide a structured and semantically rich representation of information, allowing for the modeling of complex relationships and dependencies. It serves as a powerful tool for organizing, linking, and querying data in a way that reflects the inherent connections present in the real world.

\textbf{Hyper-relational knowledge graph. } 
In hyper-relational knowledge graph, each fact is a hyper-relational fact represented in the form of a primal triple coupled with a set of qualifiers. Namely,

\begin{definition}[Hyper-relational knowledge graph]
    A hyper-relational knowledge graph is a tuple $G=(V, R, E)$, where $V$ is a set of entities, $R$ is a set of relations, and $E$ is a set of  hyper-relational fact. 
\end{definition}

\begin{definition}[Hyper-relational fact]
Let $\mathcal{E}$ and $\mathcal{R}$ denote the sets of entities and relations, respectively. A hyper-relational fact $\mathcal{F}$ is a tuple $(\mathcal{T}, \mathcal{Q})$, where $\mathcal{T}=(h, r, t),\ h, t \in \mathcal{E}, r \in \mathcal{R} $ is a primal triple and $\mathcal{Q}=\left\{\left(k_{i}: v_{i}\right)\right\}_{i=1}^{m} \ k_{i} \in \mathcal{R}, v_{i} \in \mathcal{E}$ is a set of qualifiers.
We call the number of involved entities in $\mathcal{F}$, i.e., $(m+2)$, the arity of the fact. 
\end{definition}
A hyper-relational fact reduces to a triple/binary fact when $m=0$. When $m>0$, each qualifier can be viewed as an auxiliary description that contextualizes or specializes the semantics of the primal triple. 
In typical open-world settings, facts with the same primal triple might have different numbers of qualifiers. 
To characterize this property, we introduce the concepts of partial fact and qualifier monotonicity in hyper-relational knowledge graphs.

\begin{definition}[Partial fact \cite{DBLP:conf/acl/GuanJGWC20}]
Given two facts $\mathcal{F}_1=\left(\mathcal{T},\mathcal{Q}_1\right)$ and $\mathcal{F}_2=\left(\mathcal{T},\mathcal{Q}_2\right)$ that share the same primal triple. We call $\mathcal{F}_1$ a partial fact of $\mathcal{F}_2$ iff $\mathcal{Q}_1 \subseteq \mathcal{Q}_2$. 
\end{definition}

In this work, we follow the monotonicity assumption by restricting the model to respect the monotonicity property.\footnote{
Some kinds of qualifiers may represent semantically opaque contexts. For instance, ((\emph{Crimea}, \emph{belongs\_to}, \emph{Russia}), \{(\emph{said\_by}, \emph{Putin})\}) does not imply the primary triple and should therefore be excluded. } For this purpose, we consider the monotonicity of query and inference.

\begin{definition}[Qualifier monotonicity]
Let $\QA(\cdot)$ denote a query answering model taking a query and a knowledge graph as input and outputting the set of answer entities. Given any pair of queries $q_1=\left(\left(h,r,x?\right), \mathcal{Q}_1\right)$ and $q_2=\left(\left(h,r,x?\right), \mathcal{Q}_2\right)$ that share the same primal triple and $\mathcal{Q}_1 \subseteq \mathcal{Q}_2$, qualifier monotonicity is given iff,
\begin{equation}
\QA(q_2; \operatorname{KG}) \subseteq \QA(q_1;\operatorname{KG}).
\end{equation}
\end{definition}
Qualifier monotonicity implies that attaching any qualifiers to a query does not enlarge the answer set of the possible tail entities, and inversely, removing the qualifiers from a query can only return more possible tail entities. This implies that if a fact is true, then all its partial facts must also be true (a.k.a. weakening of inference rule), i.e.,
\begin{equation}
    \left(\mathcal{T},\mathcal{Q}_1\right)  \wedge (\mathcal{Q}_2 \subseteq \mathcal{Q}_1) \rightarrow \left(\mathcal{T},\mathcal{Q}_2\right).
    \label{eq:momonotonicity}
\end{equation}

\textbf{Nested relational knowledge graphs.}
Given a knowledge graph $G$, which can be viewed an atomic factual knowledge graph, and each of the triple $(h,r,t)\in\sT$ is referred to as an atomic triple. 
The nested triple and nested relational knowledge graph are defined as follows.

\begin{definition}[Nested triple]
Given an atomic factual knowledge graph $G=(\sV,\sR,\sT)$, a set of nested triples is defined by $\sTT=\{\langle T_i, \rhat, T_j\rangle:T_i,T_j\in\sT, \rhat\in\Rhat \}$, where $\sT$ is the set of atomic triples and $\Rhat$ is the set of nested relation names.
\end{definition}


\begin{definition}[Nested relational knowledge graph]
Given a knowledge graph $G=(\sV,\sR,\sT)$, a set of nested relation names $\Rhat$, and a set of nested triples $\sTT$ defined on $G$ and $\Rhat$,
a nested relational knowledge graph is defined as $\Ghat=(\sV, \sR, \sT, \Rhat, \sTT)$.
\end{definition}

We can now define triple prediction and conditional link prediction \cite{DBLP:conf/aaai/Chanyoung} as follows.

\begin{definition}[Triple prediction]
Given a nested relational knowledge graph $\Ghat=(\sV, \sR, $ $ \sT, \Rhat, \sTT)$, the triple prediction problem involves answering a query $\langle T_i, \rhat, ?t\rangle$ or $\langle h? , \rhat, T_j\rangle$ with $T_i, T_j \in \sT$ and $\rhat \in \Rhat$, where the variable $?h$ or $?t$ needs to be bounded to an atomic triple within $\Ghat$.
\end{definition}

\begin{definition}[Conditional link prediction]
Given a nested relational knowledge graph $\Ghat=(\sV, \sR, \sT, \Rhat, \sTT)$, let $T_i = (h_i,r_i,t_i)$ and $T_j = (h_j,r_j,t_j)$. The conditional link prediction problem involves queries $\langle T_i, \rhat, (h_j,$ $r_j,?) \rangle$, $\langle T_i, \rhat, (?,r_j,t_j) \rangle$, $\langle (h_i,r_i,?), \rhat, T_j \rangle$, or $\langle (?,r_i,t_i), \rhat, T_j \rangle$, where the variables need to be bound to entities within $\Ghat$.
\end{definition}

\textbf{Description Logics.}
Description Logics (DLs) are a family of formal knowledge representation languages used to represent and reason about the semantics of information in a structured and logical manner. They are a subset of first-order logic and are designed to express and reason about concepts, relationships, and individuals in a domain. DLs provide a formal foundation for knowledge representation and are widely used in the field of artificial intelligence, particularly in areas such as ontology engineering and knowledge-based systems.

\begin{enumerate}
  \item \textbf{Concepts:} Represent abstract classes or sets of individuals. Concepts are used to classify and categorize entities in a domain.
  
  \item \textbf{Roles (Properties):} Describe relationships between individuals or between individuals and concepts. Roles can be used to represent attributes, roles, or associations between entities.
  
  \item \textbf{Individuals:} Represent specific instances or objects in the domain. Individuals are members of concepts and can be related to each other through roles.
  
  \item \textbf{Axioms:} Specify constraints and relationships between concepts and roles, providing the logical foundation for reasoning in the knowledge base.
\end{enumerate}

\begin{definition}[DL knowledge base]
    DL knowledge base $\mathrm{K}$ is defined as a tuple $(\mathrm{A}, \mathrm{T}, \mathrm{R})$, where $\mathrm{A}$ is the A-Box: a set of assertional axioms; $\mathrm{T}$ is the T-Box: a set of class (aka concept/terminological) axioms; and $\mathrm{R}$ is the R-Box: $a$ set of relation (aka property/role) axioms.
\end{definition}

A Description Logic knowledge base is a structured collection of information using Description Logics. It typically consists of:

\begin{enumerate}
  \item \textbf{TBox (Terminological Box):} Contains the terminology or concept hierarchy of the domain. It defines the relationships between concepts and includes axioms that express constraints on the concepts.
  
  \item \textbf{ABox (Assertional Box):} Contains assertions about individuals in the domain. It specifies which individuals belong to which concepts and how they are related through roles.
  
  \item \textbf{RBox (Role Box):} Contains information about roles and their properties, including relationships between roles.
\end{enumerate}

The combination of the TBox, ABox, and RBox provides a comprehensive representation of the knowledge about a domain in a formal and logical manner. Description Logics and their associated knowledge bases are commonly used in various applications, including semantic web technologies, ontology development, knowledge-based systems, and the representation of domain-specific knowledge in AI systems.

\begin{table}
\vspace{-0.4cm}
\centering
\caption{Syntax and semantic of $\mathcal{EL}^{++}$ (role inclusions and concrete domains are omitted).}
\label{tab:el_syntax_seman}
\resizebox{\linewidth}{!}{
\begin{tabular}{ccccccc}
\hline\noalign{\smallskip} 
 & Name & Syntax & Semantics \\
\hline\noalign{\smallskip}
\multirow{5}{*}{Constructors} & Top concept &  $\top$ & $\Delta^{\mathcal{I}}$ \\
& Bottom concept & $\bot$ & $\emptyset$ \\
& Nominal & $\{a\}$ & $\{a^{\mathcal{I}}\}$ \\
& Conjunction & $C\sqcap D$ & $C^{\mathcal{I}}\cap D^{\mathcal{I}}$ \\
& Existential restriction & $\exists r . C$ & \makecell{$\left\{x \in \Delta^{\mathcal{I}} \mid \exists y \in \Delta^{\mathcal{I}}\right.$ \\
$\left.(x, y) \in r^{\mathcal{I}} \wedge y \in C^{\mathcal{I}}\right\}$ } \\
\hline\noalign{\smallskip}
\multirow{2}{*}{ABox}
& Concept assertion & $C(a)$ & $a^{\mathcal{I}} \in C^{\mathcal{I}} $ \\
& Role assertion & $r(a,b)$ & $(a^{\mathcal{I}}, b^{\mathcal{I}}) \in r^{\mathcal{I}}$ \\
\hline\noalign{\smallskip}
\multirow{1}{*}{TBox}
& Concept inclusion & $C \sqsubseteq D$ & $C^{\mathcal{I}} \subseteq D^{\mathcal{I}}$  \\
\noalign{\smallskip}\hline
\end{tabular}
}
\end{table}

\textbf{Description logic $\mathcal{EL}^{++}$}. The DL $\mathcal{EL}^{++}$ underlies multiple biomedical KBs like GALEN \cite{rector1996galen} and  the Gene Ontology \cite{gene2015gene}. Formally, the syntax of $\mathcal{EL}^{++}$ is built up from a set $N_I$ of \emph{individual names}, $N_C$ of \emph{concept names} and $N_R$ of \emph{role names} (also called \emph{relations}) using the constructors shown in Table \ref{tab:el_syntax_seman}, where $N_I$, $N_C$ and $N_R$ are pairwise disjoint. Strictly speaking, $\mathcal{EL}^{++}$ also allows for concrete domains, but we do not make use of them here. 

The semantics of $\mathcal{EL}^{++}$ is defined by \emph{interpretations} $\interp = (\interpDom, \interpMap)$, where the domain $\interpDom$ is a non-empty set
and $\interpMap$ is a mapping that associates every individual with an element in $\interpDom$, every concept name with a subset of $\interpDom$, and every relation name with a relation over $\interpDom \times \interpDom$. An \emph{interpretation} is satisfied if it satisfies the corresponding semantic conditions. The syntax and the corresponding semantics (i.e., interpretation of concept expressions) of $\mathcal{EL}^{++}$ are summarized in Table \ref{tab:el_syntax_seman}. 

An $\mathcal{EL}^{++}$ KB $(\abox, \tbox)$ consists of an ABox $\abox$ and a TBox $\tbox$. The \emph{ABox} is a set of \emph{concept assertions} ($C(a)$)  and \emph{role assertions}  ($r(a,b)$), where $C$ is a concept, $r$ is a relation, and $a,b$ are individuals. 
The TBox is a set of \emph{concept inclusions} of the form $C \sqsubseteq D$.
Intuitively, the ABox contains instance-level information (e.g. $\concept{Person}(\insta{John})$), $\rel{isFatherOf}(\insta{John}, \insta{Peter})$), while the TBox contains information about concepts (e.g. $\concept{Parent} \sqsubseteq \concept{Person}$ ). 
Every $\mathcal{EL}^{++}$ KB can be transformed
such that every TBox statement has the form $C_1 \sqsubseteq D$, $C_1 \sqcap C_2 \sqsubseteq D$, $C_1 \sqsubseteq \exists r. C_2$, 
$\exists r. C_1 \sqsubseteq D$, where $C_1, C_2, D$ can be the top 
concept, concept names or nominals and $D$ can also be the bottom concept \cite{baader2005pushing}. 
The normalized KB can be computed in linear time by introducing new concept names for complex concept expressions and is a conservative extension of the original KB, i.e., every model of the normalized KB is a model of the original KB and every model of the original KB can be extended to be a model
of the normalized KB \cite{baader2005pushing}.

\section{Relational Representation Learning}

\subsection{Graph Representation Learning}
Given a graph $G=(V,E)$, the goal of graph representation learning is to learn a node mapping function $f:V\to \mathbb{R}^d$, which project each node into low dimensional vectors in space $\mathbb{R}^d$, where $d\ll \left|V\right|$, while preserving the graph structure and node attributes. The learned representation $\mathbf{H}$ can be applied to downstream tasks such as node classification, link prediction, and graph reconstruction.

\subsection{Graph Neural Networks.}
Graph neural networks (GNNs) are a class of neural networks designed to operate on graph-structured data. 
GNNs are particularly effective for tasks where relationships or dependencies between data points can be naturally represented as a graph. A more specific type of GNN is known as Graph convolutional networks (GCNs). 
Given a graph \( G \), a GNN processes node features \( X \) and adjacency information \( A \) to learn a mapping \( f: \mathbb{R}^{|V| \times d} \times \mathbb{R}^{|V| \times |V|} \rightarrow \mathbb{R}^{|V| \times o} \), where \( d \) is the input feature dimension, \( o \) is the output dimension, and \( |V| \) is the number of nodes.

The GNN processes information in an iterative manner through layers. At each layer, the hidden representations \( H^{(l+1)} \) are computed as a function of the previous layer's representations \( H^{(l)} \):

\[
H^{(l+1)} = \sigma \left( A \cdot H^{(l)} \cdot W^{(l)} + b^{(l)} \right)
\]

where \( W^{(l)} \) and \( b^{(l)} \) are learnable parameters for layer \( l \), \( \cdot \) denotes matrix multiplication, and \( \sigma \) is a non-linear activation function.

GNNs can be applied to a variety of tasks, such as node classification, link prediction, and graph classification, making them versatile for learning from graph-structured data.

\subsection{Knowledge Graph Embeddings.}
A knowledge graph embedding is a function \(f: V \cup R \rightarrow \mathbb{R}^d\) that projects entities and relations into a continuous vector space of dimension \(d\). Formally:

\[
f(e_i) \in \mathbb{R}^d, \quad \forall e_i \in V
\]
\[
f(r_j) \in \mathbb{R}^d, \quad \forall r_j \in R
\]

The scoring function \(s: V \times R \times V \rightarrow \mathbb{R}\) evaluates the compatibility of a triple \((h, r, t)\) in the knowledge graph. 
A common scoring function, exemplified by TransE, is defined as:

\[
s(h, r, t) = \text{dist}(\mathbf{f}(h) + \mathbf{f}(r), \mathbf{f}(t))
\]

where \(\mathbf{f}(h)\), \(\mathbf{f}(r)\), and \(\mathbf{f}(t)\) are the embeddings of the head entity, relation, and tail entity, respectively. The function \(\text{dist}(\cdot, \cdot)\) measures the dissimilarity between two vectors, often represented by the Euclidean distance:

\[
\text{dist}(\mathbf{u}, \mathbf{v}) = \|\mathbf{u} - \mathbf{v}\|_2
\]

The objective of knowledge graph embeddings is to learn vectors that minimize the scores for true triples \((h, r, t) \in E\) and simultaneously maximize the scores for negative samples, enhancing the model's ability to capture and predict complex relationships within the knowledge graph.

Two common evaluation metrics for knowledge graph embeddings are Mean Reciprocal Rank (MRR) and Hit@k. Mean Reciprocal Rank (MRR is a metric used to evaluate the ability of a model to rank the correct entity (or relationship) higher in the list of candidates. It is particularly relevant for tasks like link prediction. MRR is calculated as the average of the reciprocal ranks of the correct entities:

\[
\text{MRR} = \frac{1}{N} \sum_{i=1}^{N} \frac{1}{\text{rank}_i}
\]

Where: $N$ is the total number of test instances. \(\text{rank}_i\) is the rank of the correct entity for the \( i\text{-th} \) test instance. The higher the MRR, the better the model is at correctly ranking the true entities higher in the list.

\textbf{Hit@k} is another metric that measures the proportion of test instances where the correct entity (or relationship) appears in the top \( k \) ranked candidates. This metric is also used for tasks like link prediction. The formula for Hit@k is given by:
\[
\text{Hit@k} = \frac{1}{N} \sum_{i=1}^{N} \text{hit}_i
\]
Where $N$ is the total number of test instances. $\text{hit}_i$ is a binary indicator function, which equals 1 if the correct entity is among the top $k$ ranked candidates for the $i_{th}$ test instance, and 0 otherwise. Hit@k provides insights into the model's ability to retrieve the correct entities within the top $k$ positions.

\section{Geometric Embeddings.} 

Vector embeddings map objects to a low-dimensional vector space. Vector embeddings
have been developed to learn representations of objects that allow for distinguishing relevant differences and ignoring irrelevant variations between objects. 
When vector embeddings are used to embed structural/relational data, they fail to represent key properties of relational data
 that cannot be easily modeled in a plain, low-dimensional vector space.  For example, relational data may have been defined by applying set operators such as set inclusion and exclusion \cite{xiong2022faithful}, logical operations such as negation \cite{ren2020beta}, or they may exhibit relational patterns like the symmetry of relations \cite{abboud2020boxe} and structural patterns (e.g., trees and cycles) \cite{chami2020low,DBLP:conf/kdd/XiongZNXP0S22}. 

Going beyond plain vector embeddings, \emph{geometric relational embeddings} replace the vector representations with more advanced geometric objects, such as convex regions \cite{DBLP:conf/ijcai/KulmanovLYH19,ren2019query2box,xiong2022faithful}, density functions \cite{wang2022dirie,ren2020beta}, elements of hyperbolic manifolds \cite{chami2020low}, and their combinations \cite{suzuki2019hyperbolic}. 
Geometric relational embeddings provide a rich geometric inductive bias for modeling relational/structured data. 
For example, embedding objects as convex regions allows for modeling not only similarity but also set-based and logical operators such as set inclusion, set intersection \cite{xiong2022faithful} and logical negation \cite{zhang2021cone} while representing data on non-Euclidean manifolds allows for capturing complex structural patterns, such as representing hierarchies in hyperbolic space \cite{chami2020low}. We group them into three lines of works. 

\subsection{Distribution-based Embeddings}

Probability distributions provide a rich geometry of the latent space. Their density can be interpreted as soft regions and it allows us to model uncertainty, asymmetry, set inclusion/exclusion, entailment, and so on. 


\textbf{Gaussian embeddings.} Word2Gauss \cite{vilnis2015word} maps words to multi-dimensional Gaussian distributions over a latent embedding space such that the linguistic properties of the words are captured by the relationships between the distributions.
A Gaussian $\mathcal{N}(\boldsymbol{\mu}, \boldsymbol{\Sigma})$ is parameterized by a mean vector $\boldsymbol{\mu}$ and a covariance matrix $\boldsymbol{\Sigma}$ (usually a diagonal matrix for the sake of computing efficiency). 
The model can be optimized by an energy function $-E\left(\mathcal{N}_i, \mathcal{N}_j\right)$ that is equivalent to the KL-divergence $D_{\operatorname{KL}}\left(\mathcal{N}_j \| \mathcal{N}_i\right)$ defined as
\begin{equation}
D_{\operatorname{KL}}\left(\mathcal{N}_j \| \mathcal{N}_i\right) =\int_{x \in \mathbb{R}^n} \mathcal{N}\left(x ; \mu_i, \Sigma_i\right) \log \frac{\mathcal{N}\left(x ; \mu_j, \Sigma_j\right)}{\mathcal{N}\left(x ; \mu_i, \Sigma_i\right)} d x.  
\end{equation}

KG2E \cite{he2015learning} extends this idea to knowledge graph embedding by mapping entities and relations as Gaussians. 
Given a fact $(h,r,t)$, the scoring function is defined as $f(h, r, t)=\frac{1}{2}\left(\mathcal{D}_{\operatorname{KL}}\left(\mathcal{N}_h, \mathcal{N}_r\right)+\mathcal{D}_{\operatorname{KL}}\left(\mathcal{N}_r, \mathcal{N}_t\right)\right)$.
The covariances of entity and relation embeddings allow us to model uncertainties in knowledge graphs.
While modeling the scores of triples as KL-divergence allows us to capture asymmetry. 
TransG \cite{xiao2016transg} generalizes KG2E to a Gaussian mixture distribution to deal with multiple relation semantics revealed by the entity pairs.
For example, the relation HasPart has at least two latent semantics: composition-related as  (\emph{Table}, \emph{HasPart}, \emph{Leg}) and location-related as (\emph{Atlantics}, \emph{HasPart}, \emph{NewYorkBay}).

\subsection{Region-based Embeddings}
Mapping data as convex regions is inspired by the Venn diagram. Region-based embeddings nicely model the set theory that can be used to capture uncertainty \cite{DBLP:conf/naacl/ChenBCDLM21}, logical rules \cite{abboud2020boxe}, transitive closure~\cite{vilnis2018probabilistic}, logical operations~\cite{ren2019query2box}, etc. Several convex regions have been explored including balls, boxes, and cones. 

\textbf{Ball embeddings} associate each object $w$ with an $n$-dimensional ball $\mathbb{B}_w\left(\mathbf{c}_w, r_w\right)$, where $\mathbf{c}_w$ and $r_w$ are the central point and its radius of the ball, respectively.
A ball is defined as the set of vectors whose Euclidean distance to $\mathbf{c}_w$ is less than $r_w$: $\mathbb{B}\left(\mathbf{c}_w, r_w\right) = \left\{\mathbf{p}| \left\|\mathbf{c}_w-\mathbf{p}\right\|<r_w\right\}$. 
In ElEm \cite{DBLP:conf/ijcai/KulmanovLYH19}, each concept in $\mathcal{E}\mathcal{L}^{++}$ ontologies is represented as an open $n$-ball, and subsumption relations between concepts are modeled as ball containment. This explicit modeling of subsumption structure leads to significant improvements in predicting human protein-protein interactions. \cite{DBLP:conf/iclr/DongBJLCSCZ19} represents categories with $n$-balls while considering tree-structured category information. By embedding subordinate relations between categories as ball containment, it shows promising results on NLP tasks compared to conventional word embeddings.

\textbf{Box embeddings} represent objects with $d$ dimensional rectangles, i.e., a Cartesian product of $d$ closed intervals denoted by $\prod_{i=1}^d\left[x_i^{\mathrm{m}}, x_i^{\mathrm{M}}\right], \quad$ where $x_i^{\mathrm{m}}<x_i^{\mathrm{M}}$ and $x_i^{\mathrm{m}}$ and $x_i^{\mathrm{M}}$ are lower-left and top-right coordinates of boxes \cite{vilnis2018probabilistic}.
Box embeddings capture anticorrelation and disjoint concepts better than order embeddings. 
Joint Box \cite{patel2020representing} improves box embeddings' ability to express multiple hierarchical relations (e.g., hypernymy and meronymy) and proposes joint embedding of these relations in the same subspace to enhance entity characterization, resulting in improved performance.
BoxE \cite{abboud2020boxe} proposes embedding entities and relations as points and boxes in knowledge bases, improving expressivity by modeling rich logic hierarchies in higher-arity relations. 
BEUrRE \cite{DBLP:conf/naacl/ChenBCDLM21} is similar to BoxE but it differs in embedding entities and relations as boxes and affine transformations, respectively, which enables better modeling marginal and joint entity probabilities. 
Query2Box \cite{ren2019query2box} shares BoxE's idea, embedding queries as boxes, with answer entities inside, to support a range of querying operations, including disjunctions, in large-scale knowledge graphs. 
For multi-label classification, MBM \cite{patel2021modeling} proposes a multi-label box model that marries neural networks with \cite{vilnis2018probabilistic}, where labels are embedded as boxes so that label-label relations can be easily modeled. 

\textbf{Cone embedding} was first proposed in Order embedding (OE) \cite{vendrov2015order} to represent a partial ordered set (poset). However, this method is restricted to axis-parallel cones and it only captures positive correlations as any two axis-parallel cones intersect at infinite.
Recent cone embeddings formulate cones with additional angle parameters. 
As one of the main advantages, angular cone embedding has been used to model the negation operator since the angular cone is closed under negation. 
For example, negation can be modeled as the polarity of a cone as used in ALC ontology \cite{o2021cone} or the complement of a cone as used in ConE for logical queries \cite{zhang2021cone}.

\subsection{Manifold/Differential Geometry.}
A smooth manifold $(\mathcal{M},g)$ is defined as a smooth manifold equipped with a metric $g$, which induces a scalar product for each point $\mathbf{x}\in\mathcal{M}$ on the \emph{tangent} space $\mathcal{T}_{\mathbf{x}}\mathcal{M}$.
For each point $\mathbf{x}$, the metric tensor is defined as $g_{\mathbf{x}}:\mathcal{T}_{\mathbf{x}}\mathcal{M}\times\mathcal{T}_{\mathbf{x}}\mathcal{M}\rightarrow\mathbb{R}$, which varies smoothly with the point $\mathbf{x}$ and induces geometric notions such as geodesic distances, angles and curvatures.

\par
\textbf{Riemannian manifold.} A Riemannian Manifold $(\mathcal{M},g)$ is a manifold $\mathcal{M}$ equiped with a Riemannian metric $g$. The Riemannian metric is positive definite that means $\forall \bm{\xi}\in\mathcal{T}_{\mathbf{x}}\mathcal{M},g_{\mathbf{x}}(\bm{\xi},\bm{\xi})>0 \text{ iff } \bm{\xi}\neq \mathbf{0}$.
We denote the \emph{curvature} of a manifold as $K$ and different notions of \emph{curvature} quantify the ways in which a surface is locally curved around a point. There are three different types of constant curvature Riemannian manifold $\mathcal{M}$ with respect to the sign of curvature: hyperboloid $\mathbb{H}_K$ (negative curvature), hypersphere $\mathbb{S}_K$ (positive curvature) and Euclidean space $\mathbb{R}$ (zero curvature).
\begin{equation}\label{eq:manifolds}
\mathcal{M}
=
\left\{\begin{array}{ll}
\mathbb{S}_K^n = \{\mathbf{x}\in\mathbb{R}^{n+1}: \langle \mathbf{x},\mathbf{x}\rangle_2=1/K\}, & \text { if } K>0 \\
\mathbb{E}^n = \mathbb{R}^{n}, & \text { if } K=0 \\
\mathbb{H}_K^n = \{\mathbf{x}\in\mathbb{R}^{n+1}: \langle \mathbf{x},\mathbf{x}\rangle_{\mathcal{L}}=1/K\}, & \text { if } K<0
\end{array}\right.
\end{equation}
where $\langle\cdot,\cdot\rangle_2$ is the standard Euclidean inner product, and $\langle\cdot,\cdot\rangle_{\mathcal{L}}$ is the Lorentz inner product. For $\forall \mathbf{x},\mathbf{y}\in\mathbb{R}^{n+1}$, the Lorentz inner product $\langle\mathbf{x},\mathbf{y}\rangle_{\mathcal{L}}$ is defined as follows.
\begin{equation}\label{eq:lorentz}
\langle\mathbf{x},\mathbf{y}\rangle_{\mathcal{L}} = -x_1y_1+\sum_{i=2}^{n+1}x_iy_i,
\end{equation}
\par
\textbf{Product manifolds.} Given a sequence of smooth manifolds $M_1,M_2,..., $ $M_k$, the product manifold is defined as the Cartesian product $M=M_1\times M_2\times ...\times M_k$, with the metric tensor $g(\mathbf{u},\mathbf{v})=\sum_{i=1}^{k}g_i(u_i,v_i)$. Correspondingly, the points $\mathbf{x}\in\mathcal{M}$ can be represented as their coordinates $\mathbf{x}=(x_1,...,x_k), x_i\in\mathcal{M}_i$.
\par
\textbf{Geodesics}. A smooth curve
$\gamma$ of minimal length between two points $\mathbf{x}$ and $\mathbf{y}$ is called a geodesic, which can be seen as the generalization of a straight-line in Euclidean space. Formally, the geodesic is defined as $\gamma_{\mathbf{x}\rightarrow \bm{\xi}}(\tau): I\rightarrow\mathcal{M}$ from an interval $I=[0,1]$ of the reals to the metric space $\mathcal{M}$, which maps a real value $\tau\in I$ to a point on the manifold $\mathcal{M}$ with initial velocity $\bm{\xi}\in\mathcal{T}_{\mathbf{x}}\mathcal{M}$.
By the means of geodesic, the exponential map can be defined as $\exp_{\mathbf{x}}(\bm{\xi})=\gamma_{\mathbf{x}\rightarrow\bm{\xi}}(1)$.
\par
\textbf{Exponential and logarithmic map in Riemannian manifold.}
The connections between manifolds and \textit{tangent} space are established by the differentiable exponential map and logarithmic map.
The exponential map $\exp_{\mathbf{x}}$ at $\mathbf{x}$ gives a way to project back a vector $\mathbf{v}\in \mathcal{T}_{\mathbf{x}}\mathcal{M}$ to a point $\exp_{\mathbf{x}}(\mathbf{v})\in\mathcal{M}$ on the manifold. And the logarithmic map $\log_{\mathbf{x}}:\mathcal{M}\rightarrow\mathcal{T}_{\mathbf{x}}\mathcal{M}$ is defined as the inverse of the exponential map. Common realizations of Riemannian manifolds are hypersphere $\mathbb{S}_K$, the Euclidean space $\mathbb{R}$, and the hyperboloid $\mathbb{H}_K$. And we summarize \textit{expmap} and \textit{logmap} operations in these three spaces compactly in Table \ref{tab:explog_riemannian}. 

\begin{table}[htbp]
\caption{Summary of \textit{expmap} and \textit{logmap} in $\mathbb{R}$, $\mathbb{S}_K$ and $\mathbb{H}_K$.}
\centering
\resizebox{\linewidth}{!}{
\begin{tabular}{ll}
\toprule
 Operations &  \\
\midrule
  \textit{expmap} in $\mathbb{R}$ & $\exp_{\mathbf{x}}(\mathbf{v})=\mathbf{x}+\mathbf{v}$ \\
  \textit{logmap} in $\mathbb{R}$ & $\log_{\mathbf{x}}(\mathbf{y})=\mathbf{y}-\mathbf{x}$ \\
  \textit{expmap} in $\mathbb{S}_K$ & $\exp_{\mathbf{x}}^{K}(\mathbf{v})=\cos\left(\sqrt{|K|}\|\mathbf{v}\|_2\right)+\sin\left(\sqrt{|K|}\|\mathbf{v}\|_2\right)\frac{\mathbf{v}}{\sqrt{|K|}\|\mathbf{v}\|_2} $ \\
  \textit{logmap} in $\mathbb{S}_K$ & $\log_{\mathbf{x}}^{K}(\mathbf{y})=\frac{\cos^{-1}(K\langle \mathbf{x}, \mathbf{y} \rangle_{2})}{\sqrt{1-K^2\langle \mathbf{x}, \mathbf{y} \rangle_{2}^2}}\left(\mathbf{y}-K\langle \mathbf{x}, \mathbf{y} \rangle_{2} \mathbf{x}\right) $ \\
  \textit{expmap} in $\mathbb{H}_K$ &
  $\exp_{\mathbf{x}}^{K}(\mathbf{v})=\cosh\left( \sqrt{|K|}\|\mathbf{v}\|_{\mathcal{L}} \right)\mathbf{x} + \sinh\left( \sqrt{|K|}\|\mathbf{v}\|_{\mathcal{L}} \right)\frac{\mathbf{v}}{\sqrt{|K|}\|\mathbf{v}\|_{\mathcal{L}}}$ \\
  \textit{logmap} in $\mathbb{H}_K$ & $\log_{\mathbf{x}}^{K}(\mathbf{y})=\frac{\cosh^{-1}(K\langle \mathbf{x}, \mathbf{y} \rangle_{\mathcal{L}})}{\sqrt{K^2\langle \mathbf{x}, \mathbf{y} \rangle_{\mathcal{L}}^2-1}}\left(\mathbf{y}-K\langle \mathbf{x}, \mathbf{y} \rangle_{\mathcal{L}} \mathbf{x}\right) $  \\
\bottomrule
\end{tabular}}
\label{tab:explog_riemannian}
\end{table}

\subsection{Riemannian manifold embeddings}

\textbf{Hyperbolic Embeddings}
The Poincaré ball $\left(\mathbb{D}^{n}, g^{\mathbb{D}}\right)$ is one of the models of hyperbolic geometry that is very suitable for representing hierarchies due to its exponentially growing volume~\cite{andrews2010ricci}. 
The Poincaré ball is defined as an open $n$-ball $\mathbb{D}^{n}=\left\{\mathbf{x} \in \mathbb{R}^{n}:\|\mathbf{x}\|<1\right\}$ equipped with a Riemannian metric $g_{\mathbf{x}}^{\mathbb{D}}=\lambda_{\mathbf{x}}^{2} g^{E}$, where $\lambda_{\mathbf{x}}=\frac{2}{1-\|\mathbf{x}\|^{2}}$, 
$g^{E}=\mathbf{I}_{n}$ is the Euclidean metric tensor, $\lambda_{\mathbf{x}}$ is the \emph{conformal factor}, and $\|\cdot\|^2$ denotes the $L^{2}$ norm in Euclidean space. 
The distance between two points $\mathbf{x}, \mathbf{y} \in \mathbb{D}^{n}$ can be defined by 
\begin{equation}
    d_{\mathbb{D}}(\mathbf{x}, \mathbf{y})=\cosh ^{-1}\left(1+2 \frac{\|\mathbf{x}-\mathbf{y}\|^{2}}{\left(1-\|\mathbf{x}\|^{2}\right)\left(1-\|\mathbf{y}\|^{2}\right)}\right)
\end{equation}

The Poincaré ball has been used for modeling hierarchical structures \cite{weber2018curvature}.
MuRP \cite{balazevic2019multi} models multi-relational data by transforming entity embeddings by Möbius matrix-vector multiplication and addition.
To capture both hierarchy and logical patterns e.g., symmetry and composition,
AttH \cite{chami2020low} models relation transformation by rotation/reflection and also embeds entities on the Poincaré ball.
FieldH and FieldP \cite{nayyeri2021knowledgefieldemi} embed entities on trajectories that lie on hyperbolic manifolds namely Hyperboloid and Poincaré ball to capture heterogeneous patterns formed by a single relation (e.g., a combination of loop and path) besides hierarchy and logical patterns. 
\cite{pan2021hyperbolic} embeds nodes on the extended Poincaré  Ball and  polar coordinate system to address numerical issues when dealing with the embedding of neighboring nodes on the boundary of the ball on previous models. 
HyboNet \cite{chen2021fully} proposes a fully hyperbolic method that uses Lorentz transformations to overcome the incapability of previous hyperbolic methods of fully exploiting the advantages of hyperbolic space. 

\textbf{Spherical embeddings} 
A spherical manifold $\mathcal{M} = \{x\in \mathbb{R}^d |\,\,\, \|x\|=1\}$ has been used as embedding space in several works to model loops in graphs. 
MuRS \cite{wang2021mixed} models relations as linear transformations on the tangent space of spherical manifold, together with spherical distance in score formulation.
The spherical distance between the transformed head and tail is used to formulate the score function.
A more sophisticated approach is FiledP \cite{nayyeri2021knowledgefieldemi} in which entity spaces are trajectories on a sphere based on relations to model more complex patterns compared to MuRS.
5*E \cite{nayyeri20215} embeds entities on flows on the product space of a complex projective line which is called the Riemann sphere. 
The projective transformation then covers translating, rotation, homothety, reflection, and inversion which are powerful for modeling various structures and patterns such as combination of loop and loop, loop and path, and two connected path structures.

\textbf{Mixed manifold embeddings}
Relational data exhibit heterogeneous structures and patterns where each class of patterns and structures can be modeled efficiently by a particular manifold.
Therefore, combining several manifolds for embedding is beneficial and addressed in the literature.
MuRMP \cite{wang2021mixed} extends MuRP \cite{balazevic2019multi} to a product space of spherical, Euclidean, and hyperbolic manifolds. 
GIE \cite{cao2022geometry} improves the previous mixed models by computing the geometric interaction on tangent space of Euclidean, spherical and hyperbolic manifolds via an attention mechanism to emphasize on the most relevant geometry and then projects back the obtained vector to the hyperbolic manifold for score calculation. 
DGS \cite{iyer2022dual} targets the same problem of heterogeneity of structures and utilizes the hyperbolic space, spherical space, and intersecting space in a unified framework for learning embeddings of different portions of two-view knowledge graphs (ontology and instance levels) in different geometric space.
DyERNIE \cite{han2020dyernietemporalhyperbolic} is a temporal knowledge graph embedding on product manifolds that models the evolution of entities over time on tangent space of product manifolds. 
While the reviewed works provide separate spherical, Euclidean, and hyperbolic manifolds and act on Riemannian manifolds, 
In this model, relations are modeled by the pseudo orthogonal transformation which is decomposed into cosine-sine forms covering hyperbolic/circular rotation and reflection. This model can capture heterogeneous structures and logical patterns.

\subsection{Pseudo-Riemannian Manifolds} 

A pseudo-Riemannian manifold \cite{1983semirie} $(\mathcal{M},g)$ is a smooth manifold $\mathcal{M}$ equipped with a nondegenerate and indefinite metric tensor $g$. Nondegeneracy means that for a given $\bm{\xi}\in\mathcal{T}_{\mathbf{x}}\mathcal{M}$, for any $\bm{\zeta}\in\mathcal{T}_{\mathbf{x}}\mathcal{M}$ we have $g_{\mathbf{x}}(\bm{\xi},\bm{\zeta})=0$, then $\bm{\xi}=\mathbf{0}$.
The metric tensor $g$ induces a scalar product on the \emph{tangent} space $\mathcal{T}_{\mathbf{x}}\mathcal{M}$ for each point $\mathbf{x}\in\mathcal{M}$ such that $g_\mathbf{x}: \mathcal{T}_{\mathbf{x}}\mathcal{M} \times\mathcal{T}_{\mathbf{x}}\mathcal{M} \rightarrow \mathbb{R} $, where the \emph{tangent} space $\mathcal{T}_{\mathbf{x}}\mathcal{M}$ can be seen as the first order local approximation of $\mathcal{M}$ around point $\mathbf{x}$. 
The elements of $\mathcal{T}_{\mathbf{x}}\mathcal{M}$ are called tangent vectors. \textit{Indefinity} means that the metric tensor could be of arbitrary signs. 
A principal special case is the Riemannian geometry, where the metric tensor is positive definite (i.e. $\forall \bm{\xi}\in\mathcal{T}_{\mathbf{x}}\mathcal{M},g_{\mathbf{x}}(\bm{\xi},\bm{\xi})>0 \text{ iff } \bm{\xi}\neq \mathbf{0}$).

\subsubsection{Pseudo-hyperboloid}
By analogy with hyperboloid and sphere in Euclidean space. Pseudo-hyperboloids are defined as the submanifolds in the ambient pseudo-Euclidean space $\mathbb{R}^{s,t+1}$ with the dimensionality of $d=s+t+1$ that uses the scalar product as $\forall \mathbf{x},\mathbf{y} \in \mathbb{R}^{s,t+1}, \langle \mathbf{x}, \mathbf{y}\rangle_t = -\sum_{i=0}^{t}x_i y_i+\sum_{j=t+1}^{s+t}x_jy_j$.
The scalar product induces a norm $\|\mathbf{x}\|_{t}^{2}=\langle \mathbf{x}, \mathbf{x}\rangle_{t}$ that can be used to define a pseudo-hyperboloid
$\mathcal{Q}_{\beta}^{s, t}=\{\mathbf{x}=\left(x_{0}, x_{1}, \cdots, x_{s+t}\right)^{\top} $ $\in \mathbb{R}^{s,t+1}:\|\mathbf{x}\|_{t}^{2}=\beta\}$, where $\beta$ is a nonzero real number parameter of curvature. $\mathcal{Q}_{\beta}^{s,t}$ is called  \emph{pseudo-sphere} when $\beta>0$ and \emph{pseudo-hyperboloid} when $\beta<0$. Since $\mathcal{Q}_{\beta}^{s,t}$ is interchangeable with $\mathcal{Q}_{-\beta}^{t+1,s-1}$, we consider the pseudo-hyperbololid here. 
Following the terminology of special relativity, a point in $\mathcal{Q}_{\beta}^{s,t}$ can be interpreted as an event \cite{sun2015space}, where the first $t+1$ dimensions are time dimensions and the last $s$ dimensions are space dimensions. 
Hyperbolic $\mathbb{H}$ and spherical $\mathbb{S}$ manifolds can be defined as the special cases of pseudo-hyperboloids by setting all time dimensions except one to be zero and setting all space dimensions to be zero, respectively, i.e. $\mathbb{H}_{\beta} = \mathcal{Q}_{\beta}^{s,1}, \mathbb{S}_{-\beta} = \mathcal{Q}_{\beta}^{0,t}$.
\par

\subsubsection{Geodesic tools of pseudo-hyperboloid} 

\textbf{Geodesic.}
A generalization of a \textit{straight-line} in the Euclidean space to a manifold is called the \textit{geodesic} \cite{Ganea2018,willmore2013introduction}. Formally, a geodesic $\gamma$  is defined as a constant speed curve $\gamma: \tau \mapsto \gamma(\tau) \in \mathcal{M}, \tau \in[0,1]$ joining two points $\mathbf{x},\mathbf{y} \in \mathcal{M}$ that minimizes the length, where the length of a curve is given by $L(\gamma)=\int_{0}^{1} \sqrt{\left\|\frac{d}{d t} \gamma(\tau)\right\|_{\gamma(\tau)}} dt$. 
The geodesic holds that $\gamma^{*}=\arg \min _{\gamma} L(\gamma)$, such that $\gamma(0)=$ $\boldsymbol{x}, \gamma(1)=\boldsymbol{y}$, and $\left\|\frac{d}{d\tau} \gamma(\tau)\right\|_{\gamma(\tau)}=1$. 

By the means of the geodesic, the distance between $\mathbf{x}, \mathbf{y} \in \mathcal{Q}_{\beta}^{s,t}$ is defined as the arc length of geodesic $\gamma(\tau)$. 

\textbf{Exponential and logarithmic maps.}
The connections between manifolds and \textit{tangent} space are established by the differentiable exponential map and logarithmic map.
The exponential map at $\mathbf{x}$ is defined as $\exp_{\mathbf{x}}(\bm{\xi})=\gamma(1)$, which gives a way to project a vector $\bm{\xi} \in \mathcal{T}_{\mathbf{x}}\mathcal{M}$ to a point $\exp_{\mathbf{x}}(\bm{\xi})\in\mathcal{M}$ on the manifold. The logarithmic map $\log_{\mathbf{x}}:\mathcal{M}\rightarrow\mathcal{T}_{\mathbf{x}}\mathcal{M}$ is defined as the inverse of the exponential map (i.e. $\log_{\mathbf{x}}=\exp_{\mathbf{x}}^{-1}$). 
Note that since $\mathcal{Q}_{\beta}^{s, t}$ is a geodesically complete manifold, the domain of the exponential map $\mathcal{D}_x$ is hence defined  on the entire tangent space, i.e. $\mathcal{D}_x = \mathcal{T}_\mathbf{x}\mathcal{Q}_{\beta}^{s, t}$. However, as we will explain later, the logarithmic map $\log _{\mathbf{x}}(\mathbf{y})$ is only defined when there exists a a length-minimizing geodesic between $\mathbf{x}, \mathbf{y} \in \mathcal{Q}_{\beta}^{s, t}$. 

\textbf{Parallel transport.}\quad
Given the geodesic $\gamma(\tau)$ on $\mathcal{Q}_{\beta}^{s,t}$ passing through $\mathbf{x}\in\mathcal{Q}_{\beta}^{s,t}$ with the tangent direction $\bm{\xi}\in\mathcal{T}_{\mathbf{x}}\mathcal{Q}_{\beta}^{s,t}$, the parallel transport of $\bm{\zeta}\in\mathcal{T}_{\mathbf{x}}\mathcal{Q}_{\beta}^{s,t}$ is defined as Eq.~(\ref{eq:pt}), where $\bm{\xi}=\log_{\mathbf{x}}(\mathbf{y})$.
\begin{equation}\label{eq:pt}\scriptsize
P_{\mathbf{x}\rightarrow\mathbf{y}}^{\beta}(\bm{\zeta}) = \left\{\begin{array}{ll}
\frac{\langle\bm{\zeta},\bm{\xi}\rangle}{\|\bm{\xi}\|}\left[ \mathbf{x}\sinh(\tau\|\bm{\xi}\|)+\frac{\bm{\xi}}{\|\bm{\xi}\|}\cosh(\tau\|\bm{\xi}\|) \right]+\left( \bm{\zeta}-\frac{\langle\bm{\zeta},\bm{\xi}\rangle}{\|\bm{\xi}\|^2}\bm{\xi} \right), & \text { if } \langle \bm{\xi},\bm{\xi}\rangle_t>0 \\
\frac{\langle\bm{\zeta},\bm{\xi}\rangle}{\|\bm{\xi}\|}\left[ \mathbf{x}\sin(\tau\|\bm{\xi}\|)-\frac{\bm{\xi}}{\|\bm{\xi}\|}\cos(\tau\|\bm{\xi}\|) \right]+\left( \bm{\zeta}+\frac{\langle\bm{\zeta},\bm{\xi}\rangle}{\|\bm{\xi}\|^2}\bm{\xi} \right), & \text { if } \langle \bm{\xi},\bm{\xi}\rangle_t<0 \\
\langle\bm{\zeta},\bm{\xi}\rangle\left(\tau\mathbf{x}+\frac{1}{2}\tau^2\bm{\xi}\right)+\bm{\zeta}, & otherwise
\end{array}\right.
\end{equation}

\textbf{Distance.}\quad
By the means of the geodesic, the distance between $\mathbf{x}, \mathbf{y} \in \mathcal{Q}_{\beta}^{s,t}$ is defined as the arc length of geodesic $\gamma(\tau)$, which can be formulated by using \textit{logmap}, given by Eq.~(\ref{eq:dist}). 
\begin{equation}\label{eq:dist}
    \mathrm{D}_\gamma(\mathbf{x}, \mathbf{y})=\sqrt{\left|\left\|\log _{\mathbf{x}}(\mathbf{y})\right\|_{t}^{2}\right|}.
\end{equation}

\textbf{Geodesical connectedness.}
A pseudo-Riemannian manifold $\mathcal{M}$ is \textit{connected} iff any two points of $\mathcal{M}$ can be joined by a piecewise (broken) geodesic with each piece being a smooth geodesic. 
The manifold is \textit{geodesically connected} (or \textit{g-connected}) iff any two points can be smoothly connected by a geodesic, where the two points are called \textit{g-connected}, otherwise called \textit{g-disconnected}. 
Different from Riemannian manifolds in which the geodesical completeness implies the g-connectedness (Hopf–Rinow theorem \cite{spiegel2016hopf}), pseudo-hyperboloid is a geodesically complete but not \textit{g-connected} manifold where there exist points that cannot be smoothly connected by a geodesic \cite{giannoni2000geodesical}. Formally, in the pseudo-hyperboloid, two points $\mathbf{x}, \mathbf{y} \in \mathcal{Q}_{\beta}^{s,t}$ are \textit{g-connected} iff $\langle\mathbf{x}, \mathbf{y}\rangle_{t} < |\beta|$. 
The set of \textit{g-connected} points of $\mathbf{x} \in \mathcal{Q}_{\beta}^{s, t}$ is denoted as its \textit{normal neighborhood} $\mathcal{U}_{\mathbf{x}}=\left\{\mathbf{y} \in \mathcal{Q}_{\beta}^{s, t}:\langle\mathbf{x}, \mathbf{y}\rangle_{t} < |\beta|\right\}$.
For \textit{g-disconnected} points $\mathbf{x}, \mathbf{y} \in \mathcal{Q}_{\beta}^{s, t}$, there does not exist a tangent vector $\bm{\xi}$ such that $\mathbf{y}=\operatorname{exp}^{\beta}_{\mathbf{x}}(\bm{\xi})$, which implies that its inverse $\log^{\beta}_\mathbf{x}\left(\cdot\right)$ is only defined in the \textit{normal neighborhood} of $\mathbf{x}$. 
In a nutshell, the geodesic tools for the \textit{g-disconnected} cases are not well-defined, making it impossible to define corresponding vector operations.

\cleardoublepage
\chapter{Geometric Embedding of Structural and Relational Patterns }
\label{chap_structral}

Real-world relational data exhibit various graph structural patterns such as hierarchies and cycles. 
Recent works find that non-Euclidean Riemannian manifolds provide specific inductive biases for embedding hierarchical or spherical data. 
However, they cannot align well with data of mixed graph topologies. 
To address this, we consider a larger class of pseudo-Riemannian manifolds that generalize hyperboloid and sphere. We develop novel geodesic tools that allow for extending neural network operations into geodesically disconnected pseudo-Riemannian manifolds. 
As a consequence, we derive a pseudo-Riemannian graph convolutional networks (GCNs) that model data in pseudo-Riemannian manifolds of constant nonzero curvature. 
Our method provides a geometric inductive bias that is sufficiently flexible to model mixed topologies like hierarchical graphs with cycles. 
Based on these geodesic tools, we further generalize knowledge graph embeddings into such space by simultaneously considering mixed structural patterns and relational patterns.
To capture various relational patterns, we formulate each relation as a pseudo-orthogonal transformation that can be decomposed into a circular rotation and a hyperbolic rotation on the pseudo-Riemannian manifold.  

In this chapter, we first introduce some backgrounds on pseudo-Riemannian manifolds. Next, we generalize GCNs into a pseudo-Riemannian manifold, namely pseudo-hyperboloid. Finally, we extend knowledge graph embeddings into such space by further considering relational patterns.

\section{Pseudo-Riemannian Graph Convolutional Networks}
\label{sec:qgcn}

In this section, we first introduce the background and motivation. Next, we describe how to tackle the \textit{g-disconnectedness} in pseudo-Riemannian manifolds. Then we present the pseudo-Riemannian GCNs based on the proposed geodesic tools.

\subsection{ Background and Motivation }

Learning from graph-structured data is a pivotal task in machine learning, for which graph convolutional networks (GCNs) ~\cite{bruna2013spectral, kipf2016semi, velivckovic2017graph, wu2019simplifying} have emerged as powerful graph representation learning techniques.
GCNs exploit both features and structural properties in graphs, which makes them well-suited for a wide range of applications.
For this purpose, graphs are usually embedded in Riemannian manifolds equipped with a positive definite metric. Euclidean geometry is a special case of Riemannian manifolds of constant zero curvature that can be understood intuitively and has well-defined operations.
However, the representation power of Euclidean space is limited \cite{sun2015space}, especially when embedding complex graphs exhibiting hierarchical structures \cite{boguna2021network}.
Non-Euclidean Riemannian manifolds of constant curvatures provide an alternative to accommodate specific graph topologies. For example, hyperbolic manifold of constant negative curvature has exponentially growing volume and is well suited to represent hierarchical structures such as tree-like graphs \cite{gromov1987hyperbolic,krioukov2010hyperbolic,xiong2022hyper,yang2022hicf,yang2022hrcf}. Similarly, spherical manifold of constant positive curvature is suitable for embedding spherical data in various fields \cite{wilson2014spherical, meng2019spherical, defferrard2019deepsphere} including graphs with cycles.
Some recent works \cite{Ganea2018, liu2019hyperbolic, zhu2020graph, zhang2021lorentzian,dai2021hyperbolic} have extended GCNs to such non-Euclidean manifolds and have shown substantial improvements.
\par
The topologies in real-world graphs \cite{boguna2021network}, however, usually exhibit highly heterogeneous topological structures, which are best represented by different geometrical curvatures. 
A globally homogeneous geometry lacks the flexibility for modeling complex graphs \cite{DBLP:conf/iclr/GuSGR19}. Instead of using a single manifold, product manifolds \cite{DBLP:conf/iclr/GuSGR19, skopek2019mixed} combining multiple Riemannian manifolds have shown advantages when embedding graphs of mixed topologies. However, the curvature distribution of product manifolds is the same at each point, which limits the capability of embedding topologically heterogeneous graphs. Furthermore, Riemannian manifolds are equipped with a positive definite metric disallowing for the faithful representation of the negative eigen-spectrum of input similarities \cite{laub2004feature}. 

\begin{figure}[t!]
\begin{center}
\centerline{\includegraphics[width=0.9\columnwidth]{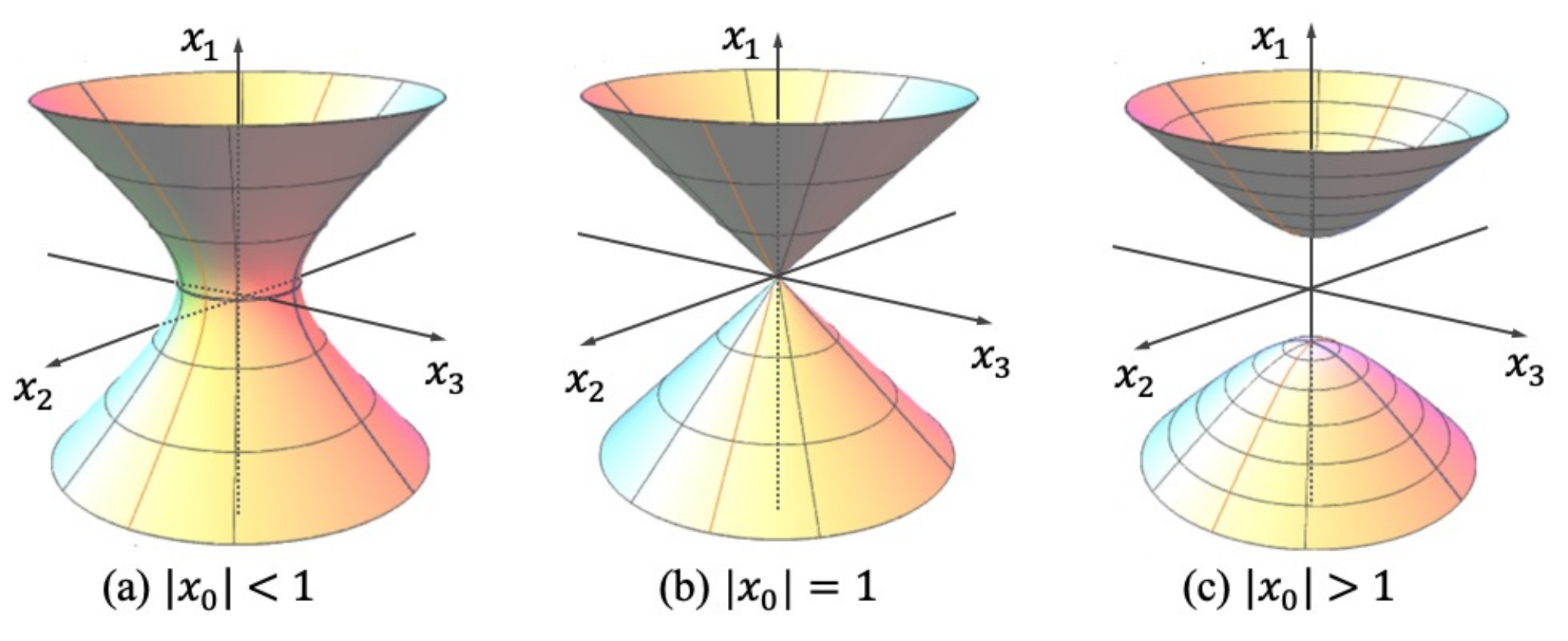}}
\caption{The different submanifolds of a four-dimensional pseudo-hyperboloid of curvature $-1$ with two time dimensions. By fixing one time dimension $x_0$, the induced submanifolds include (a) an one-sheet hyperboloid, (b) the double cone, and (c) a two-sheet hyperboloid.}
\label{fig:pseudo-hyperboloid}
\end{center}
\vskip -0.3in
\end{figure}

Going beyond Riemannian manifolds, pseudo-Riemannian manifolds equipped with indefinite metrics constitute a larger class of geometries, pseudo-Riemannian manifolds of constant nonzero curvature do not only generalize the hyperbolic and spherical manifolds, but also contain their submanifolds (Cf. Fig.~\ref{fig:pseudo-hyperboloid}), thus providing inductive biases specific to these geometries.
Pseudo-Riemannian geometry with constant zero curvature (i.e. Lorentzian spacetime) was applied to manifold learning for preserving local information of non-metric data \cite{sun2015space} and embedding directed acyclic graph \cite{clough2017embedding}.
To model complex graphs containing both hierarchies and cycles, pseudo-Riemannian manifolds with constant nonzero curvature have recently been applied into graph embeddings using non-parametric learning \cite{law2020ultrahyperbolic, sim2021directed}, but the representation power of these works is not on par with the Riemannian counterparts yet, mostly because of the absence of geodesic tools to extend neural network operations into pseudo-Riemannian geometry. 
\par
In this work, we take the first step to extend GCNs into pseudo-Riemannian manifolds foregoing the requirement to have a positive definite metric.
Exploiting pseudo-Riemannian geometry for GCNs is non-trivial because of the \textit{geodesical disconnectedness} in pseudo-Riemannian geometry. 
There exist broken points that cannot be smoothly connected by a geodesic, leaving necessary geodesic tools undefined.
To deal with it, we develop novel geodesic tools that empower manipulating representations in geodesically disconnected pseudo-Riemannian manifolds. 
We make it by finding diffeomorphic manifolds that provide alternative geodesic operations that smoothly avoid broken cases. 
Subsequently, we generalize GCNs to learn representations of complex graphs in pseudo-Riemannian geometry by defining corresponding operations such as \emph{linear transformation} and \emph{tangential aggregation}. 
Different from previous works, the initial features of GCN could be fully defined in the Euclidean space. Thanks to the diffeomorphic operation that is bijective and differentiable, the standard gradient descent algorithm can be exploited to perform optimization. 

To summarize, our main contributions are as follows: 1) We present neural network operations in pseudo-Riemannian manifolds with novel geodesic tools, to stimulate the applications of pseudo-Riemannian geometry in geometric deep learning. 2) We present a principled framework, pseudo-Riemannian GCN, which generalizes GCNs into pseudo-Riemannian manifolds with indefinite metrics, providing more flexible inductive biases to accommodate complex graphs with mixed topologies. 3) Extensive evaluations on three standard tasks demonstrate that our model outperforms baselines that operate in Riemannian manifolds.


\subsection{Methodology}

\subsubsection{Diffeomorphic geodesic tools}
One standard way to tackle the \textit{g-disconnectedness} in differential geometry is to introduce diffeomorphic manifolds in which the operations are well-defined. A diffeomorphic manifold can be derived from a diffeomorphism, defined as follows.
\begin{definition}[Diffeomorphism \cite{1983semirie}]\label{df:df}
Given two manifolds $\mathcal{M}$ and $\mathcal{M}^\prime$, a smooth map $\psi: \mathcal{M} \rightarrow \mathcal{M}^\prime$ is called a diffeomorphism if $\psi$ is bijective and its inverse $\psi^{-1}$ is smooth as well. If a diffeomorphism between $\mathcal{M}$ and $\mathcal{M}^\prime$ exists, we call them diffeomorphic and write $\mathcal{M}\simeq\mathcal{M}^\prime$. 
\end{definition} 
For pseudo-Riemannian manifolds, the following diffeomorphism \cite{law2020ultrahyperbolic} decomposes pseudo-hyperboloid into the product manifolds of an unit sphere and the Euclidean space.
\begin{theorem}[Theorem 4.1 in \cite{law2020ultrahyperbolic}] \label{lm:sr}
For any point $\mathbf{x} \in \mathcal{Q}_{\beta}^{s, t}$, there exists a diffeomorphism $\psi: \mathcal{Q}_{\beta}^{s, t} \rightarrow  \mathbb{S}_{1}^{t} \times \mathbb{R}^{s}$ that maps $\mathbf{x}$ into the product manifolds of an unit sphere and the Euclidean space.
\end{theorem}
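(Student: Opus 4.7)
The plan is to construct the diffeomorphism $\psi$ explicitly via a radial decomposition that separates the time-like and space-like coordinates of the ambient pseudo-Euclidean space $\mathbb{R}^{s,t+1}$. Writing any $\mathbf{x}\in\mathcal{Q}_{\beta}^{s,t}$ as $\mathbf{x} = (\mathbf{u}, \mathbf{v})$ with $\mathbf{u} = (x_0,\ldots,x_t)\in\mathbb{R}^{t+1}$ the time part and $\mathbf{v}=(x_{t+1},\ldots,x_{s+t})\in\mathbb{R}^{s}$ the space part, the defining constraint $\langle\mathbf{x},\mathbf{x}\rangle_t = \beta$ becomes
\[
-\|\mathbf{u}\|^{2} + \|\mathbf{v}\|^{2} = \beta, \qquad \text{i.e.,} \qquad \|\mathbf{u}\|^{2} = |\beta| + \|\mathbf{v}\|^{2}.
\]
Since $\beta<0$, this forces $\|\mathbf{u}\|\ge\sqrt{|\beta|}>0$ everywhere on $\mathcal{Q}_{\beta}^{s,t}$. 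This strict positivity of the time-norm is the key enabling observation: it means the time part can always be safely normalized.

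Next, I would define the candidate map and its candidate inverse,
\[
\psi(\mathbf{x}) = \left(\tfrac{\mathbf{u}}{\|\mathbf{u}\|},\;\mathbf{v}\right)\in \mathbb{S}^{t}_{1}\times\mathbb{R}^{s},
\qquad
\psi^{-1}(\mathbf{s},\mathbf{v}) = \left(\sqrt{|\beta|+\|\mathbf{v}\|^{2}}\,\mathbf{s},\;\mathbf{v}\right).
\]
By construction $\mathbf{u}/\|\mathbf{u}\|$ lies in the unit sphere $\mathbb{S}^{t}_{1}\subset\mathbb{R}^{t+1}$, so $\psi$ lands in the claimed product. Conversely, for $(\mathbf{s},\mathbf{v})\in\mathbb{S}^{t}_{1}\times\mathbb{R}^{s}$ the image $\psi^{-1}(\mathbf{s},\mathbf{v})$ satisfies
$-\|\sqrt{|\beta|+\|\mathbf{v}\|^{2}}\,\mathbf{s}\|^{2}+\|\mathbf{v}\|^{2} = -(|\beta|+\|\mathbf{v}\|^{2})+\|\mathbf{v}\|^{2}=\beta$,
so it lies on $\mathcal{Q}_{\beta}^{s,t}$. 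A direct computation then yields $\psi\circ\psi^{-1}=\mathrm{id}$ and $\psi^{-1}\circ\psi=\mathrm{id}$, establishing bijectivity.

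It remains to verify smoothness of both maps. Smoothness of $\psi$ reduces to smoothness of $\mathbf{u}\mapsto \mathbf{u}/\|\mathbf{u}\|$, which is $C^{\infty}$ on the open set $\{\mathbf{u}\neq 0\}$; since $\|\mathbf{u}\|\ge\sqrt{|\beta|}>0$ on $\mathcal{Q}_{\beta}^{s,t}$, this restriction is smooth. Smoothness of $\psi^{-1}$ reduces to smoothness of $\mathbf{v}\mapsto \sqrt{|\beta|+\|\mathbf{v}\|^{2}}$, which is $C^{\infty}$ because the radicand is bounded below by $|\beta|>0$. The only mildly subtle point — and what I expect to be the main obstacle worth stating carefully — is the distinction between ambient-space smoothness and intrinsic smoothness on the submanifold $\mathcal{Q}_{\beta}^{s,t}$; this is handled by noting that $\psi$ extends to a smooth map on the ambient open set $\{\mathbf{u}\neq 0\}\subset\mathbb{R}^{s,t+1}$, and its restriction to the embedded submanifold is therefore smooth in the induced differentiable structure. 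Combined with the analogous argument for $\psi^{-1}$, this yields the claimed diffeomorphism $\mathcal{Q}_{\beta}^{s,t}\simeq \mathbb{S}^{t}_{1}\times\mathbb{R}^{s}$.
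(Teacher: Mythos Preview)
Your construction is correct and follows the same radial-decomposition approach as the paper, which simply states the explicit maps $\psi(\mathbf{x})=(\mathbf{t}/\|\mathbf{t}\|,\ \mathbf{s}/\sqrt{|\beta|})$ and $\psi^{-1}(\mathbf{z})=\sqrt{|\beta|}(\sqrt{1+\|\mathbf{v}\|^{2}}\,\mathbf{u},\ \mathbf{v})$ and defers the proof to Appendix~C.5 of \cite{law2020ultrahyperbolic}. The only difference is that the paper additionally rescales the Euclidean factor by $1/\sqrt{|\beta|}$, which is an immaterial linear isomorphism of $\mathbb{R}^{s}$; your verification of smoothness and mutual inverse is exactly what the referenced proof would supply.
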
  
The diffeomorphism is given in the Appendix \ref{app:3.1}. 
In light of this, we introduce a new diffeomorphism that maps $\mathbf{x}$ to the product manifolds of sphere with curvature $-1/\beta$ and the Euclidean space. 
\begin{theorem}\label{lm:sbr}
For any point $\mathbf{x} \in \mathcal{Q}_{\beta}^{s, t}$, there exists a diffeomorphism $\psi: \mathcal{Q}_{\beta}^{s, t} \rightarrow  \mathbb{S}_{-\beta}^{t} \times \mathbb{R}^{s}$ that maps $\mathbf{x}$ into the product manifolds of a sphere and the Euclidean space  (proof in the Appendix \ref{app:lm:sbr}).
\end{theorem}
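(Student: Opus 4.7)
My strategy is to reduce the claim to the already-established Theorem~\ref{lm:sr} by post-composing with a simple rescaling of the spherical factor. First, I would invoke Theorem~\ref{lm:sr} to obtain a diffeomorphism $\psi_0 : \mathcal{Q}_\beta^{s,t} \to \mathbb{S}_1^t \times \mathbb{R}^s$. It then suffices to exhibit a diffeomorphism between the unit sphere $\mathbb{S}_1^t$ and the sphere $\mathbb{S}_{-\beta}^t$, since the identity on the Euclidean factor $\mathbb{R}^s$ is trivially a diffeomorphism and products and compositions of diffeomorphisms are again diffeomorphisms.

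Concretely, because we are in the pseudo-hyperboloid case $\beta < 0$, we have $-\beta > 0$, so $\mathbb{S}_{-\beta}^t = \{\mathbf{y} \in \mathbb{R}^{t+1} : \|\mathbf{y}\|^2 = -1/\beta\}$ is a genuine sphere of radius $1/\sqrt{-\beta}$. Define the rescaling
\begin{equation*}
\phi : \mathbb{S}_1^t \to \mathbb{S}_{-\beta}^t, \qquad \phi(\mathbf{y}) = \frac{\mathbf{y}}{\sqrt{-\beta}},
\end{equation*}
with inverse $\phi^{-1}(\mathbf{y}) = \sqrt{-\beta}\,\mathbf{y}$. Both are restrictions of linear (hence smooth) maps on $\mathbb{R}^{t+1}$, the image constraint $\|\phi(\mathbf{y})\|^2 = -1/\beta$ is satisfied by construction, and $\phi \circ \phi^{-1} = \mathrm{id}$ in each direction, so $\phi$ is a diffeomorphism. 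The desired map is then $\psi := (\phi \times \mathrm{id}_{\mathbb{R}^s}) \circ \psi_0$, which is a composition of diffeomorphisms and therefore a diffeomorphism $\mathcal{Q}_\beta^{s,t} \to \mathbb{S}_{-\beta}^t \times \mathbb{R}^s$.

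If one prefers an explicit formula rather than the composition, one can write $\psi(\mathbf{x}) = \bigl(\mathbf{x}_T/(\sqrt{-\beta}\,\|\mathbf{x}_T\|),\, \mathbf{x}_S\bigr)$, where $\mathbf{x}_T := (x_0,\ldots,x_t)$ and $\mathbf{x}_S := (x_{t+1},\ldots,x_{s+t})$ are the time-like and space-like blocks, with inverse $\psi^{-1}(\mathbf{y},\mathbf{z}) = \bigl(\sqrt{-\beta}\sqrt{\|\mathbf{z}\|^2 - \beta}\,\mathbf{y},\, \mathbf{z}\bigr)$. The only point at which smoothness could conceivably fail is where $\mathbf{x}_T$ vanishes, so one would check that this never happens on $\mathcal{Q}_\beta^{s,t}$ with $\beta<0$: the defining equation $-\|\mathbf{x}_T\|^2 + \|\mathbf{x}_S\|^2 = \beta$ forces $\|\mathbf{x}_T\|^2 = \|\mathbf{x}_S\|^2 - \beta \geq -\beta > 0$, so normalization by $\|\mathbf{x}_T\|$ is well-defined throughout the manifold.

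There is no real obstacle here; the main work has already been done in Theorem~\ref{lm:sr}, and the contribution of this proof is the observation that one can freely rescale the sphere's radius so that its induced curvature matches that of the pseudo-hyperboloid. The only thing worth being careful about is keeping the sign and curvature conventions consistent: $\mathbb{S}_K^t$ in this paper denotes the sphere with $\|\mathbf{x}\|^2 = 1/K$, so curvature index $-\beta$ (with $\beta<0$) indeed corresponds to a sphere of positive curvature $-\beta$, which is what one wants in a factor dual to the hyperbolic/pseudo-hyperbolic direction.
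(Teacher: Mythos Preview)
Your proof is correct and takes a slightly different route from the paper's. The paper writes down the explicit forward and inverse maps and then verifies by direct computation, block by block, that $\psi^{-1}\circ\psi$ and $\psi\circ\psi^{-1}$ are the identity (after remarking that smoothness is clear since only linear rescalings are involved). You instead factor through Theorem~\ref{lm:sr}: take the known diffeomorphism onto $\mathbb{S}_1^t\times\mathbb{R}^s$ and post-compose with the radial dilation $\phi$ on the spherical factor. Your route is cleaner and makes transparent that the new statement is just the old one with the sphere rescaled to match the curvature of $\mathcal{Q}_\beta^{s,t}$; the paper's direct verification is more self-contained but hides this simple geometric content. One cosmetic bookkeeping point: the explicit formula you offer at the end keeps the space block $\mathbf{x}_S$ unscaled, which is not literally $(\phi\times\mathrm{id}_{\mathbb{R}^s})\circ\psi_0$ because $\psi_0$ from Theorem~\ref{lm:sr} already divides the space block by $\sqrt{|\beta|}$. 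This is harmless---any linear rescaling of $\mathbb{R}^s$ is a diffeomorphism, and in fact your unscaled version is exactly the map the paper records---but you may want to say explicitly that you are also absorbing a rescaling on the Euclidean factor.
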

Compared with Theorem \ref{lm:sr}, this diffeomorphism preserves the curvatures in the diffeomorphic components, making it satisfy some geometric properties, e.g. the mapped point $\mathbf{x}_t \in \mathbb{S}_{-\beta}^{t}$ still lies on the surface of the pseudo-hyperboloid, making moving the tangent vectors from the pseudo-hyperboloid to the diffeomorphic manifold easy as we explained later. We call $\psi$ as the spherical projection.

\textbf{Exponential and logarithmic maps.}
Since pseudo-hyperboloid $\mathcal{Q}_{\beta}^{s, t}$ is \textit{g-disconnected}, we propose to transfer the \emph{logmap} and \emph{expmap} into the diffeomorphic manifold $\psi: \mathcal{Q}_{\beta}^{s, t} \rightarrow  \mathbb{S}_{-\beta}^{t} \times \mathbb{R}^{s}$, since the product manifold $\mathbb{S}_{-\beta}^{t} \times \mathbb{R}^{s}$ is \textit{g-connected}. To map tangent vectors between tangent space of $\mathcal{Q}_{\beta}^{s, t}$ and $\mathbb{S}_{-\beta}^{t} \times \mathbb{R}^{s}$, we exploit \textit{pushforward} that induce a linear approximation of smooth maps on tangent spaces.
\begin{definition}[Pushforward]\label{df:push}
Suppose that $\psi: \mathcal{M} \rightarrow \mathcal{M}^\prime$ is a smooth map, then the differential of $\psi$: $d\psi$ at point $\mathbf{x}$ is a linear map from the tangent space of $\mathcal{M}$ at $\mathbf{x}$ to the tangent space of $\mathcal{M}^\prime$ at $\psi(\mathbf{x})$. Namely, $d\psi:\mathcal{T}_x\mathcal{M} \rightarrow \mathcal{T}_{\psi(x)} \mathcal{M}^\prime$.
\end{definition}
Intuitively, \textit{pushforward} can be used to \textit{push} tangent vectors on $\mathcal{T}_x\mathcal{Q}_{\beta}^{s, t}$ \textit{forward} to tangent vectors on $\mathcal{T}_{\psi(x)}\mathbb{S}_{-\beta}^{t} \times \mathbb{R}^{s}$. Based on this, the new \textit{logmap} and its inverse \textit{expmap} can be defined by Eq.~(\ref{eq:log_exp_diff}).
\begin{equation}\label{eq:log_exp_diff}
    \widehat{\log}_{\mathcal{Q}_{\beta}^{s, t}}(\mathbf{x}) = \psi^{-1}(\log_{ \mathbb{S}_{-\beta}^{t} \times \mathbb{R}^{s}}(\psi(\mathbf{x}))), \widehat{\exp}_{\mathcal{Q}_{\beta}^{s, t}}(\bm{\xi}) = \psi^{-1}(\exp _{\mathbb{S}_{-\beta}^{t} \times \mathbb{R}^{s}}(\psi(\bm{\xi}))),
\end{equation}
where $\psi(\cdot)$ is the spherical projection and $\psi^{-1}(\cdot)$ is the inverse. 
The mapping of tangent vectors is achieved by \textit{pushforward} operations. 
The operations $\log_{\mathbb{S}_{-\beta}^{t} \times \mathbb{R}^{s}}(\cdot)$ and $\exp_{\mathbb{S}_{-\beta}^{t} \times \mathbb{R}^{s}}(\cdot)$ in the product manifolds can be defined as the concatenation of corresponding operations in different components. 
\begin{equation}\label{eq:log_exp}\small
\log_{\mathbb{S}_{-\beta}^{t} \times \mathbb{R}^{s}}(\mathbf{x}^\prime) =  \log_{\mathbb{S}_{-\beta}^{t}}(\mathbf{x}_t^\prime) \mathbin\Vert  \log_{\mathbb{R}^{s}}(\mathbf{x}_s^\prime), 
\exp_{\mathbb{S}_{-\beta}^{t} \times \mathbb{R}^{s}}(\bm{\xi}) = \exp_{\mathbb{S}_{-\beta}^{t}}(\bm{\xi}_{t}) \mathbin\Vert \exp_{\mathbb{R}^{s}}(\bm{\xi}_{s}),
\end{equation}
where $||$ denotes the concatenation, $\mathbf{x}^\prime=\psi_{\mathbb{S}}(\mathbf{x})$ consists of spherical features  $\mathbf{x}_t^\prime \in \mathbb{S}_{-\beta}^{t}$ and Euclidean features $\mathbf{x}_s^\prime \in \mathbb{R}^{s}$.
$\bm{\xi}$ is the tangent vector induced by $\mathbf{x}$ on $\mathcal{Q}_{\beta}^{s, t}$.

We choose points where space dimension $\mathbf{s}=\mathbf{0}$ as the reference points due to the following property. 
\begin{theorem}[]\label{theory:tangent_sharing}
For any reference point $\mathbf{x}=\left(\begin{array}{c}
\mathbf{t}\\
\mathbf{s}
\end{array}\right) \in \mathcal{Q}_{\beta}^{s, t}$ with space dimension $\mathbf{s}=\mathbf{0}$,
the induced tangent space of $\mathcal{Q}_{\beta}^{s, t}$ is equal to the tangent space of its diffeomorphic manifold $\mathbb{S}_{-\beta}^{t} \times \mathbb{R}^{s}$, namely, $\mathcal{T}_\mathbf{x}({\mathbb{S}_{-\beta}^{t} \times \mathbb{R}^{s}}) = \mathcal{T}_\mathbf{x} \mathcal{Q}_{\beta}^{s, t}$. (proof in the Appendix \ref{app:invariance_space}).
\end{theorem}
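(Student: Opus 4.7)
The plan is to prove the claim by deriving explicit descriptions of both tangent spaces as linear subspaces of the ambient space $\mathbb{R}^{s,t+1}$ and then showing that under the specialization $\mathbf{s}=\mathbf{0}$ the two descriptions coincide.

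First I would describe $\mathcal{T}_\mathbf{x}\mathcal{Q}_\beta^{s,t}$ via the defining constraint. Since $\mathcal{Q}_\beta^{s,t}$ is the level set $\{\mathbf{x}\in\mathbb{R}^{s,t+1}:\langle\mathbf{x},\mathbf{x}\rangle_t = \beta\}$, differentiating this constraint yields
\begin{equation*}
\mathcal{T}_\mathbf{x}\mathcal{Q}_\beta^{s,t} = \{\bm{\xi}\in\mathbb{R}^{s,t+1}:\langle\mathbf{x},\bm{\xi}\rangle_t = 0\}.
\end{equation*}
Then for the product manifold, since tangent spaces of Cartesian products decompose as direct sums, at a point $\mathbf{x}=(\mathbf{t},\mathbf{s})$ with $\mathbf{t}\in\mathbb{S}_{-\beta}^{t}$ and $\mathbf{s}\in\mathbb{R}^{s}$ I would write
\begin{equation*}
\mathcal{T}_{\mathbf{x}}(\mathbb{S}_{-\beta}^{t}\times\mathbb{R}^{s}) = \mathcal{T}_{\mathbf{t}}\mathbb{S}_{-\beta}^{t}\oplus\mathcal{T}_{\mathbf{s}}\mathbb{R}^{s} = \bigl\{(\bm{\xi}_t,\bm{\xi}_s):\langle\mathbf{t},\bm{\xi}_t\rangle_2 = 0,\ \bm{\xi}_s\in\mathbb{R}^s\bigr\}.
\end{equation*}

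Next I would specialize to a reference point $\mathbf{x}=(\mathbf{t},\mathbf{0})$. Observe that the pseudo-hyperboloid constraint at such a point reduces to $-\langle\mathbf{t},\mathbf{t}\rangle_2=\beta$, so $\mathbf{t}$ automatically lies on the sphere of the appropriate radius, confirming that $(\mathbf{t},\mathbf{0})$ is a legitimate base point for both manifolds. Plugging $\mathbf{x}=(\mathbf{t},\mathbf{0})$ into the tangent characterization of $\mathcal{Q}_\beta^{s,t}$ and splitting any $\bm{\xi}=(\bm{\xi}_t,\bm{\xi}_s)$ accordingly, the Lorentz-style scalar product becomes
\begin{equation*}
\langle\mathbf{x},\bm{\xi}\rangle_t = -\langle\mathbf{t},\bm{\xi}_t\rangle_2 + \langle\mathbf{0},\bm{\xi}_s\rangle_2 = -\langle\mathbf{t},\bm{\xi}_t\rangle_2,
\end{equation*}
so the constraint $\langle\mathbf{x},\bm{\xi}\rangle_t = 0$ collapses to $\langle\mathbf{t},\bm{\xi}_t\rangle_2 = 0$ with no constraint on $\bm{\xi}_s$. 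This is exactly the characterization obtained for the product manifold, finishing the set-equality proof.

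The genuinely delicate step will not be the algebra — which is elementary — but verifying that the \emph{identification} of ambient vectors between the two model spaces is the right one: vectors tangent to $\mathcal{Q}_\beta^{s,t}\subset\mathbb{R}^{s,t+1}$ naturally split into a time block and a space block, and I must check that this block decomposition is precisely the one used for $\mathbb{S}_{-\beta}^{t}\times\mathbb{R}^{s}$, so that equality holds as subspaces of the same ambient vector space rather than merely as abstractly isomorphic vector spaces. The condition $\mathbf{s}=\mathbf{0}$ is crucial here because it forces the space component of $\mathbf{x}$ to contribute nothing to $\langle\mathbf{x},\bm{\xi}\rangle_t$; for a general base point the cross-term $\langle\mathbf{s},\bm{\xi}_s\rangle_2$ would reintroduce coupling between the two blocks and the equality would fail. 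I expect this conceptual point, rather than any computation, to be the main thing the proof must highlight.
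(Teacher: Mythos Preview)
Your proposal is correct and follows essentially the same approach as the paper: both arguments characterize the two tangent spaces as orthogonal complements in the ambient $\mathbb{R}^{s,t+1}$ (using $\langle\mathbf{x},\bm{\xi}\rangle_t=0$ for the pseudo-hyperboloid and $\langle\mathbf{t},\bm{\xi}_t\rangle_2=0$ with $\bm{\xi}_s$ free for the product), and then observe that when $\mathbf{s}=\mathbf{0}$ the space-block contribution to $\langle\mathbf{x},\bm{\xi}\rangle_t$ vanishes, collapsing both conditions to the same linear constraint. Your write-up is in fact slightly more complete than the paper's, since you explicitly note that $(\mathbf{t},\mathbf{0})$ lies on both manifolds (equivalently, that $\psi$ fixes such points) and you establish both inclusions rather than only one direction plus an implicit dimension count.
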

The intuition of proof is that if space dimension $\mathbf{s}=\mathbf{0}$, the pushforward (differential) function just influences time dimension, for which the mapping is just an identity function (see Appendix \ref{app:lm:sbr}). 
In this way, although we transfer \textit{logmap} and \textit{expmap} to the diffeomorphic manifold $\mathbb{S}_{-\beta}^{t} \times \mathbb{R}^{s}$, the diffeomorphic operations $\widehat{\log}_{\mathbf{x}}(\cdot)$ and $\widehat{\exp}_{\mathbf{x}}(\cdot)$ are still bijective functions from the pseudo-hyperboloid to the tangent space of the manifold itself. Hence, our final operations are actually still defined in the tangent space of the pseudo-hyperboloid. 
Note that such property only holds when our Theorem \ref{lm:sbr} is applied and the special reference points with space dimension $\mathbf{s}=\mathbf{0}$ are chosen. 

\par
By leveraging the new \textit{logmap} and \textit{expmap}, we further formulate the diffeomorphic version of tangential operations as follows.

\textbf{Tangential operations.}\quad
For function $f: \mathbb{R}^d \rightarrow \mathbb{R}^{d^\prime}$, 
the pseudo-hyperboloid version $f^\otimes: \mathcal{Q}_{\beta}^{s, t} \rightarrow \mathcal{Q}_{\beta}^{s^{\prime}, t^{\prime}}$ with $s+t=d$ and $s^{\prime}+t^{\prime}=d^{\prime}$ can be defined by the means of $\widehat{\log}^{\beta}_{\mathbf{x}}(\cdot)$ and $\widehat{\exp}^{\beta}_{\mathbf{x}}(\cdot)$ as Eq.~(\ref{eq:mobius_function}). 
\begin{equation}\label{eq:mobius_function}\small
    f^\otimes(\cdot) := \widehat{\exp} _{\mathbf{x}}^{\beta}\left( f\left( \widehat{\log} _{\mathbf{x}}^{\beta} \left(\cdot\right)\right)\right),
\end{equation}
where $\mathbf{x}$ is the reference point. Note that this function is a morphism (i.e. $(f\circ g)^{\otimes} = f^\otimes \circ g^\otimes$) and direction preserving (i.e. $f^\otimes(\cdot)/\|f^\otimes(\cdot)\| = f(\cdot)/\|f(\cdot)\|$) \cite{Ganea2018}, making it a natural way to define pseudo-hyperboloid version of vector operations such like scalar multiplication, matrix-vector multiplication, tangential aggregation and point-wise non-linearity and so on.

\textbf{Parallel transport.}\quad
Parallel transport is the generalization of Euclidean translation into manifolds. Formally, for any two points $\mathbf{x}$ and $\mathbf{y}$ connected by a geodesic, parallel transport $P^{\beta}_{\mathbf{x} \rightarrow \mathbf{y}}(\bm{\xi}):\mathcal{T}_\mathbf{x}\mathcal{M} \rightarrow \mathcal{T}_\mathbf{y}\mathcal{M}$ is an isomorphism between two tangent spaces by moving one tangent vector $\bm{\zeta} \in \mathcal{T}_x\mathcal{M}$ with tangent direction $\bm{\xi}\in\mathcal{T}_x\mathcal{M}$ to another tangent space $\mathcal{T}_\mathbf{y}\mathcal{M}$.
The parallel transport in pseudo-hyperboloid can be defined as the combination of Riemannian parallel transport \cite{gao2018semi}.
However, the parallel transport has not been defined when there does not exist a geodesic between $\mathbf{x}$ and $\mathbf{y}$. i.e., the tangent vector $\bm{\zeta}$ induced by $\mathbf{x}$ can not be transported to the tangent space of points outside of the normal neighborhood $\mathcal{U}_{\mathbf{x}}$.
Intuitively, the normal neighborhoods satisfy the following property. 
\begin{theorem}\label{lm:nn}
For any point $\mathbf{x} \in \mathcal{Q}_{\beta}^{s, t}$, the union of the normal neighborhood of $\mathbf{x}$ and the normal neighborhood of its antipodal point $\mathbf{-x}$ cover the entire manifold. Namely, $\mathcal{U}_{\mathbf{x}} \cup \mathcal{U}_{\mathbf{-x}} = \mathcal{Q}_{\beta}^{s, t}$ (proof in the Appendix \ref{app:theorem_nn}).
\end{theorem}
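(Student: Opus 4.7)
The plan is to reduce the claim to a simple case analysis on the value of the indefinite scalar product $c := \langle \mathbf{x}, \mathbf{y}\rangle_t$. First I would verify that the antipodal point is admissible, namely that $-\mathbf{x} \in \mathcal{Q}_{\beta}^{s,t}$; this follows immediately from the bilinearity of $\langle \cdot, \cdot \rangle_t$, since $\langle -\mathbf{x}, -\mathbf{x}\rangle_t = \langle \mathbf{x}, \mathbf{x}\rangle_t = \beta$, so the normal neighborhood $\mathcal{U}_{-\mathbf{x}} = \{\mathbf{y}\in \mathcal{Q}_{\beta}^{s,t} : \langle -\mathbf{x}, \mathbf{y}\rangle_t < |\beta|\}$ is well-defined.

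Next I would fix an arbitrary $\mathbf{y} \in \mathcal{Q}_{\beta}^{s,t}$ and argue by contradiction. Assume $\mathbf{y} \notin \mathcal{U}_{\mathbf{x}}$ and $\mathbf{y} \notin \mathcal{U}_{-\mathbf{x}}$. By the definition of normal neighborhood recalled in the excerpt, this means
\begin{equation*}
\langle \mathbf{x}, \mathbf{y}\rangle_t \ \geq \ |\beta| \quad \text{and} \quad \langle -\mathbf{x}, \mathbf{y}\rangle_t \ \geq \ |\beta|.
\end{equation*}
Using bilinearity of the scalar product, $\langle -\mathbf{x}, \mathbf{y}\rangle_t = -\langle \mathbf{x}, \mathbf{y}\rangle_t$, so the two inequalities read $c \geq |\beta|$ and $-c \geq |\beta|$, i.e.\ $c \leq -|\beta|$. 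Summing these gives $0 \geq 2|\beta|$, which contradicts $\beta \neq 0$ (a standing assumption for pseudo-hyperboloids in the excerpt).

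Therefore every $\mathbf{y} \in \mathcal{Q}_{\beta}^{s,t}$ belongs to $\mathcal{U}_{\mathbf{x}}$ or to $\mathcal{U}_{-\mathbf{x}}$, yielding $\mathcal{U}_{\mathbf{x}} \cup \mathcal{U}_{-\mathbf{x}} = \mathcal{Q}_{\beta}^{s,t}$. I do not anticipate any serious obstacle: the argument is purely algebraic and relies only on (i) linearity of $\langle \cdot,\cdot\rangle_t$ in its first argument and (ii) the nonzero curvature assumption. The only subtlety worth spelling out is that the inequality in the definition of $\mathcal{U}_{\mathbf{x}}$ is strict, so it is precisely at the boundary value $c = \pm|\beta|$ that one has to be careful; but since $|\beta|>0$, the two strict violations $c \geq |\beta|$ and $c \leq -|\beta|$ cannot hold simultaneously, which is exactly what the contradiction exploits.
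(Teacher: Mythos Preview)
Your proof is correct and follows essentially the same approach as the paper: both arguments pivot on the bilinearity identity $\langle -\mathbf{x}, \mathbf{y}\rangle_t = -\langle \mathbf{x}, \mathbf{y}\rangle_t$ together with $|\beta|>0$ to show that $\mathbf{y}\notin\mathcal{U}_{\mathbf{x}}$ forces $\mathbf{y}\in\mathcal{U}_{-\mathbf{x}}$. The only difference is cosmetic---you phrase it as a contradiction and add the (trivial) verification that $-\mathbf{x}\in\mathcal{Q}_{\beta}^{s,t}$, whereas the paper gives the direct implication in one line.
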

This theorem ensures that if a point $\mathbf{y} \notin \mathcal{U}_{\mathbf{x}}$, its antipodal point $\mathbf{-y} \in \mathcal{U}_{\mathbf{x}}$. Besides, $\mathcal{T}_{\mathbf{y}}\mathcal{M}$ is parallel to $\mathcal{T}_{\mathbf{-y}}\mathcal{M}$. Hence, $P^{\beta}_{\mathbf{x} \rightarrow \mathbf{y}}$ can be alternatively defined as $P^{\beta}_{\mathbf{x} \rightarrow \mathbf{-y}}$ for broken points. This result is crucial to define the pseudo-hyperbolic addition, such as bias translation, detailed in section \ref{sec:bias}.

\textbf{Broken geodesic distance.}\quad
By the means of geodesic, the induced distance between $\mathbf{x}$ and $\mathbf{y}$ in pseudo-hyperboloid is defined as the arc length of geodesic $\gamma(\tau)$, given by $\mathrm{d_\gamma}(\mathbf{x}, \mathbf{y})=\sqrt{\left\|\log _{\mathbf{x}}(\mathbf{y})\right\|_{t}^{2}}.$ 
For broken cases in which $\log _{\mathbf{x}}(\mathbf{y})$ is not defined, one approach is to use approximation like \cite{law2020ultrahyperbolic}. Different from that, we define following closed-form distance, given by Eq.~(\ref{eq:distance}). 

\begin{equation}\label{eq:distance}
\mathrm{D}_{\gamma}(\mathbf{x}, \mathbf{y})=\left\{ \begin{array}{ll}
\mathrm{d}_{\gamma}(\mathbf{x}, \mathbf{y}), & \text { if }\langle\mathbf{x}, \mathbf{y} \rangle_{t} < |\beta| \\
\pi \sqrt{|\beta|}+ \mathrm{d}_{\gamma}(\mathbf{x}, -\mathbf{y}), & \text { if }\langle\mathbf{x}, \mathbf{y} \rangle_{t} \geq |\beta| 
\end{array}\right.
\end{equation}
The intuition is that when $\mathbf{x}, \mathbf{y} \in \mathcal{Q}_{\beta}^{s, t}$ are \textit{g-disconnected}, we consider the distance as $\mathrm{d_\gamma}(\mathbf{x}, \mathbf{y}) = \mathrm{d_\gamma}(\mathbf{x},-\mathbf{x})+\mathrm{d_\gamma}(-\mathbf{x}, \mathbf{y})$ or
$\mathrm{d_\gamma}(\mathbf{x}, \mathbf{y}) =
\mathrm{d_\gamma}(\mathbf{x},-\mathbf{y})+\mathrm{d_\gamma}(-\mathbf{y}, \mathbf{y})$. Since $\mathrm{d_\gamma}(\mathbf{x},-\mathbf{x})=\mathrm{d_\gamma}(\mathbf{-y},\mathbf{y})=\pi \sqrt{|\beta|}$ is a constant and $\mathrm{d}_{\gamma}(-\mathbf{x}, \mathbf{y})=\mathrm{d_\gamma}(\mathbf{x}, -\mathbf{y})$, the distance between broken points can be calculated as 
$\mathrm{d_\gamma}(\mathbf{x}, \mathbf{y})=\pi \sqrt{|\beta|} + \mathrm{d_\gamma}(\mathbf{x}, -\mathbf{y})$.

To clarify theoretical contributions, our Theorem \ref{lm:sbr} is nessasary for our Theorem \ref{theory:tangent_sharing} while Theorem \ref{theory:tangent_sharing} is nessasary for transforming the GCN operations directly into the tangent space of the pseudo-hyperboloid.
Besides, we are the first to formulate the diffeomorphic \textit{expmap}, \textit{logmap} and tangential operations of pseudo-hyperboloid to avoid broken cases. The theoretical properties of parallel transport and geodesic distance are discussed in the literature \cite{law2020ultrahyperbolic,gao2018semi}. However, we re-formulate parallel transport with Theorem \ref{lm:nn} to avoid broken issues and propose a new distance measure using the broken geodesic (Eq.\ref{eq:distance} ), which is different from the approximated distance in \cite{law2020ultrahyperbolic}.

\subsubsection{Model architecture}
GCNs can be interpreted as performing neighborhood aggregation after a linear transformation on node features of each layer.
We present pseudo-Riemannian GCNs ($\mathcal{Q}$-GCN) by deriving corresponding operations with the developed geodesic tools in the $\mathcal{Q}_{\beta}^{s,t}$. 

\textbf{Feature initialization.}\quad
We first map the features from Euclidean space to pseudo-hyperboloid, considering that the input features of nodes usually live in Euclidean space. 
Following the feature transformation from Euclidean space to pseudo-hyperboloid in \cite{law2020ultrahyperbolic}, we initialize the node features by performing a differentiable mapping $\varphi: \mathbb{R}_*^{t+1} \times \mathbb{R}^s \rightarrow \mathcal{Q}_{\beta}^{s,t}$
that can be implemented by a double projection \cite{law2020ultrahyperbolic} based on Theorem \ref{lm:sbr}, i.e. $\varphi=\psi^{-1} \circ \psi$.
The intuition is that we first map the Euclidean features into diffeomorphic manifolds $\mathbb{S}_{-\beta}^t \times \mathbb{R}^{s}$ via $\psi(\cdot)$, and then map them into the pseudo-hyperboloid $\mathcal{Q}_{\beta}^{s,t}$ via $\psi^{-1}(\cdot)$, where the mapping functions are given by Eq.~(\ref{eq:map_to_sr}).

\begin{equation}\label{eq:map_to_sr}\small
\psi(\mathbf{x})=\left(\begin{array}{c}
\sqrt{|\beta|} \frac{\mathbf{t}}{\|\mathbf{t}\|} \\
 \mathbf{s}
\end{array}\right) \quad \text {,} \quad \psi^{-1}(\mathbf{z})=\left(\begin{array}{c}
\frac{\sqrt{|\beta|+\|\mathbf{v}\|^{2}}}{\sqrt{|\beta|}} \mathbf{u} \\
\mathbf{v}
\end{array}\right),
\end{equation}
where $\mathbf{x}=\left(\mathbf{t},\mathbf{s}\right)^{\top}\in \mathcal{Q}_{\beta}^{s, t}$ with  $\mathbf{t} \in \mathbb{R}_*^{t}$ and $\mathbf{s} \in \mathbb{R}^s$. $\mathbf{z}=\left(\mathbf{u},\mathbf{v}\right)^{\top} \in \mathbb{S}_{-\beta}^{t} \times \mathbb{R}^{s}$ with $\mathbf{u} \in \mathbb{S}_{-\beta}^{t}$ and $\mathbf{v} \in \mathbb{R}^{s}$.

\textbf{Tangential aggregation.}\quad
The linear combination of neighborhood features is lifted to the tangent space, which is an intrinsic operation in differential manifolds \cite{DBLP:conf/nips/ChamiYRL19,liu2019hyperbolic}. 
Specifically, $\mathcal{Q}$-GCN aggregates neighbours' embeddings in the tangent space of the reference point $\mathbf{o}$ before passing through a tangential activation function, and then projects the updated representation back to the manifold.
Formally, at each layer $\ell$, the updated features of each node $i$ are defined as Eq.~(\ref{eq:agg}).
\begin{equation}\label{eq:agg}\small
    \mathbf{h}_i^{\ell+1}
    =\widehat{\exp} _{\mathbf{o}}^{\beta_{\ell+1}}\left(
    \sigma\left(\sum_{j \in \mathcal{N}(i)\cup \{i\}} \widehat{\log}_{\mathbf{o}}^{\beta_{\ell}}\left(
    W^{\ell}\otimes^{\beta_{\ell}}\mathbf{h}_{j}^{\ell}\oplus^{\beta_{\ell}}\mathbf{b}^{\ell} \right)\right)\right),
\end{equation}
where $\sigma(\cdot)$ is the activation function, $\beta_{\ell}$ and $\beta_{\ell+1}$ are two layer-wise curvatures, $\mathcal{N}(i)$ denotes the one-hop neighborhoods of node $i$, and the $\otimes,\oplus$ denote two basic operations, i.e. tangential transformation and bias translation, respectively.

\textbf{Tangential transformation.}\label{sec:bias}\quad
We perform Euclidean transformations on the tangent space by leveraging the \textit{expmap} and \textit{logmap} in Eq.~(\ref{eq:log_exp_diff}). Specifically, we first project the hidden feature into the tangent space of \emph{south pole} $\mathbf{o} = [|\beta|,0,,...,0]$ using \textit{logmap} and then perform Euclidean matrix multiplication. Afterwards, the transformed features are mapped back to the manifold using \textit{expmap}. 
Formally, at each layer $\ell$, the tangential transformation is given by $W^{\ell} \otimes^{\beta} \mathbf{h}^{\ell} :=
\widehat{\exp} _{\mathbf{o}}^{\beta}(W^{\ell} \widehat{\log} _{\mathbf{o}}^{\beta} $ $ (\mathbf{h}^{\ell}))$, where $\otimes^{\beta}$ denotes the pseudo-hyperboloid tangential multiplication, and $W^{\ell} \in \mathbb{R}^{d^{\prime} \times d}$ denotes the layer-wise learnable matrix in Euclidean space. 

\textbf{Bias translation.}\quad 
It is noteworthy that simply stacking multiple layers of the tangential transformation would collapse the composition \cite{DBLP:conf/nips/ChamiYRL19, Ganea2018}, i.e. $\exp _{\mathbf{o}}^{\beta} ... (W^{1}\log _{\mathbf{o}}^{\beta}(\exp _{\mathbf{o}}^{\beta} ( W^{0}\log _{\mathbf{o}}^{\beta}(x) ) )) = \exp _{\mathbf{o}}^{\beta} (W^{0} \times W^{1} \times ...\times \log _{\mathbf{o}}^{\beta}(x) )$, which means that these multiplications can simply be performed in Euclidean space except the first \emph{logmap} and last \emph{expmap}. 
To avoid model collapsing , we perform bias translation after the tangential transformation. By the means of pseudo-hyperboloid \emph{parallel transport}, the bias translation can be performed by parallel transporting a tangent vector $\mathbf{b}^{\ell}\in \mathcal{T}_{\mathbf{o}}\mathcal{Q}_{\beta}^{s,t}$
to the tangent space of the point of interest. 
Finally, the transported tangent vector is mapped back to the manifold with $expmap$. 
Considering that $\widehat{\exp}(\cdot)$ is only defined at point $\mathbf{x} \in \mathcal{Q}_{\beta}^{s,t}$ with the space dimension $\mathbf{x}_s=\mathbf{0}$, we perform the original $\exp_{\mathcal{Q}_{\beta}^{s,t}}(\cdot)$ at the point of interest. The bias translation is formally given by:
\begin{equation}\label{eq:bias_add}\small
\widetilde{\mathbf{h}}^{\ell} \oplus^{\beta} \mathbf{b}^{\ell}:=\left\{ \begin{array}{ll}
\exp^{\beta}_{\widetilde{\mathbf{h}}^{\ell}}\left(P_{\mathbf{o} \rightarrow \widetilde{\mathbf{h}}^{\ell}}^{\beta} \left(\mathbf{b}^{\ell} \right)\right), & \text{if } \langle\mathbf{o}, \widetilde{\mathbf{h}}^{\ell}\rangle_{t} < |\beta| \\
-\exp^{\beta}_{-\widetilde{\mathbf{h}}^{\ell}}\left(P_{\mathbf{o} \rightarrow -\widetilde{\mathbf{h}}^{\ell}}^{\beta}\left(\mathbf{b}^{\ell}\right)\right), & \text{if } \langle\mathbf{o}, \widetilde{\mathbf{h}}^{\ell}\rangle_{t} \geq |\beta|
\end{array}\right.
\end{equation}where $\widetilde{\mathbf{h}}^{\ell}=W^{\ell} \otimes^{\beta} \mathbf{h}^{\ell}$, $\oplus^{\beta}$ denotes the pseudo-hyperboloid addition.
For the broken cases where $\langle\mathbf{o}, \widetilde{\mathbf{h}}^{\ell}\rangle_{t} \geq |\beta|$, the parallel transport $P_{\mathbf{o} \rightarrow \widetilde{\mathbf{h}}^{\ell}}^{\beta}$ is not defined. In this case, we parallel transport $\mathbf{b}^{\ell}$ to the tangent space of the antipodal point $-\widetilde{\mathbf{h}}^{\ell}$, and then perform $\exp^{\beta}_{-\widetilde{\mathbf{h}}^{\ell}}$ to map it back to the manifold. 
Note that the case $\langle\mathbf{o}, \widetilde{\mathbf{h}}^{\ell}\rangle_{t} = |\beta|$ occurs if and only if $\mathbf{h}=\mathbf{-o}$, in which case $P_{\mathbf{o} \rightarrow -\widetilde{\mathbf{h}}^{\ell}}^{\beta}\left(\mathbf{b}^{\ell}\right) = P_{\mathbf{o} \rightarrow \mathbf{-o} }^{\beta}\left(\mathbf{b}^{\ell}\right) = \mathbf{-b}$. 

\subsection{Experiments}
We evaluate the effectiveness of $\mathcal{Q}$-GCN on graph reconstruction, node classification and link prediction. 
Firstly, we study the geometric properties of the used datasets including the graph sectional curvature \cite{DBLP:conf/iclr/GuSGR19} and the $\delta$-hyperbolicity \cite{gromov1987hyperbolic}.
Fig.~\ref{fig:secs} shows the histograms of sectional curvature and the mean sectional curvature for all datasets. 
It can be seen that all datasets have both positive and negative sectional curvatures, showcasing that all graphs contain mixed graph topologies. 
To further analyze the degree of the hierarchy, we apply $\delta$-hyperbolicity to identify the tree-likeness, as shown in Table~\ref{tab:hyp}. 
We conjecture that the datasets with positive graph sectional curvature or larger $\delta$-hyperbolicity should be suitable for pseudo-hyperboloid with a smaller time dimension, while datasets with negative graph sectional curvature or smaller $\delta$-hyperbolicity should be aligned well with pseudo-hyperboloid with a larger time dimension.

\begin{figure}[t!]
    \centering  
    \includegraphics[width=0.9\columnwidth]{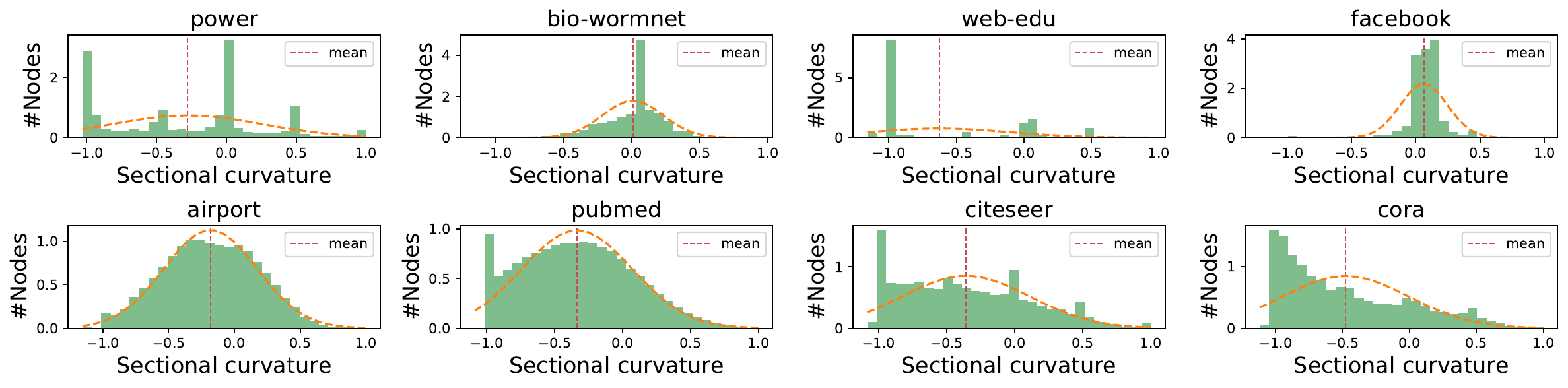}%
    \caption{The histograms of sectional curvature for all used datasets.}%
    \label{fig:secs}%
\end{figure}

\begin{table}[t!]\small
\centering
\caption{The $\delta$-hyperbolicity distribution of all used datasets.}\label{tab:hyp}
\begin{tabular}{ccccccc}
\hline
\noalign{\smallskip} 
Datasets &  0 & 0.5 & 1.0 & 1.5 & 2.0 & 2.5 \\
\hline\noalign{\smallskip}
Power &  0.4025 & 0.1722 & 0.1436 & 0.0773 & 0.0639 & 0.0439\\
Bio-Worm & 0.5635 & 0.3949 & 0.0410 & 0 & 0 & 0 \\
Web-Edu & 0.9532 & 0.0468 & 0 & 0 & 0 & 0 \\
Facebook & 0.8209 & 0.1569 & 0.0221 & 0 & 0 & 0 \\
Airport & 0.6376 & 0.3563 & 0.0061 & 0 & 0 & 0 \\
Pubmed & 0.4239 & 0.4549 & 0.1094 & 0.0112 & 0.0006 & 0 \\
CiteSeer & 0.3659 & 0.3538 & 0.1699 & 0.0678 & 0.0288 & 0 \\
Cora & 0.4474 & 0.4073 & 0.1248 & 0.0189 & 0.0016 & 0.0102 \\
\noalign{\smallskip}\hline
\end{tabular}
\end{table}

\subsubsection{Graph reconstruction}

\textbf{Datasets and baselines.}\quad
We benchmark graph reconstruction on four real-world graphs including 1) Web-Edu \cite{gleich2004fast}: a web network consisting of the $.edu$ domain; 2) Power \cite{watts1998collective}: a power grid distribution network with backbone structure; 3) Bio-Worm \cite{cho2014wormnet}: a worms gene network; 4) Facebook \cite{mcauley2012learning}: a dense social network from Facebook.  
We compare our method with Euclidean GCN \cite{kipf2016semi}, hyperbolic GCN (HGCN) \cite{DBLP:conf/nips/ChamiYRL19}, spherical GCN, and product manifold GCNs ($\kappa$-GCN) \cite{bachmann2020constant} with three signatures (i.e. $\mathbb{H}^{5}\times \mathbb{H}^{5}$, $\mathbb{H}^{5}\times \mathbb{S}^{5}$ and $\mathbb{S}^{5}\times \mathbb{S}^{5}$). Besides, five variants of our model are implemented with different time dimension in $[1,3,5,7,10]$ for comparison.

\begin{wraptable}{r}{0.59\linewidth}
\vspace{-0.6cm}
\centering
 \caption{The graph reconstruction results in mAP (\%), top three results are highlighted. Standard deviations are relatively small (in range $[0, 1.2 \times 10^{-2}]$) and are omitted.}
    \resizebox{\linewidth}{!}{
    \begin{tabular}{ccccc}
\hline\noalign{\smallskip} 
Model & Web-Edu & Power & Bio-Worm & Facebook \\
\hline\noalign{\smallskip} 
Curvature & -0.6 & -0.3 & 0.0 & 0.1 \\
\hline\noalign{\smallskip}
GCN ($\mathbb{E}^{10}$)  & 83.66 & 86.61 & 90.19  & 81.73\\
HGCN ($\mathbb{H}^{10}$) & 88.33 & 93.80  & 93.12 & 83.40\\
GCN ($\mathbb{S}^{10}$)  & 82.72 & 92.73   & 88.98  & 81.04\\
\hline\noalign{\smallskip}
$\kappa$-GCN ($\mathbb{H}^{5}\times \mathbb{H}^{5}$)     & 89.21 & 94.40 & 94.00  & 84.94\\
$\kappa$-GCN ($\mathbb{S}^{5}\times \mathbb{S}^{5}$)     & 86.70 & 94.58 & 90.36  & 84.56\\
$\kappa$-GCN ($\mathbb{H}^{5}\times \mathbb{S}^{5}$)    & 87.96 & \textcolor{cyan}{\textbf{95.82}}  & 94.74  & 87.73\\
\hline
$\mathcal{Q}$-GCN ($\mathcal{Q}^{9,1}$)                 & 87.03  & 94.35 & 92.83  & 81.60              \\
$\mathcal{Q}$-GCN ($\mathcal{Q}^{7,3}$) & \textcolor{red}{\textbf{99.67}}  & \textcolor{red}{\textbf{100.00}}     & \textcolor{red}{\textbf{97.23}} & \textcolor{blue}{\textbf{87.74}}      \\
$\mathcal{Q}$-GCN ($\mathcal{Q}^{5,5}$)                 & \textcolor{blue}{\textbf{98.49}}   & \textcolor{red}{\textbf{100.00}}    & \textcolor{blue}{\textbf{95.75}}  & \textcolor{cyan}{\textbf{87.03}}   \\
$\mathcal{Q}$-GCN ($\mathcal{Q}^{3,7}$)                 & \textcolor{cyan}{\textbf{97.31}}  & 95.08  & \textcolor{cyan}{\textbf{90.14}} & \textcolor{red}{\textbf{91.75}}        \\
$\mathcal{Q}$-GCN ($\mathcal{Q}^{0,10}$)                & 82.57  & 94.20  & 88.67 & 83.81         \\
\noalign{\smallskip}\hline
\end{tabular}
    }
\label{tab:map}
\end{wraptable}

\textbf{Experimental settings.}
We use one-hot embeddings as initial node features following \cite{DBLP:conf/iclr/GuSGR19, law2020ultrahyperbolic, bachmann2020constant}. To avoid the time dimensions being $\mathbf{0}$, we uniformly perturb each dimension with a small random value in the interval $[-\epsilon,\epsilon]$, where $\epsilon=0.02$ in practice.
In addition, the same $10$-dimensional embedding and $2$ hidden layers are used for all baselines to ensure a fair comparison. 
The learning rate is set to $0.01$, the learning rate of curvature is set to $0.0001$. 
$\mathcal{Q}$-GCN is implemented with the Adam optimizer. We repeat the experiments $10$ times via different random seeds influencing weight initialization and data batching.

\textbf{Results.}\quad 
Table \ref{tab:map} shows the mean average precision (mAP) \cite{DBLP:conf/iclr/GuSGR19} results of graph reconstruction on four datasets.
It shows that $\mathcal{Q}$-GCN achieves the best performance across all benchmarks compared with both Riemannian space and product manifolds. 
We observe that by setting proper signatures, the product spaces perform better than a single geometry. It is consistent with our statement that the expression power of a single view geometry is limited. 
Specifically, all the top three results are achieved by $\mathcal{Q}$-GCN, with one exception on Power where $\mathbb{H}^5 \times \mathbb{S}^5$ achieved the third-best performance.
More precisely, for datasets that have smaller graph sectional curvature like Web-Edu, Power and Bio-Worm, $\mathcal{Q}^{7,3}$ perform the best, while $\mathcal{Q}^{3,7}$ perform the best on Facebook with positive sectional curvature. We conjecture that the number of time dimensions controls the geometry of the pseudo-hyperboloid. 
We find that the graphs with more hierarchical structures are inclined to be embedded with fewer time dimensions. By analyzing the sectional curvature in Fig.~\ref{fig:secs}, we find that this makes sense as the mean sectional curvature of Power, Bio-Wormnet and Web-Edu are negative while it is negative for Facebook. Such results give us an intuition to determine the best time dimension based on the geometric properties of graphs.

\subsubsection{Node classification and link prediction}
\textbf{Datasets and baselines.}\quad
We consider four benchmark datasets: Airport, Pubmed, Citeseer and Cora, where Airport is airline networks, Pubmed, Citeseer and Cora are three citation networks. We observe that the graph sectional curvatures of the four datasets are consistently negative without significant differences in Fig.~\ref{fig:secs}, hence we report the additional $\delta$-hyperbolicity for comparison in Table~\ref{tab:lp-nc}.  
GCN \cite{kipf2016semi}, GAT \cite{velivckovic2017graph}, SAGE \cite{hamilton2017inductive} and SGC \cite{wu2019simplifying} are used as Euclidean GCN counterparts. For non-Euclidean GCN baselines, we compare HGCN \cite{DBLP:conf/nips/ChamiYRL19} and $\kappa$-GCN \cite{bachmann2020constant} with its three variants as explained before. For $\mathcal{Q}$-GCN, we empirically set the time dimension as $[1,2,3,14,15,16]$ as six variants since these settings best reflect the geometric properties of hyperbolic and spherical space, respectively. 

\textbf{Experimental settings.}\quad
For node classification, we use the same dataset split as \cite{yang2016revisiting} for citation datasets, where $20$ nodes per class are used for training, and $500$ nodes are used for validation and $1000$ nodes are used for testing. For Airport, we split the dataset into $70/15/15$. 
For link prediction, the edges are split into $85/5/10$ percent for training, validation and testing for all datasets. 
To ensure a fair comparison, we set the same $16$-dimension hidden embedding, $0.01$ initial learning rate and $0.0001$ learning rate for curvature. The optimal regularization with weight decay, dropout rate, the number of layers and activation functions are obtained by grid search for each method. 
We report the mean accuracy over $10$ random seeds influencing weight initialization and batching sequence. 

\textbf{Results.}\quad
Table \ref{tab:lp-nc} shows the averaged ROC AUC for link prediction, and F1 score for node classification. As we can see from the $\delta$-hyperbolicity, Airport and Pubmed are more hierarchical than CiteSeer and Cora.
For Airport and Pubmed with dominating hierarchical properties (lower $\delta$), $\mathcal{Q}$-GCNs with fewer time dimensions achieve the results on par with hyperbolic space based methods such as HGCN \cite{DBLP:conf/nips/ChamiYRL19}, $\kappa$-GCN ($\mathbb{H}^{16}$). 
While for CiteSeer and Cora with less tree-like properties (higher $\delta$), $\mathcal{Q}$-GCNs achieve the state-of-the-art results, showcasing the flexibility of our model to embed complex graphs with different curvatures. 
More specifically, $\mathcal{Q}$-GCN with more time dimensions consistently performs best on Cora. While for CiteSeer, albeit $\mathcal{Q}$-GCN achieves the best results, the corresponding best variants are not consistent on both tasks.

\begin{table}[t!]
\centering
\caption{ROC AUC (\%) for Link Prediction (LP) and F1 score for Node Classification (NC).}\label{tab:lp-nc} 
\resizebox{\columnwidth}{!}{\begin{tabular}{ccccccccc}
\hline\noalign{\smallskip}
Dataset & \multicolumn{2}{c}{Airport} & \multicolumn{2}{c}{Pubmed} & \multicolumn{2}{c}{CiteSeer} & \multicolumn{2}{c}{Cora}\\
$\delta$-hyperbolicity & \multicolumn{2}{c}{1.0} & \multicolumn{2}{c}{3.5} & \multicolumn{2}{c}{4.5} & \multicolumn{2}{c}{11.0}\\
\hline\noalign{\smallskip}
Method & LP & NC & LP & NC & LP & NC & LP & NC \\
\hline\noalign{\smallskip}
GCN & 89.24±0.21 & 81.54±0.60 & 91.31±1.68 & 79.30±0.60 & 85.48±1.75 & 72.27±0.64 & 88.52±0.85 & 81.90±0.41 \\
GAT  & 90.35±0.30 & 81.55±0.53 & 87.45±0.00 & 78.30±0.00 & 87.24±0.00 & 71.10±0.00 & 85.73±0.01 & \textcolor{cyan}{83.05±0.08} \\
SAGE & 89.86±0.52 & 82.79±0.17 & 90.70±0.07 & 77.30±0.09 & 90.71±0.20 & 69.20±0.10 & 87.52±0.22 & 74.90±0.07 \\
SGC	& 89.80±0.34 & 80.69±0.23 & 90.54±0.07 & 78.60±0.30 & 89.61±0.23 & 71.60±0.03 & 89.42±0.11 & 81.60±0.43 \\
\hline
HGCN ($\mathbb{H}^{16}$) & \textcolor{cyan}{\textbf{96.03±0.26}} & \textcolor{red}{\textbf{90.57±0.36}} & \textcolor{cyan}{\textbf{96.08±0.21}} & \textcolor{cyan}{\textbf{80.50±1.23}} & \textcolor{blue}{\textbf{96.31±0.41}} & 68.90±0.63 & 91.62±0.33 & 79.90±0.18\\
$\kappa$-GCN ($\mathbb{H}^{16}$) & \textcolor{red}{\textbf{96.35±0.62}} & \textcolor{cyan}{\textbf{87.92±1.33}} & 96.60±0.32 & 77.96±0.36 & 95.34±0.16 & \textcolor{cyan}{\textbf{73.25±0.51}} & 94.04±0.34 & 79.80±0.50 \\
$\kappa$-GCN ($\mathbb{S}^{16}$) & 90.38±0.32 & 81.94±0.58 & 94.84±0.13 & 78.80±0.49 & 95.79±0.24 & 72.13±0.51 & 93.20±0.48 & 81.08±1.45 \\
$\kappa$-GCN ($\mathbb{H}^{8}\times \mathbb{S}^{8}$) & 93.10±0.49 & 81.93±0.45 & 94.89±0.19 & 79.20±0.65 & 93.44±0.31 & 73.05±0.59 & 92.22±0.48 & 79.30±0.81 \\
\noalign{\smallskip}\hline
$\mathcal{Q}$-GCN ($\mathcal{Q}^{15,1}$) & \textcolor{blue}{\textbf{96.30±0.22}} & \textcolor{blue}{\textbf{89.72±0.52}} & 95.42±0.22 & \textcolor{cyan}{\textbf{80.50±0.26}} & 94.76±1.49 & 72.67±0.76 & 93.14±0.30 & 80.57±0.20 \\
$\mathcal{Q}$-GCN ($\mathcal{Q}^{14,2}$)  & 94.37±0.44 & 84.40±0.35 & \textcolor{red}{\textbf{96.86±0.37}} & \textcolor{red}{\textbf{81.34±1.54}} & 94.78±0.17 & \textcolor{blue}{\textbf{73.43±0.58}}  & 93.41±0.57 & 81.62±0.21 \\
$\mathcal{Q}$-GCN ($\mathcal{Q}^{13,3}$)  & 92.53±0.17 & 82.38±1.53 & \textcolor{blue}{\textbf{96.20±0.34}} & \textcolor{blue}{\textbf{80.94±0.45}} & 94.54±0.16 & \textcolor{red}{\textbf{74.13±1.41}} & 93.56±0.18 & 79.91±0.42 \\
\hline
$\mathcal{Q}$-GCN ($\mathcal{Q}^{2,14}$)  & 90.03±0.12 & 81.14±1.32 & 94.30±1.09 & 78.40±0.39 & 94.80±0.08 & 72.72±0.47 & \textcolor{cyan}{\textbf{94.17±0.38}} & \textcolor{blue}{\textbf{83.10±0.35} }\\
$\mathcal{Q}$-GCN ($\mathcal{Q}^{1,15}$)  & 89.07±0.58 & 81.24±0.34 & 94.66±0.18 & 78.11±1.38 & \textcolor{red}{\textbf{97.01±0.3}0} & \textcolor{cyan}{\textbf{73.19±1.58}} & \textcolor{blue}{\textbf{94.81±0.27}} &  \textcolor{red}{\textbf{83.72±0.43}}\\
$\mathcal{Q}$-GCN ($\mathcal{Q}^{0,16}$)  & 89.01±0.61 & 80.91±0.65 & 94.49±0.28 & 77.90±0.80 & \textcolor{blue}{\textbf{96.21±0.38}} & 72.54±0.27 & \textcolor{red}{\textbf{95.16±1.25}} & 82.51±0.32 \\
\hline
\end{tabular}}
\end{table}

\begin{figure}
\centering
\includegraphics[width=0.46\textwidth]{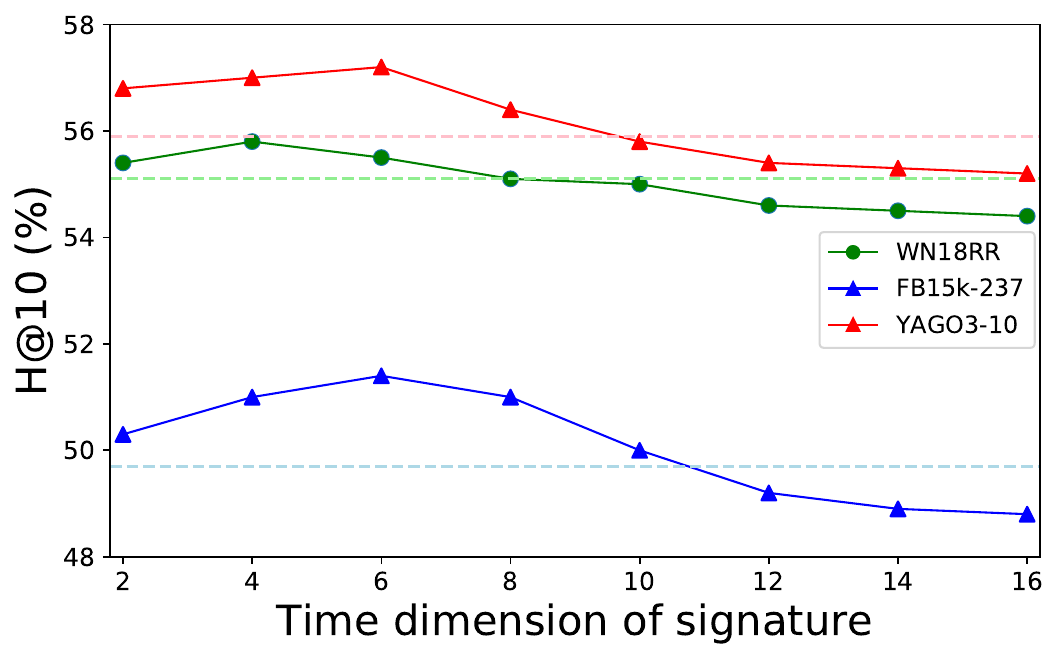}
\caption{The mAP of graph reconstruction with varying number of time dimensions.}
\label{fig:time_dimension}
\end{figure}

\subsubsection{Parameter sensitivity and analysis}\label{sec:analysis}
\textbf{Time dimension.}\quad
We study the influence of time dimension for graph reconstruction by setting varying number of time dimensions under the condition of $s+t=10$.
Fig.~\ref{fig:time_dimension} shows that the \textit{time dimension} $t$ acts as a knob for controlling the geometric properties of $\mathcal{Q}_{\beta}^{s,t}$. The best performance are achieved by neither hyperboloid ($t=1$) nor sphere cases ($t=10$), showcasing the advantages of $\mathcal{Q}_{\beta}^{s,t}$ on representing graphs of mixed topologies. 
It shows that on Web-Edu, Power and Bio-Worm with smaller mean sectional curvature, after the optimal value is reached at a lower $t$, the performance decrease as $t$ increases. 
While on Facebook with larger (positive) mean sectional curvature, as $t$ rises, the effect gradually increases until it reaches a peak at a higher $t$. 
It is consistent with our hypothesis that graphs with more hierarchical structure are inclined to be embedded in $\mathcal{Q}_{\beta}^{s,t}$ with smaller $t$, while cyclical data is aligned well with larger $t$. 
The results give us an intuition to determine the best time dimension based on the geometric properties of graphs. 

\begin{wraptable}{r}{0.59\linewidth}
\vspace{-0.7cm}
\centering
 \caption{The graph reconstruction results in mAP (\%), top three results are highlighted. Standard deviations are relatively small (in range $[0, 1.2 \times 10^{-2}]$) and are omitted.}
    \resizebox{\linewidth}{!}{
    \begin{tabular}{cccc}\\\toprule  \label{tab:ablation_QNN_QGCN}
    Method  & Pubmed & CiteSeer & Cora \\\midrule
    MLP & 72.30±0.30 & 60.22±0.42 & 55.80±0.08 \\
    HNN & 74.60±0.40 & 59.92±0.87 & 59.60±0.09 \\
    $\mathcal{Q}$-NN ($\mathcal{Q}^{15,1}$) & 74.31±0.33 & 59.33±0.35 & 60.38±0.56 \\  
    $\mathcal{Q}$-NN ($\mathcal{Q}^{14,2}$) & \textbf{76.26±0.31} & \textbf{64.33±0.35} & 62.77±0.30 \\  
    $\mathcal{Q}$-NN ($\mathcal{Q}^{13,3}$) & 75.85±0.79 & 63.65±0.57 & 59.04±0.45\\  
    $\mathcal{Q}$-NN ($\mathcal{Q}^{2,14}$) & 74.44±0.68 & 60.48±0.29 & 63.85±0.22 \\  
    $\mathcal{Q}$-NN ($\mathcal{Q}^{1,15}$) & 73.44±0.28 & 60.33±0.40 & \textbf{64.85±0.24} \\  
    $\mathcal{Q}$-NN ($\mathcal{Q}^{0,16}$) & 73.31±0.17 & 61.05±0.22 & 63.96±0.41 \\  
    \bottomrule
    \end{tabular}
    }
\label{tab:qnn}
\end{wraptable}

\textbf{$\mathcal{Q}$-NN VS $\mathcal{Q}$-GCN.}\quad 
We also introduce $\mathcal{Q}$-NN, a generalization of MLP into pseudo-Riemannian manifold, defined as multiple layers of $f(\mathbf{x})=\sigma^{\otimes}(W \otimes^{\beta} \mathbf{x} \oplus^{\beta} \mathbf{b})$, where $\sigma^{\otimes}$ is the tangential activation. 
Table \ref{tab:qnn} shows that $\mathcal{Q}$-NN with appropriate time dimension outperforms MLP and HNN on node classification, showcasing the expression power of pseudo-hyperboloid. Furthermore, compared with the results of $\mathcal{Q}$-GCN in Table \ref{tab:lp-nc}, $\mathcal{Q}$-GCN performs better than $\mathcal{Q}$-NN, suggesting that the benefits of the neighborhood aggregation equipped with the proposed GCN operations.

\begin{wraptable}{r}{0.47\linewidth}
\vspace{-0.6cm}
\centering
 \caption{The running time (seconds) of graph reconstruction on Web-Edu and Facebook.}
    \resizebox{\linewidth}{!}{
    \begin{tabular}{ccc}\\ \toprule  
        Manifolds & Web-Edu & Facebook \\\midrule
        GCN ($\mathbb{E}^{10}$) & 2284 & 5456 \\  
        Prod-GCN ($\mathbb{H}^{5} \times \mathbb{S}^{5}$) & 4336 & 10338 \\  
        $\mathcal{Q}$-GCN ($\mathcal{Q}^{9,1}$) & 2769 & 6981 \\  
        $\mathcal{Q}$-GCN ($\mathcal{Q}^{7,3}$) & 3363 & 6303\\  
        $\mathcal{Q}$-GCN ($\mathcal{Q}^{5,5}$) & 3620 & 7142 \\  
        $\mathcal{Q}$-GCN ($\mathcal{Q}^{3,7}$) & 3685 & 7512 \\  
        $\mathcal{Q}$-GCN ($\mathcal{Q}^{1,9}$) & 3532 & 7980 \\  
        $\mathcal{Q}$-GCN ($\mathcal{Q}^{0,10}$) & 2778 & 7037 \\  
        \noalign{\smallskip}\hline
        \end{tabular}
        }\label{tab:efficiency}
    \vspace{-0.3cm}
\end{wraptable}

\textbf{Computation efficiency.}\label{sec:computation} 
We compare the running time of $\mathcal{Q}$-GCN, GCN and Prod-GCN per epoch. Table~\ref{tab:efficiency} shows that $\mathcal{Q}$-GCN achieves higher efficiency than Prod-GCN ($\mathbb{H}^5 \times \mathbb{S}^5 $). This is mainly owing to that the component $\mathbb{R}$ in our diffeomorphic manifold ($\mathbb{S} \times \mathbb{R}$) runs faster than non-Euclidean components in $\mathbb{H}^5 \times \mathbb{S}^5$.
The additional running time mainly comes from the mapping operations and the projection from the time dimensions to $\mathbb{S}$. 
The running time grows when increasing the number of time dimensions.
Overall, albeit slower than Euclidean GCN, the running time of all variants of $\mathcal{Q}$-GCN is smaller than the twice of time in Euclidean GCN, which is within the acceptable limits.

\begin{figure}[t!]
\begin{minipage}[t]{0.3\linewidth}
\centering
\footnotesize
\begin{minipage}[t]{0.5\linewidth}
    \includegraphics[width=\linewidth]{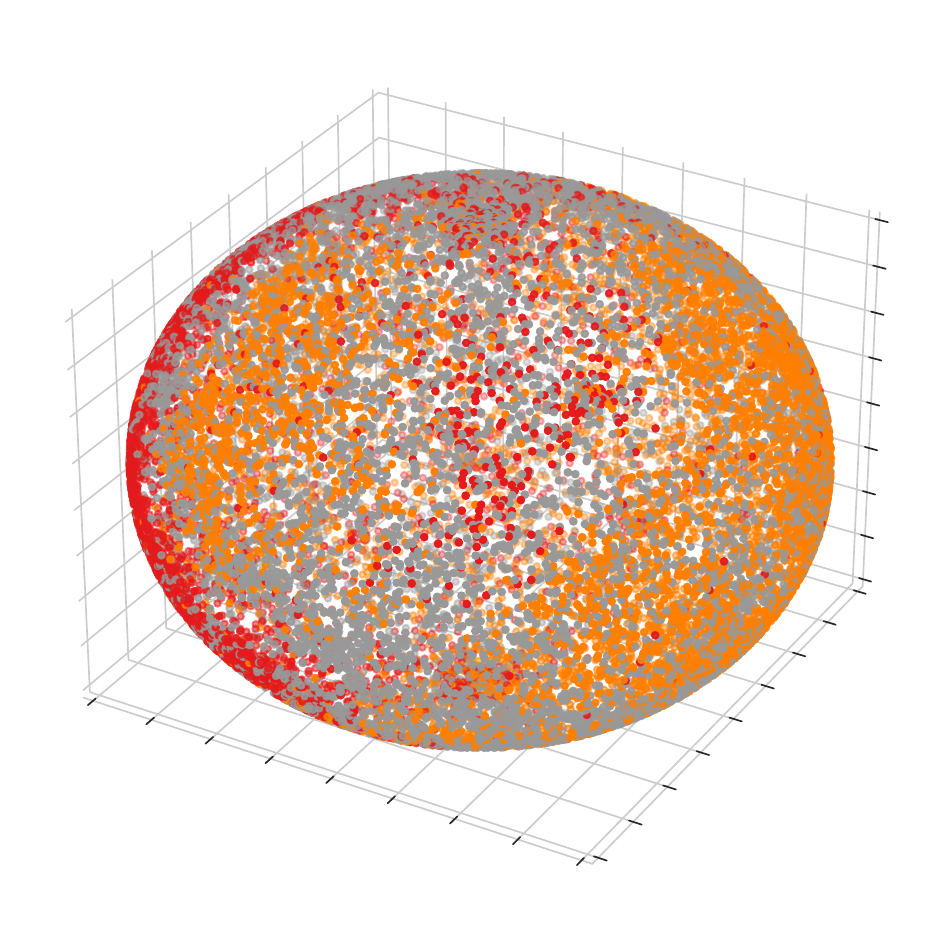}\\
\end{minipage}%
    \hfill%
\begin{minipage}[t]{0.5\linewidth}
    \includegraphics[width=\linewidth]{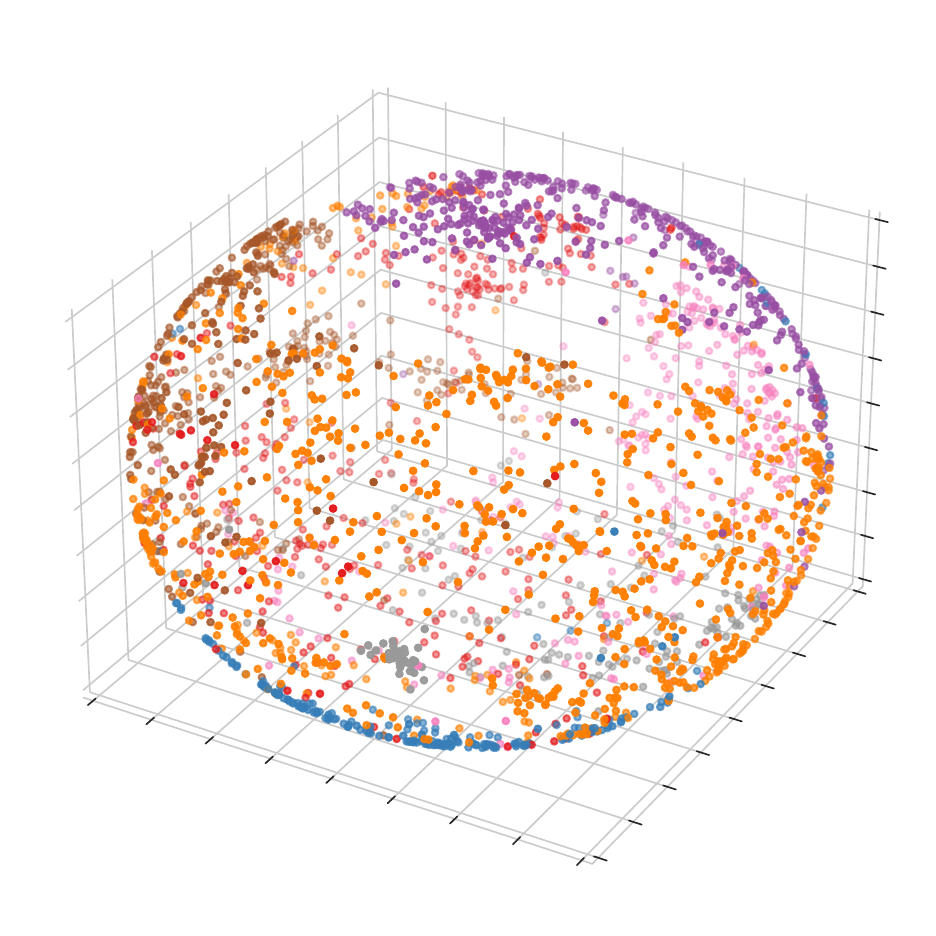}\\
\end{minipage}\\
(a) Spherical projection
\end{minipage}
    \hfill%
\begin{minipage}[t]{0.3\linewidth}
\centering
\footnotesize
\begin{minipage}[t]{0.5\linewidth}
    \includegraphics[width=\linewidth]{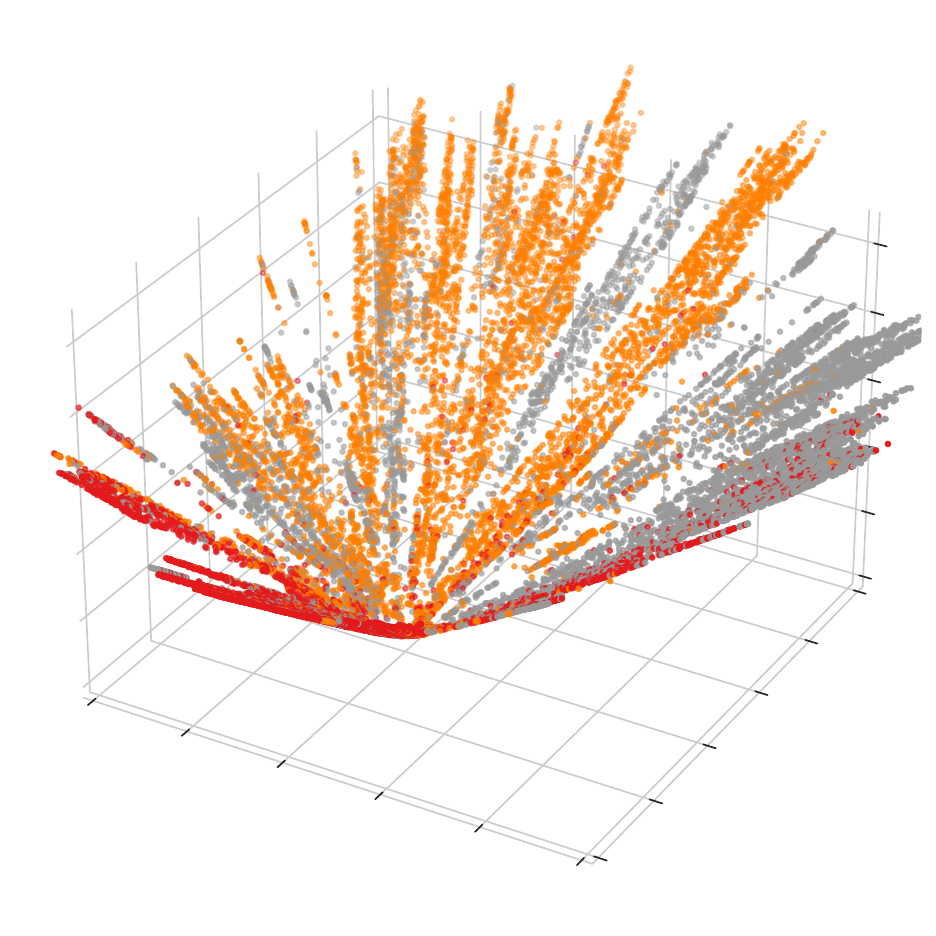}\\
\end{minipage}%
    \hfill%
\begin{minipage}[t]{0.5\linewidth}
    \includegraphics[width=\linewidth]{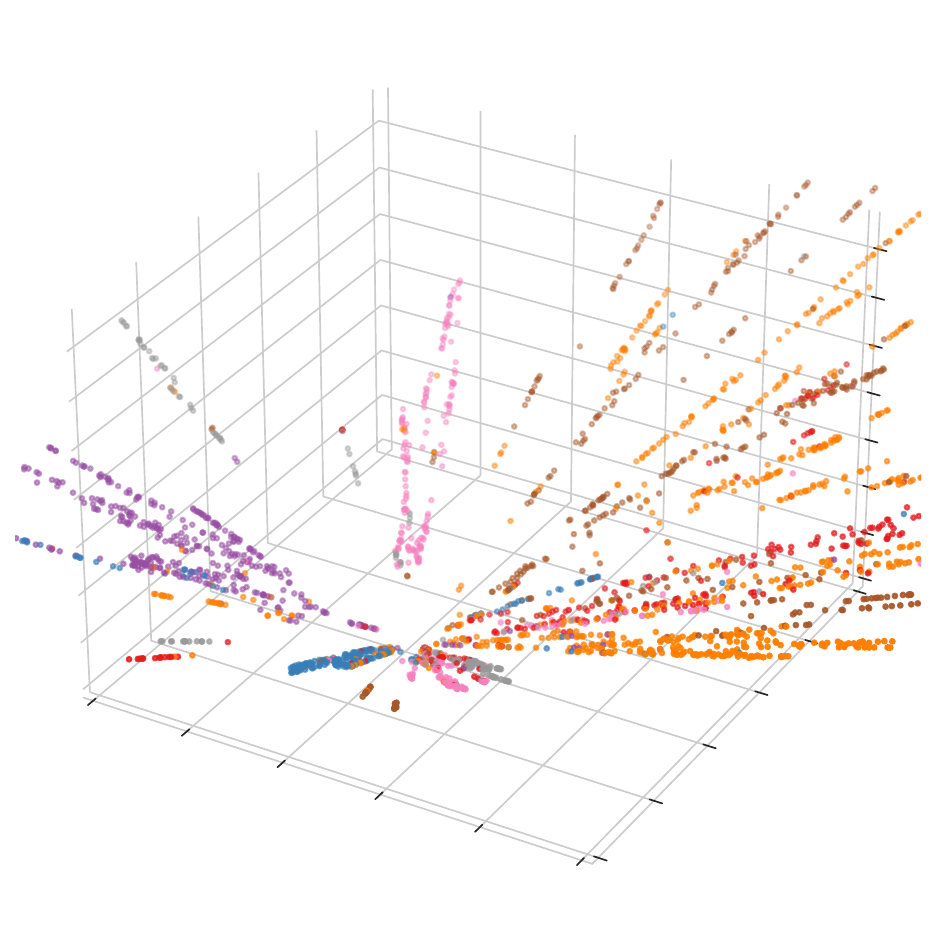}\\
\end{minipage}\\
(b) Hyperbolic projection (3D)
\end{minipage}
    \hfill%
\begin{minipage}[t]{0.3\linewidth}
\centering
\footnotesize
\begin{minipage}[t]{0.5\linewidth}
    \includegraphics[width=\linewidth]{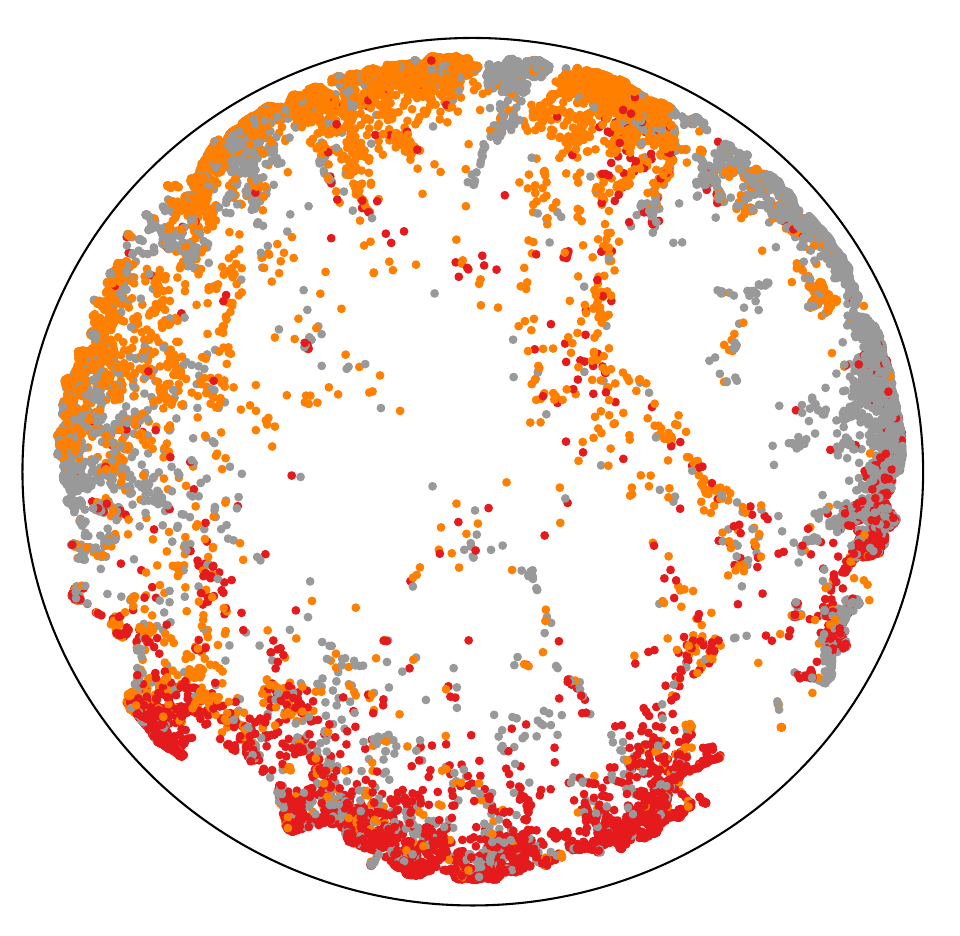}\\
\end{minipage}%
    \hfill%
\begin{minipage}[t]{0.5\linewidth}
    \includegraphics[width=\linewidth]{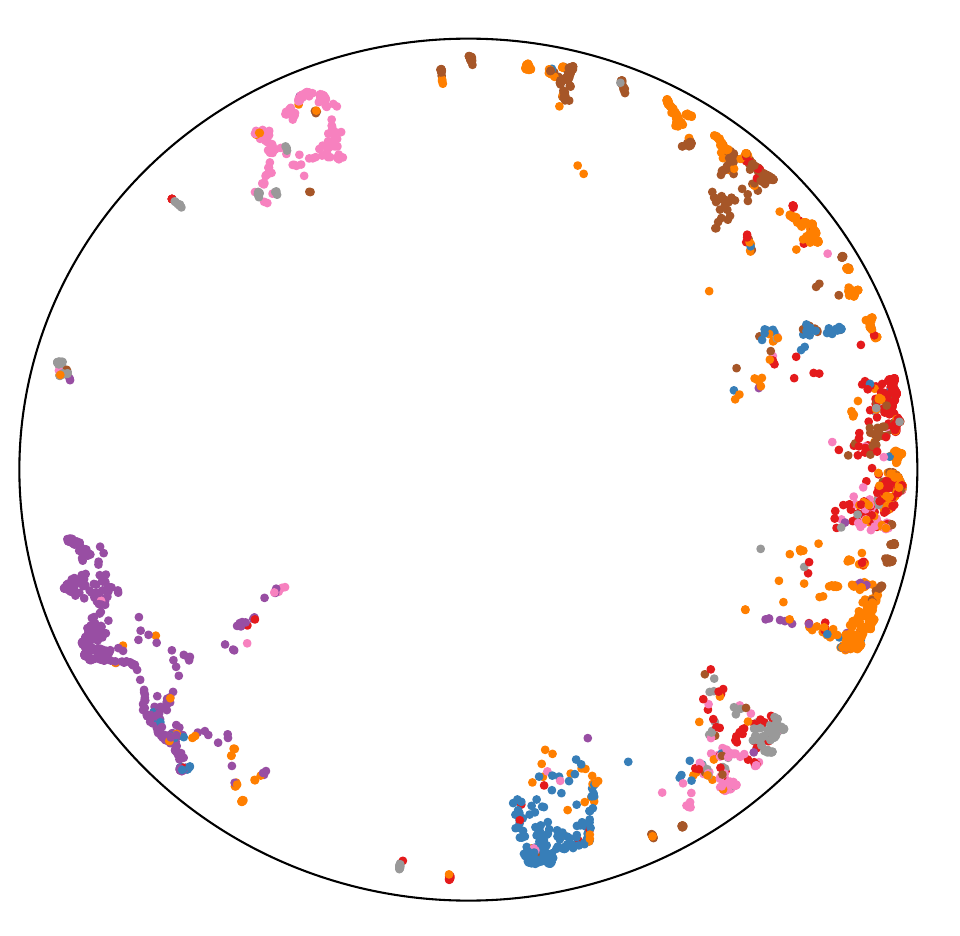}\\
\end{minipage}\\
(c) Hyperbolic projection (disk)
\end{minipage}
 \caption{Visualization of the learned embeddings for link prediction on Pubmed (left) and Cora (right), where the colors denote the class of nodes. We apply (a) spherical projection, (b) hyperbolic projection (3D) and (c) hyperbolic projection (Poincar$\acute{\text{e}}$ disk) on the learned embeddings of $\mathcal{Q}$-GCN to visualize various views of the learned embeddings.}
 \label{fig:vis}
\end{figure}

\textbf{Visualization.} To visualize the embeddings learned by $\mathcal{Q}$-GCN, we use UMAP tool \footnote{https://umap-learn.readthedocs.io/en/latest/index.html} to project the learned embeddings for Link Prediction on Pubmed and Cora into low-dimensional spherical and hyperbolic spaces.  The projections include spherical projection into 3D sphere, hyperbolic projection into 3D plane, and 2D Poincar$\acute{\text{e}}$ disk.  
As shown in Fig.~\ref{fig:vis} (a,b), for Pubmed with more hyperbolic structures, the class separability is more significant in hyperbolic projection than that is in spherical projection. 
While the corresponding result is opposite for less tree-like Cora. 
Furthermore, Fig.~\ref{fig:vis} (c) provides a more clear insight of the hierarchy. It shows that there are more hub nodes near the origin of Poincar$\acute{\text{e}}$ disk in Pubmed than in Cora, showcasing the dominating tree-likeness of Pubmed.

\subsection{Conclusion}
In this paper, we generalize GCNs to pseudo-Riemannian manifolds of constant nonzero curvature with elegant theories of diffeomorphic geometry tools. The proposed $\mathcal{Q}$-GCN have the flexibility to fit complex graphs with mixed curvatures and have shown promising results on graph reconstruction, node classification and link prediction. 
One limitation might be the choice of time dimension, we provide some insights to decide the best time dimension but this could still be improved, which we left for our future work. 
The developed geodesic tools are application-agnostic and could be extended to more deep learning methods. We foresee our work would shed light on the direction of non-Euclidean geometric deep learning. 

\section{Pseudo-Riemannian Knowledge Graph Embeddings}
\label{sec:ultrae}

Recent knowledge graph (KG) embeddings have been advanced by hyperbolic geometry due to its superior capability for representing hierarchies. The topological structures of real-world KGs, however, are rather heterogeneous, i.e., a KG is composed of multiple distinct hierarchies and non-hierarchical graph structures. Therefore, a homogeneous (either Euclidean or hyperbolic) geometry is not sufficient for fairly representing such heterogeneous structures. To capture the topological heterogeneity of KGs, we present an ultrahyperbolic KG embedding (UltraE) in an ultrahyperbolic (or pseudo-Riemannian) manifold that seamlessly interleaves hyperbolic and spherical manifolds. In particular, we model each relation as a pseudo-orthogonal transformation that preserves the pseudo-Riemannian bilinear form. The pseudo-orthogonal transformation is decomposed into various operators (i.e., circular rotations, reflections and hyperbolic rotations), allowing for simultaneously modeling heterogeneous structures as well as complex relational patterns. Experimental results on three standard KGs show that UltraE outperforms previous Euclidean, hyperbolic, and mixed-curvature KG embedding approaches.

\subsection{Background and Motivation}

Knowledge graph (KG) embeddings, which map entities and relations into a low-dimensional space, have emerged as an effective way for a wide range of KG-based applications \cite{DBLP:journals/bmcbi/CelebiUYGDD19, DBLP:conf/www/0003W0HC19,DBLP:conf/wsdm/HuangZLL19}. In the last decade, various KG embedding methods have been proposed. Prominent examples include the \emph{additive} (or \emph{translational}) family \cite{DBLP:conf/nips/BordesUGWY13,DBLP:conf/aaai/WangZFC14,DBLP:conf/aaai/LinLSLZ15} and the \emph{multiplicative} (or \emph{bilinear}) family \cite{DBLP:conf/icml/NickelTK11,DBLP:conf/nips/YangYC17,DBLP:conf/icml/LiuWY17}. 
Most of these approaches, however, are built on the Euclidean geometry that suffers from inherent limitations when dealing with hierarchical KGs such as WordNet \cite{DBLP:journals/cacm/Miller95}.
Recent studies \cite{DBLP:conf/nips/ChamiYRL19, DBLP:conf/nips/NickelK17} show that hyperbolic geometries (e.g., the Poincaré ball or Lorentz model) are more suitable for embedding hierarchical data because of their exponentially growing volumes.
Such \emph{tree-like} geometric space has been exploited in developing various hyperbolic KG embedding models such as MuRP \cite{DBLP:conf/nips/BalazevicAH19}, RotH \cite{chami2020low} and HyboNet \cite{DBLP:journals/corr/abs-2105-14686}, boosting the performance of link prediction on KGs with rich hierarchical structures and remarkably reducing the dimensionality.

Although hierarchies are the most dominant structures, the real-world KGs usually exhibit heterogeneous topological structures, e.g., a KG consists of multiple hierarchical and non-hierarchical relations.
Typically, different hierarchical relations (e.g., \textit{subClassOf} and \textit{partOf}) form distinct hierarchies, while various non-hierarchical relations (e.g., \textit{similarTo} and \textit{sisterTerm}) capture the corresponding interactions between the entities at the same hierarchy level \cite{bai2021modeling}. 
Fig.\ref{fig:example}(a) shows an example of KG consisting of a heterogeneous graph structure. 
However, current hyperbolic KG embedding methods such as MuRP \cite{DBLP:conf/nips/BalazevicAH19} and HyboNet \cite{DBLP:journals/corr/abs-2105-14686} can only model a globally homogeneous hierarchy. 
RotH \cite{chami2020low} implicitly considers the topological "heterogeneity" of KGs and alleviates this issue by learning relation-specific curvatures that distinguish the topological characteristics of different relations. 
However, this does not entirely solve the problem, because hyperbolic geometry inherently \emph{mismatches} non-hierarchical data (e.g., data with cyclic structure) \cite{DBLP:conf/iclr/GuSGR19}.

To deal with data with heterogeneous topologies, a recent work \cite{DBLP:conf/www/WangWSWNAXYC21} learns KG embeddings in a product manifold and shows some improvements on KG completion. However, such product manifold is still a homogeneous space in which all data points have the same degree of heterogeneity (i.e., hierarchy and cyclicity), while KGs require relation-specific geometric mappings, e.g., relation \textit{partOf} should be more "hierarchical" than relation \textit{similarTo}.
Different from previous works, we consider an ultrahyperbolic manifold that seamlessly interleaves the hyperbolic and spherical manifolds. Fig.\ref{fig:example} (b) shows an example of ultrahyperbolic manifold that contains multiple distinct geometries. Ultrahyperbolic manifold has demonstrated impressive capability on embedding graphs with heterogeneous topologies such as hierarchical graphs with cycles \cite{law2020ultrahyperbolic,sim2021directed}.
However, such powerful representation space has not yet been exploited for embedding KGs with heterogeneous topologies.

\begin{figure}[t!]
    \centering
    \subfloat[\centering ]{{\includegraphics[width=.62\columnwidth]{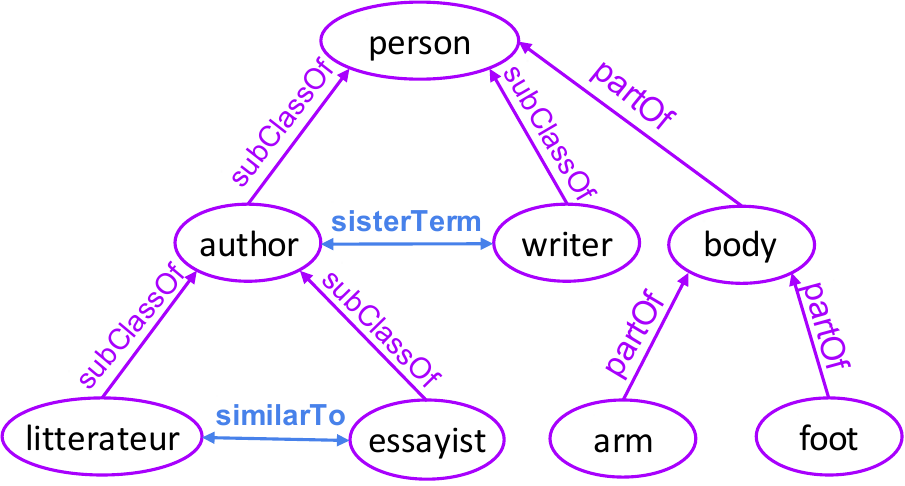}}}
    \subfloat[\centering ]{{\includegraphics[width=.34\columnwidth]{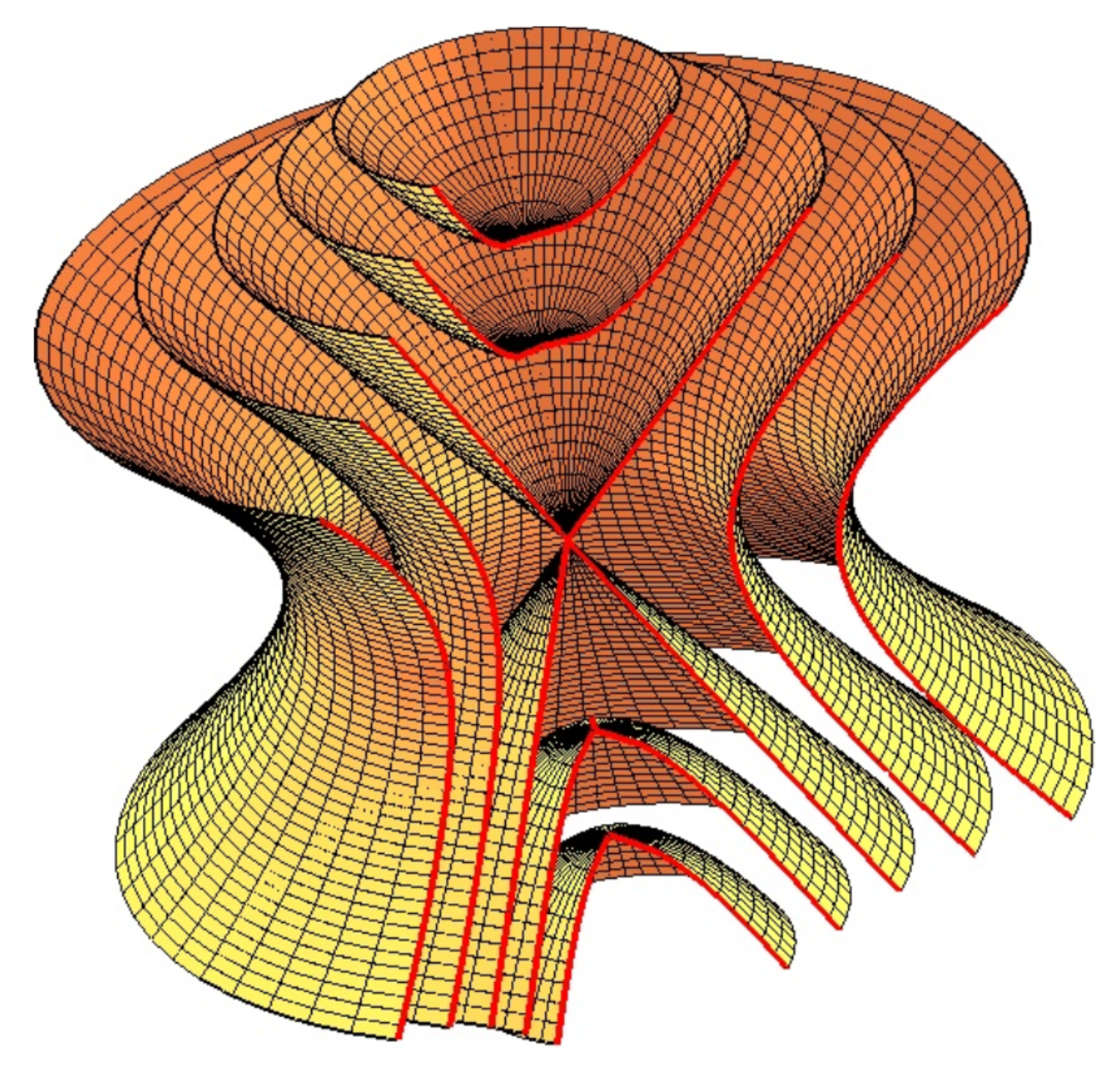} }}
    \caption{(a) A KG contains multiple distinct hierarchies (e.g., \textit{subClassOf} and \textit{partOf}) and non-hierarchies  (e.g., \textit{similarTo} and \textit{sisterTerm}). 
    (b) An ultrahyperbolic manifold generalizing hyperbolic and spherical manifolds (figure from \cite{law2020ultrahyperbolic}).} 
    \label{fig:example}
\end{figure}

In this paper, we propose ultrahyperbolic KG embeddings (UltraE), the first KG embedding method that simultaneously embeds multiple distinct hierarchical relations and non-hierarchical relations in a single but heterogeneous geometric space. 
The intuition behind the idea is that there exist multiple kinds of local geometries that could describe their corresponding  relations. For example, as shown in Fig.~\ref{fig:distance}(a), two points in the same \emph{circular conic section} are described by spherical geometry, while two points in the same half of a \emph{hyperbolic conic section} can be described by hyperbolic geometry.
In particular, we model entities as points in the ultrahyperbolic manifold and model relations as pseudo-orthogonal transformations, i.e., isometries in the ultrahyperbolic manifold. 
We exploit the theorem of hyperbolic Cosine-Sine decomposition \cite{DBLP:journals/siammax/StewartD05} to decompose the pseudo-orthogonal matrices into various geometric operations including circular rotations/reflections and hyperbolic rotations. Circular rotations/reflections allow for modeling relational patterns (e.g., composition), while hyperbolic rotations allow for modeling hierarchical graph structures. As Fig.~\ref{fig:distance}(b) shows, a combination of circular rotations/reflections and hyperbolic rotations induces various geometries including circular, elliptic, parabolic and hyperbolic geometries. 
These geometric operations are parameterized by Givens rotation/reflection \cite{chami2020low} and trigonometric functions, such that the number of relation parameters grows linearly w.r.t embedding dimensionality.
The entity embeddings are parametrized in Euclidean space and projected to the ultrahyperbolic manifold with differentiable and bijective mappings, allowing for stable optimization via standard Euclidean based gradient descent algorithms. 

\begin{figure}[t!]
    \centering
    \subfloat[\centering ]{{\includegraphics[width=.42\columnwidth]{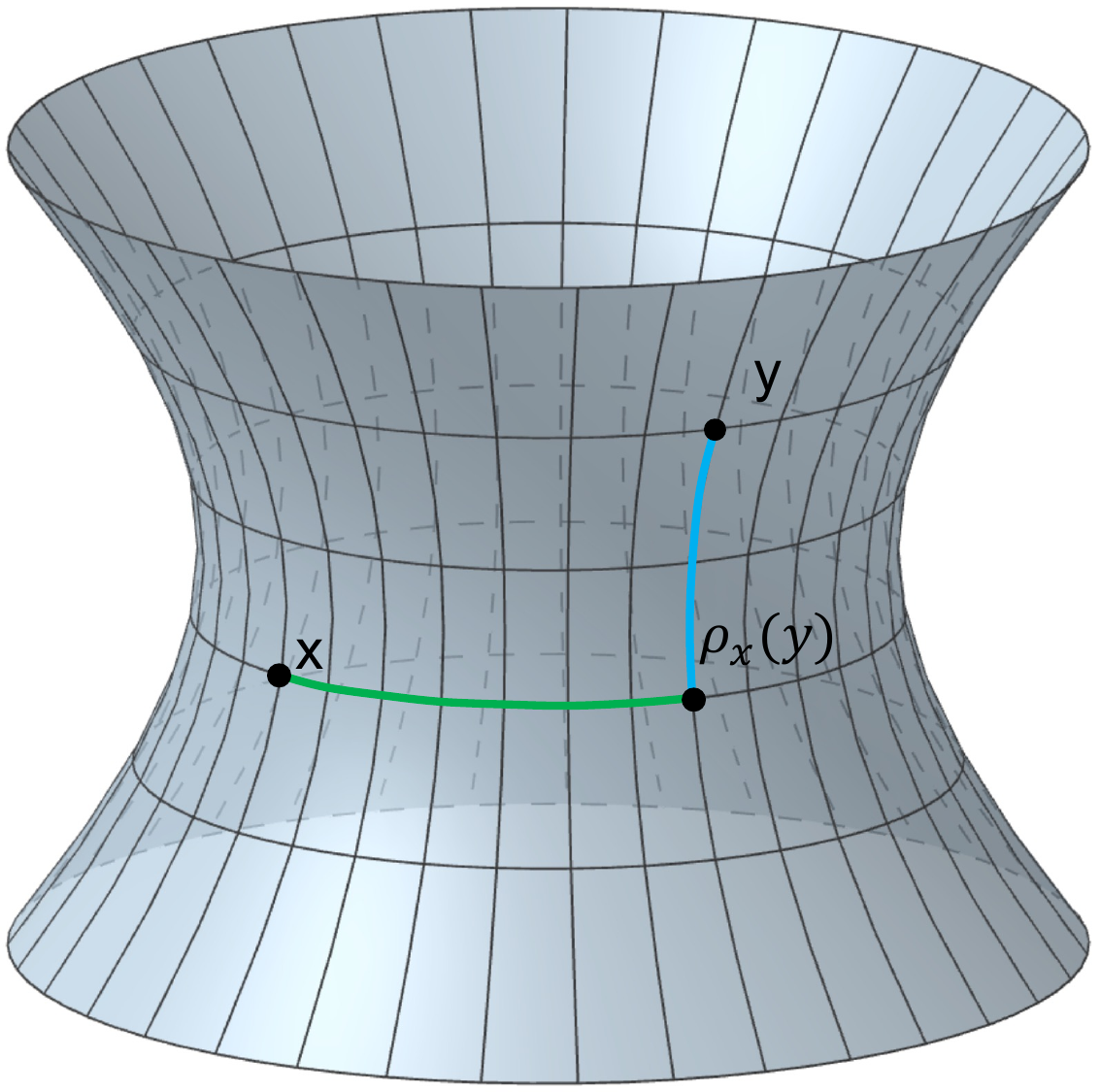} }}%
    \qquad
     \subfloat[\centering ]{{\includegraphics[width=0.48\columnwidth]{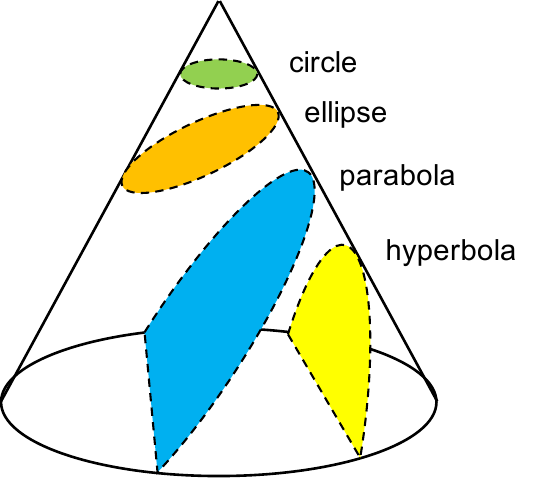}}}%
     \qquad
    \caption{(a) An illustration of a spherical (\textit{green}) geometry in the circular conic section and a hyperbolic (\textit{blue}) geometry in the hyperbolic conic section. The Manhattan-like distance of two points is defined by summing up the \emph{energy} moving from one point to another point with a circular rotation and a hyperbolic rotation. $\rho_{\mathbf{x}}(\mathbf{y})$ is a projection of $\mathbf{y}$ on an circular conic section crossing $\mathbf{x}$, such that $\rho_{\mathbf{x}}(\mathbf{y})$ and $\mathbf{x}$ are connected by a   circular rotation while $\rho_{\mathbf{x}}(\mathbf{y})$ and $\mathbf{y}$ are connected by a hyperbolic rotation. (b) An illustration of geometries covered by the circular and hyperbolic rotation, including circular, elliptic, parabolic, and hyperbolic geometries.}
    \label{fig:distance}
\end{figure}

\textbf{Contributions.} 
Our key contributions include: 1) We propose a novel KG embedding method, dubbed UltraE, that models entities in an ultrahyperbolic manifold seamlessly covering various geometries including hyperbolic, spherical and their combinations. UltraE enables modeling multiple hierarchical and non-hierarchical structures in a single but heterogeneous space; 2) We propose to decompose the relational transformation into various operators and parameterize them via Givens rotations/reflections such that the number of parameters is linear to the dimensionality. The decomposed operators allow for modeling multiple relational patterns including inversion, composition, symmetry, and anti-symmetry; 3) We propose a novel Manhattan-like distance in the ultrahyperbolic manifold, to retain the identity of indiscernibles while without suffering from the broken geodesic issues. 4) We show the theoretical connection of UltraE with some existing approaches. Particularly, by exploiting the theorem of Lorentz transformation, we identify the connections between multiple hyperbolic KG embedding methods, including MuRP, RotH/RefH and HyboNet. 5) We conduct extensive experiments on three standard benchmarks, and the experimental results show that UltraE outperforms previous Euclidean, hyperbolic and mixed-curvature (product manifold) baselines on  KG completion tasks.

\subsection{Methodology}

Let $\mathcal{E}$ and $\mathcal{R}$ denote the set of entities and relations.
A KG $\mathcal{K}$ consists of a set of triples $(h, r, t) \in \mathcal{K}$ where $h, t \in \mathcal{E}, r \in \mathcal{R}$ denote the head, the tail and their relation, respectively. 
The objective is to associate each entity with an embedding $\mathbf{e} \in \mathbb{U}^{p,q}$ in the ultrahyperbolic manifold, as well as a relation-specific transformation $f_r:\mathbb{U}^{p,q} \rightarrow \mathbb{U}^{p,q}$ that transforms one entity to another one in the ultrahyperbolic manifold.

\subsubsection{Relation as Pseudo-Orthogonal Matrix}
We propose to model relations as pseudo-orthogonal (or $J$-orthogonal) transformations \cite{DBLP:journals/siamrev/Higham03}, a generalization of \emph{orthogonal transformation} in pseudo-Riemannian geometry. 
Formally, a real, square matrix $\mathbf{Q} \in \mathbb{R}^{d \times d}$ is called $J$-orthogonal if
\begin{equation}
    \mathbf{Q}^{T} \mathbf{J} \mathbf{Q}=\mathbf{J},
\end{equation}
where
$\mathbf{J}=\left[\begin{array}{cc} 
\mathbf{I}_{p} & \mathbf{0} \\
\mathbf{0} & -\mathbf{I}_{q}
\end{array}\right], p+q=d
$ and $\mathbf{I}_p$, $\mathbf{I}_q$ are identity matrices. 
$\mathbf{J}$ is called a signature matrix of signature $(p,q)$. Such $J$-orthogonal matrices form a multiplicative group called pseudo-orthogonal group $O(p,q)$. Conceptually, a matrix $\mathbf{Q} \in O(p,q)$ is an \emph{isometry} (distance-preserving transformation) in the ultrahyperbolic manifold that preserves the bilinear form (i.e., $\forall \mathbf{x} \in \mathbb{U}^{p,q}, \mathbf{Q}\mathbf{x} \in \mathbb{U}^{p,q} $). Therefore, the matrix acts as a linear transformation in the ultrahyperbolic manifold.

There are two challenges to model relations as $J$-orthogonal transformations: 1) $J$-orthogonal matrix requires $\mathcal{O}(d^2)$ parameters. 2) Directly optimizing the $J$-orthogonal matrices results in constrained optimization, which is practically challenging within the standard gradient based framework. 

\textbf{Hyperbolic Cosine-Sine Decomposition.}
To solve these issues, we seek to decompose the $J$-orthogonal matrix by exploiting the Hyperbolic Cosine-Sine (CS) Decomposition. 
\begin{proposition}[Hyperbolic CS Decomposition \cite{DBLP:journals/siammax/StewartD05}]\label{prop:cs_decomposition}
Let $\mathbf{Q}$ be $J$-orthogonal and assume that $q \leq p.$ Then there are orthogonal matrices $\mathbf{U}_{1}, \mathbf{V}_{1} \in \mathbb{R}^{p \times p}$ and $\mathbf{U}_{2}, \mathbf{V}_{2} \in \mathbb{R}^{q \times q}$ s.t.
\begin{equation}\label{eq:cs_decomposition}
\mathbf{Q}=\left[\begin{array}{cc}
\mathbf{U}_{1} & \mathbf{0} \\
\mathbf{0} & \mathbf{U}_{2}
\end{array}\right]\left[\begin{array}{ccc}
\mathbf{C} & \mathbf{0} & \mathbf{S} \\
\mathbf{0} & I_{p-q} & \mathbf{0} \\
\mathbf{S} & \mathbf{0} & \mathbf{C} 
\end{array}\right]\left[\begin{array}{cc}
\mathbf{V}_{1}^{T} & \mathbf{0} \\
\mathbf{0} & \mathbf{V}_{2}^{T}
\end{array}\right],
\end{equation}
where $\mathbf{C}=\operatorname{diag}\left(c_1, \ldots, c_q\right), \mathbf{S}=\operatorname{diag}\left(s_1, \ldots, s_q\right)$ and $\mathbf{C}^{2}-\mathbf{S}^{2}=\mathbf{I}_q$. For cases where $q > p$, the decomposition can be defined analogously. For simplicity, we only consider $q \leq p$.
\end{proposition}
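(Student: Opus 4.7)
The plan is to reduce the claim to the ordinary singular value decomposition of a single block together with the bilinear identities forced by $J$-orthogonality. I would first partition $\mathbf{Q}$ conformably with $\mathbf{J}$ as a $2\times 2$ block matrix with blocks $\mathbf{Q}_{11}\in\mathbb{R}^{p\times p}$, $\mathbf{Q}_{12}\in\mathbb{R}^{p\times q}$, $\mathbf{Q}_{21}\in\mathbb{R}^{q\times p}$, $\mathbf{Q}_{22}\in\mathbb{R}^{q\times q}$, and expand both $\mathbf{Q}^T\mathbf{J}\mathbf{Q}=\mathbf{J}$ and $\mathbf{Q}\mathbf{J}\mathbf{Q}^T=\mathbf{J}$ (the second identity follows from the first since $\mathbf{Q}^{-1}=\mathbf{J}\mathbf{Q}^T\mathbf{J}$). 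The key identity obtained is $\mathbf{Q}_{11}^T\mathbf{Q}_{11}=\mathbf{I}_p+\mathbf{Q}_{21}^T\mathbf{Q}_{21}$, whose right-hand side is a positive semidefinite rank-at-most-$q$ perturbation of $\mathbf{I}_p$; hence every singular value of $\mathbf{Q}_{11}$ is at least $1$ and at most $q$ of them strictly exceed $1$ (this is where the assumption $q\leq p$ enters).

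Next I would take an SVD $\mathbf{Q}_{11}=\mathbf{U}_1\mathbf{\Sigma}\mathbf{V}_1^T$ with the singular values ordered so that $\mathbf{\Sigma}=\mathrm{diag}(\mathbf{C},\mathbf{I}_{p-q})$, where $\mathbf{C}=\mathrm{diag}(c_1,\ldots,c_q)$ with $c_i\geq 1$. Setting $s_i=\sqrt{c_i^2-1}\geq 0$ and $\mathbf{S}=\mathrm{diag}(s_1,\ldots,s_q)$ immediately yields the required relation $\mathbf{C}^2-\mathbf{S}^2=\mathbf{I}_q$. The matrix $\mathbf{Q}_{21}\mathbf{V}_1$ then has column Gram matrix $\mathbf{\Sigma}^2-\mathbf{I}_p=\mathrm{diag}(\mathbf{S}^2,\mathbf{0})$, so its last $p-q$ columns vanish and its first $q$ columns are pairwise orthogonal with norms $s_i$; normalising the nonzero ones and completing to a basis of $\mathbb{R}^q$ defines an orthogonal $\mathbf{U}_2$ with $\mathbf{Q}_{21}=\mathbf{U}_2[\mathbf{S},\mathbf{0}]\mathbf{V}_1^T$.

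With $\mathbf{U}_1,\mathbf{V}_1,\mathbf{U}_2$ now fixed, I would use $\mathbf{Q}_{22}\mathbf{Q}_{22}^T=\mathbf{I}_q+\mathbf{Q}_{21}\mathbf{Q}_{21}^T=\mathbf{U}_2\mathbf{C}^2\mathbf{U}_2^T$ to deduce an SVD of the form $\mathbf{Q}_{22}=\mathbf{U}_2\mathbf{C}\mathbf{V}_2^T$ for some orthogonal $\mathbf{V}_2\in\mathbb{R}^{q\times q}$. The coupling identity $\mathbf{Q}_{11}^T\mathbf{Q}_{12}=\mathbf{Q}_{21}^T\mathbf{Q}_{22}$, multiplied out under the SVDs established so far, forces $\mathbf{Q}_{12}=\mathbf{U}_1\begin{pmatrix}\mathbf{S}\\ \mathbf{0}\end{pmatrix}\mathbf{V}_2^T$ after using the diagonal commutation $\mathbf{C}^{-1}\mathbf{S}=\mathbf{S}\mathbf{C}^{-1}$. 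Substituting all four block formulas and collecting the outer orthogonal factors then assembles precisely the decomposition in the statement.

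The main obstacle will be the bookkeeping around degenerate hyperbolic singular values. Whenever $s_i=0$ (equivalently $c_i=1$), the corresponding column of $\mathbf{Q}_{21}\mathbf{V}_1$ vanishes, so the $i$-th column of $\mathbf{U}_2$ is not pinned down by $\mathbf{Q}_{21}$ alone, and the SVD of $\mathbf{Q}_{22}$ is only unique up to rotations within repeated-eigenvalue subspaces of $\mathbf{C}^2$. The nontrivial point is that these choices in $\mathbf{U}_2$ and $\mathbf{V}_2$ must be made simultaneously so that the coupling identity $\mathbf{Q}_{11}^T\mathbf{Q}_{12}=\mathbf{Q}_{21}^T\mathbf{Q}_{22}$ remains consistent; handling the repeated-singular-value and zero-$s_i$ cases carefully is where the argument becomes technical rather than merely combining several standard SVDs.
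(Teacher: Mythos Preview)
The paper does not prove this proposition; it is quoted verbatim as a known result from Stewart and van Dooren (the reference \cite{DBLP:journals/siammax/StewartD05}) and used only as a black-box tool for parameterising $J$-orthogonal relation matrices. There is therefore no ``paper's own proof'' to compare against.

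That said, your sketch is correct and is essentially the standard argument from the cited source: partition $\mathbf{Q}$ conformably with $\mathbf{J}$, read off the four block identities from $\mathbf{Q}^T\mathbf{J}\mathbf{Q}=\mathbf{J}$ and $\mathbf{Q}\mathbf{J}\mathbf{Q}^T=\mathbf{J}$, take an SVD of the $(1,1)$ block to fix $\mathbf{U}_1,\mathbf{V}_1,\mathbf{C}$, and then propagate through the remaining blocks using those identities. Your identification of the only genuinely delicate point---synchronising the choices of $\mathbf{U}_2$ and $\mathbf{V}_2$ on the eigenspaces where $s_i=0$ or where $c_i$ is repeated, so that the off-diagonal coupling $\mathbf{Q}_{11}^T\mathbf{Q}_{12}=\mathbf{Q}_{21}^T\mathbf{Q}_{22}$ still holds exactly---is accurate and is precisely where the reference spends its effort.
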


Geometrically, the $J$-orthogonal matrix is decomposed into various geometric operators. The orthogonal matrices $\mathbf{U}_{1}, \mathbf{V}_{1}$ represent circular rotation or reflection (depending on their determinant) \footnote{Depending on the determinant, a orthogonal matrix $\mathbf{U}$ denotes a rotation iff $\operatorname{det}(\mathbf{U})=1$ or a reflection iff $\operatorname{det}(\mathbf{U})=-1$} in the space dimension, while $\mathbf{U}_{2}, \mathbf{V}_{2}$ represent circular rotation or reflection in the time dimension. The intermediate matrix that is uniquely determined by $\mathbf{C},\mathbf{S}$, denotes a hyperbolic rotation (analogous to the "circular rotation") across the space and time dimensions. Fig. \ref{fig:rotation} shows a $2$-dimensional example of circular rotation and hyperbolic rotation. 

It is worth noting that both circular rotation/reflection and hyperbolic rotation are important operations for KG embeddings. 
On the one hand, circular rotations/reflections are able to model complex relational patterns including inversion, composition, symmetry, or anti-symmetry. Besides, these relational patterns usually form
some non-hierarchies (e.g., cycles). Hence, circular rotations/reflections inherently encode non-hierarchical graph structures. 
Hyperbolic rotation, on the other hand, is able to model hierarchies, i.e., by connecting entities at different levels of hierarchies. 
Therefore, this decomposition shows that $J$-orthogonal transformation is powerful for representing both relational patterns and graph structures. 

\begin{figure}[t!]
    \centering
    \includegraphics[width=0.8\columnwidth]{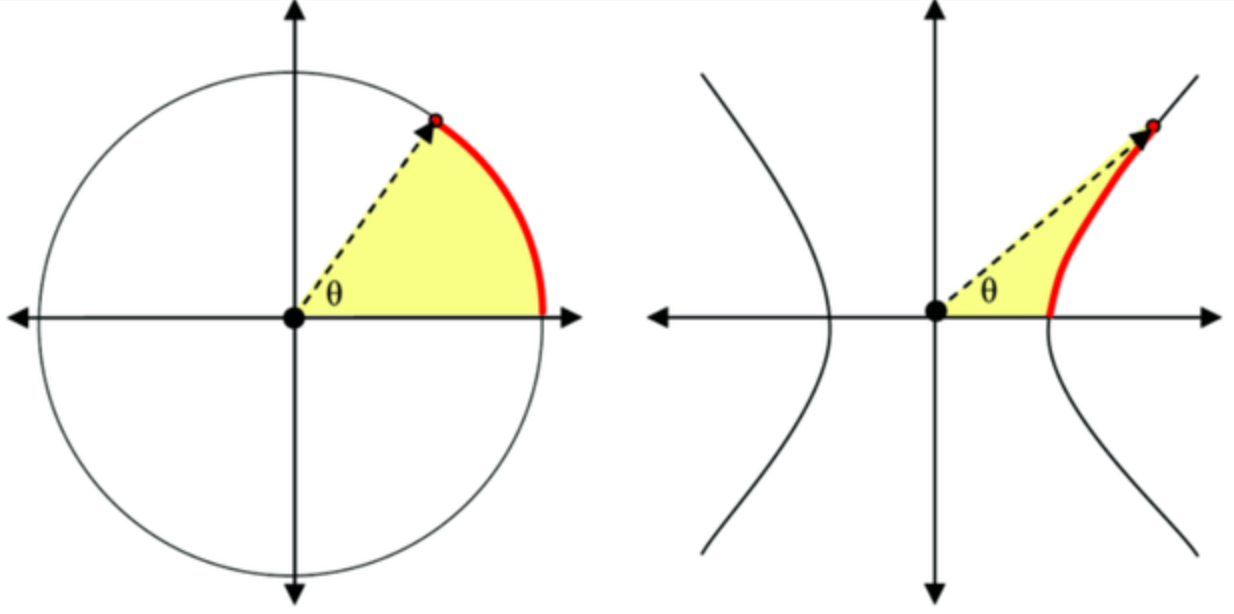}
    \caption[A two-dimensional example of circular rotation (\textit{left}) and hyperbolic rotation (\textit{right}), with $\theta$ being a angle.]{A two-dimensional example of circular rotation (\textit{left}) and hyperbolic rotation (\textit{right}), with $\theta$ being a angle.\footnotemark}
    \label{fig:rotation}
\end{figure}
\footnotetext{Source: \url{https://en.m.wikipedia.org/wiki/File:Planar_rotations.png}}



\subsubsection{Relation Parameterization}

\textbf{Circular Rotation/Reflection.} Parameterizing circular rotation or reflection via orthogonal matrices is non-trivial and there are some \emph{trivialization} approaches such as using Cayley Transform \cite{shepard2015representation}. However, such parameterization requires  $\mathcal{O}(d^2)$ parameter complexity. To simplify the complexity, we consider Given transformations denoted by $2 \times 2$ matrices. Suppose the number of dimension $p,q$ are even, circular rotation and reflection can be denoted by block-diagonal matrices of the form, given as
\begin{equation}\label{eq:rot_ref}
    \begin{array}{c}
    \operatorname{\mathbf{Rot}}\left(\Theta_{r}\right)=\operatorname{diag}\left(\mathbf{G}^{+}\left(\theta_{r, 1}\right), \ldots, \mathbf{G}^{+}\left(\theta_{r, \frac{p+q}{2}}\right)\right) \\
    \operatorname{\mathbf{Ref}}\left(\Phi_{r}\right)=\operatorname{diag}\left(\mathbf{G}^{-}\left(\phi_{r, 1}\right), \ldots, \mathbf{G}^{-}\left(\phi_{r, \frac{p+q}{2}}\right)\right) \\
    \text {where} \quad \mathbf{G}^{\pm}(\theta):=\left[\begin{array}{cc}
    \cos (\theta) & \mp \sin (\theta) \\
    \sin (\theta) & \pm \cos (\theta)
    \end{array}\right]
    \end{array},
\end{equation}
where $\Theta_{r}:=\left(\theta_{r, i}\right)_{i \in\left\{1, \ldots \frac{p+q}{2}\right\}}$ and $\Phi_{r}:=\left(\phi_{r, i}\right)_{i \in\left\{1, \ldots \frac{p+q}{2}\right\}}$ are relation-specific parameters. 

Although circular rotation is theoretically able to infer symmetric patterns \cite{DBLP:conf/emnlp/WangLLS21} (i.e., by setting rotation angle $\theta=\pi$ or $\theta=0$), circular reflection can more effectively represent symmetric relations since their second power is the identity. AttH \cite{DBLP:conf/emnlp/WangLLS21} combines circular rotations and circular reflections by using an attention mechanism learned in the tangent space, which requires additional parameters.
We also combine circular rotation and circular reflection operators but in a different way. Since the $J$-orthogonal matrix is decomposed into two rotation/reflection matrices, we set the first matrix in Eq.~(\ref{eq:cs_decomposition}) to be circular rotation while the third part to be circular reflection matrices, given by
\begin{equation}\small\label{eq:uv}
\begin{split}
\mathbf{U_{\Theta_r}}=\left[\begin{array}{cc}
\operatorname{\operatorname{\mathbf{Rot}}}\left(\Theta_{r_{p}}\right) & \mathbf{0} \\
\mathbf{0} & \operatorname{\operatorname{\mathbf{Rot}}}\left(\Theta_{r_{q}}\right)
\end{array}\right],
\end{split}
\quad
\begin{split}
\mathbf{V_{\Phi_r}}=\left[\begin{array}{cc}
\operatorname{\operatorname{\mathbf{Ref}}}\left(\Phi_{r_{p}}\right) & \mathbf{0} \\
\mathbf{0} & \operatorname{\operatorname{\mathbf{Ref}}}\left(\Phi_{r_{q}}\right)
\end{array}\right]
\end{split},
\end{equation}

Clearly, the parameterization of circular rotation and reflection in Eq.~ (\ref{eq:rot_ref}), as well as the combination of them in Eq.(\ref{eq:uv}), lose a certain degree of freedom of $J$-orthogonal transformation. However, it 1) results in a linear ($\mathcal{O}(d)$) memory complexity of relational embeddings; 2) significantly reduces the risk of overfitting; and 3) is sufficiently expressive to model complex relational patterns as well as graph structures. This is similar to many other Euclidean models, such as SimplE \cite{DBLP:conf/nips/Kazemi018}, that sacrifice some degree of freedoms of the multiplicative model (i.e., RESCAL \cite{DBLP:conf/icml/NickelTK11}) parameterized by quadratic matrices while pursuing a linearly complex, less overfitting and highly expressive relational model. 

\textbf{Hyperbolic Rotation.} 
The hyperbolic rotation matrix is parameterized by two diagonal matrices $\mathbf{C},\mathbf{S}$ that satisfy the condition $\mathbf{C}^{2}-\mathbf{S}^{2}=\mathbf{I}$. The hyperbolic rotation matrix can be seen as a generalization of the $2\times2$ hyperbolic rotation given by $\left[\begin{array}{ll}\cosh (\mu) & \sinh (\mu) \\ \sinh (\mu) & \cosh (\mu)\end{array}\right]$, where the trigonometric functions $\sinh$ and $\cosh$ are hyperbolic versions of the $\sin$ and $\cos$ functions. Clearly, it satisfies the condition $\cosh (\mu)^2-\sinh(\mu)^2=1$. Analogously, to satisfy the condition $\mathbf{C}^{2}-\mathbf{S}^{2}=\mathbf{I}$, we parameterize $\mathbf{C},\mathbf{S}$ by diagonal matrices
\begin{align}
    & \mathbf{C}(\mu)=\operatorname{diag}\left(\cosh (\mu_1), \ldots, \cosh (\mu_q)\right), \\
    & \mathbf{S}(\mu)=\operatorname{diag}\left(\sinh (\mu_1), \ldots, \sinh (\mu_q)\right),
\end{align}
where $\mu=(\mu_1, \cdots, \mu_q)$ is the parameter of hyperbolic rotation to learn. 
Therefore, the hyperbolic rotation matrix can be denoted by
\begin{equation}\scriptsize\label{eq:translation}
\mathbf{H}_{\mu_r}=\left[\begin{array}{ccc}
\operatorname{diag}\left(\cosh (\mu_{r,1}), \ldots, \cosh (\mu_{r,q})\right)  & \mathbf{0} & \operatorname{diag}\left(\sinh (\mu_{r,1}), \ldots, \sinh (\mu_{r,q})\right) \\
\mathbf{0} & I_{p-q} & \mathbf{0}, \\
\operatorname{diag}\left(\sinh (\mu_{r,1}), \ldots, \sinh (\mu_{r,q})\right) & \mathbf{0} & \operatorname{diag}\left(\cosh (\mu_{r,1}), \ldots, \cosh (\mu_{r,q})\right) 
\end{array}\right],
\end{equation}

Given the parameterization of each component, the final transformation function of relation $r$ is given by
\begin{equation}\label{eq:f_r}
    f_r=\mathbf{U}_{\theta_r} \mathbf{H}_{\mu_r} \mathbf{V}_{\Phi_r}.
\end{equation}
Notably, the combination of circular rotation/reflection and hyperbolic rotation covers various kinds of geometric transformations, including circular, elliptic, parabolic, and hyperbolic transformations (See Fig.~\ref{fig:distance}(a)). Hence, our relational embedding is able to work with all corresponding geometrical spaces.


\subsubsection{Objective and Manhattan-like Distance}

\textbf{Objective Function.} 
Given $f_r$ and entity embeddings $e$, we design a score function for each triplet $(h,r,t)$ as
\begin{equation}
s(h, r, t)=-d_{\mathbf{U}}^{2}\left(f_{r}\left(\mathbf{e}_{h}\right), \mathbf{e}_{t}\right)+b_{h}+b_{t}+\delta,
\end{equation}
where $f_{r}\left(\mathbf{e}_{h}\right) = \mathbf{U}_{\theta_r} \mathbf{H}_{\mathbf{\mu}_r} \mathbf{V}_{\Phi_r} \mathbf{e_h}$, and $\mathbf{e}_{h}, \mathbf{e}_{t} \in \mathbb{U}^{p,q}$ are the embeddings of head entity $h$ and tail entity $t$, $b_h,b_t \in \mathbf{R}^d$ are entity-specific biases, and each bias defines an entity-specific \emph{sphere of influence} \cite{DBLP:conf/nips/BalazevicAH19} surrounding the center point. $\delta$ is a global margin hyper-parameter. $d_{\mathbb{U}}(\cdot)$ is a function that quantifies the nearness/distance between two points in the ultrahyperbolic manifold. 

\textbf{Manhattan-like Distance.} 
Defining a proper distance $d_{\mathbb{U}}(\cdot)$ in the ultrahyperbolic manifold is non-trivial. Different from Riemannian manifolds that are geodesically connected, ultrahyperbolic manifolds are not geodesically connected, and there exist \emph{broken cases} in which the geodesic distance is not defined \cite{law2020ultrahyperbolic}. 
Some approximation approaches \cite{law2020ultrahyperbolic,DBLP:conf/nips/XiongZPP0S22} are proposed and satisfy some of the axioms of a classic metric (e.g., symmetric premetric). However, these distances suffer from the lack of the identity of indiscernibles, that is, one may have $\mathbf{d}_{\mathbb{U}}(\mathbf{x}, \mathbf{y})=0$ for some distinct points $\mathbf{x}\neq\mathbf{y}$. This is not a problem for metric learning that learns to preserve the pair-wise distances
\cite{law2020ultrahyperbolic,DBLP:conf/nips/XiongZPP0S22}. However, our preliminary experiments find that the geodesic distance lacks the identity of indiscernibles results in unstable and non-convergent training. We conjecture this is due to the fact that the target of KG embedding is different from the graph embedding aiming at preserving pair-wise distance. KG embedding aims at satisfying $f_r(\mathbf{e}_h) \approx \mathbf{e}_t$ for each positive triple $(h,r,t)$ while not for negative triples. Hence, we need to retain the \emph{identity of indiscernibles}, that is, $\mathbf{d}_{\mathbb{U}}(\mathbf{x}, \mathbf{y})=0 \Leftrightarrow \mathbf{x}=\mathbf{y}$. 

To address this issue, we propose a novel Manhattan-like distance function, which is defined by a composition of a spherical distance and a hyperbolic distance. 
Fig.~\ref{fig:distance} shows the Manhattan-like distance. 
Formally, given two points $\mathbf{x},\mathbf{y} \in \mathbb{U}^{p,q}$, we first define a projection $\rho_{\mathbf{x}}(\mathbf{y})$ of $\mathbf{y}$ on the \emph{circular conic section} crossing $\mathbf{x}$, such that $\rho_{\mathbf{x}}(\mathbf{y})$ and $\mathbf{x}$ share the same space dimension while $\rho_{\mathbf{x}}(\mathbf{y})$ and $y$ lie on a hyperbolic subspace. 
\begin{equation}\label{eq:dis_proj}
    \rho_{\mathbf{x}}(\mathbf{y}) = \left(\begin{array}{c}
    \mathbf{ \mathbf{x}_p } \\
    \alpha \mathbf{y}_q \frac{\|x_p\|}{\|y_p\|} 
    \end{array}\right),
\end{equation}
This projection makes that $\mathbf{x}$ and $\rho_\mathbf{x}(\mathbf{y})$ are connected by a purely space-like geodesic while $\rho_\mathbf{x}(\mathbf{y}),\mathbf{y}$ are connected by a purely time-like geodesic. 
The distance function of $\mathbb{U}$ hence can be defined as a Manhattan-like distance, i.e., the sum of the two distances, given as
\begin{align}\label{eq:distance}
     d_{\mathbb{U}}(\mathbf{x},\mathbf{y}) = \min\{& d_\mathbb{S}(\mathbf{y},\rho_\mathbf{y}(\mathbf{x})) + d_\mathbb{H}(\rho_\mathbf{y}(\mathbf{x}),\mathbf{x}), \\ 
     & d_\mathbb{S}(\mathbf{x},\rho_\mathbf{x}(\mathbf{y})) + d_\mathbb{H}(\rho_\mathbf{x}(\mathbf{y}),\mathbf{y})\},
\end{align}
where $d_\mathbb{S}$ and $d_\mathbb{H}$ are spherical and hyperbolic distances, respectively, which are well-defined and maintain the identity of indiscernibles.

\subsubsection{Optimization}
For each triplet $(h,r,t)$, we create $k$ negative samples by randomly corrupting its head or tail entity. The probability of a triple is calculated as $p=\sigma(s(h, r, t))$ where $\sigma(.)$ is a sigmoid function. We minimize the binary cross entropy loss, given as
\begin{equation}
    \mathcal{L}=-\frac{1}{N} \sum_{i=1}^{N}\left(\log p^{(i)}+\sum_{j=1}^{k} \log \left(1-\tilde{p}^{(i, j)}\right)\right),
\end{equation}
where $p^{(i)}$ and $\tilde{p}^{(i, j)}$ are the probabilities for positive and negative triplets respectively, and $N$ is the number of samples.

Notably, directly optimizing the embeddings in ultrahyperbolic manifold is challenging. The issue is caused by the fact that there exist some points that cannot be connected by a geodesic in the manifold (hence no tangent direction for gradient descent). One way to sidestep the problem is to define entity embeddings in the Euclidean space and use a \emph{diffeomorphism} to map the points into the manifold. 
In particular, we consider the following diffeomorphism. 
\begin{theorem}\label{lm:sbr}[Diffeomorphism \cite{DBLP:conf/nips/XiongZPP0S22}]
For any point $\mathbf{x} \in \mathbb{U}_{\alpha}^{p, q}$, there exists a diffeomorphism $\psi: \mathbb{U}_{\alpha}^{p, q} \rightarrow \mathbb{R}^{p} \times \mathbb{S}_{\alpha}^{q}$ that maps $\mathbf{x}$ into the product manifolds of a sphere and the Euclidean space. The mapping and its inverse are given by
\end{theorem}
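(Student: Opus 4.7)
The plan is to mirror the construction of the diffeomorphism already used in Eq.~(\ref{eq:map_to_sr}) of the $\mathcal{Q}$-GCN section, this time with the roles of the Euclidean and spherical components interchanged to match the target $\mathbb{R}^{p} \times \mathbb{S}_{\alpha}^{q}$. Concretely, I will exhibit an explicit pair of maps, argue that the image of the forward map lies in the claimed product manifold, verify that the two maps are mutually inverse, and finally confirm smoothness of both directions.

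First I would split any $\mathbf{x} \in \mathbb{U}_{\alpha}^{p,q}$ according to the signature into a Euclidean-type block $\mathbf{x}_{p}$ and a pseudo-time-type block $\mathbf{x}_{q}$, so that the defining pseudo-norm constraint $\langle \mathbf{x},\mathbf{x}\rangle = \alpha$ algebraically relates $\|\mathbf{x}_{p}\|_{2}^{2}$ and $\|\mathbf{x}_{q}\|_{2}^{2}$. Then I would define
\begin{equation*}
\psi(\mathbf{x}) \;=\; \Bigl(\mathbf{x}_{p},\; \sqrt{\alpha}\,\mathbf{x}_{q}/\|\mathbf{x}_{q}\|_{2}\Bigr), \qquad \psi^{-1}(\mathbf{u},\mathbf{v}) \;=\; \bigl(\mathbf{u},\; \lambda(\mathbf{u})\,\mathbf{v}\bigr),
\end{equation*}
where $\lambda(\mathbf{u})$ is the unique positive scalar — a closed-form expression in $\|\mathbf{u}\|_{2}$ and $\alpha$, structurally analogous to the factor $\sqrt{|\beta|+\|\mathbf{v}\|^{2}}/\sqrt{|\beta|}$ appearing in Eq.~(\ref{eq:map_to_sr}) — chosen to place $(\mathbf{u},\lambda(\mathbf{u})\mathbf{v})$ back onto $\mathbb{U}_{\alpha}^{p,q}$ via the pseudo-norm equation.

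The remaining checks are routine: (i) $\psi$ lands in $\mathbb{R}^{p} \times \mathbb{S}_{\alpha}^{q}$ because the rescaled time-type block has squared Euclidean norm exactly $\alpha$ by construction, while the Euclidean-type block is unconstrained; (ii) the compositions $\psi^{-1}\circ\psi$ and $\psi\circ\psi^{-1}$ collapse to the identity after substituting the defining equation of $\mathbb{U}_{\alpha}^{p,q}$ and cancelling the norm factors; (iii) both maps are $C^{\infty}$, being built from coordinate projections, a Euclidean norm evaluated on a domain bounded away from the origin, and square roots of strictly positive expressions.

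The main obstacle is ensuring that the denominator $\|\mathbf{x}_{q}\|_{2}$ in $\psi$ — and, symmetrically, the argument of the square root hidden inside $\lambda(\mathbf{u})$ — are strictly positive everywhere on their respective domains. This nondegeneracy is not automatic for an arbitrary pseudo-Riemannian submanifold, but follows directly from the defining pseudo-norm equation of $\mathbb{U}_{\alpha}^{p,q}$ together with the sign convention on $\alpha$: the constraint forces the time-type block to be bounded below in norm by $\sqrt{\alpha}>0$, so $\mathbf{x}_{q}$ never vanishes, and symmetrically $\lambda(\mathbf{u})$ is well-defined for every $\mathbf{u}\in\mathbb{R}^{p}$. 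Once this nondegeneracy is pinned down, the diffeomorphism claim is immediate from the explicit formulas.
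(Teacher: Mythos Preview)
Your proposal is correct and follows essentially the same route as the paper's proof in Appendix~\ref{app:lm:sbr}: write down the explicit rescaling maps, verify that the two compositions collapse to the identity using the defining pseudo-norm equation, and observe smoothness from the elementary form of the maps. If anything, you are slightly more careful than the paper---you explicitly isolate the nondegeneracy of $\|\mathbf{x}_q\|$ and of the square-root argument in $\lambda(\mathbf{u})$, whereas the paper handles this only implicitly by restricting the time block to $\mathbb{R}_*^{t+1}$; and you check that the image lands in the target product manifold, which the paper leaves unstated. The only cosmetic discrepancy is your rescaling factor $\sqrt{\alpha}$ versus the paper's $\alpha$, which just reflects whether $\mathbb{S}_\alpha^q$ denotes the sphere of radius $\alpha$ or of squared radius $\alpha$; match the paper's convention when you write it out.
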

\begin{equation}
    \psi(\mathbf{x})=\left(\begin{array}{c}
    \mathbf{s} \\
    \alpha \frac{\mathbf{t}}{\|\mathbf{t}\|}
    \end{array}\right), \quad \psi^{-1}(\mathbf{z})=\left(\begin{array}{c}
    \mathbf{v}\\
    \frac{\sqrt{\alpha^2+\|\mathbf{v}\|^{2}}}{\alpha} \mathbf{u}
    \end{array}\right) \text {, }
\end{equation}
where $\mathbf{x}=\left(\begin{array}{c}\mathbf{s} \\ \mathbf{t}\end{array}\right) \in \mathbb{U}_{\alpha}^{p,q}$ with $\mathbf{s} \in \mathbb{R}^{p}$ and $\mathbf{t} \in \mathbb{R}_{*}^{q}$. $\mathbf{z}=\left(\begin{array}{c}\mathbf{v} \\ \mathbf{u} \end{array}\right) \in \mathbb{R}^{p} \times \mathbb{S}_{\alpha}^{q} $ with $\mathbf{v} \in \mathbb{R}^{p}$ and $\mathbf{u} \in \mathbb{S}_{\alpha}^{q}$. 

With these mappings, any vector $\mathbf{x} \in \mathbb{R}^{p} \times \mathbb{R}_{*}^{q}$ can be mapped to $\mathbb{U}_{\alpha}^{p, q}$ by a double projection $\varphi=\psi^{-1} \circ \psi$. Note that since the diffeomorphism is differential and bijective, the canonical chain rule can be exploited to perform standard gradient descent optimization. 

\subsection{Theoretical Analysis}
In this section, we provide some theoretical analyses of UltraE and the connections with related approaches.

\subsubsection{Complexity Analysis}
To make the model scalable to the size of the current KGs and keep up with their growth, a KG embedding model should have linear time and parameter complexity \cite{DBLP:journals/corr/abs-1304-7158,DBLP:conf/nips/Kazemi018}. 
In our case, the number of relation parameters of circular rotation, circular reflection, and hyperbolic rotation grows linearly with the dimensionality given by $p+q$. The total number of parameters is then $\mathcal{O}( (N_e + N_r)\times d)$, where $N_e$ and $N_r$ are the numbers of entities and relations and $d=p+q$ is the embedding dimensionality. Similar to TransE \cite{DBLP:conf/nips/BordesUGWY13} and RotH \cite{chami2020low}, UltraE has time complexity $\mathcal{O}(d)$. 
The additional cost is proportional to the number of relations, which is usually much smaller than the number of entities.

\subsubsection{Inference Patterns and Subsumption}

Following proposition shows that UltraE can infer various relational patterns
\begin{proposition}
UltraE can infer symmetry, anti-symmetry, inversion, and composition relations.
\end{proposition}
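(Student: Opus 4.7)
The plan is to establish each of the four patterns by exhibiting a parameter configuration of the decomposed transformation $f_r = \mathbf{U}_{\Theta_r}\mathbf{H}_{\mu_r}\mathbf{V}_{\Phi_r}$ that realizes the corresponding algebraic condition on the map $\mathbf{e}_h \mapsto f_r(\mathbf{e}_h)$. Recall that the score function is $s(h,r,t) = -d_{\mathbb{U}}^2(f_r(\mathbf{e}_h),\mathbf{e}_t) + b_h + b_t + \delta$, and $d_{\mathbb{U}}$ vanishes only on the diagonal, so a triple $(h,r,t)$ is maximally scored iff $f_r(\mathbf{e}_h) = \mathbf{e}_t$. Hence, each inference pattern translates into a functional equation on $f_r$, and it suffices to show the relation parameterization is rich enough to satisfy it.

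First I would address \emph{symmetry}, i.e., $r(h,t) \Rightarrow r(t,h)$, which reduces to the involution condition $f_r \circ f_r = \mathrm{Id}$. Setting $\Theta_r = 0$ and $\mu_r = 0$ collapses $f_r$ to the pure circular reflection $\mathbf{V}_{\Phi_r}$, and from the Givens block form in Eq.~(\ref{eq:rot_ref}) one directly verifies $\mathbf{G}^{-}(\phi)^2 = \mathbf{I}_2$, so $\mathbf{V}_{\Phi_r}^2 = \mathbf{I}$. For \emph{anti-symmetry} ($r(h,t) \Rightarrow \neg r(t,h)$), one takes $\Phi_r = 0$, $\mu_r = 0$, and $\Theta_r$ with any angle $\theta_{r,i} \notin \{0,\pi\}$; then $f_r$ is a non-involutive circular rotation, and for generic $\mathbf{e}_h$ one has $f_r(f_r(\mathbf{e}_h)) \neq \mathbf{e}_h$, so symmetry fails while $f_r$ itself can still map $\mathbf{e}_h$ to $\mathbf{e}_t$.

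Next, for \emph{inversion} ($r_2 = r_1^{-1}$, so $r_1(h,t) \Leftrightarrow r_2(t,h)$), I would exploit that the pseudo-orthogonal group $O(p,q)$ is closed under inversion and that each factor in the decomposition admits a concrete inverse: $\mathbf{U}_{\Theta}^{-1} = \mathbf{U}_{-\Theta}$, $\mathbf{V}_{\Phi}^{-1} = \mathbf{V}_{\Phi}$, and $\mathbf{H}_{\mu}^{-1} = \mathbf{H}_{-\mu}$. Thus setting the parameters of $r_2$ so that $f_{r_2} = f_{r_1}^{-1}$ gives the inversion pattern; concretely one can take $\mathbf{U}_{\Theta_{r_2}} = \mathbf{V}_{\Phi_{r_1}}$, $\mathbf{V}_{\Phi_{r_2}} = \mathbf{U}_{-\Theta_{r_1}}$, and $\mathbf{H}_{\mu_{r_2}} = \mathbf{H}_{-\mu_{r_1}}$, matching the form prescribed by Eqs.~(\ref{eq:uv}) and (\ref{eq:translation}).

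The main obstacle is \emph{composition}: $r_3(h,t) \Leftarrow r_1(h,z) \wedge r_2(z,t)$, which needs $f_{r_3} = f_{r_2} \circ f_{r_1}$. The issue is that the product of two matrices of the shape $\mathbf{U}\mathbf{H}\mathbf{V}$ is \emph{not} generally in that shape -- the constrained Givens parameterization (Eq.~\ref{eq:uv}) sacrifices a degree of freedom, so one must argue compositionality survives this restriction. The strategy I would use is to first note that $O(p,q)$ is a group, hence $f_{r_1}, f_{r_2} \in O(p,q)$ implies $f_{r_2}\circ f_{r_1} \in O(p,q)$, and then invoke Proposition~\ref{prop:cs_decomposition} to reapply the hyperbolic CS decomposition to the product. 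The remaining subtlety is that the proposition yields \emph{orthogonal} block factors, whose determinants may be $\pm 1$; one reconciles this with our fixed rotation/reflection choice either by considering composition up to the sign ambiguity inherent in the Givens parameterization, or by restricting to the sub-case where $\mu_{r_1}$ and $\mu_{r_2}$ commute (which holds since $\mathbf{H}$-matrices are diagonal-block in the mixed space/time basis), so that $\mathbf{H}_{\mu_{r_2}}\mathbf{H}_{\mu_{r_1}} = \mathbf{H}_{\mu_{r_1}+\mu_{r_2}}$ and the outer rotation/reflection factors compose within their respective groups. This shows $f_{r_3}$ is realizable in the same parametric family, closing the proof.
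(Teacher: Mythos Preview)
Your symmetry argument is correct and matches the paper: collapse to the reflection factor $\mathbf{V}_{\Phi_r}$ and use $\mathbf{G}^{-}(\phi)^2=\mathbf{I}_2$. Anti-symmetry via a non-involutive Givens rotation is also fine in spirit (the paper argues through the reflection component instead, but the idea is the same---kill two of the three factors and check an angle condition on the remaining one).

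The gap is in inversion and composition. For inversion, your explicit assignment $\mathbf{U}_{\Theta_{r_2}}=\mathbf{V}_{\Phi_{r_1}}$ and $\mathbf{V}_{\Phi_{r_2}}=\mathbf{U}_{-\Theta_{r_1}}$ is not admissible under Eq.~(\ref{eq:uv}): the left factor is always built from rotation blocks $\mathbf{G}^{+}$ and the right from reflection blocks $\mathbf{G}^{-}$, and since $\det\mathbf{G}^{+}=1\neq-1=\det\mathbf{G}^{-}$ these two families are disjoint. Hence $(\mathbf{U}_{\Theta_{r_1}}\mathbf{H}_{\mu_{r_1}}\mathbf{V}_{\Phi_{r_1}})^{-1}=\mathbf{V}_{\Phi_{r_1}}\mathbf{H}_{-\mu_{r_1}}\mathbf{U}_{-\Theta_{r_1}}$ does not land back in the required shape. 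For composition you correctly identify that the constrained family is not a subgroup, but neither remedy closes the gap: re-applying the hyperbolic CS decomposition to $f_{r_2}\circ f_{r_1}$ yields orthogonal block factors that need not be block-diagonal Givens matrices (the Givens form carries only $\tfrac{p+q}{2}$ angle parameters per factor, far below $\dim O(p)+\dim O(q)$), and your ``commuting $\mathbf{H}$'' route still leaves the mixed product $\mathbf{V}_{\Phi_{r_1}}\mathbf{U}_{\Theta_{r_2}}$ wedged between $\mathbf{H}_{\mu_{r_1}}$ and $\mathbf{H}_{\mu_{r_2}}$, with which it does not commute.

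The paper bypasses all of this by observing that ``can infer'' only requires \emph{some} parameter setting, not closure of the full family. For inversion and composition it simply sets $\mathbf{H}_{\mu_r}=\mathbf{I}$ and $\mathbf{V}_{\Phi_r}=\mathbf{I}$ for every relation involved, so each $f_r$ collapses to a pure block-diagonal Givens rotation $\mathbf{U}_{\Theta_r}$. Inversion then reduces to $\Theta_{r_1}+\Theta_{r_2}=0$, and composition to $\Theta_{r_1}=\Theta_{r_2}+\Theta_{r_3}\pmod{2\pi}$; both are just angle addition and trivially stay inside the Givens family. That collapse-to-a-single-factor trick is the missing simplification in your argument.
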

Detailed proofs and parameter settings of these inference patterns are given in the Appendix.

UltraE has close connections with some existing hyperbolic KG embedding methods, including HyboNet \cite{DBLP:journals/corr/abs-2105-14686}, RotH/RefH \cite{chami2020low}, and MuRP \cite{DBLP:conf/nips/BalazevicAH19}. Essentially, we have the following two propositions (full derivations are in the Appendix). 
\begin{proposition}
UltraE, if parameterized by a full $J$-orthogonal matrix, generalizes HyboNet to support arbitrary signature. 
\end{proposition}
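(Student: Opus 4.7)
The plan is to unfold the definition of HyboNet's Lorentz transformation and exhibit it as the restriction of UltraE's $J$-orthogonal relation matrix to signature $(p,1)$, and then argue that allowing the signature to vary freely to any $(p,q)$ yields a strict generalization. First I would recall that HyboNet embeds entities on the Lorentz model $\mathbb{H}^{p}=\{\mathbf{x}\in\mathbb{R}^{p+1}:\langle\mathbf{x},\mathbf{x}\rangle_{\mathcal{L}}=-1\}$, which in the notation of this chapter is precisely the pseudo-hyperboloid $\mathcal{Q}_{-1}^{p,0}$ with exactly one time dimension, i.e.\ $q=1$. HyboNet's relational map is a learnable Lorentz transformation $\mathbf{L}\in O(p,1)$ applied to the head embedding, which preserves the Lorentz bilinear form and hence keeps points on the manifold.

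Next I would match this to UltraE. In UltraE, relations are modelled as pseudo-orthogonal maps $\mathbf{Q}\in O(p,q)$ satisfying $\mathbf{Q}^{T}\mathbf{J}\mathbf{Q}=\mathbf{J}$ with signature matrix $\mathbf{J}=\operatorname{diag}(\mathbf{I}_{p},-\mathbf{I}_{q})$. Specialising to $q=1$ gives the defining condition of the Lorentz group $O(p,1)$, so a full (unconstrained) $J$-orthogonal parameterisation of UltraE with $q=1$ coincides, up to a choice of chart on the underlying manifold, with the Lorentz linear map used by HyboNet. In particular, both $f_{r}(\mathbf{e}_{h})=\mathbf{Q}_{r}\mathbf{e}_{h}$ preserve the bilinear form and act as isometries of the same manifold $\mathbb{H}^{p}=\mathcal{Q}_{-1}^{p,0}$; this gives the ``contains HyboNet'' direction.

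For the ``generalises'' direction I would observe that UltraE is parameterised on $\mathbb{U}_{\alpha}^{p,q}=\mathcal{Q}_{\alpha}^{p,q}$ with the signature $(p,q)$ left free, while HyboNet fixes $q=1$. Setting $q=1$ recovers the hyperbolic case exactly, while $q>1$ introduces additional time-like dimensions and allows the hyperbolic Cosine--Sine decomposition of Proposition~\ref{prop:cs_decomposition} to express isometries (hyperbolic rotations mixing multiple time and space directions, together with circular rotations inside the $\mathbb{R}^{p}$ and $\mathbb{R}^{q}$ blocks) that have no counterpart in the Lorentz group $O(p,1)$. Hence HyboNet is obtained as the boundary case $q=1$ of the family of UltraE models, and all other signatures yield strictly richer isometry groups, which establishes the claim.

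The main obstacle I anticipate is not the algebra but the bookkeeping between the Lorentz model's ambient coordinates and UltraE's diffeomorphic parameterisation $\varphi=\psi^{-1}\circ\psi$ from Theorem~\ref{lm:sbr}: one must verify that specialising $q=1$ in $\psi$ reproduces the standard projection used by HyboNet so that the two models agree not only on the group of isometries but also on the scoring function up to an admissible reparameterisation of the entity embeddings and relation biases. Once this identification is made explicit, the inclusion $O(p,1)\hookrightarrow O(p,q)$ for $q\geq 1$ finishes the proof.
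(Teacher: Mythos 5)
Your proposal is correct and follows essentially the same route as the paper: the paper's argument is simply to define a Lorentz transformation as a pseudo-orthogonal transformation of signature $(p,1)$, note that HyboNet models relations as such transformations, and conclude that UltraE with a full $J$-orthogonal matrix recovers HyboNet at $q=1$ while permitting arbitrary $(p,q)$. Your additional remarks on the coordinate bookkeeping and the richer isometry groups for $q>1$ are elaborations of the same identification rather than a different argument.
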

That is, HyboNet is the case of UltraE where $q=1$. 

\begin{proposition}
HyboNet subsumes MuRP and RotH/RefH. 
\end{proposition}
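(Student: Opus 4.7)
The plan is to exploit the fact that Lorentz transformations form the full isometry group $O(1,n)$ of $n$-dimensional hyperbolic space, and that the Lorentz (hyperboloid) model is isometric to the Poincar\'e ball via the standard diffeomorphism $\pi\colon \mathbb{H}^n \to \mathbb{D}^n$. Since HyboNet parameterises relations as arbitrary Lorentz transformations, subsumption reduces to showing that the restricted relational maps used in MuRP and RotH/RefH correspond, after transport through $\pi$, to particular elements of this isometry group (possibly composed with the entity-specific biases already present in HyboNet's score).

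First, I would apply the hyperbolic CS decomposition (Proposition~\ref{prop:cs_decomposition}) specialised to signature $(1,n)$: any $\mathbf{Q}\in O(1,n)$ factors as $\mathbf{Q} = \mathbf{U}\,\mathbf{H}\,\mathbf{V}^\top$, where $\mathbf{U},\mathbf{V}$ are block-diagonal with an orthogonal transformation acting on the space dimensions and the identity on the time dimension, and $\mathbf{H}$ is a pure hyperbolic boost mixing one spatial axis with the time axis. For RotH/RefH, the relation is a Givens rotation (or reflection) $\mathbf{R}_r$ on $\mathbb{D}^n$ followed by a M\"obius translation by a vector $\mathbf{r}\in \mathbb{D}^n$. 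Transporting to $\mathbb{H}^n$ via $\pi^{-1}$, a Givens rotation/reflection lifts to the block-diagonal Lorentz matrix $\operatorname{diag}(1,\mathbf{R}_r)$ (which is exactly the $\mathbf{V}$-component with $\mathbf{U}=\mathbf{I}$), and the M\"obius translation lifts to a hyperbolic boost $\mathbf{H}_{\mathbf{r}}$ along the direction of $\mathbf{r}$ (the classical identification of M\"obius translation with a Lorentz boost). Composing $\mathbf{H}_{\mathbf{r}}\cdot\operatorname{diag}(1,\mathbf{R}_r)$ therefore lies in $O(1,n)$ and reproduces the RotH/RefH transformation exactly, so HyboNet with this Lorentz matrix realises the same scoring function.

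For MuRP, whose transformation first applies a diagonal M\"obius matrix-vector multiplication $\mathbf{R}_r \otimes_c \mathbf{e}_h$ and then a M\"obius translation, the argument is less immediate because a generic diagonal M\"obius multiplication is not a hyperbolic isometry. I would instead compare scoring functions directly: the M\"obius diagonal multiplication can be rewritten via $\exp_{\mathbf{o}}^c$ and $\log_{\mathbf{o}}^c$ around the origin as a tangent-space linear map, which HyboNet can emulate by a Lorentz boost in the corresponding geodesic direction combined with a reparameterisation of the entity embedding $\mathbf{e}_h$ and the entity-specific bias $b_h$ (since the bias terms give HyboNet's score an additional degree of freedom matching MuRP's sphere-of-influence term). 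One then verifies that, for an appropriate choice of boost parameters and biases, HyboNet's score $-d_{\mathbb{H}}^2\bigl(\mathbf{Q}_r\mathbf{e}_h,\mathbf{e}_t\bigr)+b_h+b_t$ coincides with MuRP's score $-d_{\mathbb{D}}^2\bigl(\mathbf{R}_r\otimes_c\mathbf{e}_h\oplus_c\mathbf{r},\mathbf{e}_t\bigr)+b_h+b_t$, using the isometric invariance of $d_{\mathbb{H}}=d_{\mathbb{D}}\circ\pi$.

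The main obstacle is precisely this MuRP case: unlike RotH/RefH, the M\"obius diagonal stretch is not a hyperbolic isometry, so the subsumption must be argued at the level of \emph{realisable scoring functions} rather than a one-to-one matching of transformation classes. Making the reparameterisation of MuRP's diagonal map into a Lorentz boost plus bias adjustment rigorous — and verifying that the induced family of scores is indeed contained in HyboNet's — is the delicate step, whereas the RotH/RefH direction follows almost directly from the CS decomposition together with the standard correspondence between M\"obius translations and Lorentz boosts.
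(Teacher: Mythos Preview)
Your RotH/RefH argument is essentially the paper's. The paper uses the polar decomposition $\mathbf{T}=\mathbf{R}_{\mathbf{U}}\mathbf{R}_{\mathbf{b}}$ of a Lorentz transformation into a spatial orthogonal block and a Lorentz boost (rather than the hyperbolic CS decomposition, though in signature $(p,1)$ these yield the same two-factor structure), cites Tabaghi--Dokmani\'c for the equivalence of Lorentz boost with M\"obius addition, and then observes that Givens rotations/reflections are a restricted parameterisation of the orthogonal block. That is exactly your line, modulo the cosmetic choice of decomposition.

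For MuRP you diverge, and unnecessarily so given the paper's framing. The paper does not attempt any scoring-function reparameterisation; it simply invokes a second cited equivalence---from the HyboNet paper itself---that Lorentz rotation corresponds to M\"obius matrix-vector multiplication. With both cited equivalences in hand, MuRP (diagonal M\"obius multiplication followed by M\"obius addition) is directly a restricted rotation--boost pair, and subsumption follows immediately and in parallel with the RotH/RefH case. Your worry that a diagonal M\"obius multiplication with entries $\neq\pm 1$ is not a hyperbolic isometry is mathematically legitimate, but the paper delegates that subtlety entirely to the cited HyboNet result rather than addressing it in-line. Your proposed workaround via entity and bias reparameterisation is also problematic as stated: a Lorentz boost is still an isometry and cannot absorb a relation-specific non-isometric stretch, biases are entity-specific rather than relation-specific, and re-embedding entities per relation is not a valid subsumption argument in a multi-relational model where each entity must carry a single embedding across all relations.
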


\subsection{Empirical Evaluation}
In this section, we evaluate the performance of UltraE on link prediction in three KGs that contain both hierarchical and non-hierarchical relations. 
We systematically study the major components of our framework and show that (1) UltraE outperforms Euclidean and non-Euclidean baselines on embedding KGs with heterogeneous topologies, especially in low-dimensional cases (Sec.~\ref{sec:low_dim}); (2) the signature of embedding space works as a knob for controlling the geometry, and hence influences the performance of UltraE (Sec.~\ref{sec:signature}); (3) UltraE is able to improve the embeddings of relations with heterogeneous topologies (Sec.~\ref{sec:topo}); and (4) the combination of rotation and reflection outperforms a single operator (Sec.~\ref{sec:operators}).

\subsubsection{Experiment Setup}

\textbf{Dataset.} We use three standard benchmarks: WN18RR \cite{DBLP:conf/nips/BordesUGWY13}, a subset of WordNet containing $11$ lexical relationships, FB15k-237 \cite{DBLP:conf/nips/BordesUGWY13}, a subset of Freebase containing general world knowledge, and YAGO3-10 \cite{DBLP:conf/cidr/MahdisoltaniBS15}, a subset of YAGO3 containing information of relationships between people. 
All three datasets contain hierarchical (e.g., \textit{partOf}) and non-hierarchical (e.g., \textit{similarTo}) relations, and some of which contain relational patterns like symmetry (e.g., \textit{isMarriedTo}). 
For each KG, we follow the standard data augmentation protocol \cite{DBLP:conf/icml/LacroixUO18} and the same train/valid/test splitting as used in \cite{DBLP:conf/icml/LacroixUO18} for fair comparision. 
Following the previous work \cite{chami2020low}, we use the global graph curvature \cite{DBLP:conf/iclr/GuSGR19} to measure the geometric properties of the datasets. The statistics of datasets are summarized in Table \ref{tab:ultrae-dataset}. As we can see, all datasets are globally hierarchical (i.e., the curvature is negative) but none of which is a pure tree structure. 
Comparatively, WN18RR is more hierarchical than FB15k-237 and YAGO3-10 since it has a smaller global graph curvature.

\begin{table}[]
    \centering
     \caption{The statistics of KGs, where $\xi_{G}$ measures the tree-likeness (the lower the $\xi_{G}$ is, the more tree-like the KG is).}
    \begin{tabular}{lcccc}
        \hline Dataset & \#entities & \#relations & \#triples & $\xi_{G}$ \\
        \hline WN18RR & $41 \mathbf{k}$ & 11 & $93 \mathbf{k}$ & $-2.54$ \\
        \hline FB15k-237 & $15 \mathbf{k}$ & 237 & $310 \mathbf{k}$ & $-0.65$ \\
        \hline YAGO3-10 & $123 \mathbf{k}$ & 37 & $1 \mathbf{M}$ & $-0.54$ \\
        \hline
        \end{tabular}
    \label{tab:ultrae-dataset}
\end{table}

\noindent
\textbf{Evaluation protocol.} 
Two popular ranking-based metrics are reported: 1) mean reciprocal rank (MRR), the mean of the inverse of the true entity ranking in the prediction; and 2) hit rate $H@K$ ($K \in \{1,3,10\}$), the percentage of the correct entities appearing in the top $K$ ranked entities. As a standard, we report the metrics in the filtered setting \cite{DBLP:conf/nips/BordesUGWY13}, i.e., when calculating the ranking during evaluation, we filter out all true triples in the training set, since predicting a low rank for these triples should not be penalized.


\noindent
\textbf{Hyperparameters.} 
For each KG, we explore batch size $\in \{500, $ $1000\}$, global margin $\in \{2,4,6,8\}$ and learning rate $\in \{3e-3,5e-3,7e-3\}$ in the validation set. The negative sampling size is fixed to $50$. The maximum number of epochs is set to $1000$. The radius of curvature $\alpha$ is fixed to $1$ since our model does not need relation-specific curvatures but is able to learn relation-specific mappings in the ultrahyperbolic manifold. The signature of the product manifold is set as the same as \cite{DBLP:conf/www/WangWSWNAXYC21}.

\noindent
\textbf{Baselines.} Our baselines are divided into two groups:
\begin{itemize}
    \item \textbf{Euclidean models}. 1) TransE \cite{DBLP:conf/nips/BordesUGWY13}, the first translational model; 2) RotatE \cite{DBLP:conf/iclr/SunDNT19}, a rotation model in a complex space; 3) DistMult \cite{yang2014embedding}, a multiplicative model with a diagonal relational matrix; 4) ComplEx \cite{DBLP:conf/icml/TrouillonWRGB16}, an extension of DisMult in a complex space; 5) QuatE \cite{DBLP:conf/aaai/CaoX0CH21}, a generalization of complex KG embedding in a hypercomplex space ; 6) 5$\star$E that models a relation as five transformation functions; 7) MuRE \cite{DBLP:conf/nips/BalazevicAH19}, a Euclidean model with a diagonal relational matrix.
    \item \textbf{Non-Euclidean models.} 1) MuRP \cite{DBLP:conf/nips/BalazevicAH19}, a hyperbolic model with a diagonal relational matrix; 2) MuRS, a spherical analogy of MuRP; 3) RotH/RefH \cite{chami2020low}, a hyperbolic embedding with rotation or reflection; 4) AttH \cite{chami2020low}, a combination of RotH and RefH by attention mechanism; 5) MuRMP \cite{DBLP:conf/www/WangWSWNAXYC21}, a generalization of MuRP in the product manifold.
\end{itemize}
We compare UltraE with varied signatures (time dimensions). 
Since for all KGs, the hierarchies are much more dominant than cyclicity and we assume that both space and time dimension are even numbers, we set the time dimension to be a relatively small value (i.e., $q=2,4,6,8$ and $q=20,40,80,160$ for low-dimension settings and high-dimension settings, respectively) for comparison. A full possible setting of time dimension with $q \leq p$ is studied in Sec.~\ref{sec:signature}. 

\subsubsection{Overall Results}

\begin{table}[]
    \centering
    \caption{Link prediction results (\%) on WN18RR, FB15k-237 and YAGO3-10 for low-dimensional embeddings ($d=32$) in the filtered setting. The first group of models are Euclidean models, the second groups are non-Euclidean models, and MuRMP is a mixed-curvature baseline. RotatE, MuRE, MuRP, RotH, RefH and AttH results are taken from \cite{chami2020low}. 
    RotatE results are reported without self-adversarial negative sampling. The best score and best baseline are in \textbf{bold} and underlined, respectively. }
    \resizebox{\columnwidth}{!}{
    \begin{tabular}[width=\textwidth]{lrrrrrrrrrrrrrrr}
    \hline & \multicolumn{4}{c}{WN18RR} & \multicolumn{4}{c}{FB15k-237} & \multicolumn{4}{c}{ YAGO3-10} \\
    Model & MRR & H@ 1 & H@3 & H@10 & MRR & H@1 & H@3 & H@10 & MRR & H@1 & H@3 & H@10 \\
    \hline 
    TransE  & 36.6 & 27.4 & 43.3 & 51.5 & 29.5 & 21.0 & 32.2 & 46.6 & - & - & - & - \\
    RotatE  & 38.7 & 33.0 & 41.7 & 49.1 & 29.0 & 20.8 & 31.6 & 45.8 & - & - & - & - \\
    ComplEx & 42.1 & 39.1 & 43.4 & 47.6 & 28.7 & 20.3 & 31.6 & 45.6 & 33.6 & 25.9 & 36.7 & 48.4 \\
    QuatE   & 42.1 & 39.6 & 43.0 & 46.7 & 29.3 & 21.2 & 32.0 & 46.0 & - & - & - & - \\
    5$\star$E     & 44.9 & 41.8 & 46.2 & 51.0 & 32.3 & 24.0 & 35.5 & 50.1 & - & - & - & - \\
    MuRE    & 45.8 & 42.1 & 47.1 & 52.5 & 31.3 & 22.6 & 34.0 & 48.9 & 28.3 & 18.7 & 31.7 & 47.8 \\
    \hline 
    MuRP    & 46.5 & 42.0 & 48.4 & 54.4 & 32.3 & 23.5 & 35.3 & 50.1 & 23.0 & 15.0 & 24.7 & 39.2  \\
    RotH & \underline{47.2} & \underline{42.8} & \underline{49.0} & \underline{55.3} & 31.4 & 22.3 & 34.6 & 49.7 & 39.3 & 30.7 & 43.5 & 55.9 \\
    RefH & 44.7 & 40.8 & 46.4 & 51.8 & 31.2 & 22.4 & 34.2 & 48.9 & 38.1 & 30.2 & 41.5 & 53.0\\
    AttH & 46.6 & 41.9 & 48.4 & 55.1 & \underline{32.4} & \underline{23.6} & \underline{35.4} & 50.1 & \underline{39.7} & \underline{31.0} & \underline{43.7} & \underline{56.6} \\
    MuRMP & 47.0 & 42.6 & 48.3 & 54.7 & 31.9 & 23.2 & 35.1 & \underline{50.2} & 39.5 & 30.8 & 42.9 & \underline{56.6} \\
    \hline 
    UltraE (q=2) & 48.1 & 43.4 & 50.0 & 55.4 & 33.1 & 24.1 & 35.5 & 50.3 & 39.5 & 31.2 & 43.9 & 56.8  \\ 
    UltraE (q=4) & \textbf{48.8} & \textbf{44.0} & \textbf{50.3} & \textbf{55.8} & 33.4 & 24.3 & 36.0 & 51.0 & 40.0 & 31.5 & 44.3 & 57.0 \\ 
    UltraE (q=6) & 48.3 & 42.5 & 49.1 & 55.5 & \textbf{33.8} & \textbf{24.7} & \textbf{36.3} & \textbf{51.4} & \textbf{40.5} & \textbf{31.8} & \textbf{44.7} & \textbf{57.2} \\ 
    UltraE (q=8) & 47.5 & 42.3 & 49.0 & 55.1 & 32.6 & 24.6 & 36.2 & 51.0 & 39.4 & 31.3 & 43.4 & 56.5 \\ 
    \hline
\end{tabular}}
\label{tab:low_dim}
\end{table}

\label{sec:low_dim}
\textbf{Low-dimensional Embeddings.} 
Following previous approaches \cite{chami2020low}, we first evaluate UltraE in the low dimensional setting ($d = 32$). Table \ref{tab:low_dim} shows the performance of UltraE and the baselines. Overall, it is clear that UltraE with varying time dimension  ($q=2,4,6,8$) improves the performance of all methods. UltraE, even with only $2$ time dimension, consistently outperforms all baselines, suggesting that the heterogeneous structure imposed by the pseudo-Riemannian geometry leads to better representations. 
In particular, the best performance of WN18RR is achieved by UltraE ($q=4$) while the best performances of FB15k-237 and YAGO3-10 are achieved by UltraE ($q=6$). We believe that this is because WN18RR is more hierarchical than FB15k-237 and YAGO3-10, validating our conjecture that the number of time dimensions controls the geometry of the embedding space. 
Besides, we observed that the mixed-curvature baseline MuRMP does not consistently improve the hyperbolic methods. We conjecture that this is because MuRMP cannot properly model relational patterns.

\begin{table}[]
    \centering
    \caption{Link prediction results (\%) on WN18RR, FB15k-237 and YAGO3-10 for high-dimensional embeddings (best for $d \in \{200,400,500\}$) in the filtered setting. RotatE, MuRE, MuRP, RotH, RefH and AttH results are taken from \cite{chami2020low}. RotatE results are reported without self-adversarial negative sampling. The best score and best baseline are in \textbf{bold} and underlined, respectively.}
    \resizebox{\columnwidth}{!}{
    \begin{tabular}[width=\columnwidth]{lrrrrrrrrrrrrrr}
   \hline & \multicolumn{4}{c}{WN18RR} & \multicolumn{4}{c}{FB15k-237} & \multicolumn{4}{c}{ YAGO3-10} \\
    Model & MRR & H@ 1 & H@3 & H@10 & MRR & H@1 & H@3 & H@10 & MRR & H@1 & H@3 & H@10 \\
    \hline 
    TransE & 48.1 & 43.3 & 48.9 & 57.0 & 34.2 & 24.0 & 37.8 & 52.7 & - & - & - & - \\
    DistMult & 43.0 & 39.0 & 44.0 & 49.0 & 24.1 & 15.5 & 26.3 & 41.9 & 34.0 & 24.0 & 38.0 & 54.0 \\
    RotatE & 47.6 & 42.8 & 49.2 & 57.1 & 33.8 & 24.1 & 37.5 & 53.3 & 49.5 & 40.2 & 55.0 & 67.0 \\
   ComplEx & 48.0 & 43.5 & 49.5 & 57.2 & 35.7 & 26.4 & 39.2 & 54.7 & 56.9 & 49.8 & 60.9 & 70.1\\
    QuatE  & 48.8 & 43.8 & 50.8 & 58.2 & 34.8 & 24.8 & 38.2 & 55.0 & - & - & - & - \\
    5$\star$E  & \underline{50.0} & \underline{\textbf{45.0}} & 51.0 & \underline{59.0} & \underline{\textbf{37.0}} & \underline{\textbf{28.0}} & \underline{\textbf{40.0}} & 56.0 & - & - & - & -  \\
    MuRE & 47.5 & 43.6 & 48.7 & 55.4 & 33.6 & 24.5 & 37.0 & 52.1 & 53.2 & 44.4 & 58.4 & 69.4 \\
    \hline 
    MuRP & 48.1 & 44.0 & 49.5 & 56.6 & 33.5 & 24.3 & 36.7 & 51.8 & 35.4 & 24.9 & 40.0 & 56.7 \\
    RotH & 49.6 & 44.9 & \underline{51.4} & 58.6 & 34.4 & 24.6 & 38.0 & 53.5 & 57.0 & 49.5 & 61.2 & 70.6 \\
    RefH & 46.1 & 40.4 & 48.5 & 56.8 & 34.6 & 25.2 & 38.3 & 53.6 & \underline{57.6} & \underline{50.2} & \underline{61.9} & \underline{\textbf{71.1}} \\
    AttH & 48.6 & 44.3 & 49.9 & 57.3 & 34.8 & 25.2 & 38.4 & 54.0 & 56.8 & 49.3 & 61.2 & 70.2 \\
    MuRMP & 48.1 & 44.1 & 49.6 & 56.9 & 35.8 & 27.3 & 39.4 & \underline{56.1} & 49.5 & 44.8 & 59.1 & 69.8 \\
    \hline 
    UltraE (q=20) & 48.5 & 44.2 & 50.0 & 57.3 & 34.9 & 25.1 & 38.5 & 54.1 & 56.9 & 49.5 & 61.0 & 70.3 \\ 
    UltraE (q=40) & \textbf{50.1} & \textbf{45.0} & \textbf{51.5} & \textbf{59.2} & 35.1 & 27.5 & \textbf{40.0} & 56.0 & 57.5 & 49.8 & 62.0 & 70.8 \\ 
   UltraE (q=80) & 49.7 & 44.8 & 51.2 & 58.5 & 36.8 & 27.6 & \textbf{40.0} & \textbf{56.3} & \textbf{58.0} & \textbf{50.6} & \textbf{62.3} & \textbf{71.1} \\ 
   UltraE (q=160) & 48.6 & 44.5 & 50.3 & 57.4 & 35.4 & 26.0 & 39.0 & 55.5 & 57.0 & 49.5 & 61.8 & 70.5 \\ 
    \hline
\end{tabular}
}
\label{tab:high_dim}
\end{table}

\textbf{High-dimensional Embeddings} 
Table \ref{tab:high_dim} shows the results of link prediction in high dimensions (best for $d \in \{200,400,500\}$). Overall, UltraE achieves either better or competitive results  against a variety of other models. 
In particular, we observed that there is no significant performance gain among hyperbolic methods and mixed-curvature methods against Euclidean-based methods. We conjecture that this is because when the dimension is sufficiently large, both Euclidean and hyperbolic geometries have sufficient ability to represent complex hierarchies in KGs. 
However, UltraE roughly outperforms all compared approaches, with the only exception of 5$\star$E achieving competitive results. Again, the performance gain is not as significant as in the low-dimension cases, which further validates the hypothesis that KG embeddings are not sensitive to the choice of embedding space with high dimensions. The additional performance gain might be obtained from the flexibility of inference of the relational patterns.

\subsubsection{Parameter Sensitivity}

\textbf{The Effect of Dimensionality.} 
To investigate the effect of dimensionality, we conduct experiments on WN18RR and compare UltraE ($q=4$) against various state-of-the-art counterparts with varying dimensionality. For a fair comparison with RotH that only considers rotation, we only use rotation for the implementation of UltraE, denoted by UltraE (Rot).
Fig. \ref{fig:total_dim} shows the results obtained by averaging over 10 runs. 
It clearly shows that the mixed-curvature method MuRMP outperforms its counterparts (MuRE, MuRP) with a single geometry, showcasing the limitation of a single homogeneous geometry on capturing the intrinsic heterogeneous structures. However, RotH performs slightly better than MuRMP, especially in high dimensionality, we conjecture that this is due to the capability of RotH on inferring relational patterns.  
UltraE achieves further improvements across a broad range of dimensions, suggesting the benefits of ultrahyperbolic manifold for modeling relation-specific geometries as well as inferring relational patterns.

\begin{figure}[t!]
    \centering
    \includegraphics[width=0.45\textwidth]{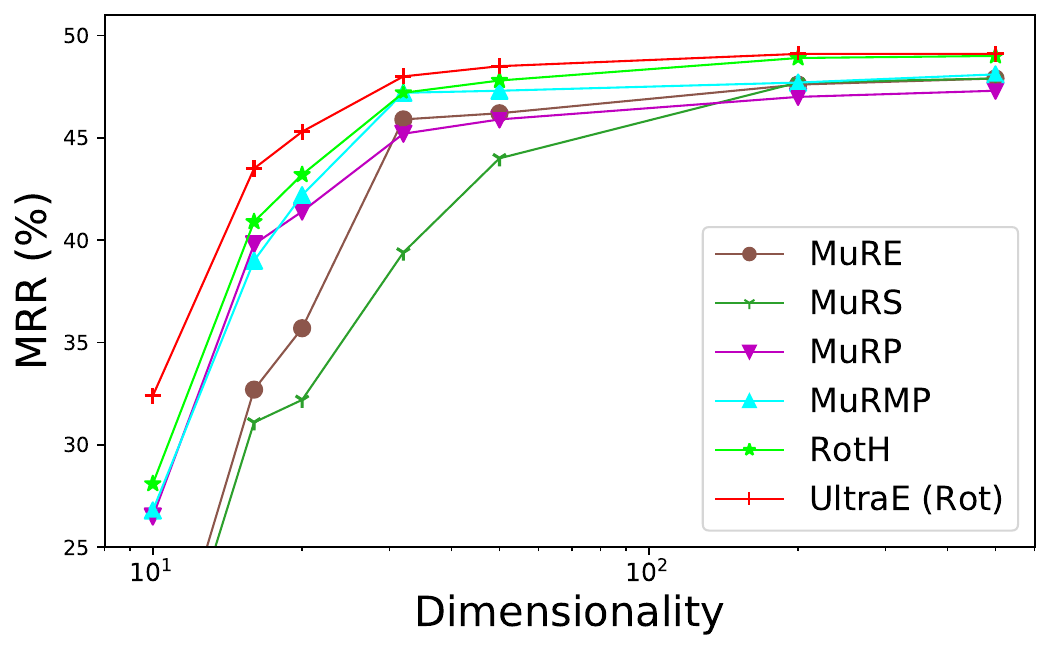}
    \caption[The MRR of various methods on WN18RR]{The MRR of various methods on WN18RR, with $d \in \{10,16,20,32,50,200,500\}$. UltraE is implemented with only rotation and $q=4$. 
    The results of MuRE, MuRS and MuRMP are taken from \cite{DBLP:conf/www/WangWSWNAXYC21} with $d \in \{10,15,20,40,100,200,500\}$. All results are averaged over 10 runs.}
    \label{fig:total_dim}
\end{figure}

\textbf{The effect of signature.}\label{sec:signature} We study the influence of the signature on WN18RR by setting a varying number of time dimensions under the condition of $d=p+q=32, p \geq q$. Fig. \ref{fig:time_dim} shows that in all three benchmarks, by increasing $q$, the performance grows first and starts to decline after reaching a peak, which is consistent with our hypothesis that the signature acts as a knob for controlling the geometric properties. One might also note that compared with hyperbolic baselines (the dashed horizontal lines), the performance gain for WN18RR is relatively smaller than those of FB15k-237 and YAGO3-10. We conjecture that this is because WN18RR is more hierarchical than FB15k-237 and YAGO3-10, and the hyperbolic embedding performs already well. This assumption is further validated by the fact that the best time dimension of WN18RR ($q=4$) is smaller than that of FB15k-237 and YAGO3-10 ($q=6$).

\begin{table}[t!]
    \centering
      \caption{Comparison of H@10 for WN18RR relations. Higher $\mathbf{Khs}_{G}$ and lower $\xi_{G}$ mean more hierarchical structure. UltraE is implemented by rotation and with best signature $(4,28)$. }
    \resizebox{\columnwidth}{!}{
    \begin{tabular}{lcccccc}
        \hline Relation & $\mathbf{Khs}_{G}$ & $\xi_{G}$ & RotE & RotH & UltraE (Rot)  \\
        \hline 
        \text {member meronym }              & 1.00 & -2.90 & 32.0 & 39.9 & 41.3   \\
        \text {hypernym }                    & 1.00 & -2.46 & 23.7 & 27.6 & 28.6   \\
        \text {has part }                    & 1.00 & -1.43 & 29.1 & 34.6 & 36.0  \\
        \text {instance hypernym }           & 1.00 & -0.82 & 48.8 & 52.0 & 53.2   \\
        \textbf {member of domain region }   & 1.00 & -0.78 & 38.5 & 36.5 & 43.3   \\
        \textbf { member of domain usage }    & 1.00 & -0.74 & 45.8 & 43.8 & 50.3  \\
        \text { synset domain topic of }      & 0.99 & -0.69 & 42.5 & 44.7 & 46.3   \\
        \text { also see }                    & 0.36 & -2.09 & 63.4 & 70.5 & 73.5   \\
        \hline
        \text { derivationally related form } & 0.07 & -3.84 & 96.0 & 96.8 & 97.1   \\
        \text { similar to }                  & 0.07 & -1.00 & 100.0 & 100.0 & 100.0  \\
        \text { verb group }                  & 0.07 & -0.50 & 97.4 & 97.4 & 98.0   \\
        \hline
    \end{tabular}}
    \label{tab:relation_type}
\end{table}

\textbf{The Effect of Relation Types.}\label{sec:topo} In this part, we investigate the per-relationship performance of UltraE on WN18RR.
Similar to RotE and RotH that only consider rotation, we consider UltraE (Rot) as before. 
Two metrics that describe the geometric properties of each relation are reported, including global graph curvature and Krackhardt hierarchy score \cite{chami2020low}, for which higher $\mathbf{Khs}_{G}$ and lower $\xi_{G}$ means more hierarchical. 
As shown in Table \ref{tab:relation_type}, although RotH outperforms RotE on most of the relation types, the performance is not on par with RotE on relations "member of domain region" and "member of domain usage". UltraE (Rot), however, consistently outperforms both RotE and RotH on all relations, with significant performance gains on relations "member of domain region " and "member of domain usage " that RotH fails on. 
The overall observation also verifies the flexibility and effectiveness of the proposed method in dealing with heterogeneous topologies of KGs.

\begin{figure}[t!]
    \centering
    \includegraphics[width=0.45\textwidth]{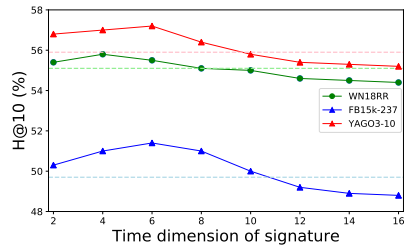}
    \caption{The performance (H@10) of UltraE with varied signature (time dimensions) under the condition of $d=p+q=32, q \leq p$ on WN18RR. The dashed horizontal lines denote the results of RotH. As $q$ increases, the performance first increases and starts to decrease after reaching a peak.}
    \label{fig:time_dim}
\end{figure}

\textbf{The Effect of Rotation and Reflection.}\label{sec:operators}
To investigate the role of rotation and reflection, we compare UltraE against its two variants: UltraE with only rotation (UltraE (Rot)) and UltraE with only reflection (UltraE (Ref)). 
Table \ref{tab:relational_pattern} shows the per-relationship results on YAGO3-10. We observe that UltraE with rotation performs better on anti-symmetric relations while UltraE with reflection performs better on symmetric relations, suggesting that reflection is more suitable for representing symmetric patterns. On almost all relations including relations that are neither symmetric nor anti-symmetric, except for "wroteMusicFor", UltraE outperforms both rotation or reflection variants, showcasing that combining multiple operators can learn more expressive representations.

\begin{table}[t!]
    \centering
     \caption{Comparison of H@10 on YAGO3-10 relations. UltraE (Rot) and UltraE (Ref) are implemented by only rotation and reflection, respectively. We choose the best signature $(6,26)$. }
    \resizebox{\columnwidth}{!}{
    \begin{tabular}{lccccc}
        \hline 
        Relation & Anti-symmetric & Symmetric & UltraE (Rot) & UltraE (Ref) & UltraE \\
        \hline 
        hasNeighbor & $\times$ & $\checkmark$ & 75.3 & \textbf{100.0} & \textbf{100.0} \\
        isMarriedTo & $\times$ & $\checkmark$ & 94.0 & 94.4 & \textbf{100.0} \\
        actedIn & $\checkmark$ & $\times$ & 14.7 & 12.7 & \textbf{15.3} \\
        hasMusicalRole & $\checkmark$ & $\times$ & 43.5 & 37.0 & \textbf{46.0} \\
        directed & $\checkmark$ & $\times$ & 51.5 & 45.3 & \textbf{56.8} \\
        graduatedFrom & $\checkmark$ & $\times$ & 26.8 & 16.3 & \textbf{27.5} \\
        playsFor & $\checkmark$ & $\times$ & 67.2 & 64.0 & \textbf{66.8} \\
        wroteMusicFor & $\checkmark$ & $\times$ & \textbf{28.4} & 18.8 & 27.9  \\
        hasCapital & $\checkmark$ & $\times$ & \textbf{73.2} & 68.3 & \textbf{73.2}  \\
        dealsWith & $\times$ &  $\times$ & 30.4 & 29.7 & \textbf{43.6} \\
        isLocatedIn & $\times$ & $\times$ & 41.5 & 39.8 & \textbf{42.8} \\
        \hline
    \end{tabular}}
    \label{tab:relational_pattern}
\end{table}

\subsection{Conclusion}
We proposes UltraE, an ultrahyperbolic KG embedding method in a pseudo-Riemannian manifold that interleaves hyperbolic and spherical geometries, allowing for simultaneously modeling multiple hierarchical and non-hierarchical structures in KGs. 
We derive a relational embedding by exploiting the pseudo-orthogonal transformation, which is decomposed into various geometric operators including circular rotations/reflections and hyperbolic rotations, allowing for inferring complex relational patterns in KGs. We propose a Manhattan-like distance that measures the nearness of points in the ultrahyperbolic manifold.
The embeddings are optimized by standard gradient descent thanks to the differentiable and bijective mapping. 
We discuss theoretical connections of UltraE with other hyperbolic methods. 
On three standard KG datasets, UltraE outperforms many previous Euclidean and non-Euclidean counterparts, especially in  low-dimensional settings.

\cleardoublepage
\chapter{Geometric Embedding of Logical Structures }
\label{chap_ontological}

Recently, increasing efforts are put into learning continual representations for symbolic knowledge bases (KBs). 
However, these approaches either only embed the data-level knowledge or suffer from inherent limitations when dealing with concept-level knowledge, i.e., they cannot faithfully model the logical structure present in the KBs. 
We present BoxEL, a geometric KB embedding approach that allows for better capturing the logical structure in the description logic $\mathcal{EL}^{++}$. 
BoxEL models concepts in a KB as axis-parallel \emph{boxes} that are suitable for modeling concept intersection, entities as points inside boxes, and relations between concepts/entities as \emph{affine transformations}. 
We show theoretical guarantees (\textit{soundness}) of BoxEL for preserving logical structure. Namely, the learned model of BoxEL embedding with loss $0$ is a (logical) model of the KB. Experimental results on (plausible) subsumption reasonings and a real-world application for protein-protein prediction show that BoxEL outperforms traditional knowledge graph embedding methods as well as state-of-the-art $\mathcal{EL}^{++}$ embedding approaches.

\section{Motivation and Background}

Knowledge bases (KBs) provide a \textit{conceptualization} of objects and their relationships, which are of great importance in many applications like biomedical and intelligent systems~\cite{rector1996galen,gene2015gene}. 
KBs are often expressed using description logics (DLs)~\cite{baader2003description}, a family of languages allowing for expressing domain knowledge via logical statements (a.k.a axioms). 
These logical statements are divided into two parts: 1) an ABox consisting of \emph{assertions} over instances, i.e., factual statements like $\rel{isFatherOf}(\insta{John}, \insta{Peter})$;
2) a TBox consisting of logical statements constraining concepts, e.g., $\concept{Parent} \sqsubseteq \concept{Person}$.

KBs not only provide clear semantics in the application domains but also enable (classic) reasoners~\cite{DBLP:journals/ws/SteigmillerLG14,DBLP:journals/jar/KazakovKS14} to perform logical inference, i.e., making implicit knowledge explicit. Existing reasoners are highly optimized and scalable but they are limited to only computing classical logical entailment but not designed to perform inductive (analogical) reasoning and cannot handle noisy data. Embedding based methods, which map the objects in the KBs into a low dimensional vector space while keeping the similarity, have been proposed to complement the classical reasoners and shown remarkable empirical performances on performing (non-classical) analogical reasonings. 

Most KB embeddings methods~\cite{wang2017knowledge} focus on embedding data-level knowledge in ABoxes, a.k.a.,\ knowledge graph embeddings (KGEs). However, KGEs cannot preserve concept-level knowledge expressed in TBoxes. 
Recently, embedding methods for KBs expressed in DLs have been explored. Prominent examples include $\mathcal{EL}^{++}$ \cite{DBLP:conf/ijcai/KulmanovLYH19} that supports conjunction and full existential quantification, and $\mathcal{ALC}$ \cite{ozccep2020cone} that further supports logical negation.
We focus on $\mathcal{EL}^{++}$, an underlying formalism of the OWL2 EL profile of the Web Ontology Language \cite{graua2008web}, which has been used in expressing various biomedical ontologies \cite{gene2015gene,rector1996galen}. For embedding $\mathcal{EL}^{++}$ KBs, several approaches such as Onto2Vec \cite{DBLP:journals/bioinformatics/SmailiGH18} and OPA2Vec \cite{smaili2019opa2vec} have been proposed. These approaches require annotation data and do not model logical structure explicitly. Geometric representations, in which the objects are associated with geometric objects such as balls \cite{DBLP:conf/ijcai/KulmanovLYH19} and convex cones \cite{ozccep2020cone}, provide a high expressiveness on embedding logical properties. For $\mathcal{EL}^{++}$ KBs, ELEm \cite{DBLP:conf/ijcai/KulmanovLYH19} represents concepts as open $n$-balls and relations as simple translations. Although effective, ELEm suffers from several major limitations: 

\begin{figure}[t!]
    \centering
    \includegraphics[width=0.9\linewidth]{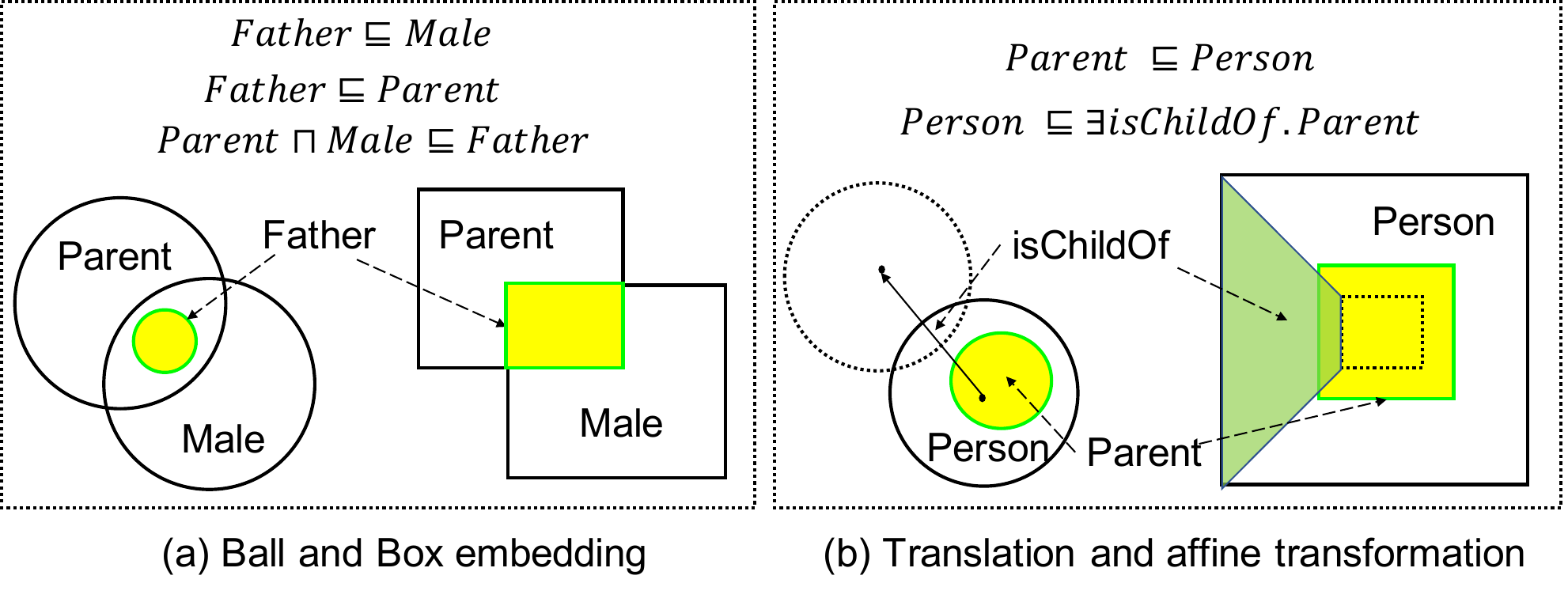}
    \caption{Two counterexamples of ball embedding and its relational transformation. (a) Ball embedding cannot express concept equivalence $\concept{Parent} \sqcap \concept{Male} \equiv \concept{Father}$ with intersection operator. (b) The \textit{translation} cannot model relation (e.g. $\rel{isChildOf}$) between $\concept{Person}$ and $\concept{Parent}$ when they should have different volumes. 
    These two issues can be solved by \textit{box} embedding and modelling relation as \textit{affine transformation} among boxes, respectively.}
    \label{fig:example_intersection}
\end{figure}

\begin{itemize}
    \item Balls are not closed under intersection and cannot faithfully represent concept intersections.
    For example, the intersection of two concepts $\concept{Parent} \sqcap \concept{Male}$, that is supposed to represent $\concept{Father}$, is not a ball (see Fig.~\ref{fig:example_intersection}(a)). 
    Therefore, the concept equivalence $\concept{Parent} \sqcap \concept{Male} \equiv \concept{Father}$ cannot be captured in the embedding.
    \item The relational embedding with simple \textit{translation} causes issues for embedding concepts with varying sizes. E.g., Fig.~\ref{fig:example_intersection}(b) shows the embeddings of the axiom  $\exists \rel{isChildOf}.\concept{Person} \sqsubseteq \concept{Parent}$ assuming the existence of another axiom $\concept{Parent} \sqsubseteq \concept{Person}$. In this case, it is impossible to \textit{translate} the larger concept $\concept{Person}$ to the smaller one $\concept{Parent}$,\footnote{Under the \textit{translation} setting, the embeddings will simply become $\concept{Parent} \equiv \concept{Person}$, which is obviously not what we want as we can express $\concept{Parent} \not\equiv \concept{Person}$ with $\mathcal{EL}^{++}$ by propositions like $\concept{Children} \sqcap \concept{Parent} \sqsubseteq \bot$, $\concept{Children} \sqsubseteq \concept{Person}$ and $\concept{Children}(a)$} as it does not allow for scaling the size.
    \item ELEm does not distinguish between entities in ABox and concepts in TBox, but rather regards ABox axioms as special cases of TBox axioms. This simplification cannot fully express the logical structure, e.g., an entity must have minimal volume. 
\end{itemize}

To overcome these limitations, we consider modeling concepts in the KB as \textit{boxes} (i.e., axis-aligned hyperrectangles), encoding entities as points inside the boxes that they should belong to, and the relations as the \textit{affine transformation} between boxes and/or points. Fig.~\ref{fig:example_intersection}(a) shows that the box embedding has closed form of intersection and the \textit{affine transformation} (Fig.~\ref{fig:example_intersection}(b)) can naturally capture the cases that are not possible in ELEm. 
In this way, we present BoxEL for embedding $\mathcal{EL}^{++}$ KBs, in which the interpretation functions of $\mathcal{EL}^{++}$ theories in the KB can be represented by the geometric transformations between boxes/points. 
We formulate BoxEL as an optimization task by designing and minimizing various loss terms defined for each logical statement in the KB. 
We show theoretical guarantee (\textit{soundness}) of BoxEL in the sense that if the loss of BoxEL embedding is $0$, then the trained model is a (logical) model of the KB.
Experiments on (plausible) subsumption reasoning over three ontologies and predicting protein-protein interactions show that BoxEL outperforms previous approaches.

\section{BoxEL for Embedding $\mathcal{EL}^{++}$ Knowledge Bases}

In this section, we first present the geometric construction process of $\mathcal{EL}^{++}$ with box embedding and affine transformation, followed by a discussion of the geometric interpretation. Afterward, we describe the BoxEL embedding by introducing proper loss function for each ABox and TBox axiom. Finally, an optimization method is described for the training of BoxEL. 

\subsection{Geometric Construction} 
We consider a KB $(\abox, \tbox)$ consisting
of an ABox $\abox$ and a TBox $\tbox$ where $\tbox$ has been normalized as explained before. 
Our goal is to associate entities (or individuals) with points and concepts with boxes in $\mathbb{R}^n$ such that the axioms in the KB are respected.

To this end, we consider two functions $m_w, M_w$ parameterized
by a parameter vector $w$ that has to be learned.
Conceptually, we consider points as boxes of volume $0$. This will be helpful later to encode the meaning of axioms for points and boxes in a uniform way.
Intuitively, $m_w: N_I \cup N_C \rightarrow \mathbb{R}^n$ maps individual and concept names to the lower left corner and  $M_w: N_I \cup N_C \rightarrow \mathbb{R}^n$ maps
them to the upper right corner of the box that represents them.
For individuals $a \in N_I$, we have $m_w(a) = M_w(a)$, so that it is sufficient 
to store only one of them.
The \emph{box associated with $C$} is defined as 
\begin{equation}
    \Box_w(C) = \{x \in \mathbb{R}^n \mid m_w(C) \leq x \leq M_w(C) \},
\end{equation}
where the inequality is defined component-wise. 

Note that boxes are closed under intersection, which allows us to compute the volume of the intersection of boxes. The lower corner of the box $\Box_w(C) \cap \Box_w(D)$
is $\max(m_w(C), m_w(D))$ and the upper corner is 
$\min\left(M_w(C), M_w(D)\right)$, where minimum and maximum are taken component-wise.
The volume of boxes can be used to encode axioms in a very concise way. 
However, as we will describe later, one problem is that points have volume $0$. This does not allow distinguishing empty boxes from points. To show that our encoding
correctly captures the logical meaning of axioms, we will consider a 
\emph{modified volume} that assigns a non-zero volume to points and some empty boxes. 
The (modified) volume of a box is defined as
\begin{equation}\label{eq:modi_vol}
    \Vol(\Box_w(C)) = \prod_{i=1}^n \max(0, M_w(C)_i - m_w(C)_i + \epsilon),
\end{equation}
where $\epsilon>0$ is a small constant. A point now
has volume $\epsilon^n$. Some empty boxes can actually have arbitrarily large modified volume.
For example the 2D-box with lower corner $(0,0)$ and upper corner $(-\frac{\epsilon}{2},N)$ has volume $\frac{\epsilon\cdot N}{2}$. 
While this is not meaningful geometrically, it does not cause any problems for our encoding because we only want to ensure that boxes with zero volume are empty (and not points). In practice, we will use \textit{softplus volume} as approximation (see Sec. \ref{sec:soft}). 

We associate every role name $r \in N_r$ with an affine transformation denoted by
$T^r_w(x) = D^r_w x + b^r_w$, where $D^r_w$ is an $(n \times n)$ diagonal matrix with non-negative entries and $b^r_w \in \mathbb{R}^n$ is a vector. 
In a special case where all diagonal entries of $D^r_w$ are $1$, $T^r_w(x)$ captures translations. 
Note that relations have been represented
by translation vectors analogous to TransE in \cite{DBLP:conf/ijcai/KulmanovLYH19}. However, this necessarily means
that the concept associated with the range of a role has the same size
as its domain. This does not seem very intuitive, in particular, for
N-to-one relationships like $has\_nationality$ or $lives\_in$ that map
many objects to the same object. 
Note that $T^r_w( \Box_w(C)) = \{T^r_w(x) \mid x \in \Box_w(C)\}$ is the box
with lower corner $T^r_w( m_w(C))$ and upper corner  $T^r_w( M_w(C))$.
To show this, note that $m_w(C) < M_w(C)$ implies
$D^r_w m_w(C)\leq D^r_w M_w(C)$ because $D^r_w$ is a diagonal matrix with
non-negative entries. Hence,
$T^r_w( m_w(C)) = 
D^r_w m_w(C) + b^r_w
\leq D^r_w M_w(C) + b^r_w =
T^r_w( M_w(C))$. For $m_w(C) \geq M_w(C)$, both 
$\Box_w(C)$ and 
$T^r_w( \Box_w(C))$ are empty.

Overall, we have the following parameters:
\begin{itemize}
    \item for every individual name $a \in N_I$, we have $n$ parameters  
    for the vector $m_w(a)$ (since $m_w(a)=M_w(A)$, we have to store only one
    of $m_w$ and $M_w$),
    \item for every concept name $C \in N_C$, we have $2n$ parameters for
    the vectors $m_w(C)$ and $M_w(C)$,
    \item for every role name $r \in N_r$, we have $2n$ parameters. $n$ parameters for the diagonal elements of $D^r_w$ and $n$ parameters
    for the components of $b^r_w$.
\end{itemize}
As we explained informally before, $w$ summarizes all parameters.
The overall number of parameters in $w$ is
$n \cdot (|N_I| + 2 \cdot |N_C| +  2 \cdot |N_r|)$.

\begin{figure}[t!]
    \centering
    \includegraphics[width=\textwidth]{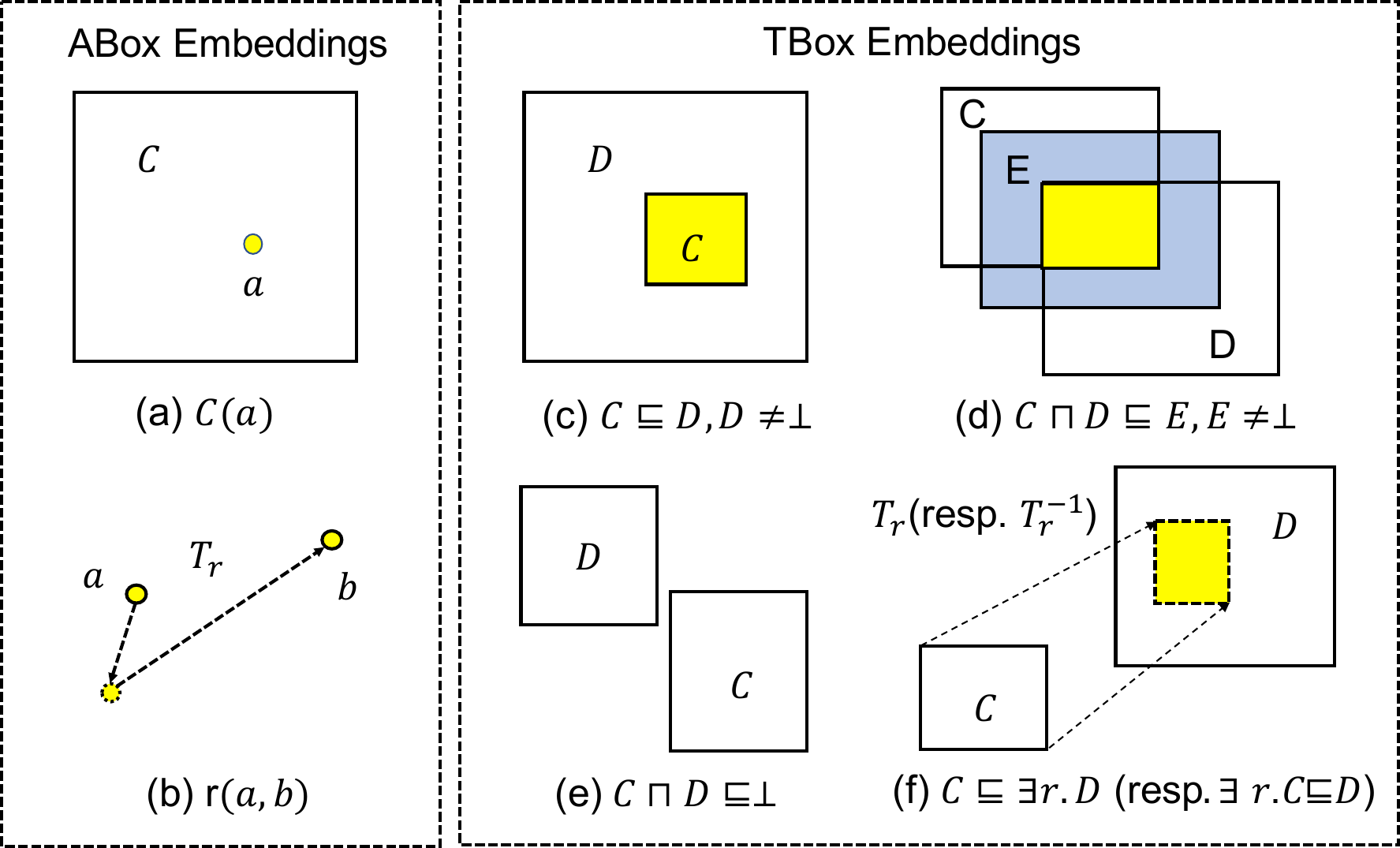}
    \caption{The geometric interpretation of logical statements in ABox (\textit{left}) and TBox (\textit{right}) expressed by DL $\mathcal{EL}^{++}$ with BoxEL embeddings. The concepts are represented by \textit{boxes}, entities are represented by \textit{points} and relations are represented by \textit{affine transformations}. $T_r$ and $T_r^{-1}$ denote the transformation function of relation $r$ and its inverse function, respectively.}
    \label{fig:box_el}
\end{figure}

\subsection{Geometric Interpretation}

The next step is to encode the axioms in our KB.
However, we do not want to do this in an arbitrary fashion, but, ideally, 
in a way that gives us some analytical guarantees.
\cite{DBLP:conf/ijcai/KulmanovLYH19} made an interesting first step by showing that their encoding is \emph{sound}.
In order to understand soundness, it is important to know that the parameters 
of the embedding are learnt by minimizing a loss function that contains a 
loss term for every axiom. Soundness then means that if the loss function
yields $0$, then the KB is satisfiable. Recall that satisfiability means
that there is an interpretation that satisfies all axioms in the KB. Ideally, we should be able to construct such an interpretation directly
from our embedding. This is indeed what the authors in \cite{DBLP:conf/ijcai/KulmanovLYH19} did. The idea is that points in the vector
space make up the domain of the interpretation, the points that lie in regions
associated with concepts correspond to the interpretation of this concept
and the interpretation of roles correspond to translations between points
like in TransE. In our context, geometric interpretation
can be defined as follows.

\begin{definition}[Geometric Interpretation]
Given a parameter vector $w$ representing 
an $\mathcal{EL}^{++}$ embedding, the corresponding \emph{geometric interpretation}  $\interpw = (\interpDomw, \interpMapw)$ is defined 
as follows:
\begin{enumerate}
    \item $\interpDomw = \mathbb{R}^n$,
    \item for every concept name $C \in N_C$, $C^{\interpw} =\Box_w(C)$,
    \item for every role $r \in N_R$, $r^{\interpw} = \{(x, y) \in \interpDomw \times \interpDomw \mid T^r_w(x) = y \}$,
    \item for every individual name $a \in N_I$, $a^{\interpw} = m_w(a)$.
\end{enumerate}
\end{definition}
We will now encode the axioms by designing one loss term for every axiom in a normalized $\mathcal{EL}^{++}$ KB, such that the axiom is satisfied by the geometric interpretation when the loss is $0$.

\subsubsection{ABox Embedding}
ABox contains concept assertions and role assertions. We introduce the following two loss terms that respect the geometric interpretations. 
\textbf{Concept Assertion} Geometrically, a concept assertion $C(a)$ asserts that the point $m_w(a)$ is inside the box $\Box_w(C)$ (see Fig.~\ref{fig:box_el}(a)). This can be expressed by demanding
$m_w(C) \leq m_w(a) \leq M_w(C)$ for every component. 
The loss $\mathcal{L}_{C(a)}(w)$ is defined by
\begin{align}\footnotesize
\mathcal{L}_{C(a)}(w) = & \sum_{i=1}^n \left\|\max(0, m_w(a)_i- M_w(C)_i)\right\|_2 \notag \\
& + \sum_{i=1}^n \left\|\max(0, m_w(C)_i- m_w(a)_i)\right\|_2.
\end{align}

\textbf{Role Assertion} Geometrically, a role assertion $r(a,b)$ means that the point $m_w(a)$ should be mapped to $m_w(b)$ by the transformation $T^r_w$ (see Fig.~\ref{fig:box_el}(b)). That is, we should have $T^r_w(m_w(a))=m_w(b)$. 
We define a loss term
\begin{equation}
    \mathcal{L}_{r(a,b) }(w) =\left\|T^r_w(m_w(a)) - m_w(b) \right\|_2.
\end{equation}
It is clear from the definition that when the loss terms are $0$, the axioms are satisfied in their geometric interpretation.
\begin{proposition}\label{prop:1}
We have 
\begin{enumerate}
    \item If $\mathcal{L}_{C(a)}(w) = 0$, then $\interpw \models C(a)$,
    \item If $\mathcal{L}_{r(a,b)}(w)=0$, then $\interpw \models r(a,b)$.
\end{enumerate}
\end{proposition}

\subsubsection{TBox Embedding}
For the TBox, we define loss terms for the four cases in
the normalized KB. Before doing so, we define an auxiliary function
that will be 
used inside these loss terms.
\begin{definition}[Disjoint measurement]
Given two boxes $B_1, B_2$, the disjoint measurement can be defined by the (modified) volumes of $B_1$ and the intersection box $B_1 \cap B_2$,
\begin{equation}
   \contains(B_1, B_2) = 1-\frac{\Vol(B_1 \cap B_2)}{\Vol(B_1)}. 
\end{equation}
\end{definition}
We have the following guarantees.
\begin{lemma}
\label{contains_lemma}
\begin{enumerate}
    \item $0 \leq \contains(B_1, B_2) \leq 1$,
    \item $\contains(B_1, B_2) = 0$ implies $B_1 \subseteq B_2$,
    \item $\contains(B_1, B_2) = 1$ implies $B_1 \cap B_2 = \emptyset$.
\end{enumerate}
\end{lemma}


\textbf{NF1: Atomic Subsumption} An axiom of the form $C \sqsubseteq D$ geometrically means that $\Box_w(C) \subseteq \Box_w(D)$ (see Fig.~\ref{fig:box_el}(c)). 
If $D\neq \bot$, we consider the loss term
\begin{equation}\label{eq:loss_nf1}
    \mathcal{L}_{C \sqsubseteq D}(w) = \contains(\Box_w(C), \Box_w(D)).
\end{equation}
For the case $D=\bot$ where $C$ is not a nominal, e.g., $C \sqsubseteq \bot$, we define the loss term 
\begin{equation}
    \mathcal{L}_{C \sqsubseteq \bot}(w) = 
    \max(0, M_w(C)_0 - m_w(C)_0 + \epsilon).
\end{equation}
If $C$ is a nominal, the axiom is inconsistent and our model can just return an error.
\begin{proposition}
If $\mathcal{L}_{C \sqsubseteq D}(w)=0$, 
then $\interpw \models C \sqsubseteq D$,
where we exclude the inconsistent case $C=\{a\}, D=\bot$.
\end{proposition}

\textbf{NF2: Conjunct Subsumption} An axiom of the form 
$C \sqcap D \sqsubseteq E$ means that $\Box(C) \cap \Box(D) \subseteq \Box(E)$
(see Fig.~\ref{fig:box_el}(d)).
Since $\Box(C) \cap \Box(D)$ is a box again, we can use the same idea as
for NF1. For the case $E\neq \bot$, we define the loss term as
\begin{equation}\label{eq:loss_nf2}
    \small
    \mathcal{L}_{C \sqcap D \sqsubseteq E}(w) = 
    \contains(\Box_w(C) \cap \Box_w(D), \Box_w(E)).
\end{equation}
For $E = \bot$, the axiom states that $C$ and $D$ must be disjoint. The disjointedness can be interpreted as the volume of the intersection of the associated boxes being $0$ (see Fig.~\ref{fig:box_el}(e)). 
However, just using the volume as a loss term may not work well
because a minimization algorithm may minimize the volume of the boxes
instead of the volume of their intersections. Therefore, we normalize the
loss term by dividing by the volume of the boxes. Given by
\begin{equation}\label{eq:loss_disjoint}
    \small
    \mathcal{L}_{C \sqcap D \sqsubseteq \bot}(w) = \frac{\Vol(\Box_w(C)\cap \Box_w(D))}{\Vol(\Box_w(C)) + \Vol(\Box_w(D))}.
\end{equation}
\begin{proposition}
If $\mathcal{L}_{C \sqcap D \sqsubseteq E}(w)=0$,
then $\interpw \models C \sqcap D \sqsubseteq E$,
where we exclude the inconsistent case ${a} \sqcap {a} \sqsubseteq \bot$ (that is, $C=D=\{a\}, E=\bot$).
\end{proposition}

\textbf{NF3: Right Existential}
Next, we consider axioms of the form $C \sqsubseteq \exists r.D$.
Note that $\exists r.D$ describes those entities that are in  
relation $r$ with an entity from $D$.
Geometrically, those are points that are mapped to points in
$\Box_w(D)$ by the affine transformation corresponding to $r$. 
$C \sqsubseteq \exists r.D$ then means that every point in
$\Box_w(C)$ must be mapped to a point in $\Box_w(D)$, that is
the mapping of $\Box_w(C)$ is contained in $\Box_w(D)$ (see Fig.~\ref{fig:box_el}(f)).
Therefore, the encoding comes again down to encoding a
subset relationship as before. The only difference to the first 
normal form is that $\Box_w(C)$ must be mapped by the affine
transformation $T^r_w$. These considerations lead
to the following loss term
\begin{equation}\label{eq:loss_nf3}
    \mathcal{L}_{C \sqsubseteq \exists r.D}(w) =
    \contains(T^r_w(\Box_w(C)), \Box_w(D)).
\end{equation}
\begin{proposition}
If $\mathcal{L}_{C \sqsubseteq \exists r.D}(w) = 0$, 
then $\interpw \models C \sqsubseteq \exists r.D$.
\end{proposition}
\par

\textbf{NF4: Left Existential}
Axioms of the form $\exists r.C \sqsubseteq D$ can be treated
symmetrically to the previous case (see Fig.~\ref{fig:box_el}(f)). We only consider the case $D\neq \bot$ and define the loss
\begin{equation}\label{eq:loss_nf4}
    \mathcal{L}_{\exists r.C \sqsubseteq D }(w) =
    \contains(T^{-r}_w(\Box_w(C)), \Box_w(D)),
\end{equation}
where $T^{-r}_w$ is the inverse function of $T^{r}_w$ that is defined by
$T^{-r}_w(x) =  D^{-r}_w x - D^{-r}_w b^r_w$, where $D^{-r}_w$ is 
obtained from $D^{r}_w$ by replacing all diagonal elements with  their
reciprocal. Strictly speaking, the inverse only exists 
if all diagonal entries of $D^{r}_w$ are non-zero. 
However, we assume that the entries that occur in a loss
term of the form $\mathcal{L}_{\exists r.C \sqsubseteq D }(w)$
remain non-zero in practice when we learn them iteratively. 
\begin{proposition}
If $\mathcal{L}_{\exists r.C \sqsubseteq D }(w) = 0$, 
then $\interpw \models \exists r.C \sqsubseteq D$.
\end{proposition}

\subsubsection{Optimization}

\textbf{Softplus Approximation}\label{sec:soft}
For optimization, while the computation of the volume of boxes is straightforward,
using a precise \emph{hard volume} is known to cause problems when learning the parameters using gradient descent algorithms, e.g. there is no training signal (gradient flow) when box embeddings that should overlap but become disjoint \cite{li2018smoothing,patel2020representing,dasgupta2020improving}.
To mitigate the problem, we approximate the volume of boxes by the \textit{softplus volume} \cite{patel2020representing} due to its simplicity.
\begin{equation}\small\label{eq:softplus_volume}
 \operatorname{SVol}\left( \Box_w\left(C\right)\right)= \prod_{i=1}^n \operatorname{Softplus}_{t} \left( M_w\left(C  \right)_i - m_w\left(C\right)_i\right)
\end{equation}
where $t$ is a temperature parameter. The softplus function is defined as $\operatorname{softplus}_{t}(x)=t \log \left(1+e^{x/t}\right)$, which can be regarded as a smoothed version of the ReLu function ($\max\{0,x\}$) used for calculating the volume of \textit{hard boxes}. 
In practice, the \textit{softplus volume} is used to replace the \textit{modified volume} in Eq.(\ref{eq:modi_vol}) as it empirically resolves the same issue that point has zero volume. 

\textbf{Regularization} We add a regularization term in Eq.(\ref{eq:regularizer}) to all non-empty boxes to encourage that the boxes lie in the unit box $[0,1]^n$. 
\begin{equation}\label{eq:regularizer}
    \lambda = \sum_{i=1}^n \max(0, M_w(C)_i -1 + \epsilon) + \max(0, -m_w(C)_i-\epsilon)
\end{equation}
In practice, this also avoids numerical stability issues. For example, to minimize a loss term, a box that should have a fixed volume could become very \textit{slim}, i.e. some side lengths be extremely large while others become extremely small.

\textbf{Negative Sampling} In principle, the embeddings can be optimized without negatives. However, we empirically find that the embeddings will be highly overlapped without negative sampling. e.g. for role assertion $r(a,b)$, $a$ and $b$ will simply become the same point. We generate negative samples for the role assertion $r(a,b)$ by randomly replacing one of the head or tail entity. 
Finally, we sum up all the loss terms, and learn the embeddings by minimizing the loss with Adam optimizer \cite{DBLP:journals/corr/KingmaB14}. 

\section{Empirical Evaluation}

\subsection{A Proof-of-concept Example}
We begin by first validating the model in modeling a toy ontology--family domain \cite{DBLP:conf/ijcai/KulmanovLYH19}, which is described by the following axioms:\footnote{Compared with the example given in \cite{DBLP:conf/ijcai/KulmanovLYH19}, we add additional concept assertion statements that distinguish entities and concepts:}
\begin{equation*}\label{eq:family_domain}
\small
\begin{array}{cl}
\concept {Male} \sqsubseteq \concept {Person} & \concept{Female} \sqsubseteq \concept { Person } \\
\concept { Father } \sqsubseteq \concept { Male } & \concept { Mother } \sqsubseteq \concept { Female } \\
\concept { Father } \sqsubseteq \concept { Parent } & \concept {Mother} \sqsubseteq \concept { Parent }\\
\concept { Female } \sqcap \concept { Male } \sqsubseteq \bot & \concept{ Female } \sqcap \concept { Parent } \sqsubseteq \concept { Mother } \\
\concept { Male } \sqcap \concept { Parent } \sqsubseteq \concept { Father } & \exists \rel {hasChild.Person } \sqsubseteq \concept { Parent }\\
\concept { Parent } \sqsubseteq \concept { Person } & \concept { Parent } \sqsubseteq \exists \rel{ hasChild.Person } \\
\concept{Father}(\text{Alex}) & \concept{Father}(\text{Bob}) \\
\concept{Mother}(\text{Marie}) & \concept{Mother}(\text{Alice})
\end{array}
\end{equation*}

We set the dimension to $2$ to visualize the embeddings. Fig.~\ref{fig:toy_example} shows that the generated embeddings accurately encode all of the axioms.
In particular, the embeddings of $\concept{Father}$ and $\concept{Mother}$ align well with the conjunction $\concept{Parent} \sqcap \concept{Male}$ and $\concept{Parent} \sqcap \concept{Female}$, respectively, which is impossible to be achieved by ELEm.

\begin{figure}
\centering
    \includegraphics[width=0.7\linewidth]{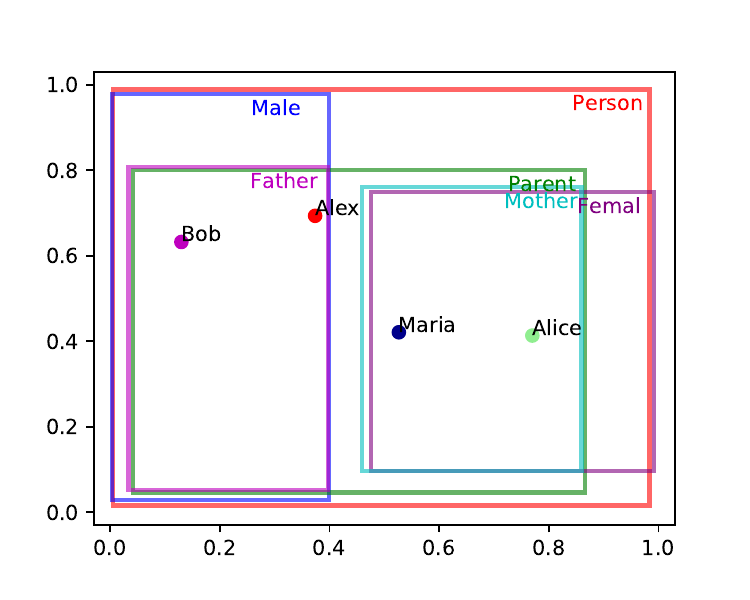}
    \caption{BoxEL embeddings in the family domain.
    }\label{fig:toy_example}
\end{figure}

\subsection{Subsumption Reasoning}\label{app:subsumption}
We evaluate the effectiveness of BoxEL on (plausible) subsumption reasoning (also known as ontology completion). The problem is to predict whether a concept is subsumed by another one.
For each subsumption pair $C \sqsubseteq D$, the scoring function can be defined by 
\begin{equation}
    P(C \sqsubseteq D) = \frac{\Vol(\Box(C) \cap \Box(D))}{\Vol(\Box(C))}.
\end{equation}
Note that such subsumption relations are not necessary to be 
(logically)
entailed by the input KB, e.g., a subsumption relation can be plausibly inferred by $P(C \sqsubseteq D)=0.9$, allowing for non-classical plausible reasoning.
While the subsumption reasoning does not need negatives, we add an additional regularization term for non-subsumption axiom. In particular, for each atomic subsumption axiom $C \sqsubseteq D$, we generate a non-subsumption axiom $C \not\sqsubseteq D^\prime$ or $C^\prime \not\sqsubseteq D$ by randomly replacing one of the concepts $C$ and $D$. Note that this does not produce regular negative samples as the generated concepts pair does not have to be disjoint. Thus, the loss term for non-subsumption axiom cannot be simply defined by $\mathcal{L}_{C \not\sqsubseteq D^\prime} = 1-\mathcal{L}_{C \sqsubseteq D^\prime}$. Instead, we define the loss term as $\mathcal{L}_{C \not\sqsubseteq D^\prime} = \phi(1-\mathcal{L}_{C \sqsubseteq D^\prime})$
by multiplying a small positive constant $\phi$ that encourages splitting the non-subsumption concepts while does not encourage them to be disjoint. If $\phi=1$, the loss would encourage the non-subsumption concepts to be disjoint. We empirically show that $\phi=1$ produces worse performance as we do not want non-subsumption concepts to be disjoint. 

\begin{table}[t!]
    \resizebox{\textwidth}{!}{
    \begin{minipage}{.47\linewidth}
        \caption{Summary of classes, relations and axioms in different ontologies. NF$_i$ represents the $i^{th}$ normal form.}
         \resizebox{\textwidth}{!}{
        \begin{tabular}{lccc}
        \hline Ontology & GO & GALEN & ANATOMY \\
        \hline
        Classes & 45895 & 24353 & 106363 \\
        Relations & 9 & 1010 & 157 \\
        NF1 & 85480 &  28890 & 122142 \\
        NF2 & 12131 & 13595 & 2121 \\
        NF3 & 20324 & 28118 & 152289 \\
        NF4 & 12129 & 13597 & 2143 \\
        Disjoint & 30 & 0 & 184 \\
        \hline
        \end{tabular}}
    \label{tab:dataset}
    \end{minipage}%
    \quad
    \begin{minipage}{.47\linewidth}
      \caption{The accuracies achieved by the embeddings of different approaches in terms of geometric interpretation of the classes in various ontologies.}
        \label{tab:el_syntax_seman}
        \resizebox{\textwidth}{!}{
            \begin{tabular}{ccccccc}
            \hline 
            & ELEm & EmEL$^{++}$ & BoxEL & & \\
            \hline 
            GO & 0.250 & 0.415 & \textbf{0.489 } \\
            GALEN & 0.480 & 0.345 & \textbf{0.788}  \\
            ANATOMY & 0.069 & 0.215 & \textbf{0.453}  \\
            \hline
    \end{tabular}}
    \label{tab:subsumption_accuracy}
    \end{minipage} 
    }
\end{table}

\textbf{Datasets} We use three biomedical ontologies 
as our benchmark. 1) \emph{Gene Ontology (GO)} \cite{harris2004gene} integrates the representation of genes and their functions across all species. 2) \emph{GALEN} \cite{rector1996galen} is a clinical ontology. 3) \emph{Anatomy} \cite{mungall2012uberon} is a ontology that represents linkages of different phenotypes to genes. Table \ref{tab:dataset} summarizes the statistical information of these datasets. The subclass relations are split into training set (70\%), validation set (20\%) and testing set (10\%), respectively.

\textbf{Evaluation protocol} 
Two strategies can be used to measure the effectiveness of the embeddings. 
1) Ranking based measures rank the probability of $C$ subsumed by all concepts. We evaluate and report four ranking based measures. Hits@10, Hits@100 describe the fraction of true cases that appear in the first $10$ and $100$ test cases of the sorted rank list, respectively. Mean rank computes the arithmetic mean over all individual ranks (i.e. $\mathrm{MR}=\frac{1}{|\mathcal{I}|} \sum_{\mathrm{rank} \in \mathcal{I}} \mathrm{rank}$, where $rank$ is the individual rank), while AUC computes the area under the ROC curve. 
2) Accuracy based measure is a stricter criterion, for which the prediction is true if and only if the subclass box is exactly inside the superclass box (even not allowing the subclass box slightly outside the superclass box). We use this measure as it evaluates the performance of embeddings on retaining the underlying characteristics of ontology in vector space. We only compare ELEm and EmEL$^{++}$ as KGE baselines fail in this setting (KGEs cannot preserve the ontology). 

\textbf{Implementation details} The ontology is normalized into standard normal forms, which comprise a set of axioms that can be used as the \emph{positive samples}. Similar to previous works \cite{DBLP:conf/ijcai/KulmanovLYH19}, we perform normalization using the OWL APIs and the APIs offered by the jCel reasoner [18]. The hyperparameter for negative sampling is set to $\phi=0.05$. 
For ELEm and EmEL$^{++}$, the embedding size is searched from $n=[50,100,200]$ and margin parameter is searched from $\gamma=[-0.1, 0, 0.1]$. Since box embedding has double the number of parameters of ELEm and EmEL$^{++}$, we search the embedding size from $n=[25,50,100]$ for BoxEL. We summarize the best performing hyperparameters in our supplemental material.
All experiments are evaluated with $10$ random seeds and the mean results are reported for numerical comparisons. 

\textbf{Baselines} We compare the state-of-the-art $\mathcal{EL}^{++}$ embeddings (ELEm) \cite{DBLP:conf/ijcai/KulmanovLYH19}, the first geometric embeddings of $\mathcal{EL}^{++}$, as well as the extension EmEL$^{++}$ \cite{mondala2021emel} that additionally considers the role inclusion and role chain embedding, as our major baselines. For comparison with classical methods, we also include the reported results of three classical KGEs in \cite{mondala2021emel}, including TransE \cite{DBLP:conf/nips/BordesUGWY13}, TransH \cite{DBLP:conf/aaai/WangZFC14} and DistMult \cite{yang2014embedding}.

\begin{table}[t!]
    \centering
    \caption{The ranking based measures of embedding models for sumbsumtion reasoning on the testing set. $*$ denotes the results from \cite{mondala2021emel}.}
      \small
    \resizebox{\columnwidth}{!}{
    \begin{tabular}{lllllllll}
    \hline 
    Dataset & Metric & TransE$*$ & TransH$*$ & DistMult$*$ & ELEm & EmEL$^{++}$ & \textbf{BoxEL}  \\
     \hline 
    \multirow{4}{*}{GO} 
    & Hits@10 &  0.00 & 0.00 & 0.00 &  0.09 & 0.10 & 0.03  \\
    & Hits@100 &  0.00 & 0.00 & 0.00  & 0.16 & 0.22 & 0.08 \\
    & AUC &  0.53 & 0.44 & 0.50  & 0.70 & 0.76 & 0.81  \\
    & Mean Rank & - & - & - & 13719 & 11050  & 8980 \\
    \hline 
    \multirow{4}{*}{GALEN} 
    & Hits@10 & 0.00 & 0.00 & 0.00 &  0.07 & 0.10 & 0.02  \\
    & Hits@100 & 0.00 & 0.00 & 0.00 &  0.14 & 0.17 & 0.03  \\
    & AUC &  0.54 & 0.48 & 0.51  & 0.64 & 0.65 & 0.85 \\
    & Mean Rank & - & - & - & 8321 & 8407 & 3584  \\
    \hline 
    \multirow{4}{*}{ANATOMY} 
    & Hits@10 & 0.00 & 0.00 & 0.00 & 0.18 & 0.18 & 0.03  \\
    & Hits@100 &  0.01 & 0.00 & 0.00 & 0.38 & 0.40  & 0.04  \\
    & AUC & 0.53 & 0.44 & 0.49 &  0.73 & 0.76  & 0.91  \\
    & Mean Rank & - & - & - & 28564 & 24421  & 10266  \\
    \hline 
    \end{tabular}
    }
    \label{tab:ranking_measure}
\end{table}

\textbf{Results} Table~\ref{tab:ranking_measure} summarizes the ranking based measures of embedding models. We first observe that both ELEm and EmEL$^{++}$ perform much better than the three standard KGEs (TransE, TransH, and DistMult) on all three datasets, especially on hits@k for which KGEs fail, showcasing the limitation of KGEs and the benefits of geometric embeddings on encoding logic structures. EmEL$^{++}$ performs slightly better than ELEm on all three datasets. 
Overall, our model BoxEL outperforms ELEm and EmEL$^{++}$. 
In particular, we find that for Mean Rank and AUC, our model achieves significant performance gains on all three datasets.
Note that Mean Rank and AUC have theoretical advantages over hits@k because hits@k is sensitive to any model performance changes while Mean Rank and AUC reflect the average performance, demonstrating that BoxEL achieves better average performance. 
Table~\ref{tab:subsumption_accuracy} shows the accuracies of different embeddings in terms of the geometric interpretation of the classes in various ontologies. 
It clearly demonstrates that BoxEL outperforms ELEm and EmEL$^{++}$ by a large margin, showcasing that BoxEL preserves the underlying ontology characteristics in vector space better than ELEm and EmEL$^{++}$ that use ball embeddings.

\subsection{Protein-Protein Interactions}

\textbf{Dataset} We use a biomedical knowledge graph built by \cite{DBLP:conf/ijcai/KulmanovLYH19} from Gene Ontology (TBox) and STRING database (ABox) to conduct this task. Gene Ontology contains information about the functions of proteins, while STRING database consists of the protein-protein interactions. We use the protein-protein interaction data of yeast and human organisms, respectively. For each pair of proteins $(P_1, P_2)$ that exists in STRING, we add a role assertion $\rel{interacts}(P1, P2)$. If protein $P$ is associated with the function $F$, we add a membership axiom $\{P\} \sqsubseteq \exists \rel{hasFunction}.F$, the membership assertion can be regarded as a special case of NF3, in which $P$ is a point (i.e. zero-volume box). The interaction pairs of proteins are split into training (80\%), testing (10\%) and validation (10\%) sets. To perform prediction for each protein pair $(P_1, P_2)$, we predict whether the role assertion $\rel{interacts}(P_1, P_2)$ hold. This can be measured by Eq.(\ref{eq:ppi}).
\begin{equation}\label{eq:ppi}
    P(\rel{interacts}(P_1, P_2)) = \left\|T^{\rel{interacts}}_w(m_w(P_1)) - m_w(P_2) \right\|_2.
\end{equation}
where $T^{\rel{interacts}}_w$ is the affine transformation function for relation $\rel{interacts}$. For each positive interaction pair $\rel{interacts}(P_1, P_2)$, we generate a corrupted negative sample by randomly replacing one of the head and tail proteins. 

\textbf{Baselines} We consider ELEm \cite{DBLP:conf/ijcai/KulmanovLYH19} and EmEL$^{++}$ \cite{mondala2021emel} as our two major baselines as they have been shown outperforming the traditional KGEs. We also report the result of Onto2Vec \cite{DBLP:journals/bioinformatics/SmailiGH18} that treats logical axioms as a text corpus and OPA2Vec \cite{smaili2019opa2vec} that combines logical axioms with annotation properties. Besides, we report the results of two semantic similarity measures: Resnik’s similarity and Lin’s similarity in \cite{DBLP:conf/ijcai/KulmanovLYH19}. For KGEs, we compare TransE \cite{BordesUGWY13}) and BoxE \cite{chheda2021box}.
We report the hits@10, hits@100, mean rank and AUC (area under the ROC curve) as explained before for numerical comparison. Both raw ranking measures and filtered ranking measures that ignore the triples that are already known to be true in the training stage are reported. Baseline results are taken from the standard benchmark developed by \cite{DBLP:journals/bib/KulmanovSGH21}.\footnote{https://github.com/bio-ontology-research-group/machine-learning-with-ontologies}

\textbf{Overall Results} Table~\ref{tab:ppi_result} and Table~\ref{tab:ppi_result_human} summarize the performance of protein-protein prediction in yeast and human organisms, respectively. 
We first observe that similarity based methods (SimResnik and SimLin) roughly outperform TransE, showcasing the limitation of classical knowledge graph embeddings. BoxE roughly outperforms TransE as it does encode some logical properties.  
The geometric methods ELEm and EmEL$^{++}$ fail on the hits@10 measures and does not show significant performance gains on the hits@100 measures in human dataset.
However, ELEm and EmEL$^{++}$ outperform TransE, BoxE and similarity based methods on Mean Rank and AUC by a large margin, especially for the Mean Rank, showcasing the expressiveness of geometric embeddings. Onto2Vec and OPA2Vec achieve relatively better results than TransE and similarity based methods, but cannot compete ELEm and EmEL$^{++}$. We conjecture that this is due to the fact that they mostly consider annotation information but cannot encode the logical structure explicitly. 
Our method, BoxEL consistently outperforms all methods in hits@100, Mean Rank and AUC in both datasets, except the competitive results of hits@10, showcasing the better expressiveness of BoxEL.

\begin{table}[t!]
    \centering
    \caption{Prediction performance on protein-protein interaction (yeast).}
    \resizebox{\textwidth}{!}{
    \begin{tabular}{ccccccccc}
        \hline 
        Method & \makecell[c]{Raw\\ Hits@10} & \makecell[c]{Filtered\\ Hits@10}  & \makecell[c]{Raw\\ Hits@100} & \makecell[c]{Filtered\\ Hits@100}  & \makecell[c]{Raw\\ Mean Rank}  & \makecell[c]{Filtered\\ Mean Rank}  & \makecell[c]{Raw\\ AUC} & \makecell[c]{Filtered\\ AUC} \\
        \hline 
        TransE    & 0.06 & 0.13 & 0.32 & 0.40	& 1125 & 1075 &	0.82 & 0.83 \\
        BoxE	    & 0.08 & 0.14 & 0.36 & 0.43	& 633 & 620 &	0.85 & 0.85 \\
        SimResnik  & \textbf{0.09} & 0.17 & 0.38 & 0.48	& 758  & 707  &	0.86 & 0.87 \\
        SimLin    & 0.08 & 0.15 & 0.33 & 0.41	& 875  & 825  &	0.8  & 0.85 \\
        ELEm        & 0.08 & 0.17 & 0.44 & 0.62	& 451  & 394  &	0.92 & 0.93 \\
        EmEL$^{++}$      & 0.08 & 0.16 & 0.45 & 0.63 & 451  & 397  & 0.90 & 0.91 \\
        Onto2Vec    & 0.08 & 0.15 & 0.35 & 0.48 & 641  & 588  & 0.79 & 0.80 \\
        OPA2Vec	    & 0.06 & 0.13 &	0.39 & 0.58 & 523  & 467  & 0.87 & 0.88 \\
        BoxEL    & \textbf{0.09} & \textbf{0.20} & \textbf{0.52} & \textbf{0.73} & \textbf{423} & \textbf{379} & \textbf{0.93} & \textbf{0.94} \\
        \hline 
    \end{tabular}}
    \label{tab:ppi_result}
\end{table}

\begin{table}[t!]
    \centering
    \caption{Prediction performance on protein-protein interaction (human).}
    \resizebox{\textwidth}{!}{
    \begin{tabular}{ccccccccc}
        \hline 
        Method & \makecell[c]{Raw\\ Hits@10} & \makecell[c]{Filtered\\ Hits@10}  & \makecell[c]{Raw\\ Hits@100} & \makecell[c]{Filtered\\ Hits@100}  & \makecell[c]{Raw\\ Mean Rank}  & \makecell[c]{Filtered\\ Mean Rank}  & \makecell[c]{Raw\\ AUC} & \makecell[c]{Filtered\\ AUC} \\
        \hline 
        TransE	    & 0.05 & \textbf{0.11} & 0.24 & 0.29	& 3960 & 3891 &	0.78 & 0.79 \\
        BoxE	    & 0.05 & 0.10 & 0.26 & 0.32	& 2121 & 2091 &	0.87 & 0.87 \\
        SimResnik	& 0.05 & 0.09 & 0.25 & 0.30	& 1934 & 1864 &	0.88 & 0.89 \\
        SimLin	    & 0.04 & 0.08 & 0.20 & 0.23	& 2288 & 2219 &	0.86 & 0.87 \\
        ELEm        & 0.01 & 0.02 & 0.22 & 0.26	& 1680 & 1638 &	0.90 & 0.90 \\
        EmEL$^{++}$       & 0.01 & 0.03 & 0.23 & 0.26 & 1671 & 1638 & 0.90 & 0.91 \\
        Onto2Vec    & 0.05 & 0.08 & 0.24 & 0.31	& 2435 & 2391 &	0.77 & 0.77 \\
        OPA2Vec	    & 0.03 & 0.07 & 0.23 & 0.26	& 1810 & 1768 &	0.86 & 0.88 \\
        BoxEL (Ours) & \textbf{0.07} & 0.10 & \textbf{0.42} & \textbf{0.63} & \textbf{1574} & \textbf{1530} & \textbf{0.93} & \textbf{0.93} \\
        \hline 
    \end{tabular}}
    \label{tab:ppi_result_human}
\end{table}

\subsection{Ablation Studies} 

\textbf{Transformation vs Translation} To study the contributions of using boxes for modeling concepts and using affine transformation for modeling relations, we conduct an ablation study by comparing relation embeddings with affine transformation (AffineBoxEL) and translation (TransBoxEL). The only difference of TransBox to the AffineBox is that TransBox does not associate a scaling factor for each relation. Table~\ref{tab:transform_translation} clearly shows that TransBoxEL outperforms EmEL$^{++}$, showcasing the benefits of box modeling compared with ball modeling. While AffineBoxEL further improves TransBoxEL, demonstrating the advantages of affine transformation. 
Hence, we could conclude that both of our proposed entity and relation embedding components boost the performance.

\textbf{Entities as Points vs Boxes} As mentioned before, distinguishing entities and concepts by identifying entities as points has better theoretical properties. Here, we study how this distinction influences the performance. For this purpose, we eliminate the ABox axioms by replacing each individual with a singleton class and rewriting relation assertions $r(a, b)$ and concept assertions $C(a)$ as $\{a\} \sqsubseteq \exists r.\{b\}$ and $\{a\} \sqsubseteq C$, respectively. In this case, we only have TBox embeddings and the entities are embedded as regular boxes. Table~\ref{tab:points_boxes} shows that for hits@k, there is marginal significant improvement of point entity embedding over boxes entity embedding, however, point entity embedding consistently outperforms box entity embedding on Mean Rank and AUC, showcasing the benefits of distinguishing entities and concepts.

\begin{table}[t!]
    \resizebox{\textwidth}{!}{
    \begin{minipage}{.47\linewidth}
      \caption{The performance of BoxEL with affine transformation (AffineBoxEL) and BoxEL with translation (TransBoxEL) on yeast protein-protein interaction.}
      \centering
      \resizebox{\textwidth}{!}{
      \begin{tabular}{ccccccc}
        \hline 
        Method & \multicolumn{2}{c}{EmEL} & \multicolumn{2}{c}{TransBoxEL} & \multicolumn{2}{c}{AffineBoxEL} \\
        & Raw & Filtered  & Raw & Filtered  & Raw & Filtered \\
        \hline 
        Hits@10 & 0.08 & 0.17  & 0.04 & 0.18 & \textbf{0.09} & \textbf{0.20}\\
        Hits@100 & 0.44 & 0.62 & 0.54 & 0.68 & \textbf{0.52} & \textbf{0.73}\\
        Mean Rank & 451 & 394 & 445 & 390 & \textbf{423} & \textbf{379} \\
        AUC & 0.92 & 0.93 & \textbf{0.93} & 0.93 & \textbf{0.93} & \textbf{0.94} \\
        \hline 
    \end{tabular}}
    \label{tab:transform_translation}
    \end{minipage}%
    \quad
    \begin{minipage}{.49\linewidth}
      \centering
        \caption{The performance of BoxEL with point entity embedding and box entity embedding on yeast protein-protein interaction dataset.}
        \resizebox{\textwidth}{!}{
        \begin{tabular}{ccccccc}
        \hline 
        Method & \multicolumn{2}{c}{EmEL} & \multicolumn{2}{c}{BoxEL (boxes)} & \multicolumn{2}{c}{BoxEL (points)} \\
        & Raw & Filtered  & Raw & Filtered  & Raw & Filtered \\
        \hline 
        Hits@10 & 0.08 & 0.17  & \textbf{0.09} & 0.19 & \textbf{0.09} & \textbf{0.20}\\
        Hits@100 & 0.44 & 0.62 & 0.48 & 0.68 &  \textbf{0.52} & \textbf{0.73}\\
        Mean Rank & 451 & 394 & 450 & 388 & \textbf{423} & \textbf{379} \\
        AUC & 0.92 & 0.93 & 0.92 & 0.93 & \textbf{0.93} & \textbf{0.94} \\
        \hline 
    \end{tabular}}
    \label{tab:points_boxes}
    \end{minipage} 
    }
\end{table}

\section{Conclusion}

This work proposes BoxEL, a geometric KB embedding method that explicitly models the logical structure expressed by the theories of $\mathcal{EL}^{++}$. 
Different from the standard KGEs that simply ignore the analytical guarantees, BoxEL provides \textit{soundness} guarantee for the underlying logical structure by incorporating background knowledge into machine learning tasks, offering a more reliable and logic-preserved fashion for KB reasoning. 
The empirical results further demonstrate that BoxEL outperforms previous KGEs and $\mathcal{EL}^{++}$ embedding approaches on subsumption reasoning over three ontologies and predicting protein-protein interactions in a real-world biomedical KB.

\cleardoublepage
\chapter{Geometric Embeddings of Structured Constraints in Machine Learning  }
\label{chap_logical}

In this chapter, we introduce a geometric embedding approach for encoding structured constraints in a multi-label prediction problem, where the labels are organized under implication and mutual exclusion constraints.
A major concern is to produce predictions that are logically consistent with these constraints.
To do so, we formulate this problem as an \emph{embedding inference} problem where the constraints are imposed onto the embeddings of labels by \emph{geometric construction}.
Particularly, we consider a hyperbolic Poincaré ball model in which we encode labels as Poincaré hyperplanes that work as linear decision boundaries. 
The hyperplanes are interpreted as convex regions such that the logical relationships (implication and exclusion) are geometrically encoded using \emph{insideness} and \emph{disjointedness} of these regions, respectively. 
We show theoretical groundings of the method for preserving logical relationships in the embedding space. 
Extensive experiments on $12$ datasets show 1) significant improvements in mean average precision; 2) lower number of constraint violations;  3) an order of magnitude fewer dimensions than baselines.

\section{Motivation and Background}

Structured multi-label prediction is a task aiming to associate every object with multiple labels that are semantically constrained in a structured manner (e.g., by implication and exclusion constraints). 
This task is of growing importance in many applications such as image annotation \cite{DBLP:conf/eccv/DengDJFMBLNA14,DBLP:conf/eccv/DengDJFMBLNA14}, text categorization, \cite{DBLP:journals/ijcv/KrishnaZGJHKCKL17,DBLP:conf/rep4nlp/LopezHS19} and functional genomics~\cite{vu2021protein, kulmanov2020deepgoplus}, where the labels are organized in a directed acyclic graph (DAG) or an ontology.
One of the central concerns of the task is to produce predictions that are \emph{logically consistent} with the constraints of the labels. 
For example, a protein must be labeled to have the function \emph{nucleic acid binding} if it is already labeled to have the function \emph{RNA binding} (i.e., implication) and must not have the function \emph{drug binding} (i.e., mutual exclusion). 

Various works have been proposed to improve the prediction consistency \cite{DBLP:journals/jcss/CerriBC14,DBLP:conf/nips/GiunchigliaL20,DBLP:conf/acl/McCallumVVMR18,DBLP:conf/icml/WehrmannCB18,patel2021modeling}. 
One line of work called \emph{label embedding} aims to represent labels as low-dimensional vectors~\cite{DBLP:conf/icml/BiK11, DBLP:conf/aaai/ChenHXCJ20}. 
A key disadvantage of the vector-based representations is that they only capture weak forms of correlation or ``similarity'  between labels, but do not strongly enforce the logical relationships. 
Another line of work~\cite{DBLP:journals/jcss/CerriBC14, DBLP:conf/acl/McCallumVVMR18,DBLP:conf/icml/WehrmannCB18, DBLP:conf/nips/GiunchigliaL20} imposes these logical constraints directly to the losses of neural networks. However, they do not explicitly learn the representations of labels and typically require a complete label taxonomy, which is not always available in and scalable to real-world settings \cite{patel2021modeling}.

Embedding-based inference \cite{DBLP:conf/nips/MirzazadehRDS15}, which imposes logical constraints directly to the label embeddings, is able to \emph{inductively} infer the underlying label relationships from incomplete labelings~\cite{mirzazadeh2017solving}. 
Once all embeddings are adhering to the constraints, each label can be predicted independently without accessing the label relationships, which significantly reduces the computation cost during inference~\cite{DBLP:conf/nips/MirzazadehRDS15}.
The key idea, which is inspired by the Venn diagram~\cite{mirzazadeh2017solving,venn1880diagrammatic} or set-theoretic semantics~\cite{van1977set}, is to represent each label as a convex region~\cite{DBLP:conf/nips/MirzazadehRDS15}. 
A prominent example is the multi-label box model (MBM)~\cite{patel2021modeling} that models label implications as box containments. However, MBM learns box-like decision boundaries, which are typically not compatible with standard classifiers (i.e., hyperplane margin-based models such as logistic regression \cite{ganea2018hyperbolic}). Besides, box models suffer from a theoretical limitation, i.e., lower-way intersections enforce higher-way interactions \cite{vilnis2021geometric}. Finally, current methods ignore the importance of constraining mutual exclusion, which is essential as otherwise, a model could trivially obtain zero implication violation by assigning the same score to all labels. 

We consider a structured multi-label prediction problem with \emph{implication} and \emph{mutual exclusion} constraints that are jointly described by a hierarchy and exclusion (HEX) graph (see \Cref{fig:labelspace}(a) for an example).
The key idea of our method is to transform the logical constraints into soft geometric constraints in the embedding space. 
In particular, we consider a hyperbolic Poincaré ball model that has demonstrated advantages in representing hierarchical data \cite{yang2022hicf} and assign each label a Poincaré hyperplane that has several favorable theoretical properties in classification. 
Each Poincaré hyperplane can be interpreted as a convex region such that the \emph{implication} and \emph{mutual exclusion} are modeled by geometric \emph{insideness} and \emph{disjointness} between the corresponding regions, respectively. In this way, a multi-label classifier can be defined by measuring the confidence of an instance having a label as geometric \emph{membership}.  
Unlike other hyperbolic region-based models such as hyperbolic cones \cite{ganea2018hyperbolic} and hyperbolic disks \cite{suzuki2019hyperbolic}, Poincaré hyperplane works as a linear decision boundary and can be seamlessly integrated into existing margin-based classifiers such as hyperbolic logistic regression~\cite{ganea2018hyperbolic}. 
\Cref{fig:labelspace}(b) shows an example of the learned label representations that respect all the constraints given in \Cref{fig:labelspace}(a). 
We show theoretical groundings of the proposed method on modeling \emph{implication} and \emph{mutual exclusion}. Extensive experiments on 12 multi-label classification tasks show the model’s capability to improve the mean average precision significantly while keeping the number of constraint violations low and requiring an order of magnitude fewer dimensions. 

\begin{figure}[t!]
    \centering 
    \subfloat[\centering ]{{\includegraphics[width=0.34\textwidth]{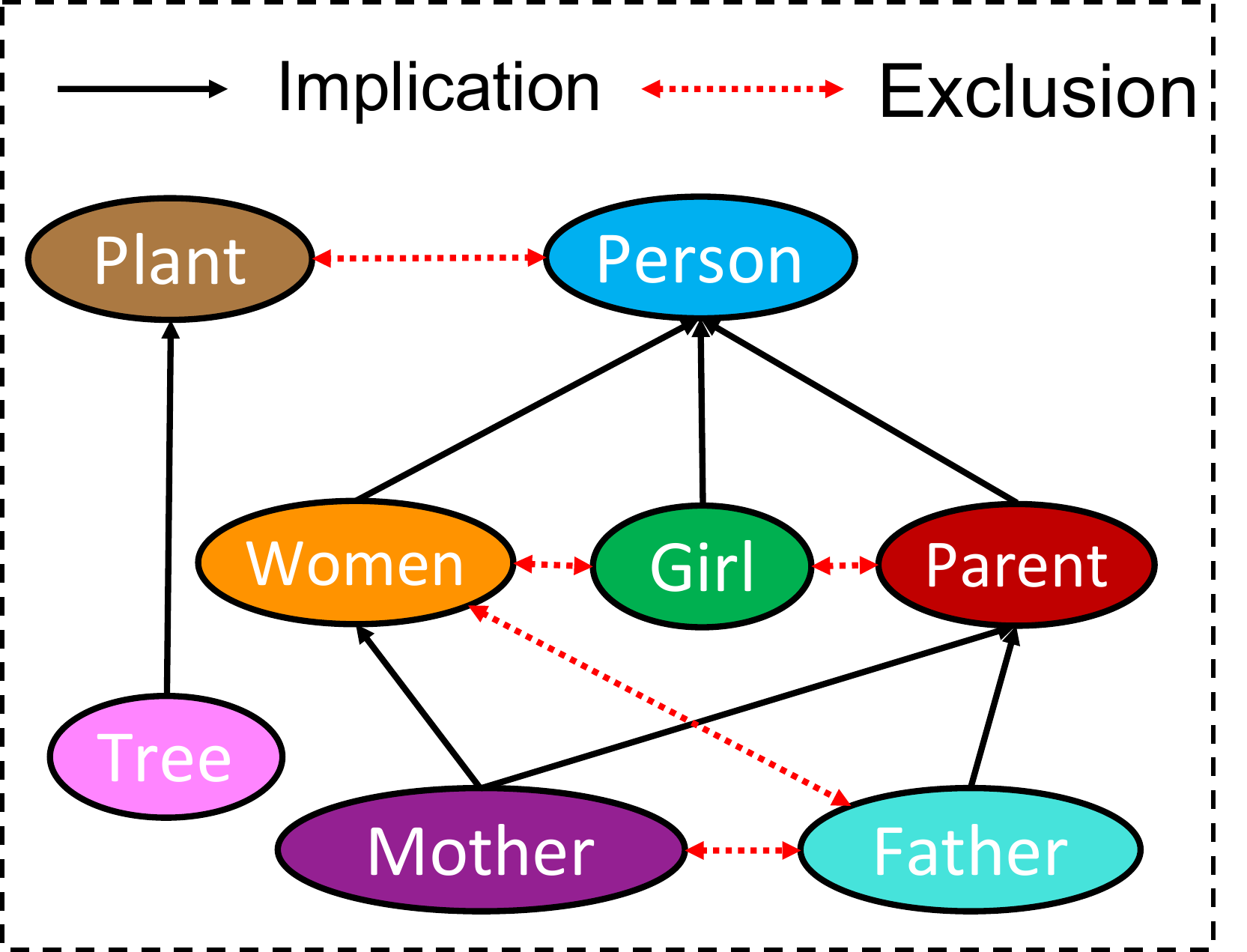}}}
    \subfloat[\centering ]{{\includegraphics[width=0.31\columnwidth]{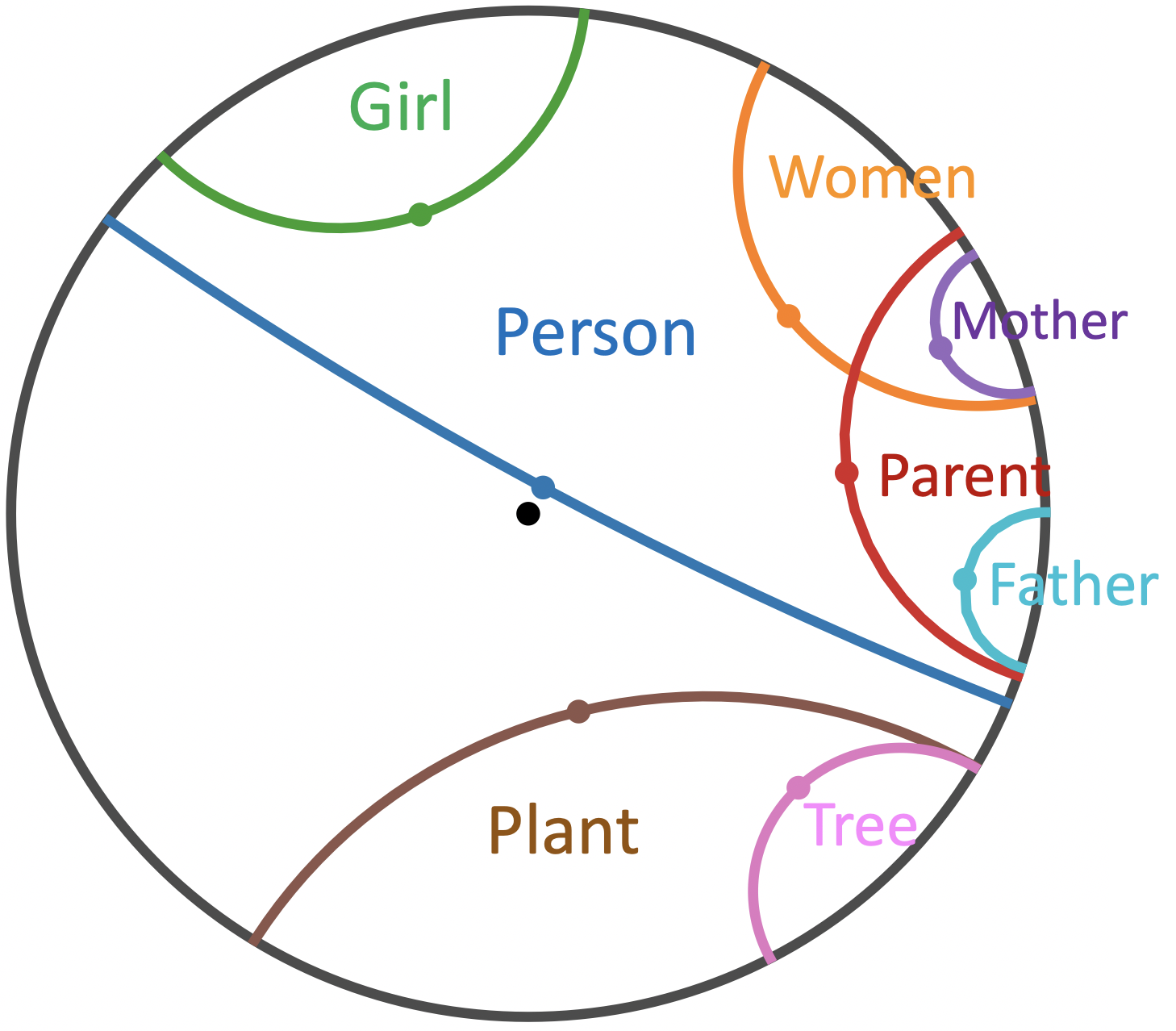}}}
    \subfloat[\centering ]
    {{\includegraphics[width=0.31\columnwidth]{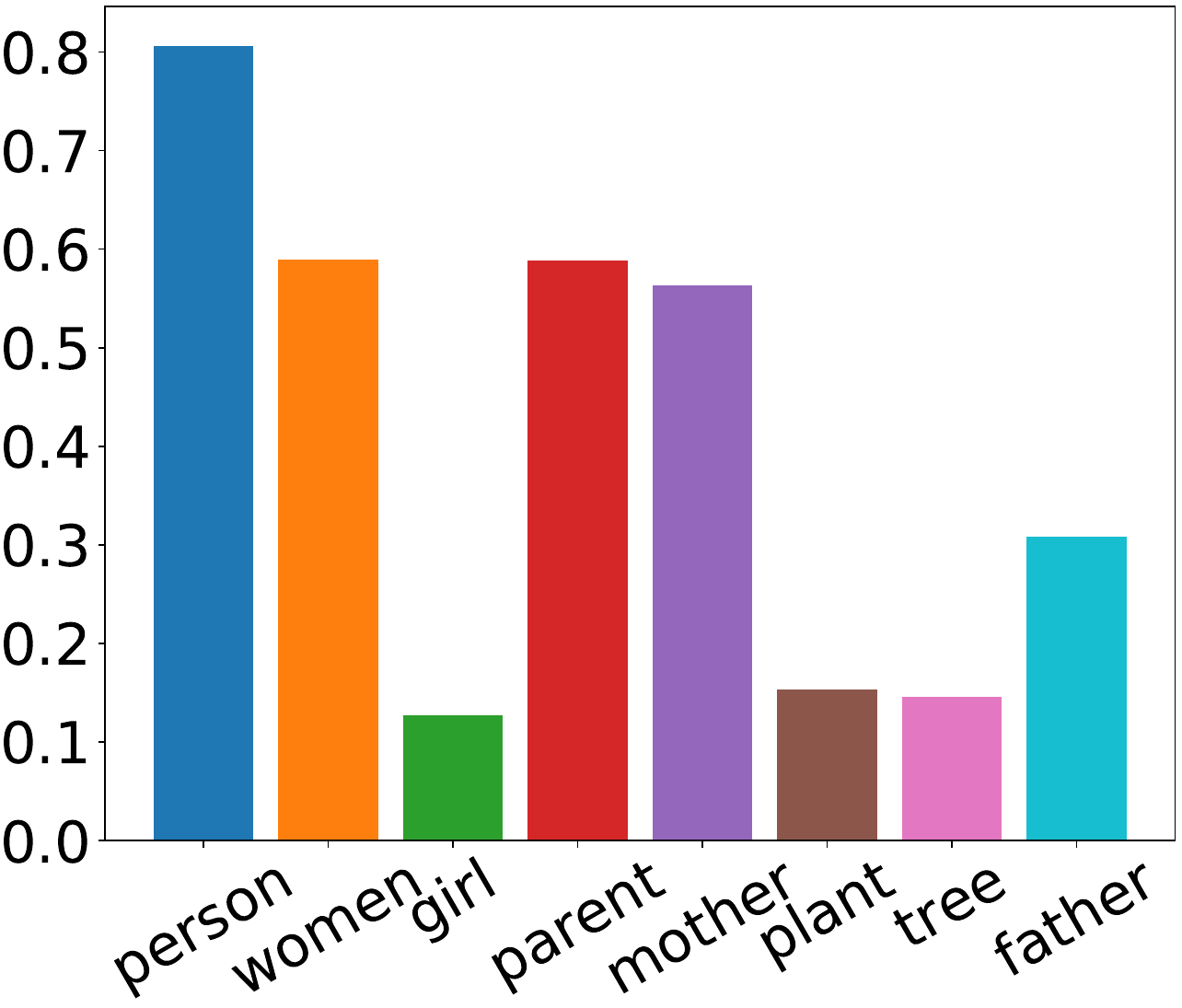}}}
    \caption{(a) A HEX graph describing the logical relationships (implication and exclusion) between different labels; (b) The learned label embeddings (linear decision boundaries) in the Poincaré ball, where all constraints in the HEX graph are respected; (c) The prediction scores of a given instance of \emph{mother} respect all constraints in the HEX graph, where each score is calculated as the confidence of the instance embedding being a member of the convex region of the corresponding label embedding.}
    \label{fig:labelspace}
     \vspace{-0.3cm}
\end{figure}




\section{Structured multi-label prediction} 

Let $\mathcal{X} \subseteq \mathbb{R}^{n}$ denote an $n$-dimensional instance space and $\mathcal{L}=\{l_1, l_2, $ $\dots\}, \left|\mathcal{L}\right| \geq 2$ denote the finite set of possible labels.
Given a set of $N$ training examples $\mathcal{D}=\left\{\left({x}_{i}, L_{i}\right) \mid 1 \leq i \leq N, {x}_{i} \in \mathcal{X}, L_i \subset \mathcal{L}\right\}$, \emph{multi-label prediction} aims to learn a labeling function $f: \mathcal{X} \rightarrow 2^{\mathcal{L}}$ mapping from the instance space to the \emph{powerset} of the label space, $f({x}) \subset \mathcal{L}$.

\emph{Structured} multi-label prediction additionally imposes a set of prior-known logical constraints over the labels, namely, the predictions must be logically consistent with these constraints. Analogous to Mirzazadeh et al.~\cite{DBLP:conf/nips/MirzazadehRDS15}, we consider two forms of logical constraints between labels: implication and mutual exclusion. 
Specifically, an \emph{implication} of the form $l_{a} \Rightarrow l_{b}$ imposes the constraint that whenever an instance is labeled as $l_{a}$ then it must also be labeled as $l_{b}$, i.e., $l_{a} \Rightarrow l_{b}$ is a shorthand notation for
$\forall x \in \mathcal{X}, l_{a} \in f(x) \Rightarrow l_{b} \in f(x)$. \emph{Mutual exclusions} are constraints of the form $\neg l_{a} \vee \neg l_{b}$, implying that an instance cannot be simultaneously labeled as $l_{a}$ and $l_{b}$, i.e., $\neg l_{a} \vee \neg l_{b}$ is a shorthand notation for $\forall x \in \mathcal{X}, l_{a} \notin f(x) \vee l_{b} \notin f(x)$.  
We can concisely represent a set of implication and exclusion constraints with a hierarchy and exclusion (HEX) graph~\cite{DBLP:conf/eccv/DengDJFMBLNA14}.

\begin{definition}[HEX graph~\cite{DBLP:conf/eccv/DengDJFMBLNA14}\footnote{Deng et al. \cite{DBLP:conf/eccv/DengDJFMBLNA14} use subsumption, which is the inverse relation of implication that we use here.}]
A HEX graph $G=\left(V, E_{h}, E_{e}\right)$ is a graph consisting of a set of nodes $V=\left\{v_{1}, \ldots, v_{n}\right\}$, directed (hierarchy) edges $E_{h} \subseteq V \times V$, and undirected (exclusion) edges $E_{e} \subseteq V \times V$, such that the subgraph $G_{h}=\left(V, E_{h}\right)$ is a DAG and the subgraph $G_{e}=\left(V, E_{e}\right)$ has no self loop.
Each node $v_i \in V$ represents the label $l_i$. 
A directed edge $\left(v_{i}, v_{j}\right) \in E_{h}$ represents the implication $l_{i} \Rightarrow l_{j}$, and an undirected edge $\left(v_{i}, v_{j}\right) \in E_{e}$ represents the exclusion $\neg l_{i} \vee \neg l_{j}$. 
\end{definition}

Note that an arbitrary HEX graph may contain redundant edges. 
A hierarchy edge $(v_i,v_j)$ is redundant when there is a path in $G_{h}$ from $v_i$ to $v_j$ which does not contain the edge $(v_i,v_j)$.
Similarly, an exclusion edge $(v_i,v_j)$ is redundant when there is another exclusion edge connecting their ancestors (or connecting one node’s ancestor to the other node).

We can transform a HEX graph into an equivalent HEX graph by adding or removing redundant edges.
In this paper, we only consider HEX graphs that have a minimal number of edges,
we call such HEX graph a \emph{minimal sparse} HEX graph (see \cref{fig:labelspace}(a) for an example). 
Given a minimal sparse HEX graph, we define the $\mathcal{\text{HEX}}$-property as

\begin{definition}[$\mathcal{\text{HEX}}$-property]
A labeling function $f$ has the HEX property with respect to a HEX graph $G$ if for all $x \in \mathcal{X}$, $f(x)$ respects all constraints represented by $G$.
\end{definition}

We also call such function $f$ \emph{logically consistent} w.r.t $G$. Given the HEX graph and the $\mathcal{\text{HEX}}$-property, structured multi-label prediction is formally defined as a constrained optimization problem.

\begin{definition}[Structured multi-label prediction]
The structured multi-label prediction task with respect to a 
training set $\mathcal{D}=\{\left({x}_{i}, L_{i}\right) \mid 1 \leq i \leq N, $ $ {x}_{i} \in \mathcal{X}, L_i \subset \mathcal{L}\}$, minimal HEX graph $G=\left(V, E_{h}, E_{e}\right)$, and multi-label prediction function $f$, is the task of learning $f$ such that the function $f$ minimizes $\sum_{ (x_i, L_{i}) \in \mathcal{D} } \operatorname{loss}(f({x_i}),L_{i})$, with $\operatorname{loss}$ a predefined function, while attempting to maintain the $\mathcal{\text{HEX}}$-property with respect to $G$.
\end{definition} 

Note that this definition allows for a \emph{soft} interpretation of the constraints, meaning that the goal is to adhere to all of them, but we do allow for loosening some if necessary. For example, a mutual exclusion constraint is allowed to loosen 
when an instance (e.g., image), though rarely happens, is simultaneously labeled as two mutual exclusive labels (e.g., \emph{dog} and \emph{cat}).

\section{Hyperbolic Embedding Inference}

We consider learning a real-valued ranking function $h: \mathcal{X} \times \mathcal{L} \mapsto [0,1] $, where the output is interpreted as the confidence of an instance $x \in \mathcal{X}$ having a label $l \in \mathcal{L}$. Afterward, a binary multi-label classifier $f: \mathcal{X} \rightarrow 2^{\mathcal{L}}$ can be simply obtained by thresholding the ranking function with a threshold $t$, i.e., $f(x)=\{l \mid h\left(x, l\right) \geq t, \forall l \in \mathcal{L}\}$. The objective of $h$ is to assign higher scores to positive instance-label pairs than that of negative instance-label pairs.

\subsection{Geometric construction}

Given an $n$-dimensional Poincaré ball $\mathbb{D}^n$, we associate each instance $x_i \in \mathcal{X}$ with a point in the Poincaré ball and associate each label $l_i \in \mathcal{L}$ with a Poincaré hyperplane, such that its corresponding positive and negative instances are correctly separated by the hyperplane.

\textbf{Poincaré hyperplanes} 

Let $\mathbb{B}^{n}$ denote the set of $n$-balls in $\mathbb{R}^{n}$ whose boundaries $\partial \mathbb{B}^{n}$ intersect the Poincaré ball $\mathbb{D}^{n}$ perpendicularly. 
Poincaré hyperplanes are defined by $\partial \mathbb{B}^{n} \cap \mathbb{D}^{n}$ (see \cref{fig:geocons}(a)) plus all linear subspaces going through the origin. 
For the former cases, a Poincaré hyperplane can be uniquely defined by its center point that has a minimal distance to the origin. 


\begin{definition}
Given a (center) point $\mathbf{c} \in \mathbb{D}^{n}$ where $\mathbf{c} \neq \mathbf{0}$, the Poincaré hyperplane is defined as
\begin{equation} 
\begin{array}{l}
H_{\mathbf{c}}=\left\{\mathbf{p} \in \mathbb{D}^{n}: g^{\mathbb{D}}\left( \log _{\mathbf{c}}\left(\mathbf{p}\right), \vec{\mathbf{c}} \right) =0\right\} 
\end{array}
\end{equation}
where $\mathbf{c}$ is the center point and $\vec{\mathbf{c}} \in T_{\mathbf{c}} \mathbb{D}^{n}$ \footnote{In this paper, we distinguish normal vectors from regular points by adding an arrow on top of its letters.} is the normal vector passing through the origin $\mathbf{0}$. 
\end{definition}
Intuitively, this corresponds to the union of all geodesics passing through $\mathbf{c}$ while orthogonal to the normal vector $\vec{\mathbf{c}} \in T_{\mathbf{c}} \mathbb{D}^{n}$. In the case where $\mathbf{c}$ is the center of the hyperplane, $\vec{\mathbf{c}}$ must simultaneously pass through $\mathbf{c}$ and the origin. 
Hence, $\vec{\mathbf{c}}$ can be simply taken as $\mathbf{c}$ without loss of generality. 
For the special case where $\mathbf{c} = \mathbf{0}$, the Poincaré hyperplanes are all linear subspaces (Euclidean planes) passing through the origin. In this paper, we exclude these special cases by assuming $\mathbf{c} \neq \mathbf{0}$. 




\textbf{Geometric intuition}
Essentially, the Poincaré hyperplane works as a linear decision boundary that separates the embedding space into two regions,\footnote{Note that by using the metric in the Poincaré ball, each region has infinite (exponentially growing) volume.} where the smaller region (i.e., convex hull) is interpreted as the space of positive samples while the other one is interpreted as the space of negative samples. 
Two reasons motivate us to model labels as Poincaré hyperplanes: 1) Modeling labels as hyperplanes has several desired theoretical advantages in margin-based classifiers. 
Our model shares the same philosophy as existing learning frameworks such as hyperbolic logistic regression~\cite{ganea2018hyperbolic} and hyperbolic SVM~\cite{cho2019large}; 
2) More importantly, unlike Euclidean space that is flat, hyperbolic Poincaré ball is a curved space in which there are infinitely many non-parallel hyperplanes which do not intersect, implying that linear decision boundaries in hyperbolic space can capture more complicated set-theoretic interactions, such as implication and mutual exclusion.


\textbf{Enclosing balls} 
Given a Poincaré hyperplane $H_{\mathbf{c}}$, we call the corresponding $n$-ball $\mathbb{B}_{\mathbf{c}}^n$ that encloses $H_{\mathbf{c}}$ its enclosing $n$-ball. 
Formally, an enclosing $n$-ball $\mathbb{B}^n\left(\mathbf{o}, r\right)$ is defined by $\mathbb{B}^n\left(\mathbf{o}, r\right)=\left\{\mathbf{p}: \left\|\mathbf{p}-\mathbf{o}\right\| \leq r\right\}$, where $\mathbf{o} \in \mathbb{R}^n$ and $r$ are the center point and the radius, respectively. Given $H_{\mathbf{c}}$, we have the following closed-form representation of $\mathbb{B}^{n}_{\mathbf{c}}$. 

\begin{proposition}\label{prop:p2ball}
Given a Poincaré hyperplane $H_\mathbf{c}$ where $\mathbf{c} \neq \mathbf{0}$, there exists an $n$-ball $\mathbb{B}_\mathbf{c}^n\left(\mathbf{o}_\mathbf{c}, r_\mathbf{c}\right)$ such that $H_\mathbf{c} \subset \mathbb{B}_\mathbf{c}^n\left(\mathbf{o}_\mathbf{c}, r_\mathbf{c}\right)$, i.e., $H_\mathbf{c}$ is a subset of $\mathbb{B}_\mathbf{c}^n\left(\mathbf{o}_\mathbf{c}, r_\mathbf{c}\right)$. $\mathbb{B}_\mathbf{c}^n$ is uniquely given by
\begin{equation}
    \mathbb{B}_\mathbf{c}^n = \mathbb{B}^n\left(  \frac{\left(1+\|\mathbf{c}\|^{2}\right)}{2\|\mathbf{c}\|}\mathbf{c},  \frac{1-\|\mathbf{c}\|^{2}}{2\|\mathbf{c}\|}\right)
\end{equation}
\end{proposition}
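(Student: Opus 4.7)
The plan is to exploit the classical characterization of totally geodesic hypersurfaces in the Poincaré ball: every such surface (other than a diameter) is the intersection of $\mathbb{D}^n$ with a unique Euclidean $(n-1)$-sphere whose boundary meets $\partial\mathbb{D}^n$ orthogonally. The first step is therefore to argue that $H_\mathbf{c}$ is exactly of this form, so that the existence of an enclosing Euclidean ball $\mathbb{B}_\mathbf{c}^n$ is guaranteed, and only the center $\mathbf{o}_\mathbf{c}$ and radius $r_\mathbf{c}$ need to be pinned down. Moreover, because $\log_\mathbf{c}(\mathbf{c})=\mathbf{0}$, the defining equation $g^{\mathbb{D}}(\log_\mathbf{c}(\mathbf{p}),\vec{\mathbf{c}})=0$ is satisfied at $\mathbf{p}=\mathbf{c}$, and because $\vec{\mathbf{c}}$ is taken to be the radial direction at $\mathbf{c}$, the hyperplane meets the origin-ray through $\mathbf{c}$ orthogonally at $\mathbf{c}$. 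This identifies $\mathbf{c}$ as the (unique) point of $H_\mathbf{c}$ closest to $\mathbf{0}$.

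Next I would use the rotational symmetry of the picture around the axis spanned by $\mathbf{c}$: both the ambient unit sphere $\partial\mathbb{D}^n$ and the hyperplane $H_\mathbf{c}$ are invariant under rotations fixing this axis, so the center of $\mathbb{B}_\mathbf{c}^n$ must lie on it. Writing $\mathbf{o}_\mathbf{c}=\lambda\,\mathbf{c}/\|\mathbf{c}\|$ with $\lambda>0$ on the far side of $\mathbf{c}$ from the origin reduces the problem to two scalar equations:
\begin{align*}
\lambda - \|\mathbf{c}\| &= r_\mathbf{c} \qquad\text{(the sphere passes through }\mathbf{c}\text{)}, \\
\lambda^2 - r_\mathbf{c}^2 &= 1 \qquad\text{(Pythagoras on the triangle }\mathbf{0},\,\mathbf{o}_\mathbf{c},\,\mathbf{q}\text{, for any }\mathbf{q}\in\partial\mathbb{B}_\mathbf{c}^n\cap\partial\mathbb{D}^n\text{)}.
\end{align*}
The second equation encodes that at any intersection point $\mathbf{q}$ with the unit sphere, the two radii $\mathbf{q}-\mathbf{0}$ and $\mathbf{q}-\mathbf{o}_\mathbf{c}$ are orthogonal, so $\|\mathbf{o}_\mathbf{c}\|^2=1+r_\mathbf{c}^2$.

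Solving is elementary: factor $\lambda^2-r_\mathbf{c}^2=(\lambda-r_\mathbf{c})(\lambda+r_\mathbf{c})=1$ and substitute $\lambda-r_\mathbf{c}=\|\mathbf{c}\|$ to get $\lambda+r_\mathbf{c}=1/\|\mathbf{c}\|$. Adding and subtracting then yields
\[
\lambda = \frac{1+\|\mathbf{c}\|^2}{2\|\mathbf{c}\|}, \qquad r_\mathbf{c} = \frac{1-\|\mathbf{c}\|^2}{2\|\mathbf{c}\|},
\]
and translating $\lambda\,\mathbf{c}/\|\mathbf{c}\|$ back to vector form gives the stated $\mathbf{o}_\mathbf{c}$ (modulo the harmless rescaling implicit in writing $\lambda\,\hat{\mathbf{c}}$ as a scalar multiple of $\mathbf{c}$). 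Since $\|\mathbf{c}\|<1$, both $\lambda$ and $r_\mathbf{c}$ are well-defined and positive, so the ball $\mathbb{B}^n(\mathbf{o}_\mathbf{c},r_\mathbf{c})$ exists and is unique.

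The step I expect to be the main obstacle is the first one: cleanly justifying, from the \emph{Riemannian} definition via $\log_\mathbf{c}$ and $g^{\mathbb{D}}$, that $H_\mathbf{c}$ coincides with the Euclidean sphere section picked out in the classical model. The bridge is the fact that geodesics of $\mathbb{D}^n$ are arcs of Euclidean circles orthogonal to $\partial\mathbb{D}^n$ (with diameters as the degenerate case), together with the conformality of $g^{\mathbb{D}}$: orthogonality in $g^{\mathbb{D}}$ at $\mathbf{c}$ coincides with Euclidean orthogonality at $\mathbf{c}$ because the metric is a positive scalar multiple of the Euclidean one. Once this identification is made, the rest of the argument is a short symmetry-plus-algebra computation as sketched above.
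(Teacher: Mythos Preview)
Your approach is essentially the same as the paper's: both set up the two scalar equations $\lambda-\|\mathbf{c}\|=r_\mathbf{c}$ (the enclosing sphere passes through $\mathbf{c}$, with center on the ray through $\mathbf{c}$) and $\lambda^2=1+r_\mathbf{c}^2$ (Pythagoras at an intersection point $\mathbf{q}\in\partial\mathbb{B}_\mathbf{c}^n\cap\partial\mathbb{D}^n$, using orthogonal intersection), and solve. You are more explicit than the paper about why the center lies on the axis (rotational symmetry) and about the identification of $H_\mathbf{c}$ with a Euclidean sphere section via conformality, and your factoring $(\lambda-r_\mathbf{c})(\lambda+r_\mathbf{c})=1$ is a bit slicker than the paper's direct quadratic solve, but the geometric content is identical.
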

\textit{Proof sketch.} The key idea is to solve a quadratic equation given by the fact that the radius of $\mathbb{B}_\mathbf{c}^n$, the radius of $\mathbb{D}^n$, and the distance from the center of $\mathbb{D}^n$ to the center of $\mathbb{B}_\mathbf{c}^n$ must satisfy the Pythagorean theorem~\cite{kadison2002pythagorean}.
Full proof is in the supplementary material.



\subsection{Geometric interpretation}

Our main idea is to transform the logical relationships between labels into geometric relationships between their corresponding enclosing $n$-balls.
In particular, the implication is modeled by the geometric insideness while the mutual exclusion is modeled by the geometric disjointness. 

\begin{figure}
    \centering
    {\includegraphics[width=\textwidth]{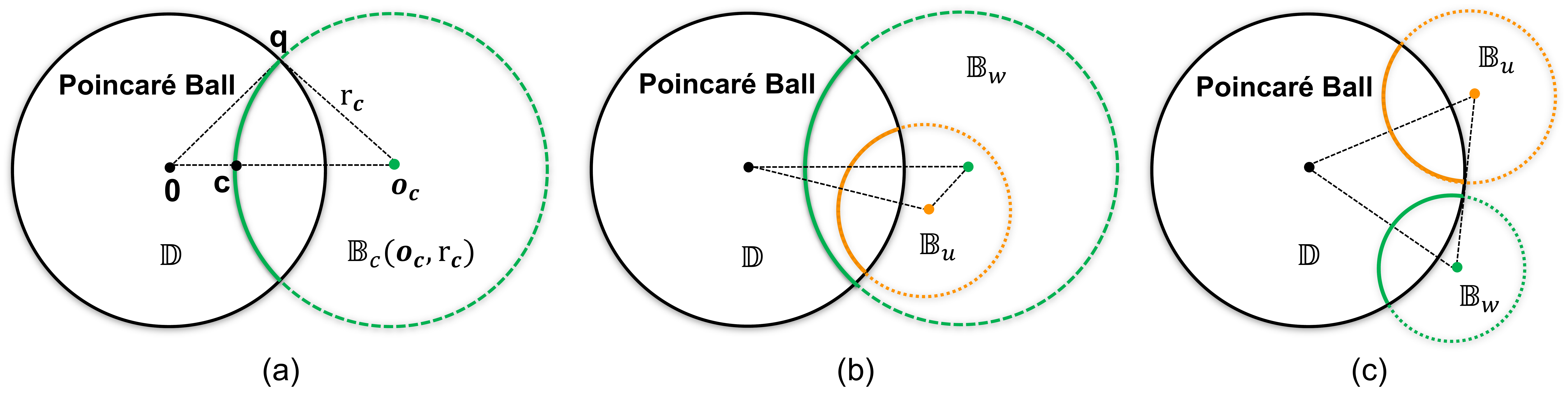}}
    \caption{(a) A Poincaré hyperplane is defined as the intersection between the Poincaré ball $\mathbb{D}$ and the boundary of an $n$-ball $\mathbb{B}_\mathbf{c}$. The Poincaré hyperplane is uniquely parameterized by a center point $\mathbf{c}$, and the corresponding $n$-ball (its radius and center) can be uniquely determined by \cref{prop:p2ball}.
    (b) Label implication is interpreted as $n$-ball insideness. (c) Mutual exclusion is interpreted as $n$-ball disjointedness.}
    \label{fig:geocons}
    \vspace{-0.4cm}
\end{figure}

\textbf{Implication} 
The logical implication between two labels is interpreted as geometric relations between $n$-balls, i.e., $n$-ball insideness illustrated in \cref{fig:geocons}(b).
In particular, an $n$-ball $\mathbb{B}_\mathbf{w}\left(\mathbf{o}_{\mathbf{w}}, r_{\mathbf{w}}\right)$ contains $\mathbb{B}_\mathbf{u}\left(\mathbf{o}_{\mathbf{u}}, r_{\mathbf{u}}\right)$ if and only if $\left\|\mathbf{o}_{\mathbf{u}}-\mathbf{o}_{\mathbf{w}}\right\|+r_{\mathbf{u}}<r_{\mathbf{w}}$, and thus we can create an insideness loss defined by

\begin{equation}\label{eq:implication}
\mathcal{L}_{\text {inside}}(\mathbb{B}_\mathbf{u}, \mathbb{B}_\mathbf{w}  )  = 
\max\{0, \left\|\mathbf{o}_{\mathbf{u}}-\mathbf{o}_{\mathbf{w}}\right\| + r_{\mathbf{u}} - r_{\mathbf{w}} \}.
\end{equation}
Clearly, the insideness loss term satisfies the properties of correctness and transitivity
\begin{lemma}[Correctness]
$\mathbb{B}_\mathbf{u}$ is inside of $\mathbb{B}_\mathbf{w} $ if and only if $\mathcal{L}_{\text {inside}}(\mathbb{B}_\mathbf{u}, \mathbb{B}_\mathbf{w} ) = 0$. 
\end{lemma}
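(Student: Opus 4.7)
The plan is to establish both directions of the equivalence by leveraging the triangle inequality and a careful choice of extremal point on the boundary of $\mathbb{B}_\mathbf{u}$. The statement is essentially a standard fact about Euclidean $n$-balls, so no nontrivial hyperbolic machinery is needed here; the enclosing balls $\mathbb{B}_\mathbf{u}, \mathbb{B}_\mathbf{w}$ are Euclidean objects in $\mathbb{R}^n$ (by Proposition~\ref{prop:p2ball}) and set-containment of balls in $\mathbb{R}^n$ is governed entirely by the quantity $\|\mathbf{o}_\mathbf{u}-\mathbf{o}_\mathbf{w}\| + r_\mathbf{u} - r_\mathbf{w}$.

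For the ``if'' direction, I would assume $\mathcal{L}_\text{inside}(\mathbb{B}_\mathbf{u},\mathbb{B}_\mathbf{w})=0$, which by the definition of the $\max$ means $\|\mathbf{o}_\mathbf{u}-\mathbf{o}_\mathbf{w}\| + r_\mathbf{u} \le r_\mathbf{w}$. Pick an arbitrary $\mathbf{p}\in\mathbb{B}_\mathbf{u}$, so that $\|\mathbf{p}-\mathbf{o}_\mathbf{u}\|\le r_\mathbf{u}$. Applying the triangle inequality, $\|\mathbf{p}-\mathbf{o}_\mathbf{w}\| \le \|\mathbf{p}-\mathbf{o}_\mathbf{u}\| + \|\mathbf{o}_\mathbf{u}-\mathbf{o}_\mathbf{w}\| \le r_\mathbf{u} + \|\mathbf{o}_\mathbf{u}-\mathbf{o}_\mathbf{w}\| \le r_\mathbf{w}$, so $\mathbf{p}\in\mathbb{B}_\mathbf{w}$, giving $\mathbb{B}_\mathbf{u}\subseteq\mathbb{B}_\mathbf{w}$.

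For the ``only if'' direction, I would assume $\mathbb{B}_\mathbf{u}\subseteq\mathbb{B}_\mathbf{w}$ and exhibit a witness point on $\partial\mathbb{B}_\mathbf{u}$ that saturates the triangle inequality. If $\mathbf{o}_\mathbf{u}\neq\mathbf{o}_\mathbf{w}$, let $\mathbf{p}^\star := \mathbf{o}_\mathbf{u} + r_\mathbf{u}\,\tfrac{\mathbf{o}_\mathbf{u}-\mathbf{o}_\mathbf{w}}{\|\mathbf{o}_\mathbf{u}-\mathbf{o}_\mathbf{w}\|}$, the point on $\partial\mathbb{B}_\mathbf{u}$ lying on the ray from $\mathbf{o}_\mathbf{w}$ through $\mathbf{o}_\mathbf{u}$. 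A direct computation gives $\|\mathbf{p}^\star - \mathbf{o}_\mathbf{w}\| = \|\mathbf{o}_\mathbf{u}-\mathbf{o}_\mathbf{w}\| + r_\mathbf{u}$. Since $\mathbf{p}^\star\in\mathbb{B}_\mathbf{u}\subseteq\mathbb{B}_\mathbf{w}$, this distance is bounded by $r_\mathbf{w}$, yielding $\|\mathbf{o}_\mathbf{u}-\mathbf{o}_\mathbf{w}\|+r_\mathbf{u} \le r_\mathbf{w}$, hence $\mathcal{L}_\text{inside}=0$. The degenerate case $\mathbf{o}_\mathbf{u}=\mathbf{o}_\mathbf{w}$ is even simpler: any point $\mathbf{p}^\star$ on $\partial\mathbb{B}_\mathbf{u}$ satisfies $\|\mathbf{p}^\star-\mathbf{o}_\mathbf{w}\| = r_\mathbf{u}$, and containment forces $r_\mathbf{u}\le r_\mathbf{w}$, so again the loss vanishes.

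No step is a real obstacle; the only small subtlety is handling the coincident-centers case separately to avoid dividing by zero when constructing the extremal witness $\mathbf{p}^\star$. The whole argument is two short paragraphs and relies only on the triangle inequality plus the attained upper bound of $\|\,\cdot\, - \mathbf{o}_\mathbf{w}\|$ over $\mathbb{B}_\mathbf{u}$.
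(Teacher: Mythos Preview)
Your proof is correct and essentially matches the paper's treatment: the paper does not spell out a detailed argument but simply asserts that the lemma ``immediately follow[s] the definitions of geometric insideness,'' having already stated the standard characterization $\mathbb{B}_\mathbf{u}\subseteq\mathbb{B}_\mathbf{w}\iff \|\mathbf{o}_\mathbf{u}-\mathbf{o}_\mathbf{w}\|+r_\mathbf{u}\le r_\mathbf{w}$ just before defining the loss. Your triangle-inequality argument and extremal-point witness are exactly the elementary justification underlying that claim.
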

\begin{lemma}[Transitivity]
If $\mathcal{L}_{\text {inside}}(\mathbb{B}_\mathbf{u}, \mathbb{B}_\mathbf{w} ) = 0$ and $\mathcal{L}_{\text {inside}}(\mathbb{B}_\mathbf{w}, \mathbb{B}_\mathbf{v} ) = 0$, we have $\mathcal{L}_{\text {inside}}(\mathbb{B}_\mathbf{u}, \mathbb{B}_\mathbf{v} ) \leq \mathcal{L}_{\text {inside}}(\mathbb{B}_\mathbf{u}, \mathbb{B}_\mathbf{w} ) + \mathcal{L}_{\text {inside}}(\mathbb{B}_\mathbf{w}, \mathbb{B}_\mathbf{v} ) \leq  \mathcal{L}_{\text {inside}}(\mathbb{B}_\mathbf{w} $ $, \mathbb{B}_\mathbf{v} ) = 0$.
\end{lemma}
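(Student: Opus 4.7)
The plan is to establish the chain of inequalities in order. The first inequality, a triangle-type bound $\mathcal{L}_{\text{inside}}(\mathbb{B}_\mathbf{u}, \mathbb{B}_\mathbf{v}) \leq \mathcal{L}_{\text{inside}}(\mathbb{B}_\mathbf{u}, \mathbb{B}_\mathbf{w}) + \mathcal{L}_{\text{inside}}(\mathbb{B}_\mathbf{w}, \mathbb{B}_\mathbf{v})$, is the main content and holds unconditionally (no hypothesis needed); the second inequality is immediate from the first hypothesis $\mathcal{L}_{\text{inside}}(\mathbb{B}_\mathbf{u}, \mathbb{B}_\mathbf{w})=0$; and the final equality is the second hypothesis. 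Thus, essentially all of the work is in the triangle-type bound.

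To derive the triangle bound, I would unfold the definition and begin with the standard Euclidean triangle inequality applied to the ball centers:
\begin{equation*}
\|\mathbf{o}_\mathbf{u} - \mathbf{o}_\mathbf{v}\| \leq \|\mathbf{o}_\mathbf{u} - \mathbf{o}_\mathbf{w}\| + \|\mathbf{o}_\mathbf{w} - \mathbf{o}_\mathbf{v}\|.
\end{equation*}
Adding $r_\mathbf{u} - r_\mathbf{v}$ to both sides and using the trivial identity $r_\mathbf{u}-r_\mathbf{v} = (r_\mathbf{u}-r_\mathbf{w}) + (r_\mathbf{w}-r_\mathbf{v})$, the quantity inside the outer $\max\{0,\cdot\}$ in $\mathcal{L}_{\text{inside}}(\mathbb{B}_\mathbf{u},\mathbb{B}_\mathbf{v})$ is bounded above by a sum of two quantities, each of which appears inside the corresponding $\max\{0,\cdot\}$ in $\mathcal{L}_{\text{inside}}(\mathbb{B}_\mathbf{u},\mathbb{B}_\mathbf{w})$ and $\mathcal{L}_{\text{inside}}(\mathbb{B}_\mathbf{w},\mathbb{B}_\mathbf{v})$.

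The remaining step is the subadditivity of the ReLU-like function $\max\{0,\cdot\}$, i.e.\ $\max\{0, a+b\} \leq \max\{0,a\} + \max\{0,b\}$ for all $a,b \in \mathbb{R}$. This follows because $\max\{0,a\}+\max\{0,b\}$ is nonnegative and also dominates $a+b$ term by term. Combining this with the bound from the previous paragraph yields the triangle inequality $\mathcal{L}_{\text{inside}}(\mathbb{B}_\mathbf{u}, \mathbb{B}_\mathbf{v}) \leq \mathcal{L}_{\text{inside}}(\mathbb{B}_\mathbf{u}, \mathbb{B}_\mathbf{w}) + \mathcal{L}_{\text{inside}}(\mathbb{B}_\mathbf{w}, \mathbb{B}_\mathbf{v})$. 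Plugging in the two hypotheses then collapses the chain to $0 \leq 0+0 \leq 0 = 0$, completing the proof.

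I do not anticipate a genuine obstacle here: all ingredients (Euclidean triangle inequality, subadditivity of $\max\{0,\cdot\}$) are elementary. The only mild subtlety is the bookkeeping of inserting $\pm r_\mathbf{w}$ so that the two summands align exactly with the defining expressions of $\mathcal{L}_{\text{inside}}(\mathbb{B}_\mathbf{u},\mathbb{B}_\mathbf{w})$ and $\mathcal{L}_{\text{inside}}(\mathbb{B}_\mathbf{w},\mathbb{B}_\mathbf{v})$. Note also that the triangle bound derived here holds in general (regardless of the hypotheses), which makes $\mathcal{L}_{\text{inside}}$ behave like a pseudo-distance on the enclosing balls — a useful structural observation beyond the stated transitivity.
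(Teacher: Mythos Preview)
Your proposal is correct. The paper does not give a detailed proof of this lemma, stating only that it (together with the neighboring lemmas) ``immediately follow[s] the definitions of geometric insideness, disjointedness, and membership''; your argument via the Euclidean triangle inequality on the centers, the $\pm r_\mathbf{w}$ bookkeeping, and subadditivity of $\max\{0,\cdot\}$ is exactly the natural unpacking of that claim.
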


\textbf{Mutual exclusion}
Similarly, we interpret mutual exclusion as geometric disconnectedness between $n$-balls illustrated in \cref{fig:geocons}(c).
$\mathbb{B}_\mathbf{u}$ disconnecting from $\mathbb{B}_\mathbf{w}$ can be measured by subtracting the distance between their center points from the sum of their radii. Inversely, the corresponding loss is
\begin{equation}\label{eq:exclusion}
    \mathcal{L}_{\text{disjoint}}(\mathbb{B}_\mathbf{u}, \mathbb{B}_\mathbf{w}) =  \max\{0, r_{\mathbf{w}} + r_{\mathbf{u}} - \|\mathbf{o}_{\mathbf{u}}- \mathbf{o}_{\mathbf{w}}\|\}
\end{equation}
Again, the disjointedness loss term satisfies the correctness property
\begin{lemma}[Correctness]
$\mathbb{B}_\mathbf{u}$ disconnects from $\mathbb{B}_\mathbf{w}$ if and only if $\mathcal{L}_{\text{disjoint}}(\mathbb{B}_\mathbf{u}, $ $\mathbb{B}_\mathbf{w}) = 0$.
\end{lemma}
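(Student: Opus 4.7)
The statement is the standard Euclidean fact that two closed balls fail to meet precisely when the distance between their centres is at least the sum of their radii. The plan is to prove both directions directly, with the forward implication handled by exhibiting a witness point on the line segment joining the two centres, and the converse by the triangle inequality.

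First, I would unfold the hinge: note that $\mathcal{L}_{\text{disjoint}}(\mathbb{B}_\mathbf{u}, \mathbb{B}_\mathbf{w}) = 0$ is equivalent to the inequality
\begin{equation*}
\|\mathbf{o}_{\mathbf{u}} - \mathbf{o}_{\mathbf{w}}\| \;\geq\; r_{\mathbf{u}} + r_{\mathbf{w}},
\end{equation*}
so the claim reduces to the equivalence between disjointness of $\mathbb{B}_\mathbf{u}$ and $\mathbb{B}_\mathbf{w}$ and this inequality.

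For the $(\Leftarrow)$ direction, assume the inequality holds and take any $\mathbf{p} \in \mathbb{B}_\mathbf{u} \cap \mathbb{B}_\mathbf{w}$. The triangle inequality gives $\|\mathbf{o}_{\mathbf{u}} - \mathbf{o}_{\mathbf{w}}\| \leq \|\mathbf{o}_{\mathbf{u}} - \mathbf{p}\| + \|\mathbf{p} - \mathbf{o}_{\mathbf{w}}\| \leq r_{\mathbf{u}} + r_{\mathbf{w}}$, forcing equality throughout; this pins $\mathbf{p}$ to the unique tangent point on the segment joining the centres, so the intersection is at most a single boundary point. For the $(\Rightarrow)$ direction I would argue the contrapositive: if $\|\mathbf{o}_{\mathbf{u}} - \mathbf{o}_{\mathbf{w}}\| < r_{\mathbf{u}} + r_{\mathbf{w}}$, the point
\begin{equation*}
\mathbf{p} \;=\; \mathbf{o}_{\mathbf{u}} + \tfrac{r_{\mathbf{u}}}{r_{\mathbf{u}} + r_{\mathbf{w}}}\bigl(\mathbf{o}_{\mathbf{w}} - \mathbf{o}_{\mathbf{u}}\bigr)
\end{equation*}
satisfies $\|\mathbf{p} - \mathbf{o}_{\mathbf{u}}\| < r_{\mathbf{u}}$ and $\|\mathbf{p} - \mathbf{o}_{\mathbf{w}}\| < r_{\mathbf{w}}$, so $\mathbf{p}$ lies in both closed balls and they are not disjoint.

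The only subtle point, which I would flag explicitly, is the boundary case $\|\mathbf{o}_{\mathbf{u}} - \mathbf{o}_{\mathbf{w}}\| = r_{\mathbf{u}} + r_{\mathbf{w}}$: with closed balls the intersection is a single tangent point, while the loss still vanishes. I would resolve this either by interpreting ``$\mathbb{B}_{\mathbf{u}}$ disconnects from $\mathbb{B}_{\mathbf{w}}$'' as emptiness of their interiors (which matches the author's informal usage), or by replacing $\leq$ by $<$ in the ball definition where appropriate. This is the only delicate step; the rest of the argument is routine, and no non-Euclidean reasoning is required since the loss is defined in terms of the ambient Euclidean distance between enclosing $n$-ball centres.
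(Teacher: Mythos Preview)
Your proof is correct and considerably more detailed than the paper's own treatment: the paper simply declares that Lemma~3 ``immediately follow[s] the definitions of geometric \ldots\ disjointedness,'' effectively taking the center-distance criterion $\|\mathbf{o}_{\mathbf{u}} - \mathbf{o}_{\mathbf{w}}\| \geq r_{\mathbf{u}} + r_{\mathbf{w}}$ as the operative meaning of ``disconnects.'' You instead supply the underlying Euclidean argument (triangle inequality one way, explicit witness point the other), which is a genuine addition. Your flagging of the tangency case is apt---the paper defines $\mathbb{B}^n(\mathbf{o},r)$ with $\leq$, so closed balls, yet treats the lemma as definitional without comment; your proposed resolution (read ``disconnects'' as interior-disjoint) matches the paper's informal intent and is the right way to close the gap.
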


\subsection{Classification and learning}\label{sec:classification}

Given the embeddings of instances and labels, an instance can be classified by measuring the \emph{geometric membership}, i.e., the confidence of a point $\mathbf{p} \in \mathbb{D}^n$ being inside the enclosing ball $\mathbb{B}$. 

\textbf{Membership and non-membership} Formally, given an instance embedding $\mathbf{p} \in \mathbb{D}^n$ and a label embedding associated with an enclosing $n$-ball $\mathbb{B}_{\mathbf{c}}$. The confidence of an instance $\mathbf{p}$ being inside the enclosing $n$-ball $\mathbb{B}_\mathbf{c}$ can be measured by subtracting the distance between the center point of $\mathbb{B}_\mathbf{c}$ and $\mathbf{p}$ from the radius of $\mathbb{B}_\mathbf{c}$. The corresponding loss is defined as the inverse of the measure, given by
\begin{equation}\label{eq:membership}
\mathcal{L}_{\text {membership}}\left(\mathbf{p}, \mathbb{B}_\mathbf{c}\left(\mathbf{o}_\mathbf{c}, r_\mathbf{c}\right) \right)  = \max\{0, \|\mathbf{o}_\mathbf{c}-\mathbf{p}\| - r_\mathbf{c}\}.
\end{equation}
Symmetrically, for negative instance-label relations, the loss of non-membership can be defined as 
\begin{equation}\label{eq:non_membership}
\mathcal{L}_{\text {non-membership}}\left(\mathbf{p}, \mathbb{B}_\mathbf{c}\left(\mathbf{o}_\mathbf{c}, r_\mathbf{c}\right) \right)  = \max\{0, r_\mathbf{c} - \|\mathbf{o}_\mathbf{c}-\mathbf{p}\|\}.
\end{equation}
Clearly, we have the following properties that follow directly from the definitions. 
\begin{lemma}\label{lemma:4}
A point $\mathbf{p}$ is a member of $\mathbb{B}_\mathbf{c}$ if and only if $\mathcal{L}_{\text {membership}}\left(\mathbf{p}, \mathbb{B}_\mathbf{c}\right) = 0$. 
\end{lemma}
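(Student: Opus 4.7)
The plan is to prove this as a direct unfolding of definitions, since both sides of the biconditional reduce to the same inequality on the Euclidean distance $\|\mathbf{o}_\mathbf{c}-\mathbf{p}\|$ and the radius $r_\mathbf{c}$. The key observation is that the $n$-ball $\mathbb{B}_\mathbf{c}(\mathbf{o}_\mathbf{c}, r_\mathbf{c})$ was defined earlier as $\{\mathbf{p}: \|\mathbf{p}-\mathbf{o}_\mathbf{c}\| \leq r_\mathbf{c}\}$, so membership of $\mathbf{p}$ in $\mathbb{B}_\mathbf{c}$ is, by definition, the inequality $\|\mathbf{o}_\mathbf{c}-\mathbf{p}\| \leq r_\mathbf{c}$.

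First, I would rewrite the loss explicitly as $\mathcal{L}_{\text{membership}}(\mathbf{p}, \mathbb{B}_\mathbf{c}) = \max\{0, \|\mathbf{o}_\mathbf{c}-\mathbf{p}\| - r_\mathbf{c}\}$. Then I would use the elementary fact that for any real number $a$, we have $\max\{0, a\} = 0$ if and only if $a \leq 0$. Applying this with $a = \|\mathbf{o}_\mathbf{c}-\mathbf{p}\| - r_\mathbf{c}$, we obtain $\mathcal{L}_{\text{membership}}(\mathbf{p}, \mathbb{B}_\mathbf{c}) = 0 \iff \|\mathbf{o}_\mathbf{c}-\mathbf{p}\| - r_\mathbf{c} \leq 0 \iff \|\mathbf{o}_\mathbf{c}-\mathbf{p}\| \leq r_\mathbf{c} \iff \mathbf{p} \in \mathbb{B}_\mathbf{c}$.

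Both directions of the biconditional follow simultaneously from this chain of equivalences, so the proof is a single short paragraph with no case distinction required. There is no real obstacle here; the statement is essentially a sanity check verifying that the loss function $\mathcal{L}_{\text{membership}}$ is a faithful indicator (in the sense of vanishing exactly on the intended set) of geometric membership in the Euclidean enclosing $n$-ball. The only thing to be careful about is to use the Euclidean norm $\|\cdot\|$ consistently (as in the definition of $\mathbb{B}^n$ given just before Proposition~\ref{prop:p2ball}), rather than any hyperbolic distance, since the enclosing balls live in the ambient $\mathbb{R}^n$.
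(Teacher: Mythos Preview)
Your proposal is correct and matches the paper's approach exactly: the paper simply states that Lemmas~4--5 ``immediately follow the definitions of \ldots\ membership,'' and your unfolding of $\max\{0,\|\mathbf{o}_\mathbf{c}-\mathbf{p}\|-r_\mathbf{c}\}=0 \iff \|\mathbf{o}_\mathbf{c}-\mathbf{p}\|\le r_\mathbf{c}$ is precisely that definitional check made explicit.
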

\begin{lemma}\label{lemma:5}
A point $\mathbf{p}$ is not a member of $\mathbb{B}_\mathbf{c}$ if and only if $\mathcal{L}_{\text {non-membership}}\left(\mathbf{p}, \mathbb{B}_\mathbf{c} \right)$ $ = 0$. 
\end{lemma}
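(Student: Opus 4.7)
The plan is to prove Lemma 5 by directly unfolding the definitions of the enclosing ball, of non-membership, and of the non-membership loss, and verifying both directions of the biconditional. Since the enclosing $n$-ball is defined in the paper as $\mathbb{B}^n(\mathbf{o}_{\mathbf{c}}, r_{\mathbf{c}}) = \{\mathbf{q} : \|\mathbf{q} - \mathbf{o}_{\mathbf{c}}\| \leq r_{\mathbf{c}}\}$, ``$\mathbf{p}$ is not a member of $\mathbb{B}_{\mathbf{c}}$'' is exactly the statement $\|\mathbf{p} - \mathbf{o}_{\mathbf{c}}\| > r_{\mathbf{c}}$. The loss is $\mathcal{L}_{\text{non-membership}}(\mathbf{p}, \mathbb{B}_{\mathbf{c}}) = \max\{0,\; r_{\mathbf{c}} - \|\mathbf{o}_{\mathbf{c}} - \mathbf{p}\|\}$, so the equivalence ultimately reduces to the elementary fact that $\max\{0, a\} = 0$ iff $a \leq 0$.

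For the forward direction, I would assume $\mathbf{p} \notin \mathbb{B}_{\mathbf{c}}$, which by definition gives $\|\mathbf{o}_{\mathbf{c}} - \mathbf{p}\| > r_{\mathbf{c}}$, hence $r_{\mathbf{c}} - \|\mathbf{o}_{\mathbf{c}} - \mathbf{p}\| < 0$, and therefore $\mathcal{L}_{\text{non-membership}}(\mathbf{p}, \mathbb{B}_{\mathbf{c}}) = \max\{0, r_{\mathbf{c}} - \|\mathbf{o}_{\mathbf{c}} - \mathbf{p}\|\} = 0$. For the backward direction, I would assume $\mathcal{L}_{\text{non-membership}}(\mathbf{p}, \mathbb{B}_{\mathbf{c}}) = 0$, which forces $r_{\mathbf{c}} - \|\mathbf{o}_{\mathbf{c}} - \mathbf{p}\| \leq 0$, that is, $\|\mathbf{o}_{\mathbf{c}} - \mathbf{p}\| \geq r_{\mathbf{c}}$, so $\mathbf{p}$ lies outside the open interior of $\mathbb{B}_{\mathbf{c}}$, and is therefore not a member in the intended geometric sense.

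The only subtle point, and thus the only real obstacle, is the boundary case $\|\mathbf{o}_{\mathbf{c}} - \mathbf{p}\| = r_{\mathbf{c}}$: under the closed-ball convention stated earlier, such a $\mathbf{p}$ is a member of $\mathbb{B}_{\mathbf{c}}$, yet it also makes the non-membership loss vanish. I would address this exactly the way Lemma~4 must be handled for symmetry: either (i) treat $\mathbf{p}$ as a \emph{non}-member here because it is not strictly interior, which aligns the two lemmas consistently; or (ii) note that the boundary forms a measure-zero set in $\mathbb{R}^n$ and is irrelevant for a learned classifier parameterized by continuously distributed embeddings. Either way, the reconciliation is a one-line remark and the core biconditional is established by the two implications above.
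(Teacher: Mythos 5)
Your proof is correct and takes the same route as the paper, which simply states that Lemmas 4--5 ``immediately follow the definitions of \ldots membership'' --- i.e., the intended argument is exactly your unfolding of $\max\{0, r_\mathbf{c} - \|\mathbf{o}_\mathbf{c}-\mathbf{p}\|\} = 0 \Leftrightarrow \|\mathbf{o}_\mathbf{c}-\mathbf{p}\| \geq r_\mathbf{c}$. Your observation about the boundary case $\|\mathbf{o}_\mathbf{c}-\mathbf{p}\| = r_\mathbf{c}$ is a genuine sharpening the paper glosses over (under the closed-ball convention the stated biconditional fails there), and your proposed one-line reconciliation is the right fix.
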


\textbf{Lemma 1-2, Lemma 3, Lemma 4-5} immediately follow the definitions of geometric insideness, disjointedness, and membership, respectively.

We aim to learn an encoder $E_{\theta}$ (i.e., a hyperbolic neural network whose designs depend on the datasets), where $\theta$ is the trainable parameter, and a function $\mathcal{C}$ which maps labels to the center points of the corresponding Poincaré hyperplanes in the Poincaré ball. 

Now, we define
\begin{equation}
    h(x, l) =  \sigma \left( \mathcal{L}_{\text {non-membership}}\left(E_{\theta}(x), C(l)\right) - \mathcal{L}_{\text {membership}}\left(E_{\theta}(x), C(l)\right) \right)
\end{equation}
as our ranking function, where $\sigma$ is the sigmoid function. 
The final classification function is then defined by $f(x) = \{l \mid h\left(x, l\right) \geq 0.5\}$. 
We call our classifier \emph{hyperbolic multi-label embedding inference} (HMI). Given a HEX graph, HMI has the following guarantee. 

\begin{proposition}[HEX-property]
The classification function $f$ of HMI has the HEX property with respect to $G$ if for every constraint in $G$, the corresponding loss term is $0$. 
\end{proposition}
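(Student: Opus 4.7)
The plan is to reduce the HEX-property of the classifier $f$ to the correctness lemmas already established for the four loss terms (Lemmas 1--5), by showing that the decision rule of $f$ coincides with geometric membership in the enclosing $n$-balls $\mathbb{B}_{C(l)}$. The key bridge is that the sigmoid $\sigma$ is strictly increasing, so $l\in f(x)$ iff $\mathcal{L}_{\text{non-membership}}(E_\theta(x), C(l)) \geq \mathcal{L}_{\text{membership}}(E_\theta(x), C(l))$. Because exactly one of these two losses is nonzero outside the boundary of $\mathbb{B}_{C(l)}$, the decision rule becomes: $l\in f(x)$ iff $E_\theta(x)\in \mathbb{B}_{C(l)}$ (as a closed ball). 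Concretely, if $E_\theta(x)$ lies strictly inside $\mathbb{B}_{C(l)}$ then $\mathcal{L}_{\text{membership}}=0$ and $\mathcal{L}_{\text{non-membership}}>0$, giving $l\in f(x)$ by Lemma~4; if it lies strictly outside, the symmetric argument using Lemma~5 yields $l\notin f(x)$; on the boundary both losses vanish and $\sigma(0)=0.5$ places the label in $f(x)$.

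Given this characterization, I would handle each edge type separately. For an implication edge $l_u\Rightarrow l_w$ whose constraint loss is zero, Lemma~1 (correctness of $\mathcal{L}_{\text{inside}}$) yields $\mathbb{B}_{C(l_u)}\subseteq \mathbb{B}_{C(l_w)}$. Then for every $x\in\mathcal{X}$, if $l_u\in f(x)$, we have $E_\theta(x)\in \mathbb{B}_{C(l_u)} \subseteq \mathbb{B}_{C(l_w)}$, so $l_w\in f(x)$, which is exactly the implication constraint. For a mutual exclusion edge $\neg l_u \vee \neg l_w$ whose disjointness loss is zero, Lemma~3 (correctness of $\mathcal{L}_{\text{disjoint}}$) gives $r_{C(l_u)} + r_{C(l_w)} \leq \|\mathbf{o}_{C(l_u)} - \mathbf{o}_{C(l_w)}\|$. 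Suppose for contradiction that both $l_u,l_w\in f(x)$; then $\|E_\theta(x)-\mathbf{o}_{C(l_u)}\|\leq r_{C(l_u)}$ and $\|E_\theta(x)-\mathbf{o}_{C(l_w)}\|\leq r_{C(l_w)}$, so the triangle inequality gives $\|\mathbf{o}_{C(l_u)}-\mathbf{o}_{C(l_w)}\|\leq r_{C(l_u)}+r_{C(l_w)}$, forcing equality throughout. Hence the two balls meet at a single boundary point, which is a measure-zero configuration; under the natural strict interpretation of disjointness (or open-ball membership) this yields the desired contradiction. Combining the two arguments over all edges of $G$ shows that $f$ satisfies every constraint encoded in $G$, establishing the HEX-property.

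The one delicate point, and the only obstacle I anticipate, is precisely this boundary case for mutual exclusion: with the $\max\{0,\cdot\}$ formulation the loss can be zero while the two closed balls share a tangent point, and our decision rule admits closed-ball membership. I would address this by either (i) adopting the natural convention that Lemma~3 refers to the open interiors of the balls (consistent with the ``disconnects'' phrasing), or (ii) noting that the set of instances landing on such a tangent point has measure zero under any absolutely continuous embedding distribution, so the HEX-property holds almost everywhere. An analogous remark applies to the implication case when $E_\theta(x)$ lies exactly on $\partial \mathbb{B}_{C(l_u)}$, which is already handled since inclusion of closed balls propagates boundary points. No further technical machinery is needed; the proof is essentially a direct unpacking of Lemmas~1, 3, 4, and 5 through the decision rule.
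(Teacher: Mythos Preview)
Your proposal is correct and follows essentially the same route as the paper's proof: both reduce the HEX-property to the correctness Lemmas 1--5 by linking the classifier's decision rule to geometric membership in the enclosing balls, then invoke containment (for implication) and disjointness (for exclusion). Your version is in fact more explicit than the paper's---the paper argues via the monotonicity inequality $\mathcal{L}_{\text{membership}}(\mathbf{p},\mathbb{B}_w)\geq \mathcal{L}_{\text{membership}}(\mathbf{p},\mathbb{B}_u)$ whenever $\mathbb{B}_w\subseteq\mathbb{B}_u$ and does not discuss the tangent-point boundary case for exclusion at all, which you correctly flag and resolve.
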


\textbf{Learning with soft constraints}

Let $\mathcal{D}^{\text{+}}=\{ (x_i, l_n) |(x_i,L_i) \in \mathcal{D}, l_n \in L_i \}$ be the set of positive instance-label pairs and  $\mathcal{D}^{\text{-}}=\{ (x_i, l_n) |(x_i,L_i) \in \mathcal{D}, l_n \in \mathcal{L}, l_n \notin L_i \}$ be the set of negative instance-label pairs.
By combining the loss functions of membership, non-membership, insideness and disjointedness, the learning objective can be formulated as 


\small
\begin{align} 
  \displaystyle 
  \min_{\theta, \mathcal{C}} & 
  \sum_{ \left(x_i, l_n\right) \in \mathcal{D}^{+}  } 
  \mathcal{L}_{\text {membership}}
  \left(E_{\theta}\left(x_i\right), \mathbb{B}_{\mathcal{C}\left(l_n\right)}\right) \\
  & + 
  \sum_{ \left(x_i, l_n\right) \in \mathcal{D}^{-}  } 
  \mathcal{L}_{\text {non-membership}}
  \left(E_{\theta}\left(x_i\right), \mathbb{B}_{\mathcal{C}\left(l_n\right)}\right) \nonumber \\
  & + \lambda 
  \left( 
        \sum_{\left(v_i,v_j\right) \in E_h } 
            \mathcal{L}_{inside}\left(
                \mathbb{B}_{\mathcal{C}\left(l_i\right)}, \mathbb{B}_{\mathcal{C}\left(l_j\right)}
            \right) + 
        \sum_{\left(v_i,v_j\right) \in E_e } 
            \mathcal{L}_{disjoint}\left(
                \mathbb{B}_{\mathcal{C}\left(l_i\right)}, \mathbb{B}_{\mathcal{C}\left(l_j\right)}
            \right) 
    \right)
\end{align}\label{eq:overall_objective}
The first two terms are losses for positive and negative samples while the last two terms are implication and exclusion constraints, respectively, with $\lambda$ being the penalty weight of the constraints.

The following corollary shows that our model has a strong inductive bias for preserving consistency.

\begin{corollary}
Given a HEX graph $G$ of labels, if the loss terms $\mathcal{L}_{inside}$ and $\mathcal{L}_{disjoint}$ are $0$, then the learned prediction function is logically consistent.  
\end{corollary}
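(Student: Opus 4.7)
The plan is to unwind the definition of the classifier $f$ into a purely geometric membership condition on the enclosing $n$-balls, and then transfer the implication/exclusion relations on the balls (which follow from the vanishing of $\mathcal{L}_{inside}$ and $\mathcal{L}_{disjoint}$ via Lemmas on correctness) to the predicted label set.

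First, I would fix an arbitrary instance $x \in \mathcal{X}$, write $\mathbf{p} = E_\theta(x)$, and characterize the score $h(x,l)$ geometrically. Setting $d_l = \|\mathbf{o}_{\mathcal{C}(l)} - \mathbf{p}\|$ and $r_l = r_{\mathcal{C}(l)}$, the two losses in \Cref{sec:classification} are $\max\{0, d_l - r_l\}$ and $\max\{0, r_l - d_l\}$, so their difference simplifies (in both regimes $d_l \le r_l$ and $d_l > r_l$) to $r_l - d_l$. Since $\sigma$ is monotone with $\sigma(z) \ge \tfrac{1}{2}$ iff $z \ge 0$, the thresholding rule gives
\begin{equation*}
l \in f(x) \;\Longleftrightarrow\; h(x,l) \ge \tfrac{1}{2} \;\Longleftrightarrow\; \mathbf{p} \in \mathbb{B}_{\mathcal{C}(l)}.
\end{equation*}
Thus the predicted label set is exactly the set of labels whose enclosing $n$-ball contains $\mathbf{p}$.

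Next, I would discharge the two constraint types separately. For an implication edge $(v_a, v_b) \in E_h$, the assumption $\mathcal{L}_{inside}(\mathbb{B}_{\mathcal{C}(l_a)}, \mathbb{B}_{\mathcal{C}(l_b)}) = 0$ combined with the correctness lemma for insideness yields $\mathbb{B}_{\mathcal{C}(l_a)} \subseteq \mathbb{B}_{\mathcal{C}(l_b)}$. Hence if $l_a \in f(x)$, then $\mathbf{p} \in \mathbb{B}_{\mathcal{C}(l_a)} \subseteq \mathbb{B}_{\mathcal{C}(l_b)}$, so $l_b \in f(x)$, which is precisely $l_a \Rightarrow l_b$. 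Symmetrically, for an exclusion edge $(v_a, v_b) \in E_e$, $\mathcal{L}_{disjoint} = 0$ gives $\mathbb{B}_{\mathcal{C}(l_a)} \cap \mathbb{B}_{\mathcal{C}(l_b)} = \emptyset$ by the correctness lemma for disjointedness, so $\mathbf{p}$ cannot lie in both balls and therefore $\{l_a, l_b\} \not\subseteq f(x)$, i.e.\ $\neg l_a \vee \neg l_b$ holds at $x$. Since $x$ was arbitrary, $f$ satisfies every constraint encoded by $G$, i.e.\ it is logically consistent.

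The only non-routine point is verifying the ``$h(x,l) \ge \tfrac{1}{2} \Leftrightarrow \mathbf{p} \in \mathbb{B}_{\mathcal{C}(l)}$'' equivalence cleanly, since Lemmas~\ref{lemma:4}--\ref{lemma:5} only state ``loss equals zero iff membership/non-membership.'' The observation that their \emph{difference} equals $r_l - d_l$ on the whole domain (not merely when one of them vanishes) is what lets the thresholding rule recover exact ball-membership, which I see as the main technical hinge of the argument. Implication transitivity in $E_h$ is not needed, because the quantification is over the edges of the (minimal sparse) HEX graph; the transitive closure is obtained automatically at the ball-containment level.
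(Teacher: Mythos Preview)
Your proposal is correct and follows the same geometric route as the paper: vanishing $\mathcal{L}_{\text{inside}}$ and $\mathcal{L}_{\text{disjoint}}$ give ball containment and disjointness via the correctness lemmas, and these set-theoretic relations then force the predicted label set to respect every edge of $G$. The paper's own proof is terser---it invokes the HEX-property proposition and argues at the level of comparing membership losses rather than reducing the classifier to exact ball membership---so your explicit derivation that $\mathcal{L}_{\text{non-membership}}-\mathcal{L}_{\text{membership}}=r_l-d_l$ everywhere (and hence $l\in f(x)\Leftrightarrow \mathbf{p}\in\mathbb{B}_{\mathcal{C}(l)}$) is a clean clarification of the step the paper leaves implicit.
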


\textbf{Classification via hyperbolic logistic regression}
A key advantage of our method is that the losses of constraints are compatible with other (margin-based) hyperbolic classifiers such as hyperbolic logistic regression (HLR)~\cite{ganea2018hyperbolic} and hyperbolic support vector machine (HSVM)~\cite{cho2019large}.
In our experiment we explore HLR, 
which formulates the \emph{logits} as the distances from an instance to a Poincaré hyperplane of a label. That is, $h(x,l) = d\left(E_{\theta}\left(x\right), H_C\left(l\right)\right)$. $d(\mathbf{p}, H_\mathbf{c})$ has the following closed form: 
\begin{equation}
    d(\mathbf{p}, H_\mathbf{c}) =\sinh ^{-1}\left(\frac{2|\langle(-\mathbf{c}) \oplus \mathbf{p}, \mathbf{c}\rangle|}{\left(1-\|(-\mathbf{c}) \oplus \mathbf{p}\|^{2}\right)\|\mathbf{c}\|}\right)
\end{equation}
where $\oplus$ is the Möbius addition \cite{ganea2018hyperbolic}. The classifier is defined by $f(x) = \{l | \operatorname{\sigma}\left(h\left(x,l\right) \right) \geq 0.5, \forall l \in \mathcal{L} \}$ where $\sigma$ is the sigmoid function. We dub such classifier combined with HMI as HMI+HLR. 

\section{Evaluation}

\subsection{Experiment setup}
\textbf{Datasets} 
We consider $12$ datasets that have been used for evaluating multi-label prediction methods~\cite{patel2021modeling, DBLP:conf/nips/GiunchigliaL20, DBLP:conf/icml/WehrmannCB18}. These consist of $8$ functional genomic datasets \cite{clare2003machine}, $3$ image annotation datasets \cite{DBLP:journals/pr/DimitrovskiKLD11,DBLP:journals/ecoi/DimitrovskiKLD12}, and $1$ text classification dataset \cite{DBLP:conf/ecml/KlimtY04}. All input features are pre-processed in the same way as described by Patel et al.~\cite{patel2021modeling}. 
For all datasets, the implication constraints (label taxonomy) are given. 
Following Mirzazadeh et al.~\cite{DBLP:conf/nips/MirzazadehRDS15} we add exclusion constraints between sibling nodes whenever this does not create a contradiction (i.e., they share no common descendant nodes). 
We also explore other strategies for deriving exclusions, but no significant difference was observed (see the supplement for an analysis). 
Similar to MBM~\cite{patel2021modeling} and its baselines, we sample $30\%$ of the implications and exclusions constraints for training the model. 

\textbf{Hyperbolic encoder} 
We adopt a simple hyperbolic linear layer as the instance encoder for all datasets. 
A single-layer hyperbolic fully-forward linear layer is defined by $f_{ \theta= \{\mathbf{W},\mathbf{b}\}}(\mathbf{x})=\tanh ^{\otimes}\left(\mathbf{W} \otimes \mathbf{x} \oplus \mathbf{b}\right)$, with $\otimes$ being Möbius matrix-vector multiplication defined by 
$M \otimes \mathbf{x}=\tanh \frac{\|M \mathbf{x}\|}{\|\mathbf{x}\|} \tanh ^{-1} $ $ (\|\mathbf{x}\|)) \frac{M \mathbf{x}}{\|M \mathbf{x}\|}$, where $\mathbf{W} \in \mathbb{R}^{n \times d}$ is a trainable matrix and $\mathbf{x}$ is a point $\mathbf{x} \in$ $\mathbb{D}^{n}, M \mathbf{x} \neq 0$. 
$\oplus$ denotes Möbius addition given by
\begin{equation}
    \mathbf{x} \oplus \mathbf{y}=\frac{\left(1+2\langle \mathbf{x}, \mathbf{y}\rangle+\|\mathbf{y}\|^{2}\right) \mathbf{x}+\left(1-\|\mathbf{x}\|^{2}\right) \mathbf{y}}{1+2\langle \mathbf{x}, \mathbf{y}\rangle+\|\mathbf{x}\|^{2}\|\mathbf{y}\|^{2}}
\end{equation}
and $\tanh ^{\otimes}$ denotes an Möbius version of pointwise non-linearity given by $\tanh ^{\otimes}(\mathbf{x})= \exp_{0}\left(\tanh\left(\log_{0}(\mathbf{x})\right)\right)$, with $\exp_{0}$ and $\log_{0}$ being the exponential and logarithmic maps, see \cite{ganea2018hyperbolic} for more details.

\textbf{Baselines} 
We compare our approach with both classical vector-based and state-of-the-art region-based embedding methods. 
In particular, we consider two vector-based models: 1) The multi-label vector model (MVM)~\cite{DBLP:conf/acl/HenaoLCSWWZZ18}, which  encodes both inputs and labels as Euclidean vectors; 2) the multi-label hyperbolic model (MHM) used by Chen et al.~\cite{DBLP:conf/aaai/ChenHXCJ20}, which represents inputs and labels as hyperbolic points; and two box models: 3) the non-probabilistic box model (BoxE)~\cite{abboud2020boxe} and 4) the probabilistic multi-label box model (MBM)~\cite{patel2021modeling} that encodes both instances and labels as axis-parallel hyper-rectangles. 
Besides, we compare with 5) hyperbolic logistic regression (HLR)~\cite{ganea2018hyperbolic} since it also encodes labels as Poincaré hyperplanes (but does not use geometric constraints). 
Furthermore, we compare with 6) C-HMCNN, a state-of-the-art non-embedding based method that injects hierarchy constraints directly into the loss function without embedding labels. A notable difference is that C-HMCNN needs the full hierarchy constraints as its input. 
Finally, we also implement HMI+HLR, a combination of our proposed constraints with HLR for an ablation study.

\textbf{Implementation details} 
We implement HMI, HLR and HMC-HLR using PyTorch~\cite{paszke2019pytorch} and train the models on NVIDIA A100 with 40GB memory.
We train HMI, HLR and HMI+HLR using Riemannian Adam~\cite{DBLP:conf/iclr/BecigneulG19} optimizer implemented by the Geoopt library \cite{DBLP:journals/corr/abs-2005-02819} with a batch size of $4$.
We also explore some larger batch sizes but it does not yield better results, which is also observed in~Wehrmann et al.\cite{DBLP:conf/icml/WehrmannCB18}. 
We set the dropout rate to $0.6$ suggested by~\cite{DBLP:conf/icml/WehrmannCB18} to avoid the case that the model overfits the small training sets. We employ an early-stopping strategy with patience $20$ to save training time. 
The results of other baselines are as reported by Patel et al.\cite{patel2021modeling} that we closely follow. 
The learning rate is searched from \{$1e-4$, $5e-4$, $1e-3$, $5e-3$, $1e-2$\}. 
The penalty weight of the violation is searched from \{$1e-5$, $5e-4$, $1e-4$, $5e-3$, $1e-2$\} and we also show its impact in an ablation. The best dimension per dataset is searched from $\{32, 64, 128, 256\}$, which is one order of magnitude lower than that used by Patel et al. \cite{patel2021modeling} ($\{250, 500,1000,1750\}$). 
All methods have been run $10$ times with random seeds and the average results are reported. We omit the standard deviations since they are in a very small range ($[2 \times 10^{-4}, 2.3 \times 10^{-3}]$). 

\textbf{Evaluation protocols}
In line with Patel et al.~\cite{patel2021modeling}, we consider \emph{Mean Average Precision (mAP)},\footnote{\url{https://scikit-learn.org/stable/modules/generated/sklearn.metrics.average_precision_score.html}} 
which summarizes the information of precisions and recalls with varied thresholds. We also report two metrics that additionally take the constraints into account: 1) Constrained mAP (CmAP) is a variant of mAP that replaces the score of each label with the maximum scores of its descendant labels in the hierarchy \cite{patel2021modeling}. 
2) \emph{Hierarchy Constraint Violation (HCV)} \cite{patel2021modeling} measures the extent to which the label scores violate the implication constraints regardless of true labels for the instances. $\mathrm{HCV}$ is computed as $\mathrm{HCV}=\frac{1}{|\mathcal{D}||E_h|} \sum_{k=1}^{|\mathcal{D}|} \sum_{\left(l_{i}, l_{j}\right) \in E_h} \mathbbm{1}\left(h_{i}^{k}-h_{j}^{k} > 0\right)$, where $h_i$ means the prediction score of label $l_i$.
Clearly, a lower value of $\mathrm{HCV}$ implies higher consistency in the predictions. 

\subsection{Main results}

\begin{table}
    \vspace{-0.1cm}
    \centering
      \caption{Comparison of performance and consistency on $12$ datasets, where \underline{underline} indicates the best results over embedding-based methods, and \textbf{boldface} indicates the best results over all methods. 
      We implemented HMI, HLR and HMI+HLR. Other results are taken from Patel et al.~\cite{patel2021modeling}. All metrics are averaged across $10$ runs with random seeds (standard deviations are relatively small (in range $[2 \times 10^{-4}, 2.3 \times 10^{-3}]$) and are hence omitted).}
    \resizebox{\textwidth}{!}{
    \begin{tabular}{cc|cc|ccccc|c}
    \hline
    \multirow{2}{*}{Dataset} & \multirow{2}{*}{Metric} & \multicolumn{2}{c}{ Ours }  & \multicolumn{5}{c}{ Embeddings }  & Non-embedding \\
    &  & \textbf{HMI} & \textbf{HMI+HLR} & MVM & MHM & BoxE & MBM & HLR & C-HMCNN \\
    \hline \multirow{3}{*}{ ExprFUN } 
    & mAP $\uparrow$ & \textbf{\underline{38.53}} & 38.50 &  37.94 & 31.90 & 37.30 & 38.42 & 37.98 & 38.41   \\
    & CmAP $\uparrow$ & \textbf{\underline{38.72}} & 38.62 &  37.41 & 32.02 & 37.92 & 38.67 & 37.44 & 38.41   \\
    & HCV $\downarrow$ & \underline{0.92} & 1.07 & 1.97 & 1.92 & 4.79 & 1.87 & 2.17 & \textbf{0}   \\
    \hline \multirow{3}{*}{ CellcycleFUN } 
    & mAP $\uparrow$ & 34.82 & \textbf{\underline{34.84}} & 31.61 & 28.74 & 31.96 & 34.61 & 34.05 & 34.35  \\
    & CmAP $\uparrow$ & 34.90 & \textbf{\underline{35.00}} & 31.33 & 28.89 & 32.70 & 34.78 & 34.11 & 34.35  \\
    & HCV $\downarrow$ & \underline{\underline{1.30}} & 1.32 & 3.45 & 1.78 & 4.02 & 1.35 & 2.30 & \textbf{0}   \\
    \hline \multirow{3}{*}{ DerisiFUN } 
    & mAP $\uparrow$ & \textbf{\underline{36.71}} & \textbf{\underline{36.71}} &  24.16 & 24.40 & 26.66 & 28.71 & 26.65 & 28.19  \\
    & CmAP $\uparrow$ & \textbf{\underline{36.94}} & 36.89 & 24.35 & 24.52 & 26.96 & 28.88 & 26.83 & 28.19 \\
    & HCV $\downarrow$ & \underline{0.73} & 0.87 & 4.01 & 0.85 & 2.27 & 1.43 & 2.30 & \textbf{0} \\
    \hline \multirow{3}{*}{ SpoFUN } 
    & mAP $\uparrow$  & \textbf{\underline{36.47}} & 36.44 &  24.21 & 26.57 & 27.97 & 29.62 & 28.29 & 29.18  \\
    & CmAP $\uparrow$ & 36.43 & \textbf{\underline{36.54}} & 24.55 & 26.79 & 28.38& 29.78 & 28.31 & 29.18  \\
    & HCV $\downarrow$ & \underline{0.92} & 1.05 & 4.73 & 1.69 & 2.75 & 1.53 & 1.98  & \textbf{0}  \\
    \hline \multirow{3}{*}{ ExprGO } 
    & mAP $\uparrow$ & 48.63 & 48.50 &  44.97 & 40.52 & 46.75 & 48.45 & \textbf{\underline{48.65}} & 48.61  \\
    & CmAP $\uparrow$ & \textbf{\underline{48.68}} & 48.61 & 41.84 & 40.70 & 47.28& 48.56 & 48.65 & 48.61  \\
    & HCV $\downarrow$ & 1.37 & 1.45 & 7.05 & 5.19 & 5.74 & 1.91 & \underline{1.35} & \textbf{0} \\
    \hline \multirow{3}{*}{ CellcycleGO } 
    & mAP $\uparrow$ & \underline{45.58} & 45.51 &  44.19 & 39.74 & 43.08 & 44.93 & 40.28 & \textbf{45.61}  \\
    & CmAP $\uparrow$ & \underline{45.58} & 45.53 & 41.02 & 39.76 & 43.79 & 45.01 & 40.30 & \textbf{45.61}  \\
    & HCV $\downarrow$ & 1.19 & \underline{1.12}  & 3.03 & 2.49 & 5.06 & 2.16 & 3.26 & \textbf{0}  \\
    \hline \multirow{3}{*}{ DerisiGO } 
    & mAP $\uparrow$ & \textbf{\underline{42.31}} & 42.12 &  41.13 & 40.10 & 40.44 & 42.02 & 40.33 & 42.24 \\
    & CmAP $\uparrow$ & \textbf{\underline{42.38}} & 42.28 & 38.21 & 40.20 & 40.73 & 42.12 & 40.35 & 42.24  \\
    & HCV $\downarrow$ & \underline{0.86} & 0.99 & 3.46 & 2.02 & 3.16 & 1.13 & 2.31 & \textbf{0}  \\
    \hline \multirow{3}{*}{ SpoGO } 
    & mAP $\uparrow$ & 42.70 & \underline{42.74} &  42.20 & 39.70 & 40.88 & 41.74 & 39.22 & \textbf{42.77}  \\
    & CmAP $\uparrow$ & 42.76 & \textbf{\underline{42.77}} & 39.04 & 39.77 & 41.27 & 41.54 & 39.26 & \textbf{42.77}  \\
    & HCV $\downarrow$ & \underline{0.95} & 1.20 & 2.77 & 1.90 & 3.89 & 1.80  & 2.33  & \textbf{0}\\
      \hline \multirow{3}{*}{  Enron} 
    & mAP $\uparrow$ & 80.43 & 80.43 &  73.68 & 75.62 & \textbf{\underline{80.44}} & 80.06 & 78.87 & 80.04  \\
    & CmAP $\uparrow$ & \textbf{\underline{80.50}} & 80.47 & 66.87 & 75.68 & 80.46 & 80.05 & 78.94 & 80.04  \\
    & HCV $\downarrow$ & \textbf{\underline{0}} & \textbf{\underline{0}} & 2.53 & 0.36 & 0.20 & 0.03 & 0.04 & \textbf{0}  \\
    \hline \multirow{3}{*}{ Diatoms } 
    & mAP $\uparrow$  & \textbf{\underline{79.19}} & 79.10 &  72.65 & 56.86 & 43.71 & 79.14 & 77.90 & 76.23  \\
    & CmAP $\uparrow$ & \textbf{\underline{79.40}} & 79.36 & 72.18 & 56.07  & 45.16& 79.23 & 78.07 & 76.23   \\
    & HCV $\downarrow$ & \underline{0.17} & 0.18 & 19.20 & 5.55 & 6.39 & 0.34 & 6.36 & \textbf{0} \\
     \hline \multirow{3}{*}{ Imclef07a } 
    & mAP $\uparrow$ & \textbf{\underline{90.67}} & 89.60 &  78.22 & 65.30 & 83.71 & 69.26 & 88.33 & 90.26 \\
    & CmAP $\uparrow$ & \textbf{\underline{90.89}} & 89.71 & 77.46 & 66.01 & 84.73 & 69.48 & 88.45 & 90.26  \\
    & HCV $\downarrow$ & 0.20 & \underline{0.19} & 22.86 & 4.75 & 12.73 & 2.40 & 1.77 &  \textbf{0}  \\
     \hline \multirow{3}{*}{ Imclef07d } 
    & mAP $\uparrow$ & 89.19 & 89.20 &  88.59 & 75.69 & 87.95 & \textbf{\underline{89.56}} & 88.91 & 89.22  \\
    & CmAP $\uparrow$ & 90.00 & 90.02 & 86.87 & 76.95 & 88.93 & \textbf{\underline{90.07}} & 87.38 & 89.22  \\
    & HCV $\downarrow$ & 0.37 & \underline{0.36} & 11.02 & 7.56 & 11.93 & 5.66  & 6.88 & \textbf{0}  \\
    \hline \multirow{3}{*}{ \textbf{Avg. Rank $\downarrow$} } 
    & mAP  & \textbf{\underline{1.75}} & 2.42 & 6.33 & 7.58 & 5.75 & 3.5 & 5.25 & 3.08   \\
    & CmAP  & \textbf{\underline{1.58}} & 2.08 & 7.16 & 7.41 & 5.25 & 3.58 & 5.41 & 3.33 \\
    & HCV & \underline{2.25} & 2.75 & 7.42 & 5.25 & 7.25 & 4.25 & 5.58 & \textbf{1.00} \\
    \hline
    \end{tabular}
    }
    \label{tab:overall_results}
    \vspace{-0.3cm}
\end{table}

As \cref{tab:overall_results} shows, our method HMI either achieves the best (7-8/12 datasets) or competitive (4-5/12 datasets) performance (mAP and CmAP) over all compared methods. 
HMI outperforms all methods w.r.t the average ranking of mAP/CmAP, showcasing the advantages of HMI. We observed that the CmAP is close to mAP, indicating that the model is adhering to the label constraints \cite{patel2021modeling}. In terms of predictive consistency (HCV), HMI consistently achieves the best or the second-best results. Note that C-HMCNN always gets zero HCV because it exploits the complete hierarchy. HMI achieves competitive HCV, despite only using 30\% of the hierarchy. 

\begin{figure}
    \centering
    \subfloat[\centering mAP]{{\includegraphics[width=0.33\textwidth]{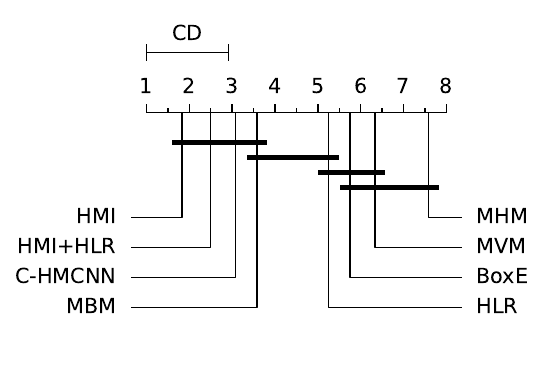}}}
    \subfloat[\centering CmAP]{{\includegraphics[width=0.33\textwidth]{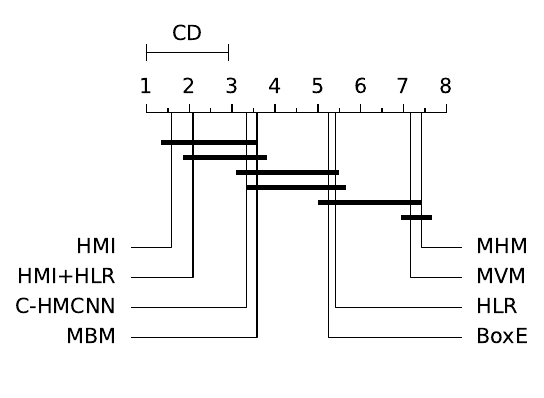}}}
    \subfloat[\centering HCV]{{\includegraphics[width=0.33\textwidth]{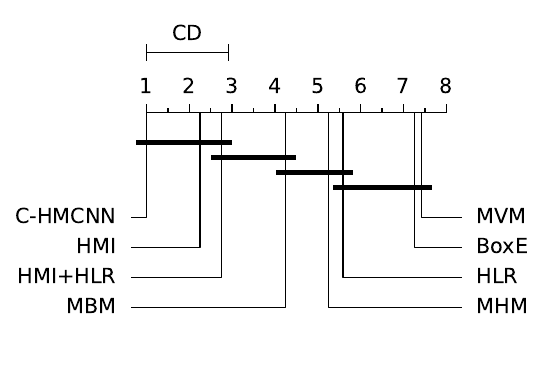}}}
    \caption{Critical diagrams of the post-hoc Nemenyi test across all 12 datasets.}
    \label{fig:critical_diagrams}
    \vspace{-0.3cm}
\end{figure}

\begin{table}{r}
\centering
 \caption{Results of Wilcoxon test over HMI against baselines.}
    \begin{tabular}{cc|ccc}
        \hline
        & Method & mAP & CmAP & CV \\
        \hline
        & 
        HMI vs C-HMCNN & $5.8 \times 10^{-4}$ & $4.4 \times 10^{-4}$ & $5.0 \times 10^{-3}$ \\
        & HMI vs MBM & $3.3 \times 10^{-4}$ & $2.4 \times 10^{-4}$ & $4.9 \times 10^{-4}$ \\
        & HMI vs HMI+HLR & $2.3 \times 10^{-2}$ & $3.8 \times 10^{-2}$ & $9.7 \times 10^{-1}$ \\
        \hline
    \end{tabular}
    \label{tab:wilcoxon_differnece}
\end{table}

\textbf{Statistical significance} Following~Patel et al.~\cite{patel2021modeling} and Giunchiglia and
Lukasiewicz~\cite{DBLP:conf/nips/GiunchigliaL20}, we test the statistical significance of the performance across all datasets. 
First, we perform the Friedman test~\cite{DBLP:journals/jmlr/Demsar06} and show that there exists a significant difference w.r.t. all metrics with p-values $\ll 0.05$. Next, we conduct the post-hoc Nemenyi test to verify the statistical differences of the average ranking. The critical diagram w.r.t the average ranking of mAP/CmAP is shown in \cref{fig:critical_diagrams}, in which the methods that have no significant differences (significance level $0.05$) are connected by a horizontal line. As shown in the diagrams, it is clear to conclude that there is a statistically significant difference w.r.t mAPs/CmAPs of HMI and HMI+HLR against MVM, BoxE, MHM, and HLR but not the two strong baselines (MBM and C-HMCNN). 
We further perform the Wilcoxon test that considers not only the differences in rankings but also the numerical differences in the performance. 
The Wilcoxon test results show that there is a statistically significant difference between the mAPs/CmAPs of HMI and the two strong baselines with p-value $\ll 0.05$.
In terms of HCV, our statistical significance test in \Cref{fig:critical_diagrams} and \cref{tab:wilcoxon_differnece} shows that HMI and HMI+HLR significantly outperform MVM, BoxE, MHM, HLR, and MBM but not C-HMCNN since it has zero HCV.
However, we observed that the predictive performance (mAP, CmAP) is not fully proportional to the HCV, e.g., HMI outperforms C-HMCNN w.r.t. mAP/CmAP on many of the datasets even though C-HMCNN has zero CV.

\textbf{Classification via hyperbolic logistic regression} To validate whether our proposed geometric constraints are able to improve hyperbolic logistic regression (HLR) \cite{ganea2018hyperbolic}, we implement HMI+HLR, a combination of our proposed constraints with HLR as described in \Cref{sec:classification}. \Cref{tab:overall_results} show that HMI+HLR outperforms HLR with statistical confidence, showcasing that HMI is able to improve the predictive performance and consistency of HLR. However, there is no significant difference (with $p$-value larger than $0.05$ in \Cref{tab:wilcoxon_differnece}) between the two variants of our method (HMI and HMI+HLR).




\subsection{Ablation studies \& parameter sensitivity. } 

For further ablation, we introduce one additional metric. Exclusion Constraint Violation (ECV) measures, analogous to HCV, the  fraction of the exclusion constraints violated by the predictions i.e., $\mathrm{ECV}=\frac{1}{|\mathcal{D}||E_e|} \sum_{k=1}^{|\mathcal{D}|} \sum_{\left(l_{i}, l_{j}\right) \in E_e} $ $ \mathbbm{1}\left( f_{i}^{k} \wedge f_{j}^{k}\right)$. 
We introduce this because HCV can be made zero trivially by associating all labels with the same score. Hence, in the ablation study, we will show how the exclusion constraints (the results of ECV) complement HCV and influence the overall performance. 

\begin{wraptable}{r}{0.62\linewidth}
\vspace{-0.3cm}
\centering
    \caption{Impact of violation penalty weight $\lambda$ on CellcycleFUN and CellcycleGO dataset.}
    \resizebox{\linewidth}{!}{
    \begin{tabular}{ccccccc}
    \hline
    Dataset & Metric & $\lambda=0.0$ & $\lambda=0.001$ & $\lambda=0.005$ & $\lambda=0.01$ &  $\lambda=0.1$ \\
    \hline \multirow{4}{*}{CellcycleFUN} 
    & mAP & 33.87 & 34.78 & \textbf{34.82} & 34.76 & 32.28 \\
    & CmAP & 34.03 & 34.83 & \textbf{34.90} & 34.85 & 33.75 \\
    & HCV & 2.33 & 1.87 & 1.30 & 1.04 & \textbf{0.75} \\
    & ECV & 4.33 & 3.77 & 2.40 & 1.67 & \textbf{1.35} \\
    \hline
    \multirow{4}{*}{CellcycleGO} 
    & mAP & 40.26 & 41.47 & \textbf{45.58} & 45.56 & 41.28 \\
    & CmAP & 39.87 & 42.05 & 45.58 & \textbf{45.60} & 40.75 \\
    & HCV & 2.28 & 1.57 & 1.19 & 0.99 & \textbf{0.86} \\
    & ECV & 3.98 & 3.27 & 2.17 & 1.71 & \textbf{1.34} \\
     \hline
    \end{tabular}
    }
    \label{tab:penalty_weight}
\end{wraptable}

\textbf{Impact of penalty weight} 
\Cref{tab:penalty_weight} shows the results of HMI on "CellcycleFUN" and "CellcycleGO" dataset. We observed that with different penalty weights, the obtained results are slightly different. Even without penalty ($\lambda=0$), the model already achieves acceptable results, in particular, it outperforms MVM, MHM, and BoxE, indicating that our hyperbolic model, to some extent, is capable of capturing label hierarchies without any explicit constraints. However, as \cref{tab:penalty_weight} shows, a proper $\lambda=0.001$, $\lambda=0.005$ and $\lambda=0.01$ indeed improves the performance and consistency.
Finally, we observed that increasing $\lambda$ to $0.1$, though further improves consistency (HCV and ECV), does not further improve mAP and CmAP. We conjecture that this is because a large $\lambda$ would encourage the model to "overfit" the given constraints while "underfitting" the classification loss.

\begin{wraptable}{r}{0.62\linewidth}
\vspace{-0.3cm}
\centering
    \caption{Impact of implication and exclusion constraints on CellcycleFUN and CellcycleGO dataset.}
     \resizebox{\linewidth}{!}{
    \begin{tabular}{ccccccc}
    \hline
    Dataset & Metric & HMI & w/o implication & w/o exclusion & non constraints \\
    \hline \multirow{2}{*}{CellcycleFUN} 
    & mAP & \textbf{34.82} & 34.70 & 34.74 & 33.87\\
    & CmAP & \textbf{34.90} & 34.75 & 34.82 & 34.03\\
    & HCV & 1.30 & 2.34 & 1.45 & 2.33\\
    & ECV & 2.40 & 2.67 & 3.63 & 4.33 \\
    \hline
    \multirow{2}{*}{CellcycleGO} 
    & mAP & \textbf{45.58} & 42.56 & 44.50 & 40.26 \\
    & CmAP & \textbf{45.58} & 42.56 & 45.31 & 39.87\\
    & HCV & 1.19 & 2.16 &  1.73 & 2.28\\
    & ECV & 2.17 & 3.68 & 3.07 & 3.98 \\
     \hline
    \end{tabular}
    }
    \label{tab:hierarchy_exclusion}
\end{wraptable}

\textbf{Impact of implication \& exclusion} To study the roles of implication and exclusion. We implemented three variants of HMI by removing either implication or exclusion, or removing both of them. \Cref{tab:hierarchy_exclusion} depicts the results of these variants. It is clear that both implication and exclusion constraints improve the base model that has no constraints. When implication and exclusion are jointly constrained, the performance is significantly improved again. 
We also observed that implication and exclusion constraints, to some extent, do complement each other, e.g., by only using implication (resp. exclusion), the model archives lower ECV (resp. HCV). Finally, we observed that even without exclusion, our model still slightly outperforms MBM, showcasing the advantages of hyperbolic space for modeling hierarchies.

\textbf{Impact of sampling ratio} 
To study whether our method is able to preserve logical constraints from incomplete label constraints we compare the performance of HMI with different ratios for sampling the training constraints. 
As \Cref{fig:ablation_sampling_dimension}(a) depicts, with zero sampling ratio, our method already achieves acceptable results. We conjecture that this is because some constraints can be learned from the data. However, \Cref{fig:ablation_sampling_dimension}(a) clearly shows that including constraints indeed helps to improve the performance. Making the sampling ratio larger than $30$-$40\%$ does not lead to a significant performance gain. We conjecture that this is because certain ratio of training constraints is sufficient for inferring the full set of constraints. 

\begin{wrapfigure}{r}{0.66\linewidth}
    \vspace{-0.5cm}
    \centering 
    \resizebox{\linewidth}{!}{
    \subfloat[\centering ]{{\includegraphics[width=0.46\textwidth]{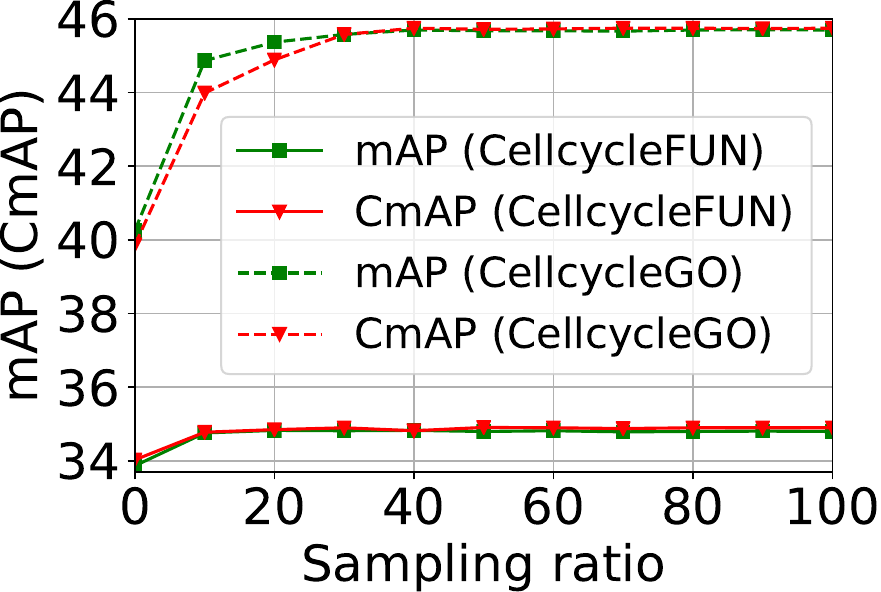}}}
    \qquad
    \subfloat[\centering ]{{\includegraphics[width=0.46\columnwidth]{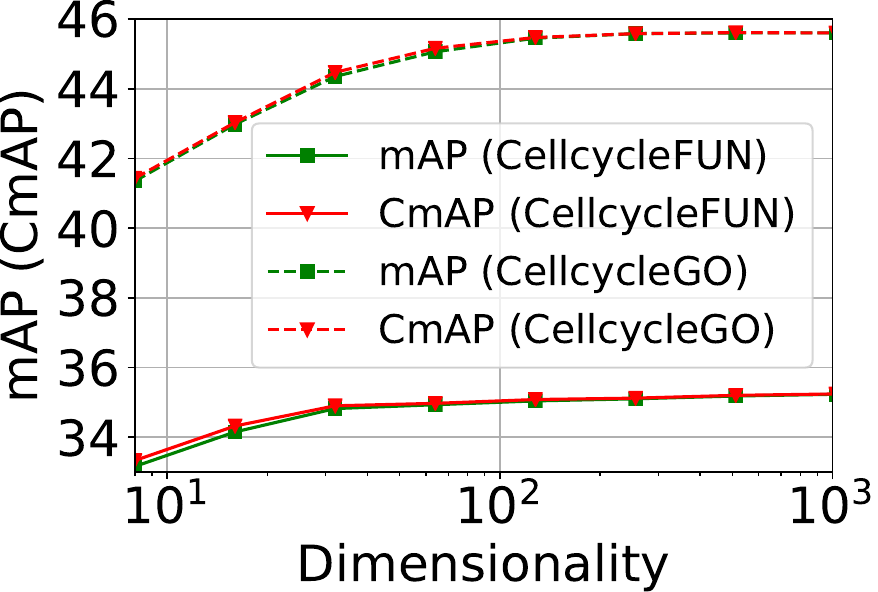}}}
    }
    \caption{(a) The variation of performance w.r.t the sampling ratio. (b) The variation of performance w.r.t the embedding dimensions. }
    \label{fig:ablation_sampling_dimension}
\end{wrapfigure}

\textbf{Impact of embedding dimensionality} 
We study how the choice of dimensionality affects performance. 
As \Cref{fig:ablation_sampling_dimension}(b) depicts, HMI achieves acceptable results even in a very low dimension ($n \leq 100)$. When increasing the dimension an order of magnitude ($n=1000$), the performance grows only slightly. Note that all reported baselines achieved acceptable results with dimensions in $[500,1000,1750]$ (see hyperparameter settings in the Appendix of Patel et al.~\cite{patel2021modeling}). We conjecture that the reason we can achieve good performance with fewer dimensions is that the hyperbolic hyperplane is more suitable for representing hierarchical decision boundaries.

\textbf{Comparison with MBM with only implication or without any constraint} To faithfully study the advantages of hyperbolic hyperplane on modeling label relations than that of the box model (MBM), we also implement two versions of HMI by considering only (30\%) implication constraints and without any constraint (sampling ratio$=0$), respectively. 
Our Wilcoxon test in Table \ref{tab:wilcoxon_differnece_hmi} shows that HMI with only implication and HMI without any constraint still outperform their corresponding counterparts of MBM on CmAP and HCV (with p-value $<0.05$) while achieving comparable results on mAP (i.e., with better average ranks but without statistical significance, we believe this is because mAP is less sensitive to the constraints than CmAP).

\begin{table}[h!]
\centering
\caption{
Results of Wilcoxon test on HMI against MBM in the settings where only implications are available and without any constraint. 
$-$ means no statistical difference between the compared methods. 
}
\resizebox{\textwidth}{!}{
    \begin{tabular}{cc|ccc}
        \hline
        & Method & mAP & CmAP & CV \\
        \hline
        & 
        HMI (impl.) vs MBM (impl.) & $-$ & $2.4 \times 10^{-4}$ & $1.2 \times 10^{-3}$ \\
        & HMI (no conts.) vs MBM (no conts.) & $-$ &$1.3 \times 10^{-2} $ & $6.1 \times 10^{-3}$ \\
        \hline
    \end{tabular}
    }
    \label{tab:wilcoxon_differnece_hmi}
\end{table}

\section{Conclusion}
In this work, we focus on a structured multi-label prediction task whose output is supposed to respect the implication and exclusion constraints. We show that such a problem can be formulated in a hyperbolic Poincaré ball space whose linear decision boundaries (Poincaré hyperplanes) can be interpreted as convex regions. The implication and exclusion constraints are geometrically interpreted as insideness and disjointedness, respectively. 
Experiments on $12$ datasets show significant improvements in mean average precision and lower constraint violations, even with an order of magnitude fewer dimensions than baselines.

\cleardoublepage
\chapter{Geometric Embeddings of High-Order Structures   }
\label{chap_hyper}

In this chapter, we introduce two geometric embeddings for high-order relational knowledge graphs. 
In Section \ref{sec:shrinking}, we introduce ShrinkE, a shrinking embedding for hyper-relational knowledge graphs. 
In Section \ref{sec:facte}, we introduce FactE, an embedding model for knowledge graphs with nested structures.   

\section{Shrinking Embeddings for Hyper-Relational Knowledge Graphs}\label{sec:shrinking}

Link prediction on knowledge graphs (KGs) has been extensively studied on binary relational KGs, wherein each fact is represented by a triple. 
A significant amount of important knowledge, however, is represented by hyper-relational facts where each fact is composed of a primal triple and a set of qualifiers comprising a key-value pair that allows for expressing more complicated semantics.
Although some recent works have proposed to embed hyper-relational KGs, these methods fail to capture essential inference patterns of hyper-relational facts such as qualifier monotonicity, qualifier implication, and qualifier mutual exclusion, limiting their generalization capability. 
To unlock this, we present \emph{ShrinkE}, a geometric hyper-relational KG embedding method aiming to explicitly model these patterns. ShrinkE models the primal triple as a spatial-functional transformation from the head into a relation-specific box. 
Each qualifier ``shrinks'' the box to narrow down the possible answer set and, thus, realizes qualifier monotonicity. 
The spatial relationships between the qualifier boxes allow for modeling core inference patterns of qualifiers such as implication and mutual exclusion. Experimental results demonstrate ShrinkE's superiority on three benchmarks of hyper-relational KGs. 

\subsection{Motivation and Background}

Link prediction on knowledge graphs (KGs) is a central problem for many KG-based applications \cite{zhang2016collaborative,lukovnikov2017neural,DBLP:conf/sigir/jiayinglu,DBLP:conf/semweb/XiongPTNS22,DBLP:conf/wsdm/00010ZZMHK22}. 
Existing works \cite{DBLP:conf/iclr/SunDNT19,DBLP:conf/nips/BordesUGWY13}s have mostly studied link prediction on binary relational KGs, wherein each fact is represented by a triple, e.g., (\emph{Einstein}, \emph{educated\_at}, \emph{University of Zurich}). 
In many popular KGs such as Freebase \cite{bollacker2007freebase}, however, a lot of important knowledge is not only expressed in triple-shaped facts, but also via facts about facts, which taken together are called hyper-relational facts. 
For example, ((\emph{Einstein}, \emph{educated\_at}, \emph{University of Zurich}), \{(\emph{major}:\emph{physics}), (\emph{degree}:\emph{PhD})\}) is a hyper-relational fact, where the primary triple (\emph{Einstein}, \emph{educated\_at}, \emph{University of Zurich}) is contextualized by a set of key-value pairs \{(\emph{major}:\emph{physics}),(\emph{degree}:\emph{PhD})\}. 
Like much other related work, we follow the terminology established for Wikidata \cite{vrandevcic2014wikidata} and use the term \emph{qualifiers} to refer to the key-value pairs.\footnote{Synonyms include statement-level \emph{metadata} in RDF-star \cite{rdf-star} and triple \emph{annotation} in provenance communities \cite{green2007provenance}.} 
The qualifiers play crucial roles in avoiding ambiguity issues. For instance, \emph{Einstein} was \emph{educated\_at} several universities and the qualifiers for \emph{degree} and \emph{major} help distinguish them. 
\par
In order to predict links in hyper-relational KGs, pioneering works represent each hyper-relational fact as either an $n$-tuple in the form of $r(e_1,e_2,\cdots,e_n)$ \cite{abboud2020boxe, DBLP:conf/ijcai/FatemiTV020, DBLP:conf/www/0016Y020} or a set of key-value pairs in the form of $\{(k_i:v_i)\}_{i=1}^m$ \cite{DBLP:conf/www/GuanJWC19,guan2021link,DBLP:conf/www/LiuYL21}. However, these modelings lose key structure information and are incompatible with the RDF-star schema \cite{rdf-star} used by modern KGs, where both primal triples and qualifiers constitute the fundamental data structure. 
Recent works \cite{DBLP:conf/acl/GuanJGWC20,DBLP:conf/www/RossoYC20} represent each hyper-relational fact as a primary triple coupled with a set of qualifiers that are compatible with RDF-star standards \cite{rdf-star}. 
Link prediction is then achieved by modeling the validity of the primary triple and its compatibility with each annotated qualifier \cite{DBLP:conf/acl/GuanJGWC20, DBLP:conf/www/RossoYC20}. More complicated graph encoders and decoders \cite{DBLP:conf/emnlp/GalkinTMUL20,DBLP:journals/corr/abs-2104-08167,DBLP:conf/acl/WangWLZ21,DBLP:journals/corr/abs-2208-14322} are proposed to further boost the performance. However, they require a relatively huge number of parameters that make them prone to overfitting. 

To encourage generalization capability, KG embeddings should be able to model inference patterns, i.e., specifications of logical properties that may exist in KGs, which, if learned, empowers further principled inferences \cite{abboud2020boxe}. 
This has been extensively studied for binary relational KG embeddings \cite{DBLP:conf/icml/TrouillonWRGB16,DBLP:conf/iclr/SunDNT19} but ignored for hyper-relational KGs in which not only primal triples but also qualifiers matter. 
One of the most important properties is qualifier monotonicity.
Given a query, the answer set shrinks or at least does not expand as more qualifiers are added to the query expression. 
For example, a query $\left(\emph{Einstein}, \emph{educated\_at}, ?x\right)$ with a variable $?x$ corresponds to two answers $\{\emph{University of Zurich}, \emph{ETH Zurich}\}$, but a query $\left(\left(\emph{Einstein}, \emph{educated\_at},?x\right),\{\left(\emph{degree}:\emph{B.Sc.}\right)\} \right)$ extended by a qualifier for \emph{degree} will only respond with $\{\emph{ETH Zurich}\}$. Besides, different qualifiers might form logical relationships that the model must respect during inference including qualifier implication (e.g., adding a qualifier that is implicitly implied in the existing qualifiers does not change the truth of a fact) and qualifier mutual exclusion (e.g., adding any two mutually exclusive qualifiers to a fact leads to a contradiction). 
\par
In light of this, we propose ShrinkE, a hyper-relational embedding model that allows for modeling these inference patterns. 
ShrinkE embeds each entity as a point and models a primal triple as a spatio-functional transformation from the head entity to a relation-specific box that entails the possible tails. Each qualifier is modeled as a shrinking of the primal box to a qualifier box. 
The shrinking of boxes simulates the ``monotonicity'' of hyper-relational qualifiers, i.e., attaching qualifiers to a primal triple may only narrow down but never enlarges the answer set. 
The plausibility of a given fact is measured by a point-to-box function that judges whether the tail entity is inside the intersection of all qualifier boxes.
Moreover, since each qualifier is associated with a box, the spatial relationships between the qualifier boxes allow for modeling core inference patterns such as qualifier implication and mutual exclusion. We theoretically show the capability of ShrinkE on modeling various inference patterns including (fact-level) monotonicity, triple-level, and qualifier-level inference patterns. Empirically, ShrinkE achieves competitive performance on three benchmarks.

\subsection{Shrinking Embeddings for Hyper-Relational KGs }

We aim to design a scoring function $f(\cdot)$ taking the embeddings of facts as input so that the output values respect desired logical properties. 
To this end, we introduce primal triple embedding and qualifier embedding, respectively, as illustrated in Fig. \ref{fig:example_bigmap}.

\begin{figure}[t!]
    \centering
    \subfloat[\centering ]{{\includegraphics[width=.46\columnwidth]{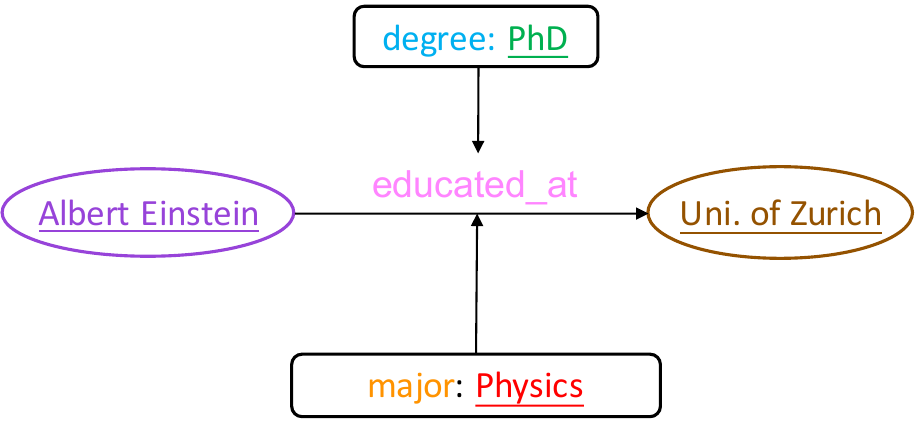}}}
    \subfloat[\centering ]{{\includegraphics[width=.54\columnwidth]{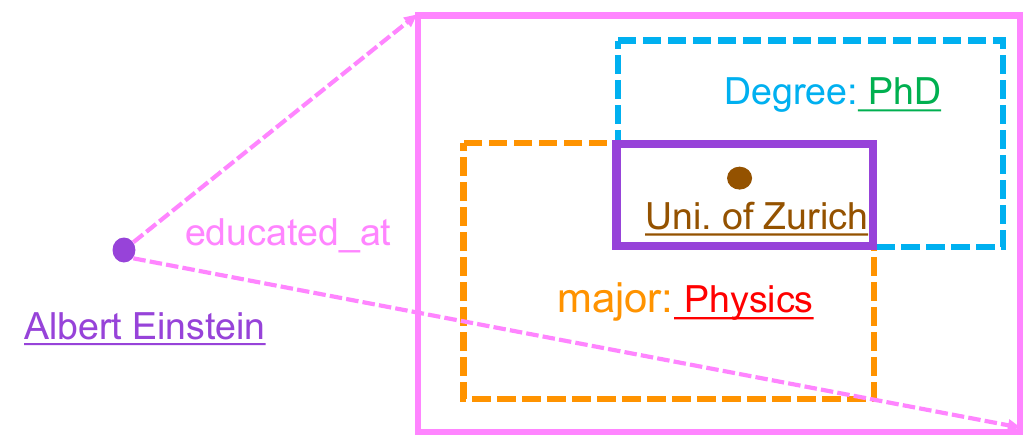} }}
    \caption{An illustration of the proposed idea. 
    (a) A hyper-relational fact is composed of a primal triple and two key-value qualifiers, in which entities (values) are \underline{underlined} while relations (keys) are not. 
    (b) An illustration of the proposed hyper-relational KG embedding model ShrinkE. ShrinkE models the primal triple as a relation-specific transformation from the head entity to a query box (\emph{purple}) that entails the possible answer entities. Each qualifier is modeled as a shrinking of the query box (\emph{orange} and \emph{cyan}) such that the shrinking box is a subset of the query box. The shrinking of the box can be viewed as a geometric interpretation of the monotonicity assumption that we follow. The final answer entities are supposed to be in the intersection box of all shrinking boxes.} 
    \label{fig:example_bigmap}
\end{figure}

\subsubsection{Primal Triple Embedding}
We represent each entity as a point $\mathbf{e} \in \mathbb{R}^d$. Each primal relation $r$ is modeled as a spatio-functional transformation $\mathcal{B}_r:\mathbb{R}^d \rightarrow \Box(d)$ that maps the head $\mathbf{e}_h \in \mathbb{R}^d$ to a $d$-dimensional box in $\Box(d)$ with $\Box(d)$ being the set of boxes in $\mathbb{R}^d$. 
Each box can be parameterized by a lower left point $\mathbf{m} \in \mathbb{R}^d$ and an upper right point $\mathbf{M} \in \mathbb{R}^d$, given by
\begin{equation}
\begin{split}
    &\Box^d(\mathbf{m}, \mathbf{M}) = \\
    &\{\mathbf{x} \in \mathbb{R}^d \mid \mathbf{m}_i \leq \mathbf{x}_i \leq \mathbf{M}_i, \: i=1,\cdots,d\}.
\end{split}
\end{equation}
We leave the superscript of $\Box^d$ away if it is clear from context. 
and call the transformed box a query box. Intuitively, all points in the query box correspond to the possible answer tail entities. Hence, the query box can be viewed as a geometric embedding of the answer set. 
Note that a query could result in an empty answer set. In order to capture such property, we do not exclude empty boxes that correspond to queries with empty answer set. Empty boxes are covered by the cases where there exists a dimension $i$ such that $\mathbf{m}_i \geq \mathbf{M}_i$. 

\textbf{Point-to-box transform}
The spatio-functional point-to-box transformation $\mathcal{B}$ is composed of a relation-specific point transformation $\mathcal{H}_r:  \mathbb{R}^d \rightarrow \mathbb{R}^d$ that transforms the head point $\mathbf{e}_h$ to a new point, and a relation-specific spanning that spans the transformed point to a box, formally given by
\begin{equation}
    \mathcal{B}_r(\mathbf{e}_h) = \Box( \mathcal{H}_r(\mathbf{e}_h) - \tau(\mathbf{\boldsymbol\delta}_r),  \mathcal{H}_r(\mathbf{e}_h) + \tau(\mathbf{\boldsymbol\delta}_r)),
    \label{eq:span}
\end{equation}
where $\mathbf{\boldsymbol\delta}_r \in \mathbb{R}^n$ is a relation-specific spanning/offset vector, and $\tau_{t}(\mathbf{x})=t \log \left(1+e^{\mathbf{x}/t}\right)$ with $t$ being a temperature hyperparameter, is a \emph{softplus} function that enforces the spanned box to be non-empty. 

The point transformation function $\mathcal{H}_r$ could be any functions that are used in other KG embedding models such as translation used in TransE \cite{DBLP:conf/nips/BordesUGWY13} and rotations used in RotatE \cite{DBLP:conf/iclr/SunDNT19}. 
Hence, our model is highly flexible and effective at embedding primal triples. To allow for capturing multiple triple-level inference patterns such as symmetry, inversion, and composition, we combine translation and rotation, and formulate $\mathcal{H}_r$ as
\begin{equation}
    \mathcal{H}_r(\mathbf{e}_h) = \Theta_{r} \mathbf{e}_h + \mathbf{b}_{r}
\end{equation}
where $\Theta_{r}$ is a rotation matrix and $\mathbf{b}_{r}$ is a translation vector.
We parameterize the rotation matrix by a block diagonal matrix $ \Theta_{r} =\operatorname{diag}\mathbf{G}\left(\theta_{r, 1}\right), \ldots,$ $\mathbf{G}\left(\theta_{r, \frac{d}{2}}\right)$, 
where 
\begin{equation}
    \mathbf{G}(\theta)=\left[\begin{array}{cc}
    \cos (\theta) &  \sin (\theta) \\
    \sin (\theta) &  \cos (\theta)
\end{array}\right].
\end{equation}

\begin{figure}[t!]
    \centering
    \includegraphics[width=\linewidth]{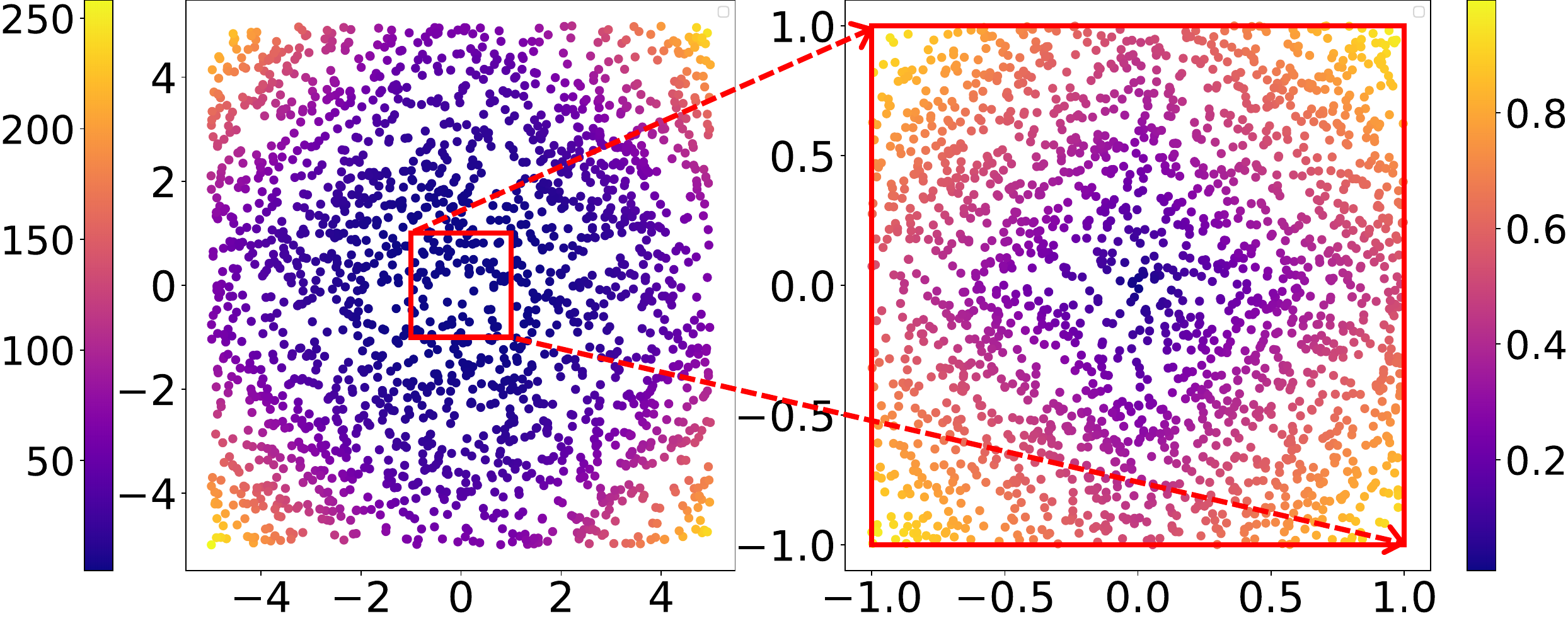}
    \caption{An illustration of the point-to-box distance. The distance (visualized by color maps) grows slowly when the point is inside of the box (\emph{right}) while growing faster when the point is outside of the box (\emph{left}). } 
    \label{fig:point2box}
\end{figure}

\textbf{Point-to-box distance}
The validity of a primal triple $(h,r,t)$ is then measured by judging whether the tail entity point $\mathbf{e}_t$ is geometrically inside of the query box. 
Given a query box $\Box^n(\mathbf{m}, \mathbf{M})$ and an entity point $\mathbf{e} \in \mathbb{R}^{d}$, we denote the center point as $\mathbf{c}=\frac{\mathbf{m} + \mathbf{M}}{2}$. Let $|\cdot|$ denote the L1 norm and $\max()$ denote an element-wise maximum operation. The point-to-box distance is given by
\begin{equation}\small
\begin{split}
    &D(\mathbf{e}, \Box(\mathbf{m}, \mathbf{M})) =  \frac{|\mathbf{e}-\mathbf{c}|_1}{|\max(\mathbf{0}, \mathbf{M}-\mathbf{m})|_1} \\
    &+ \left(|\mathbf{e}-\mathbf{m}|_1 + |\mathbf{e}-\mathbf{M}|_1  - |\max(\mathbf{0}, \mathbf{M}-\mathbf{m})|_1 \right)^2.
\end{split}
\end{equation}
Fig. \ref{fig:point2box} visualizes the distance function. Intuitively, in cases where the point is in the query box, the distance grows relatively slowly and inversely correlates with the box size. In cases where the point is outside the box, the distance grows fast. 

\subsubsection{Qualifier Embedding}

Conceptually, qualifiers add information to given primary facts potentially allowing for additional inferences, but never for the retraction of inferences, reflecting the monotonicity of the representational paradigm. 
Corresponding to the non-declining number of inferences, the number of possible models for this representation shrinks, which can be intuitively reflected by a reduced size of boxes incurred by adding qualifiers. 

\textbf{Box Shrinking}
To geometrically mimic this property in the embedding space, we model each qualifier $(k:v)$ as a "shrinking" of the query box.
Given a box $\Box(\mathbf{m}, \mathbf{M})$, a shrinking is defined as a box-to-box transformation $\mathcal{S}: \Box \rightarrow \Box$ that potentially shrinks the volume of the box while not moving the resulting box outside of the source box. Let $\mathbf{L} = \left(\mathbf{M}-\mathbf{m}\right)$ denote the side length vector, box shrinking is defined by
\begin{equation}\normalsize
\begin{split}
&\mathcal{S}_{r,k,v}\left( \Box\left(\mathbf{m}, \mathbf{M}\right) \right) =\\
&\Box \left( \mathbf{m} + 
\sigma\left(\mathbf{s}_{r,k,v}\right) \odot \mathbf{L} ,
\mathbf{M} - \sigma\left(\mathbf{S}_{r,k,v}\right) \odot \mathbf{L} \right),
\end{split}
\end{equation}
where $\mathbf{s}_{r,k,v} \in \mathbb{R}^n$ and $\mathbf{S}_{r,k,v} \in \mathbb{R}^n$ are the "shrinking" vectors for the lower left corner and the upper right corner, respectively. $\sigma$ is a \emph{sigmoid} function and $\odot$ is element-wise vector multiplication. 
The resulting box, including the case of empty box, is always inside the query box, i.e.,  $\mathcal{S}( \Box^n(\mathbf{m}, \mathbf{M}) ) \subseteq \Box^n(\mathbf{m}, \mathbf{M})$, which exactly resembles the qualifier monotonicity.

We use $r,k,v$ as the indices of the shrinking vectors because the shrinking of the box should depend on the relatedness between the primal relation and the qualifier. 
For example, if a qualifier $({\emph{degree}:\emph{bachelor}})$ is highly related to the primal relation $\emph{educated\_at}$, the scale of the shrinking vectors should be small as it adds a weak constraint to the triple. 
If the qualifier is unrelated to the primal relation, e.g., $({\emph{degree}:\emph{bachelor}})$ and $\emph{born\_in}$, the shrinking might even enforce an empty box. 

To learn the shrinking vectors, we leverage an MLP layer that takes the primal relation and key-value qualifier as input and outputs the shrinking vectors defined by $\mathbf{s}_{r,k,v}, \mathbf{S}_{r,k,v} = \operatorname{MLP}\left(\operatorname{concat} \left( r_\theta, k_\theta, v_\theta \right) \right)$ where $r_\theta, k_\theta, v_\theta$ are the embeddings of $r,k,v$, respectively. 

\subsubsection{Scoring function and learning.}

\textbf{Scoring function} 

The score of a given hyper-relational fact is defined by 
\begin{equation}
    f\left( \left( \left(h,r,t\right),\mathcal{Q}\right) \right) = D(\mathbf{e}_t, \Box_\mathcal{Q}(\mathbf{m}, \mathbf{M})),
\end{equation}
where $\Box_\mathcal{Q}(\mathbf{m}, \mathbf{M})$ denotes the target box that is calculated by the intersection of all shrinking boxes of the qualifier set $\mathcal{Q}$. 
The intersection of $n$ boxes can be calculated by taking the maximum of lower left points of all boxes and taking the minimum of upper right points of all boxes, given by
\begin{equation}\small
\begin{split}
    \mathcal{I}(\Box_1, \cdots, \Box_n)  = 
    \Box\left( \max_{i \in 1, \cdots,n} \mathbf{m}_i, \min_{i \in 1, \cdots,n} \mathbf{M}_i \right).
\end{split}
\end{equation}

Note that if there is no intersection between boxes, this intersection operation still works as it results in an empty box. The intersection of boxes is a permutation-invariant operation, implying that perturbing the order of qualifiers does not change the plausibility of the facts. 

\textbf{Learning}
As a standard data augmentation strategy, we add reciprocal relations $\left(t^{\prime}, r^{-1}, h^{\prime}\right)$ for the primary triple in each hyper-relational fact.
For each positive fact in the training set, we generate $n_{\operatorname{neg}}$ negative samples by corrupting a subject/tail entity with randomly selected entities from $\mathcal{E}$. 
We adopt the cross-entropy loss to optimize the model via the Adam optimizer, which is given by
\begin{equation}\small
    \mathcal{L}=-\frac{1}{N} \sum_{i=1}^{N}\left(y_{i} \log \left(p_{i}\right)+ \sum_{i=1}^{n_{\operatorname{neg}}} \left(1-y_{i}\right) \log \left(1-p_{i}\right)\right),
\end{equation}
where $N$ denotes the total number of facts in the training set. $y_{i}$ is a binary indicator denoting whether a fact is true or not. $p_{i}=\sigma(f(\mathcal{F}))$ is the predicted score of a fact $\mathcal{F}$ with $\sigma$ being the \emph{sigmoid} function. 

\subsection{Theoretical Analysis}
\label{sec:theorem}

Analyzing and modeling inference patterns is of great importance for KG embeddings because it enables generalization capability, i.e., once the patterns are learned, new facts that respect the patterns can be inferred. 
An inference pattern is a specification of a logical property that may exist in a KG, Formally, an inference pattern is a logical form $\psi \rightarrow \phi$ with $\psi$ and $\phi$ being the body and head, implying that if the body is satisfied then the head must also be satisfied. 

In this section, we analyze the theoretical capacity of ShrinkE for modeling inference patterns. All proofs of propositions are in Appendix \ref{app:proof}. 

\textbf{Fact-level inference pattern (monotonicity)} The following proposition shows that ShrinkE is able to model monotonicity. 
\begin{proposition}
Given any two facts $\mathcal{F}_1=\left(\mathcal{T},\mathcal{Q}_1\right)$ and $\mathcal{F}_2=\left(\mathcal{T},\mathcal{Q}_2\right)$ where $\mathcal{Q}_2 \subseteq \mathcal{Q}_1$, i.e., $\mathcal{F}_2$ is a partial fact of $\mathcal{F}_1$, the output of the scoring function $f(\cdot)$ of ShrinkE satisfy the constraint $f(\mathcal{F}_2) \geq f(\mathcal{F}_1)$. 
\label{prop:qualifier_monotonicity}
\end{proposition}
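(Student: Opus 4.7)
The claim splits naturally into a geometric step and an analytic step. The plan is first to show that the target box shrinks monotonically with the qualifier set, i.e., $\Box_{\mathcal{Q}_1}\subseteq \Box_{\mathcal{Q}_2}$ whenever $\mathcal{Q}_2\subseteq \mathcal{Q}_1$, and then to argue that the point-to-box distance $D$ inherits this containment, so that the smaller target box produces an at-least-as-large score and $f(\mathcal{F}_2)\ge f(\mathcal{F}_1)$ follows under the paper's sign convention.

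For the geometric step, I would start from the coordinate-wise bounds that define each shrinking $\mathcal{S}_{r,k,v}$: since the sigmoid factors lie in $(0,1)$ and $L_i\ge 0$, the new lower corner $m_i+\sigma(s_{r,k,v,i})L_i$ satisfies $m_i\le m_i+\sigma(s_{r,k,v,i})L_i\le M_i$ and the new upper corner $M_i-\sigma(S_{r,k,v,i})L_i$ satisfies $m_i\le M_i-\sigma(S_{r,k,v,i})L_i\le M_i$. Hence $\mathcal{S}_{r,k,v}(\Box_r)\subseteq \Box_r$ for every qualifier. Because intersection is antitone in the index set, adding qualifiers can only shrink $\Box_{\mathcal{Q}}=\bigcap_{(k:v)\in\mathcal{Q}}\mathcal{S}_{r,k,v}(\Box_r)$: from $\mathcal{Q}_2\subseteq \mathcal{Q}_1$ one immediately gets $\Box_{\mathcal{Q}_1}=\bigcap_{(k:v)\in \mathcal{Q}_1}\mathcal{S}_{r,k,v}(\Box_r)\subseteq \bigcap_{(k:v)\in \mathcal{Q}_2}\mathcal{S}_{r,k,v}(\Box_r)=\Box_{\mathcal{Q}_2}$.

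For the analytic step, I would analyze $D$ term by term under the inclusion $\Box_1\subseteq \Box_2$. The quadratic ``outside'' summand $(|\mathbf{e}-\mathbf{m}|_1+|\mathbf{e}-\mathbf{M}|_1-|\max(\mathbf{0},\mathbf{M}-\mathbf{m})|_1)^2$ decomposes coordinate-wise into the squared excess by which $\mathbf{e}$ lies outside the interval; a routine case analysis shows this excess is non-increasing whenever the interval is enlarged, so it is non-increasing under box containment. For the normalized first summand $|\mathbf{e}-\mathbf{c}|_1/|\max(\mathbf{0},\mathbf{M}-\mathbf{m})|_1$, the denominator is monotone in box size, while the center of a shrunk box can be written as $\mathbf{c}'=\mathbf{c}+\tfrac{1}{2}(\sigma(\mathbf{s})-\sigma(\mathbf{S}))\odot \mathbf{L}$, so the drift of the center is bounded by half the loss in side length. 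Combining the two estimates coordinate-wise and summing yields $D(\mathbf{e}_t,\Box_{\mathcal{Q}_2})\le D(\mathbf{e}_t,\Box_{\mathcal{Q}_1})$, which is exactly the claimed inequality.

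The main obstacle will be the normalized first summand: unlike the outside penalty, whose monotonicity is fully coordinate-wise, the ratio $|\mathbf{e}-\mathbf{c}|_1/|\mathbf{L}|_1$ depends jointly on the drift of the center and the loss of side length, which are governed by the two independent parameter vectors $\mathbf{s}$ and $\mathbf{S}$. I would therefore devote most of the effort to a careful coordinate-wise estimate that ties the center drift to the side-length reduction via the identity above, and, if helpful, restrict attention to the relevant regime $\mathbf{e}_t\in \Box_{\mathcal{Q}_1}$ where the numerator is automatically bounded by half of the inner side length; the complementary regime where $\mathbf{e}_t$ is far from both boxes can then be handled by showing that the quadratic outside-penalty term dominates and absorbs any residual non-monotonicity of the normalized term.
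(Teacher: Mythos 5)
Your overall skeleton is the same as the paper's: first establish the containment $\Box_{\mathcal{Q}_1}\subseteq\Box_{\mathcal{Q}_2}$ from $\mathcal{S}_{r,k,v}(\Box)\subseteq\Box$ together with the antitonicity of intersection in the index set (the paper argues this step identically), and then deduce the score inequality from a monotonicity property of the point-to-box distance under containment. Where you diverge is in the second step: the paper splits into three cases according to the position of $\mathbf{e}_t$ (inside the smaller box, between the two boxes, outside the larger box) and in each case appeals to a qualitative monotonicity of $D$, whereas you attack $D$ term by term. Your treatment of the quadratic outside penalty is correct and genuinely coordinate-wise.

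The genuine gap is in the normalized first summand, and your proposed fix does not close it. The ratio $|\mathbf{e}-\mathbf{c}|_1/|\max(\mathbf{0},\mathbf{M}-\mathbf{m})|_1$ is not monotone under box containment, and bounding the center drift by half the side-length loss cannot make it so. Concretely, in one dimension take $\Box_{\mathcal{Q}_1}=[0,2]$, $\Box_{\mathcal{Q}_2}=[0,10]$ and $e=1$: the point lies inside both boxes, so both quadratic terms vanish, yet $D(e,\Box_{\mathcal{Q}_2})=|1-5|/10=0.4$ while $D(e,\Box_{\mathcal{Q}_1})=|1-1|/2=0$, so your target inequality $D(\mathbf{e}_t,\Box_{\mathcal{Q}_2})\le D(\mathbf{e}_t,\Box_{\mathcal{Q}_1})$ fails precisely in the ``inside both boxes'' regime that you describe as routine; taking $e=5$ instead reverses the inequality, so $D$ is monotone in neither direction. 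You should be aware that this difficulty is intrinsic to the distance function rather than to your write-up: the paper's own case analysis glosses over the same competition between the shrinking denominator and the drifting center (and, adding to the confusion, states its displayed inequalities with the sign opposite to yours). To make the argument rigorous you would need to drop or modify the normalization in the first summand, or restrict to shrinkings with $\mathbf{s}_{r,k,v}=\mathbf{S}_{r,k,v}$ so that the centers coincide. Finally, make the sign convention explicit: the proposition asserts $f(\mathcal{F}_2)\ge f(\mathcal{F}_1)$ while the paper defines $f$ to equal $D$, so your conclusion $D(\mathcal{F}_2)\le D(\mathcal{F}_1)$ only yields the claim if $f$ is a decreasing function of $D$.
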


\textbf{Triple-level inference patterns} 
Prominent triple-level inference patterns include symmetry 
$(h,r,t) \rightarrow (t,r,h)$, anti-symmetry $(h,r,t) \rightarrow \neg (h,r,t)$, inversion $(h,r_1,t) \rightarrow (t,r_2,h)$, composition $(e_1,r_1,$ $ e_2) \wedge (e_2,r_2,e_3)  \rightarrow (e_1,r_3,e_3)$, relation implication $(h,r_1,t) \rightarrow (h,r_2,t)$, relation intersection $(h,r_1, t) \wedge (h, r_2, t) \rightarrow (h, r_3, t)$, and relation mutual exclusion $(h,r_1, t) \wedge (h, r_2, t) \rightarrow \bot $. All these triple-level inference patterns also exist in hyper-relational facts when their qualifiers are the same, e.g., hyper-relational symmetry means $\left(\left(h,r,t\right), \mathcal{Q}\right) \rightarrow \left(\left(t,r,h\right), \mathcal{Q}\right)$. 
Proposition \ref{prop:triple_inference_pattern} states that ShrinkE is able to infer all of them. 
\begin{proposition}
    ShrinkE is able to infer hyper-relational symmetry, anti-symmetry, inversion, composition, relation implication, relation intersection, and relation exclusion.  
\label{prop:triple_inference_pattern}
\end{proposition}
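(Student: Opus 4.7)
The plan is to reduce each hyper-relational pattern to a condition on the primary-triple embedding, because once the qualifier set $\mathcal{Q}$ is fixed across the body and head of a pattern the shrinking operator $\mathcal{S}_{r,k,v}$ contributes identically on both sides (up to the dependence on the primary relation, which we will control explicitly). Concretely, I will first prove the auxiliary fact that $f((h,r,t),\mathcal{Q})$ is minimized when $\mathbf{e}_t$ coincides with the center of the shrunk intersection box $\Box_\mathcal{Q}(\mathbf{m},\mathbf{M})$, and that if the primary-relation centers $\mathcal{H}_r(\mathbf{e}_h)$ coincide and the spanning $\boldsymbol\delta_r$ is large enough to contain the target, the qualifier-induced shrinking still places $\mathbf{e}_t$ inside the final box. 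This reduces every pattern to algebraic identities on $(\Theta_r,\mathbf{b}_r,\boldsymbol\delta_r)$.

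Next, I would dispatch the triple-level patterns in turn by exhibiting explicit parameter settings. For \textbf{symmetry}, set $\Theta_r$ to a rotation of order two (every block angle $\theta_{r,i}\in\{0,\pi\}$) and require $\mathbf{b}_r$ to lie in $\ker(I+\Theta_r)$, so that $\Theta_r(\Theta_r\mathbf{e}_h+\mathbf{b}_r)+\mathbf{b}_r=\mathbf{e}_h$. For \textbf{anti-symmetry}, choose $\Theta_r$ with some $\theta_{r,i}\notin\{0,\pi\}$ or $\mathbf{b}_r\notin\ker(I+\Theta_r)$, so symmetry fails generically. For \textbf{inversion} $r_1\leftrightarrow r_2$, set $\Theta_{r_2}=\Theta_{r_1}^{\top}$ and $\mathbf{b}_{r_2}=-\Theta_{r_1}^{\top}\mathbf{b}_{r_1}$; the block-diagonal Givens structure makes $\Theta_{r_1}^{\top}$ representable as another parameter $\Theta_{r_2}$ with angles $-\theta_{r_1,i}$. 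For \textbf{composition} $(r_1,r_2)\Rightarrow r_3$, set $\Theta_{r_3}=\Theta_{r_2}\Theta_{r_1}$ (which remains block-diagonal with angles $\theta_{r_1,i}+\theta_{r_2,i}$) and $\mathbf{b}_{r_3}=\Theta_{r_2}\mathbf{b}_{r_1}+\mathbf{b}_{r_2}$, so that $\mathcal{H}_{r_3}=\mathcal{H}_{r_2}\circ\mathcal{H}_{r_1}$.

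For the box-based patterns, I will exploit the spanning vector $\boldsymbol\delta_r$. For \textbf{relation implication} $r_1\Rightarrow r_2$, set $\mathcal{H}_{r_1}=\mathcal{H}_{r_2}$ and $\tau(\boldsymbol\delta_{r_2})\geq\tau(\boldsymbol\delta_{r_1})$ component-wise, giving $\mathcal{B}_{r_1}(\mathbf{e}_h)\subseteq\mathcal{B}_{r_2}(\mathbf{e}_h)$; coupling this with Proposition~\ref{prop:qualifier_monotonicity} applied symmetrically to the qualifier shrinking (identical $\mathcal{Q}$ on both sides) yields $f((h,r_1,t),\mathcal{Q})\geq f((h,r_2,t),\mathcal{Q})$, so truth is transferred. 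For \textbf{relation intersection} $r_1\wedge r_2\Rightarrow r_3$, force the query box of $r_3$ to equal $\mathcal{B}_{r_1}(\mathbf{e}_h)\cap\mathcal{B}_{r_2}(\mathbf{e}_h)$ by aligning centers and choosing $\boldsymbol\delta_{r_3}$ accordingly; by the intersection formula $\mathcal{I}$, if $\mathbf{e}_t$ lies in both boxes it lies in the intersection. For \textbf{relation mutual exclusion}, choose $\mathcal{H}_{r_1},\mathcal{H}_{r_2}$ and $\boldsymbol\delta_{r_1},\boldsymbol\delta_{r_2}$ so that the resulting boxes are disjoint for every head; then no single tail $\mathbf{e}_t$ can simultaneously lie in both, which forces $f((h,r_1,t),\mathcal{Q})$ and $f((h,r_2,t),\mathcal{Q})$ cannot both be small.

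The main obstacle will be the box-based patterns, specifically justifying that the qualifier shrinking does not silently break the containment or disjointness used to transfer truth between $r_1,r_2,r_3$. Because $\mathcal{S}_{r,k,v}$ depends on the primary relation $r$, the shrinkings on the two sides of an implication need not agree even when $\mathcal{Q}$ is fixed. I plan to address this by observing that the MLP producing $(\mathbf{s}_{r,k,v},\mathbf{S}_{r,k,v})$ is an unconstrained learnable function, hence one can choose parameters so that $\mathcal{S}_{r_1,k,v}=\mathcal{S}_{r_2,k,v}$ on the relevant relation pairs arising in each pattern, which is precisely the additional freedom that keeps the primary-triple reductions valid after attaching qualifiers.
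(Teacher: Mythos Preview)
Your approach is essentially the paper's: exhibit explicit parameter settings on $(\Theta_r,\mathbf{b}_r,\boldsymbol\delta_r)$ for each pattern, with the box-based patterns (implication, intersection, exclusion) handled via containment/disjointness of the spanned boxes. The one noteworthy difference is that the paper simplifies far more aggressively for the rotation-based patterns (symmetry, anti-symmetry, inversion, composition): it sets $\mathbf{b}_r=\mathbf{0}$ \emph{and} $\boldsymbol\delta_r=\mathbf{0}$, so the query box degenerates to the single point $\Theta_r\mathbf{e}_h$. Once the box is a point, every qualifier shrinking $\mathcal{S}_{r,k,v}$ acts trivially and the score reduces to a pure point-to-point distance, so the entire ``main obstacle'' you identify (qualifier shrinking depending on the primary relation $r$) simply disappears, and your closing MLP-freedom argument is not needed. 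Your route---keeping $\mathbf{b}_r$ and the box nontrivial and then invoking the learnable MLP to align the shrinkings across the relevant relations---is more general and is a legitimate alternative, but it buys extra bookkeeping that the paper's choice of $\boldsymbol\delta_r=\mathbf{0}$ avoids entirely; conversely, your version makes explicit a dependence (of $\mathcal{S}_{r,k,v}$ on $r$) that the paper's lemmas for the box-based patterns silently ignore.
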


\textbf{Qualifier-level inference pattern}
In hyper-relational KGs, inference patterns not only exist at the triple level but also at the level of qualifiers. 

\begin{definition}[qualifier implication]
Given two qualifiers $q_i$ and $q_j$, $q_i$ is said to imply $q_j$, i.e., $q_i \rightarrow q_j$ iff for any fact $\mathcal{F} = \left(\mathcal{T}, \mathcal{Q}\right)$, if attaching $q_i$ to $\mathcal{Q}$ results in a true (resp. false) fact, then attaching $q_j$ to $\mathcal{Q}\cup \left\{q_i\right\}$ also results in a true (resp. false) fact. Formally, $q_i \rightarrow q_j$ implies
\begin{equation}
    \forall \ \mathcal{T},\mathcal{Q}: \left(\mathcal{T}, \mathcal{Q}\cup \left\{q_i\right\}\right) \rightarrow \left(T, Q \cup \{q_i, q_j\}\right).
\end{equation}
\end{definition}

\begin{definition}[qualifier exclusion] 
Two qualifiers $q_i,q_j$ are said to be mutually exclusive iff for any fact $\mathcal{F} = \left(\mathcal{T}, \mathcal{Q}\right)$, 
by attaching $q_i,q_j$ to the qualifier set of $\mathcal{F}$, 
the new fact $\mathcal{F}^{\prime}=\left(\mathcal{T}, \mathcal{Q} \cup \left\{q_i,q_j\right\} \right)$ is false, meaning that they lead to a contradiction, i.e.,  $q_i \wedge q_j \rightarrow \bot$. Formally, $q_i \wedge q_j \rightarrow \bot$ implies
\begin{equation}
    \forall \ \mathcal{T},\mathcal{Q} \: : \left(T, Q \cup \left\{q_i,q_j\right\} \right) \rightarrow \bot 
\end{equation}
\end{definition}

Note that if two qualifiers $q_i,q_j$ are neither mutually exclusive nor forming implication pair, then $q_i,q_j$ are said to be overlapping, a state between implication and mutual exclusion. 
Qualifier overlapping, in our case, can be captured by box intersection/overlapping. 
Qualifier overlapping itself does not form any logical property in the form of $\psi \rightarrow \phi$. 
However, when involving three qualifiers and two of them overlap, qualifier intersection can be modeled. 

\begin{definition}[qualifier intersection] 

A qualifier $q_k$ is said to be an intersection of two qualifiers $q_i,q_j$ iff for any fact $\mathcal{F} = \left(\mathcal{T}, \mathcal{Q}\right)$, if attaching $q_i,q_j$ to $\mathcal{Q}$ results in a true (resp. false) fact, then by replacing $\{q_i,q_j\}$ with $q_k$, the truth value of the fact does not change. Namely, $q_i \wedge q_j \rightarrow q_k$ implies
\begin{equation}
    \forall \ \mathcal{T},\mathcal{Q}: \left(T, Q \cup \{q_i,q_j\}  \right) \rightarrow  \left(\mathcal{T}, \mathcal{Q} \cup \left\{q_k\right\} \right).
\end{equation}
\end{definition}

Apparently, qualifier intersection $ q_i \wedge q_j \rightarrow  q_k$ necessarily implies qualifier implications $q_i \rightarrow q_k$ and $q_j \rightarrow q_k$. 
Hence, qualifier intersection can be viewed as a combination of two qualifier implications, and this can be generalized to $ q_1 \wedge q_2 \wedge \cdots  \rightarrow  q_k$.
Proposition \ref{prop:qualifier_inference_pattern} shows that ShrinkE is able to infer qualifier implication, exclusion, and composition. 

\begin{proposition}
ShrinkE is able to infer qualifier implication, mutual exclusion, and intersection.
\label{prop:qualifier_inference_pattern}
\end{proposition}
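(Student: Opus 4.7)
The plan is to reduce each of the three qualifier patterns to a purely box-level statement about the shrinkings $\mathcal{S}_{q_i}(B)$, $\mathcal{S}_{q_j}(B)$, $\mathcal{S}_{q_k}(B)$ of an arbitrary primal box $B = \mathcal{B}_r(\mathbf{e}_h)$, exhibit concrete conditions on the sigmoided shrinking vectors that realize each such statement uniformly in $B$, and finally lift them to the model parameters via the MLP that generates the shrinking vectors, combined with the fact-monotonicity of Proposition~\ref{prop:qualifier_monotonicity}.

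First, I would unfold the semantics of the scoring function. A tail point $\mathbf{e}_t$ attains the minimum score $f(\mathcal{T},\mathcal{Q})=0$ precisely when $\mathbf{e}_t \in \bigcap_{q\in \mathcal{Q}}\mathcal{S}_q(B)$, so attaching a qualifier $q$ to $\mathcal{Q}$ simply further intersects this target region with $\mathcal{S}_q(B)$. Hence the three patterns translate into: (i) $\mathcal{S}_{q_i}(B)\subseteq\mathcal{S}_{q_j}(B)$ for implication (so that intersecting once more with $\mathcal{S}_{q_j}(B)$ changes nothing); (ii) $\mathcal{S}_{q_i}(B)\cap\mathcal{S}_{q_j}(B)=\emptyset$ for mutual exclusion (so that the target box collapses and no tail can lie inside); and (iii) $\mathcal{S}_{q_i}(B)\cap\mathcal{S}_{q_j}(B)=\mathcal{S}_{q_k}(B)$ for intersection. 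Each of these statements is to hold for every primal box $B$.

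Second, I would read off sufficient conditions on the componentwise sigmoided shrinking vectors $\alpha_q:=\sigma(\mathbf{s}_{r,k,v})$ and $\beta_q:=\sigma(\mathbf{S}_{r,k,v})$. Writing $\mathbf{L}=\mathbf{M}-\mathbf{m}$, the shrunk box of $q$ has corners $\mathbf{m}+\alpha_q\odot\mathbf{L}$ and $\mathbf{M}-\beta_q\odot\mathbf{L}$. A direct check shows that (i) holds iff $\alpha_{q_i}\geq\alpha_{q_j}$ and $\beta_{q_i}\geq\beta_{q_j}$ componentwise; (ii) holds whenever there is a coordinate $d$ with $\alpha_{q_j,d}+\beta_{q_i,d}>1$ (or symmetrically $\alpha_{q_i,d}+\beta_{q_j,d}>1$); and (iii) holds iff $\alpha_{q_k}=\max(\alpha_{q_i},\alpha_{q_j})$ and $\beta_{q_k}=\max(\beta_{q_i},\beta_{q_j})$ componentwise. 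Crucially, none of these conditions involve $\mathbf{m},\mathbf{M}$, so a single choice of shrinking parameters works uniformly for every primal box. Combined with Proposition~\ref{prop:qualifier_monotonicity}, which guarantees that adding qualifiers never enlarges the answer set, this already proves the claimed inferences at the geometric level.

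Third, I would lift these algebraic conditions to the model parameters. The shrinking vectors are produced by a shared MLP fed with the concatenation of learned embeddings of $r,k,v$; for any finite collection of implications, exclusions, and intersections the targets above amount to a finite set of affine inequalities and equalities on the MLP outputs at distinct inputs, which can be fulfilled by standard universal-approximation arguments. The main obstacle lies precisely here: unlike the per-relation transformations of Proposition~\ref{prop:triple_inference_pattern}, the shrinking parameters of different qualifiers are globally coupled through one MLP and cannot be fixed independently, so the non-trivial part of the proof is to verify that multiple patterns can be enforced simultaneously. I expect this to reduce to arguing that the qualifier embeddings $(r_\theta,k_\theta,v_\theta)$ can be chosen pairwise separable, after which a sufficiently wide MLP interpolates the required shrinking vectors.
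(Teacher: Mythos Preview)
Your proposal is correct and follows the same geometric reduction as the paper: qualifier implication is box containment, mutual exclusion is box disjointness, and intersection is realized by the box of $q_k$ matching the intersection of the boxes of $q_i$ and $q_j$. The paper's own proof is a two-sentence sketch that stops at this geometric observation; your write-up goes further by making the componentwise conditions on the sigmoided shrinking vectors explicit, checking that they are independent of the primal box $B$, and flagging the global coupling through the shared MLP---a genuine technical point the paper simply does not address. One small remark: the shrinking vectors are indexed by $(r,k,v)$, not just the qualifier $(k,v)$, so your conditions on $\alpha_q,\beta_q$ must hold for every primal relation $r$ as well; this is absorbed by your universal-approximation step but is worth making explicit.
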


\subsection{Evaluation}

In this section, we evaluate the effectiveness of ShrinkE on hyper-relational link prediction tasks.

\begin{table}[t!]
    \centering
      \caption{Dataset statistics, where the columns indicate the number of all facts, hyper-relational facts with the number of qualifiers $m>0$, entities, relations, and facts in train/dev/test sets, respectively.}
      \resizebox{\columnwidth}{!}{
        \begin{tabular}{cccccccccc}
            \hline
            & All facts & Higher-arity facts (\%) & Entities & Relations & Train & Dev & Test \\ 
            \hline
            JF17K & 100,947 & 46,320 (45.9\%) & 28,645 & 501 & 76,379 & – & 24,568 \\
            WikiPeople & 382,229 & 44,315 (11.6\%) & 47,765 & 193 & 305,725 & 38,223 & 38,281 \\
            WD50k & 236,507 & 32,167 (13.6\%) & 47,156 & 532 & 166,435 & 23,913 & 46,159 & \\
            WD50K(33) & 102,107 & 31,866 (31.2\%) & 38,124 & 475 & 73,406 & 10,568 & 18,133\\
            WD50K(66) & 49,167  & 31,696 (64.5\%) & 27,347 & 494 & 35,968 & 5,154 & 8,045\\
            WD50K(100) & 31,314 & 31,314 (100\%)  & 18,792 & 279 & 22,738 & 3,279 & 5,297\\
            \hline
        \end{tabular}
        }
        \label{tab:shrinke_dataset}
\end{table}

\subsubsection{Experimental Setup}

\textbf{Datasets.} 
We conduct link prediction experiment on three hyper-relational KGs: JF17K \cite{DBLP:conf/ijcai/WenLMCZ16}, WikiPeople \cite{DBLP:conf/www/GuanJWC19}, and WD50k \cite{DBLP:conf/emnlp/GalkinTMUL20}. 
JF17K is extracted from Freebase while WikiPeople and WD50k are extracted from Wikidata. In WikiPeople and WD50k, only $11.6\%$ and $13.6\%$ of the facts, respectively, contain qualifiers, while the remaining facts contain only triples (after dropping statements containing literals in WikiPeople, only 2.6\% facts contain qualifiers). 
For better comparison, we also consider three splits of WD50K that contain a higher percentage of triples with qualifiers. 
The three splits are WD50K(33), WD50K(66), and WD50K(100), which contain 33\%, 66\%, and 100\% facts with qualifiers, respectively. 
Statistics of the datasets are given in Table \ref{tab:shrinke_dataset}.
We conjecture that the performance on WikiPeople and WD50k will be dominated by the scores of triple-only facts while the performance on the variants of WD50k will be dominated by the modeling of qualifiers. 
\cite{DBLP:conf/emnlp/GalkinTMUL20} detected a data leakage issue of JF17K, i.e., about $44.5\%$ of the test facts share the same primal triple as the train facts. To alleviates this issue, WD50K removes all facts from train/validation sets that share the same primal triple with test facts. 
We conjecture that WD50K will be a more challenging benchmark than JF17K and WikiPeople. 
Besides, WD50K still contains only a small percentage (13.6\%) of facts that contain qualifiers.
Since JF17K does not provide a validation set, we split $20\%$ of facts from the training set as the validation set. 

\textbf{Environments and hyperparameters}
We implement ShrinkE with Python 3.9 and Pytorch 1.11, and train our model on one Nvidia A100 GPU with 40GB of VRAM. 
We use Adam optimizer with a batch size of $128$ and an initial learning rate of $0.0001$.  
For negative sampling, we follow the strategy used in StarE \cite{DBLP:conf/emnlp/GalkinTMUL20} by randomly corrupting the head or tail entity in the primal triple. 
Different from HINGE \cite{DBLP:conf/www/RossoYC20} and NeuInfer \cite{DBLP:conf/acl/GuanJGWC20} that score all potential facts one by one that takes an extremely long time for evaluation, ShrinkE ranks each target answer against all candidates in a single pass and significantly reduces the evaluation time. 
We search the dimensionality from $[50, 100,200,300]$ and the best one is $200$. We set the temperature parameter to be $t=1.0$. We use the label smoothing strategy and set the smoothing rate to be $0.1$. 
We repeat all experiments for $5$ times with different random seeds and report the average values, the error bars are relatively small and are omitted. 

\textbf{Baselines} 
We compare ShrinkE against various models, including m-TransH \cite{DBLP:conf/ijcai/WenLMCZ16}, RAE \cite{DBLP:conf/www/ZhangLMM18}, NaLP-Fix \cite{DBLP:conf/www/RossoYC20}, HINGE \cite{DBLP:conf/www/RossoYC20}, NeuInfer \cite{DBLP:conf/acl/GuanJGWC20}, BoxE \cite{boxE}, Transformer and StarE \cite{DBLP:conf/emnlp/GalkinTMUL20}. Note that we exclude Hy-Transformer \cite{DBLP:journals/corr/abs-2104-08167}, GRAN 
\cite{DBLP:conf/acl/WangWLZ21} and QUAD \cite{DBLP:journals/corr/abs-2208-14322} for comparison because 
1) they are heavily based on StarE and Transformer; and
2) they leverage auxiliary training tasks, which can also be incorporated into our framework and we leave as one future work.

\textbf{Evaluation} 
We strictly follow the settings of \cite{DBLP:conf/emnlp/GalkinTMUL20}, where the aim is to predict a missing head/tail entity in a hyper-relational fact. We consider the widely used ranking-based metrics for link prediction: mean reciprocal rank (MRR) and H@K (K=1,10). For ranking calculation, we consider the filtered setting by filtering the facts 
existing in the training and validation sets \cite{DBLP:conf/nips/BordesUGWY13}.

\begin{table}
    \centering
    \caption{Link prediction results on three benchmarks with the number in the parentheses denoting the ratio of facts with qualifiers. Baseline results are taken from \cite{DBLP:conf/emnlp/GalkinTMUL20}. }
    \resizebox{\columnwidth}{!}{
    \begin{tabular}{lcccccccccccc}
    \hline 
    \multirow{2}{*}{Method} & \multicolumn{3}{c}{WikiPeople (2.6)}  & & \multicolumn{3}{c}{JF17K (45.9)} & & \multicolumn{3}{c}{WD50K (13.6)} \\
    \cline {2-4} \cline {6-8} \cline{10-12} & MRR & H@1 & H@10 & & MRR & H@ 1& H@ 10 & & MRR & H@ 1 & H@ 10 \\
    \hline 
    m-TransH & $0.063$ & $0.063$ & $0.300$ & & $0.206$ & $0.206$ & $0.463$ & & $-$ & $-$ & $-$\\
    RAE & $0.059$ & $0.059$ & $0.306$ & & $0.215$ & $0.215$ & $0.469$ & & $-$ & $-$ & $-$\\
    NaLP-Fix & $0.420$ & $0.343$ & $0.556$ & & $0.245$ & $0.185$ & $0.358$ & & 0.177 & 0.131 & 0.264 \\
    NeuInfer & $0.350$ & $0.282$ & $0.467$ & & $0.451$ & $0.373$ & $0.604$ & & $-$ & $-$ & $-$ \\
    HINGE & $0.476$ & $0.415$ & $0.585$ & & $0.449$ & $0.361$ & $0.624$ & & $0.243$ & $0.176$ & $0.377$ \\
    Transformer & 0.469 & 0.403 & 0.586 & & 0.512 & 0.434 & 0.665 & & 0.264 & 0.194 & 0.401 \\
    BoxE & 0.395 & 0.293 & 0.503 & & 0.560 & 0.472 & 0.722 & & $-$ & $-$ & $-$ \\
    StarE & \textbf{0.491} & 0.398 & \textbf{0.648} & & 0.574 & 0.496 & 0.725 & & \textbf{0.349} & \underline{0.271} & \textbf{0.496} \\
    \hline
    ShrinkE & \underline{0.485} & \textbf{0.431} & \underline{0.601} & & \textbf{0.589} &\textbf{ 0.506} & \textbf{0.749} &  & \underline{ 0.345} & \textbf{0.275} & \underline{0.482} \\
    \hline
    \end{tabular}}
    \label{tab:main_results}
\end{table}

\subsubsection{Main Results and Analysis}

Table \ref{tab:main_results} and Table \ref{tab:wd50k-variants} summarize the performances of all approaches on the six datasets. Overall, ShrinkE achieves either the best or the second-best results against all baselines, showcasing the expressivity and capability of ShrinkE on hyper-relational link prediction. 
In particular, We observe that ShrinkE outperforms all baselines on JF17K and the three variants of WD50K with a high ratio of facts containing qualifiers while achieving highly competitive results on WikiPeople and the original version of WD50K that contain fewer facts with qualifiers. 
Interestingly, we find that the performance gains increase when increasing the ratio of facts containing qualifiers. On WD50K (100) where 100\% facts contain qualifiers, the performance gain of ShrinkE is most significant across all metrics (6.2\%, 6.9\%, and 4.7\% improvements over MRR, H@1, and H@10, respectively). 
We believe this is because that ShrinkE is excellent at modeling qualifiers due to its explicit modeling of inference patterns, while other models relying on tremendous parameters tend to be overfitting. 

\begin{table}
    \centering
    \caption{Link prediction results on WD50K splits with the number in the parentheses denoting the ratio of facts with qualifiers. Baseline results are taken from \cite{DBLP:conf/emnlp/GalkinTMUL20}. }
    \resizebox{\columnwidth}{!}{
    \begin{tabular}{lccccccccccc}
    \hline 
    \multirow{2}{*}{Method} & \multicolumn{3}{c}{WD50K (33)} & & \multicolumn{3}{c}{WD50K (66)} & & \multicolumn{3}{c}{WD50K (100)} \\
    \cline {2-4} \cline {6-8} \cline{10-12} & MRR & H@1 & H@10 & & MRR & H@ 1 & H@ 10 & & MRR & H@ 1 & H@ 10 \\
    \hline 
    NaLP-Fix & 0.204 & 0.164 & 0.277 & & 0.334 & 0.284 & 0.423 & & 0.458 & 0.398 & 0.563 \\
    HINGE & 0.253 & 0.190 & 0.372 & & 0.378 & 0.307 & 0.512 & & 0.492 & 0.417 & 0.636 \\
    Transformer & 0.276 & 0.227 & 0.371 & & 0.404 & 0.352 & 0.502 & & 0.562 & 0.499 & 0.677 \\
    StarE & \underline{0.331} & \underline{0.268} & \textbf{0.451} & & \underline{0.481} & \underline{0.420} & \underline{0.594} & & \underline{0.654} & \underline{0.588} & \underline{0.777} \\
    \hline
    ShrinkE & \textbf{0.336} & \textbf{0.272} & \underline{0.449} & & \textbf{0.511} & \textbf{0.422} & \textbf{0.611} & & \textbf{0.695} & \textbf{0.629} & \textbf{0.814}   \\
    \hline
    \end{tabular}}
    \label{tab:wd50k-variants}
\end{table}



\begin{table}
    \centering
    \caption{Example pairs of qualifiers with implication relations (body $\rightarrow$ head). $X \in $ [\emph{Eric Schmidt}, \emph{Mark Zuckerberg}, \emph{Dustin Moskovitz}, \emph{Larry Page}] denotes a CEO name of a company. $Y \in [112,115,113, \cdots]$ and $Z \in [912,18,192, \cdots]$ are emergency numbers involving \emph{police} and \emph{fire department}, respectively. 
Qualifier exclusion pairs are ubiquitous and are hence omitted. }
    \begin{tabular}{cc}
\hline
body & head \\
\hline
 (residence: Monte Carlo) & (country, Monaco)  \\
 (residence: Belgrade) & (country, Serbia) \\
 (owned\_by: X) & (of, voting interest) \\
 (emergency phone number: Y) & (has\_use, police) \\
 (emergency phone number: Z) & (has\_use, fire department) \\
 (used\_by: software) & (via, operating\_system) \\
\hline
\end{tabular}
\label{tab:qualifier_implication}
\end{table}

\textbf{Case analysis}
Table \ref{tab:qualifier_implication} shows some examples of qualifier implication pairs recovered by our learned embeddings. Note that exclusions pairs are ubiquitous (i.e., most of the random qualifiers are mutually exclusive) and hence we do not analyze them. We find that some qualifier implications happen when they are about geographic information and involve geographic inclusion such as \emph{Monte Carlo} is in \emph{Monaco}. Interestingly, we find that qualifiers associated with key \emph{owned\_by} imply (\emph{of, voting interest}), and qualifiers with key \emph{emergency phone number} imply (\emph{has\_use, police}) or (\emph{has\_use, \emph{file department}}), which conceptually make sense.

\subsubsection{Ablations and Parameter Sensitivity}

\begin{table}
    \centering
    \begin{tabular}{cccc}
    \hline
    Method & MRR & H@ 1 & H@ 10 \\
    \hline
    ShrinkE (w/o translation) & 0.583 & 0.495 & 0.729 \\
    ShrinkE (w/o rotation) & 0.581 & 0.497 & 0.724 \\
    ShrinkE (w/o shrinking) & 0.571 & 0.490 & 0.711 \\
    \hline
    ShrinkE  & \textbf{0.589} & \textbf{0.506} & \textbf{0.749} \\
    \hline
    \end{tabular}
    \caption{The performance of ShrinkE by removing one relational component on JF17K. }
    \label{tab:ablation_ShrinkE}
\end{table}

\textbf{Impact of relational components} 
To determine the importance of each component in relational modeling, we conduct an ablation study by considering three versions of ShrinkE in which one of the components (translation, rotation, and shrinking) is removed. 
Table \ref{tab:ablation_ShrinkE} shows that the removal of each component of the relational transformation leads to a degradation in performance, validating the importance of each component.
In particular, by removing the qualifier shrinking, which is the main contribution of our framework, the performance reduces 3\% and 5\% in  MRR and H@10, respectively, showcasing the usefulness of modeling qualifiers as shrinking. The removals of translation and rotation both result in around 1\% and 2\% reduction in MRR and H@10, respectively.

\begin{figure}
    \centering
    \includegraphics[width=0.6\textwidth]{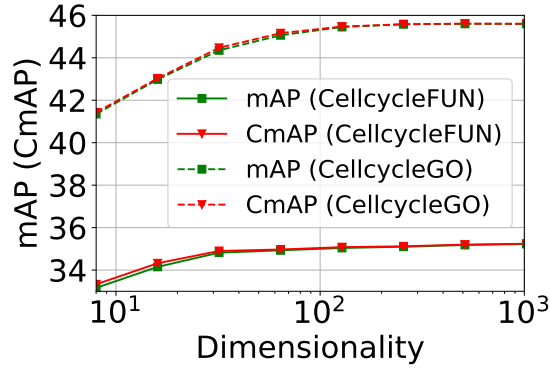}
    \caption[Performance of ShrinkE with different dimensions]{Performance of ShrinkE with different dimensions $d=[4,8,16,32,64,128,256]$ on JF17K. }
    \label{fig:dimension}
\end{figure}

\textbf{Impact of dimensionality} 
We conduct experiments on JF17K under a varied number of dimensions $d=[4,8,16,32,64,128,256]$. As Fig. \ref{fig:dimension} depicts,  the performance increases when increasing the number of dimensions. 
However, the growth trend gradually flattens with the increase of dimensions and it achieves comparable performance when the dimension is higher than $128$.

\subsubsection{Discussion}

\textbf{Comparison with neural network models} 
Heavy neural network models such as GRAN 
\cite{DBLP:conf/acl/WangWLZ21} and QUAD \cite{DBLP:journals/corr/abs-2208-14322} are built on relational GNNs and/or Transformers and require a large number of parameters. In contrast, ShrinkE is a neuro-symbolic model that requires only one MLP layer and a much smaller number of parameters. The logical modelling of ShrinkE makes it more explainable than GNN-based and Transformer-based methods. 

\textbf{Comparison with other box embeddings in KGs} 
ShrinkE is the first to not only represent hyper-relational facts, but also explicitly model the logical properties of these facts. SrinkE is different from previous box embedding methods \cite{abboud2020boxe} of KGs in three key modules: 1) our point-to-box transform function modelling triple inference patterns; 2) a new point-to-box distance function; and 3) we introduce box shrinking to model qualifier-level inference patterns. Moreover, we provide a comprehensive theoretical analysis of ShrinkE on modelling various logical properties.

\subsection{Conclusion}
We present a novel hyper-relational KG embedding model ShrinkE. ShrinkE models a primal triple as a spatio-functional transformation while modeling each qualifier as a shrinking that monotonically narrows down the answer set. We proved that ShrinkE is able to spatially infer core inference patterns at different levels including triple-level, fact-level, and qualifier-level. Experimental results on three benchmarks demonstrate the advantages of ShrinkE in predicting hyper-relational links.

\section{Modeling Relationships between Facts for Knowledge Graph Reasoning}
\label{sec:facte}

Reasoning with knowledge graphs (KGs) has primarily focused on triple-shaped facts. Recent advancements have been explored to enhance the semantics of these facts by incorporating more potent representations, such as hyper-relational facts. However, these approaches are limited to \emph{atomic facts}, which describe a single piece of information. This paper extends beyond \emph{atomic facts} and delves into \emph{nested facts}, represented by quoted triples where subjects and objects are triples themselves (e.g., ((\emph{BarackObama}, \emph{holds\_position}, \emph{President}), \emph{succeed\_by}, (\emph{DonaldTrump}, \emph{holds\_position}, \emph{President}))). These nested facts enable the expression of complex semantics like \emph{situations} over time and \emph{logical patterns} over entities and relations. 
In response, we introduce FactE, a novel KG embedding approach that captures the semantics of both atomic and nested factual knowledge. 
FactE represents each atomic fact as a $1\times3$ matrix, and each nested relation is modeled as a $3\times3$ matrix that rotates the $1\times3$ atomic fact matrix through matrix multiplication. 
Each element of the matrix is represented as a complex number in the generalized 4D hypercomplex space, including (spherical) quaternions, hyperbolic quaternions, and split-quaternions. 
Through thorough analysis, we demonstrate the embedding's efficacy in capturing diverse logical patterns over nested facts, surpassing the confines of first-order logic-like expressions. Our experimental results showcase FactE's significant performance gains over current baselines in triple prediction and conditional link prediction. 
The code is attached as supplemental material and will be made publicly available.

\subsection{Motivation and Background}

Knowledge graphs (KGs) depict relationships between entities, commonly through triple-shaped facts such as (\emph{JoeBiden}, \emph{holds\_position}, \emph{VicePresident}). KG embeddings map entities and relations into a lower-dimensional vector space while retaining their relational semantics. This empowers the effective inference of missing relationships between entities directly from their embeddings.
Prior research \cite{DBLP:conf/nips/BordesUGWY13, DBLP:conf/iclr/SunDNT19, DBLP:conf/icml/TrouillonWRGB16} has primarily centered on embedding triple-shaped facts and predicting the missing elements of these triples. Yet, to augment the triple-shaped representations, recent endeavors explore knowledge that extends beyond these triples.
For instance, $n$-ary facts \cite{DBLP:conf/www/0016Y020,DBLP:conf/ijcai/FatemiTV020} describe relationships between multiple entities, and hyper-relational facts \cite{DBLP:conf/emnlp/GalkinTMUL20,DBLP:conf/www/RossoYC20} augment primal triples with key-value qualifiers that provide contextual information. These approaches allow for expressing complex semantics and enable answering more sophisticated queries with additional knowledge \cite{DBLP:conf/iclr/AlivanistosBC022}.

\begin{figure}[t]
\centering
\includegraphics[width = 0.9\columnwidth]{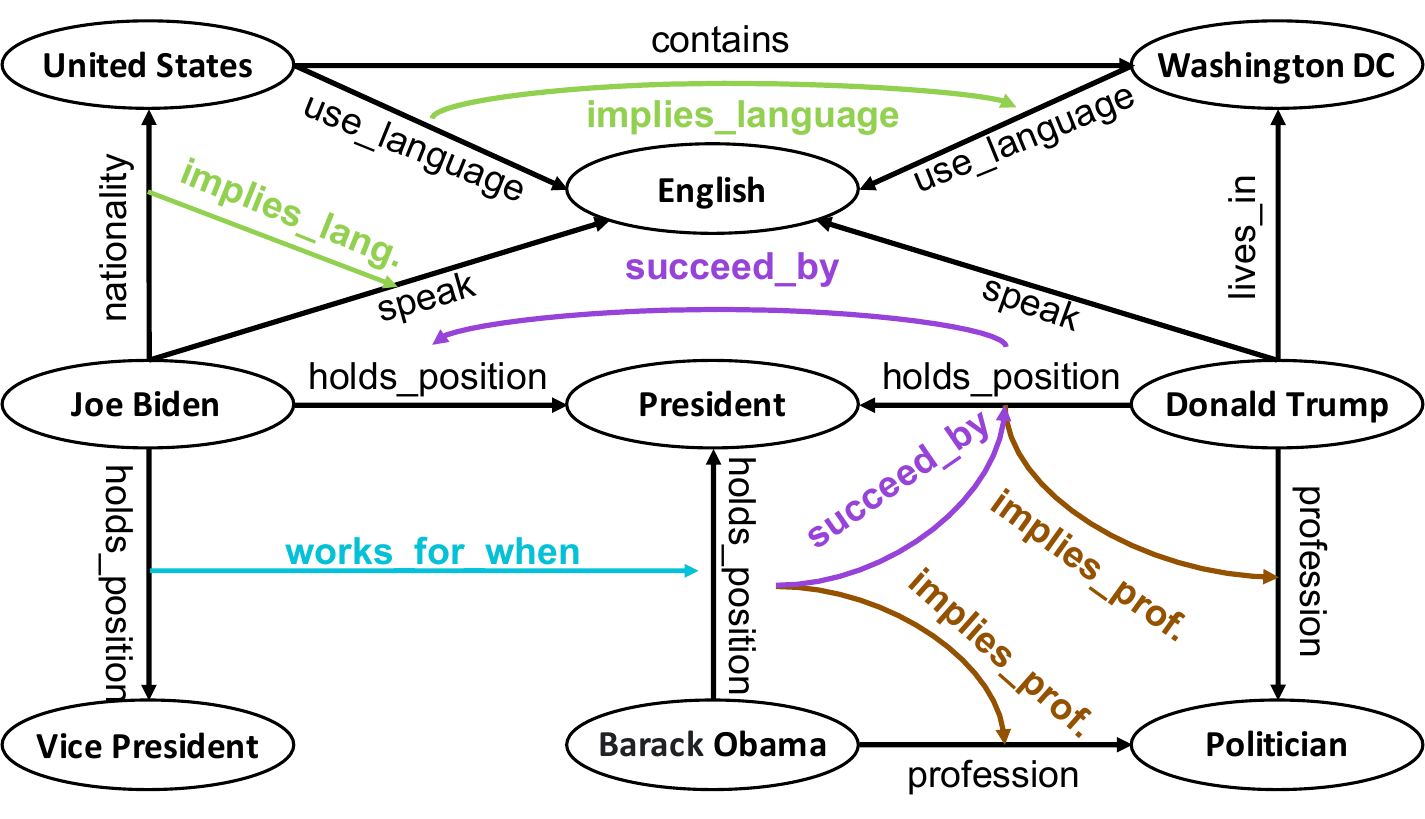}
\caption{An example of a nested factual KG consisting of 1) a set of atomic facts describing the relationship between entities and 2) a set of nested facts describing the relationship between atomic facts. Nested factual relations are colored and they either describe situations in/over time (e.g., \emph{succeed\_by} and \emph{works\_for\_when}) or logical patterns (e.g., \emph{implies\_profession} and \emph{implies\_language}). 
}
\label{fig:bikg}
\end{figure}

However, these beyond-triple representations typically focus only on relationships between entities that jointly define an \emph{atomic fact}, 
overlooking the significance of relationships that describe multiple facts together.
Indeed, within a KG, each atomic fact may have a relationship with another atomic fact. Consider the following two atomic facts:
$T_1$=(\emph{JoeBiden}, \emph{holds\_position}, \emph{VicePresident}) and $T_2$=(\emph{BarackObama}, \emph{holds\_position}, \emph{President}).
We can depict the scenario where \emph{JoeBiden} held the position of \emph{VicePresident} under the \emph{President} \emph{BarackObama} using a triple $(T_1, \emph{works\_for\_when}, T_2)$.
Such a fact about facts is referred to as a \emph{nested fact} \footnote{This is also called a \emph{quoted triple} in RDF star \cite{DBLP:journals/ercim/Champin22}.}
and the relation connecting these two facts is termed a \emph{nested relation}.
Fig. \ref{fig:bikg} provides an illustration of a KG containing both atomic and nested facts.

These nested relations play a crucial role in expressing complex semantics and queries in two ways: 1)
\textbf{ Expressing situations involving facts in or over time}. This facilitates answering complex queries that involve multiple facts.
For example, KG embeddings face challenges when addressing queries like "\emph{Who was the president of the USA after \emph{DonaldTrump}?}" because the query about the primary fact (?, \emph{holds\_position}, \emph{President}) depends on another fact (\emph{DonaldTrump}, \emph{holds\_position}, \emph{President}). As depicted in Fig. \ref{fig:bikg}, \emph{succeed\_by} conveys such temporal situation between these two facts, allowing the direct response to the query through conditional link prediction; 2)
\textbf{Expressing logical patterns (implications) using a non-first-order logical form $\psi \stackrel{\widehat{r}}{\rightarrow}\phi$}.
As illustrated in Fig. \ref{fig:bikg}, (\emph{Location A}, \emph{uses\_language}, \emph{Language B}) $\stackrel{\emph{implies\_language}}{\rightarrow} $ (\emph{Location in A}, \emph{uses\_language}, \emph{Language B}) represents a logical pattern, as it holds true for all pairs of (\emph{Location A}, \emph{Location in A}).
Modeling such logical patterns is crucial as it facilitates generalization. Once these patterns are learned, new facts adhering to these patterns can be inferred.
A recent study \cite{DBLP:conf/aaai/Chanyoung} explored link prediction over nested facts.\footnote{In their work, the KG is referred to as a bi-level KG, and the term "high-level facts" is synonymous with nested facts.}
However, their method embeds facts using a multilayer perceptron (MLP), which fails to capture essential logical patterns and thus has limited generalization capabilities.

In this paper, we introduce FactE, an innovative approach designed to embed the semantics of both atomic facts and nested facts that enable representing temporal situations and logical patterns over facts. 
FactE represents each atomic fact as a $1\times3$ hypercomplex matrix, with each element signifying a component of the atomic fact. Furthermore, each nested relation is modeled through a $3\times3$ hypercomplex matrix that rotates the $1\times3$ atomic fact matrix via a matrix-multiplicative Hamilton product. 
Our matrix-like modeling for facts and nested relations demonstrates the capacity to encode diverse logical patterns over nested facts. The modeling of these logical patterns further enables efficient modeling of logical rules that extend beyond the first-order-logic-like expressions (e.g., Horn rules).
Moreover, we propose a more general hypercomplex embedding framework that extends the quaternion embedding \cite{DBLP:conf/nips/0007TYL19} to include hyperbolic quaternions and split-quaternions. This generalization of hypercomplex space allows for expressing rotations over hyperboloid, providing more powerful and distinct inductive biases for embedding complex structural patterns (e.g., hierarchies). 
Our experimental findings on triple prediction and conditional link prediction showcase the remarkable performance gain of FactE.

\subsection{Related Work}


\textbf{Describing relationships between facts} 
Rule-based approaches \cite{DBLP:conf/aaai/NiuZ0CLLZ20,DBLP:conf/ijcai/MeilickeCRS19,DBLP:conf/emnlp/DemeesterRR16,DBLP:conf/emnlp/GuoWWWG16,DBLP:conf/nips/YangYC17,DBLP:conf/nips/SadeghianADW19} consider relationships between facts, but they are confined to first-order-logic-like expressions (i.e., Horn rules), i.e., $\forall e_1,e_2,e_3: (e_1,r_1,e_2)\wedge (e_2,r_2,e_3) \Rightarrow (e_1,r_3,e_3)$, where there must exist a path connecting $e_1$, $e_2$, and $e_3$ in the KG. 
Notably, \cite{DBLP:conf/aaai/Chanyoung} marked an advancement by examining KG embeddings with relationships between facts as nested facts, denoted as $(x, r_1, y) \xRightarrow{\hat{r}} (p, r_2, q)$. The proposed embeddings (i.e., BiVE-Q and BiVE-B\footnote{Note that BiVE-B, despite being described as based on the biquaternian--BiQUE, employs quaternion space with an additional translation component based on our analysis of the code.}) concatenate the embeddings of the head, relation, and tail, subsequently embedding them via an MLP. However, such modeling does do not explicitly capture crucial logical patterns over nested facts, which bear significant importance in KG embeddings \cite{DBLP:conf/icml/TrouillonWRGB16, DBLP:conf/iclr/SunDNT19}. 

\textbf{Algebraic and geometric embeddings} 
Algebraic embeddings like QuatE \cite{DBLP:conf/nips/0007TYL19} and BiQUE \cite{DBLP:conf/emnlp/GuoK21} represent relations as algebraic operations and score triples using inner products. They can be viewed as a unification of many earlier functional \cite{DBLP:conf/nips/BordesUGWY13} and multiplication-based \cite{DBLP:conf/icml/TrouillonWRGB16} models. 
Geometric embeddings like hyperbolic embeddings \cite{chami2020low,DBLP:conf/nips/BalazevicAH19} further extend the functional models to non-Euclidean hyperbolic space, enabling the representation of hierarchical relations.

\subsection{FactE: Embedding Atomic and Nested Facts }

\subsubsection{Unified Hypercomplex Embeddings}

We first extend QuatE \cite{DBLP:conf/nips/0007TYL19}, a KG embedding in 4D hypercomplex quaternion space, into a more general 4D hypercomplex number system including three variations: (spherical) quaternions, hyperbolic quaternions, and split quaternions. Each of these 4D hypercomplex numbers is composed of one real component and three imaginary components denoted by $s+x\textit{i}+y\textit{j}+z\textit{k}$ with $s,x,y,z \in \mathbb{R}$ and $i,j,k$ being the three imaginary parts. 
The distinctive feature among these hyper-complex number systems lies in their multiplication rules of the imaginary components.

\noindent\textbf{(Spherical) quaternions $\mathcal{Q}$} follow the multiplication rules:
\begin{equation}
\begin{array}{l}
\textit{i}^2 = \textit{j}^2 = \textit{k}^2 = 1, \\
\textit{ij} = \textit{k}=-ji, \textit{jk} = \textit{i}=-kj, \textit{ki} = \textit{j}=-ik. \
\end{array}
\end{equation}

\noindent\textbf{Hyperbolic quaternions $\mathcal{H}$} follow the multiplication rules:
\begin{equation}
\begin{array}{l}
\textit{i}^2 = -1, \textit{j}^2 = \textit{k}^2 = 1, \\
\textit{ij} = \textit{k}, \textit{jk} = -\textit{i}, \textit{ki} = \textit{j}, \
\textit{ji} = -\textit{k}, \textit{kj} = \textit{i}, \textit{ik} = -\textit{j}.
\end{array}
\end{equation}

\noindent\textbf{
Split quaternions $\mathcal{S}$} follow the multiplication rules:
\begin{equation}
\begin{array}{l}
\textit{i}^2 = -1, \textit{j}^2 = \textit{k}^2 = 1, \\
\textit{ij} = \textit{k}, \textit{jk} = -\textit{i}, \textit{ki} = \textit{j}, \
\textit{ji} = -\textit{k}, \textit{kj} = \textit{i}, \textit{ik} = -\textit{j}.
\end{array}
\end{equation}

\textbf{Geometric intuitions}
The distinctions in the multiplication rules of various hypercomplex numbers give rise to different geometric spaces that provide suitable inductive biases for representing different types of relations. Specifically, spherical quaternions, hyperbolic quaternions, and split quaternions with the same norm $c$ correspond to 4D hypersphere, Lorentz model of hyperbolic space (i.e., the upper part of the double-sheet hyperboloid), and pseudo-hyperboloid (i.e., one-sheet hyperboloid, with curvature $\sqrt{c}$, respectively. These are denoted as follows:
\begin{equation}\small
\begin{array}{l}
|\mathcal{Q}| = s^2 + x^2 + y^2 + z^2 = c > 0 \ (\text{hypersphere}) \\
|\mathcal{H}| = s^2 - x^2 - y^2 - z^2 = c > 0 \ (\text{Lorentz hyperbolic space}) \\
|\mathcal{S}| = s^2 + x^2 - y^2 - z^2 = c > 0 \ (\text{pseudo-hyperboloid}). \\
\end{array}
\end{equation}
The 3D versions of them are shown in Fig. \ref{fig:space}. These spaces have well-known characteristics: spherical spaces are adept at modeling cyclic relations \cite{DBLP:conf/www/WangWSWNAXYC21}, hyperbolic spaces provide geometric inductive biases for hierarchical relations \cite{chami2020low}, and the pseudo-hyperboloid \cite{DBLP:conf/kdd/XiongZNXP0S22} offers a balance between spherical and hyperbolic spaces, making it suitable for embedding both cyclic and hierarchical relations.
Moreover, by representing relations as geometric rotations over these spaces (i.e., Hamilton product), fundamental logical patterns such as symmetry, inversion, and compositions can be effectively inferred \cite{DBLP:conf/nips/0007TYL19,chami2020low,DBLP:conf/kdd/XiongZNXP0S22}. 
Our proposed embeddings can be viewed as a unification of previous approaches that leverages these geometric inductive biases in these geometric spaces within a single geometric algebraic framework.

For convenience, we parameterize each entity and relation as a Cartesian product of $d$ 4D hypercomplex numbers $\mathbf{s} + \mathbf{x}\textit{i} + \mathbf{y}\textit{j} + \mathbf{z}\textit{k}$, where $\mathbf{s}, \mathbf{x}, \mathbf{y}, \mathbf{z} \in \mathbb{R}^d$.
This enables us to define all algebraic operations involving these hypercomplex vectors in an element-wise manner.

\begin{figure}
    \centering
    \includegraphics[width=\columnwidth]{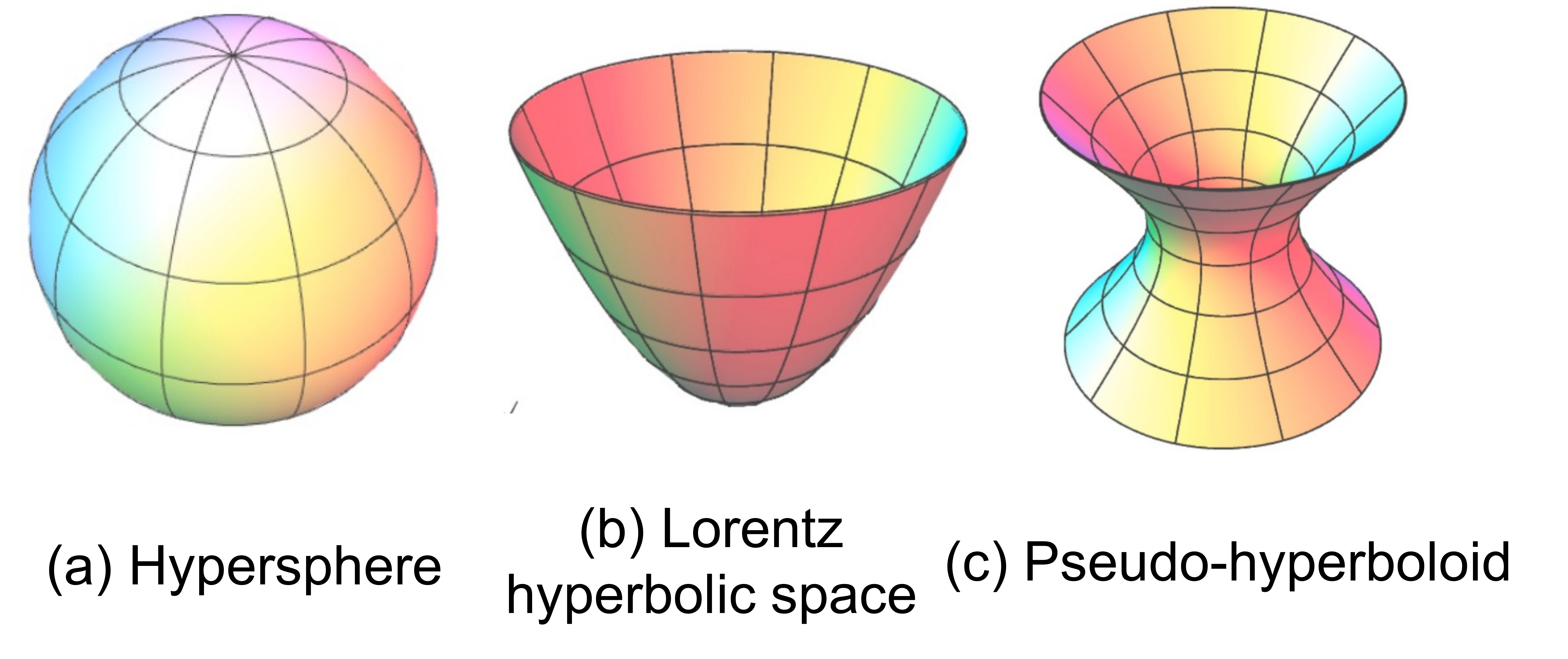}
     \vspace{-2em}
    \caption{Visualization of the hypersphere, Lorentz hyperbolic space, and pseudo-hyperboloid in 3D space. 
    }
    \label{fig:space}
    \vspace{-1em}
\end{figure}

\subsubsection{Atomic Fact Embeddings}

Each atomic relation is represented by a rotation hypercomplex vector $\mathbf{r}_\theta$ and a translation hypercomplex vector $\mathbf{r}_b$. For a given triple $(h, r, t)$, we apply the following operation:
\begin{equation}
\mathbf{h}^\prime = ( \mathbf{h} \oplus \mathbf{r}_b ) \otimes \mathbf{r}_\theta,
\end{equation}
where $\oplus$ and $\otimes$ stand for addition and Hamilton product between hypercomplex numbers, respectively. The addition involves an element-wise sum of each hypercomplex component. The Hamilton product rotates the head entity. To ensure proper rotation on the unit sphere, we normalize the rotation hypercomplex number $\mathbf{r}_\theta = \mathbf{s}_r^\theta+\mathbf{x}_r^\theta\textit{i}+\mathbf{y}_r^\theta \textit{j}+\mathbf{z}_r^\theta \textit{k}$ by $\mathbf{r}_\theta=\frac{\mathbf{s}_r^\theta+\mathbf{x}_r^\theta \boldsymbol{i}+\mathbf{y}_r^\theta \boldsymbol{j}+\mathbf{z}_r^\theta \boldsymbol{k}}{\sqrt{ {\mathbf{s}_r^\theta}^2+{\mathbf{x}_r^\theta}^2+{\mathbf{y}_r^\theta}^2+{\mathbf{z}_r^\theta}^2}}$. 
Hamilton product is defined by combining the components of the hypercomplex numbers. 
\begin{equation}
\begin{split}
& \mathbf{h}^\prime = \mathbf{h} \otimes \mathbf{r}_\theta \\
& =\left(\mathbf{s}_h \circ \mathbf{s}_r^\theta \circ 1 + \mathbf{x}_h \circ \mathbf{x}_r^\theta \circ i^2  + \mathbf{y}_h \circ \mathbf{y}_r^\theta  \circ j^2 + \mathbf{z}_h \circ \mathbf{z}_r^\theta \circ k^2 \right) \\
& +\left(\mathbf{s}_h \circ \mathbf{x}_r^\theta \circ i + \mathbf{x}_h \circ \mathbf{s}_r^\theta  \circ i  + \mathbf{y}_h \circ \mathbf{z}_r^\theta \circ jk + \mathbf{z}_h \circ \mathbf{y}_r^\theta \circ kj \right) \\
& +\left(\mathbf{s}_h \circ \mathbf{y}_r^\theta \circ j + \mathbf{x}_h \circ \mathbf{z}_r^\theta \circ ik + \mathbf{y}_h \circ \mathbf{s}_r^\theta \circ j  + \mathbf{z}_h \circ \mathbf{x}_r^\theta \circ ik \right)  \\
& +\left(\mathbf{s}_h \circ \mathbf{z}_r^\theta \circ k + \mathbf{x}_h \circ \mathbf{y}_r^\theta \circ ij + \mathbf{y}_h \circ \mathbf{x}_r^\theta \circ ij + \mathbf{z}_h \circ \mathbf{s}_r^\theta \circ k \right) \\ 
& = \mathbf{s}_{h^\prime}+\mathbf{x}_{h^\prime}\textit{i}+\mathbf{y}_{h^\prime} \textit{j}+\mathbf{z}_{h^\prime} \textit{k},
\end{split}
\end{equation}
where the multiplication of imaginary components follows the rules (Eq.1-3) of the chosen hypercomplex systems.



The scoring function $\phi(h, r, t)$ is defined as:
\begin{equation}\small
\phi(h, r, t)= \langle \mathbf{h}^\prime, \mathbf{t} \rangle =\left\langle \mathbf{s}_{h^\prime}, \mathbf{s}_t\right\rangle+\left\langle \mathbf{x}_{h^\prime}, \mathbf{x}_t\right\rangle+\left\langle \mathbf{y}_{h^\prime}, \mathbf{y}_t\right\rangle+\left\langle \mathbf{z}_{h^\prime}, \mathbf{z}_t\right\rangle,
\end{equation}
where $\langle \cdot, \cdot \rangle$ represents the inner product. 

\subsubsection{Nested Fact Embeddings}

To represent an atomic fact $(h, r, t)$ without losing information, we embed each atomic triple as a $1\times3$ matrix, where each column corresponds to the embedding of the respective element. Consequently, we have $\mathbf{T}_i = \left[\mathbf{h}_i,\mathbf{r}_i,\mathbf{t}_i\right]$.

To embed various shapes of nested relations between $T_i$ and $T_j$ with relation $\widehat{r}$, 
we model each nested relation using a $1\times3$ translation matrix and a $3\times3$ rotation matrix, where each element is a 4D hypercomplex number. Specifically, we first translate the head triple $\mathbf{T}_i$ with $\mathbf{\widehat{r}}_b$, followed by applying a matrix-like rotation of $\mathbf{\widehat{r}}_\theta$, as defined by
\begin{equation}
\mathbf{T}i^\prime = ( \mathbf{T}i \oplus_{1 \times 3} \mathbf{\widehat{r}}_b ) \otimes_{3 \times 3} \mathbf{\widehat{r}}_\theta,
\end{equation}
where the matrix addition $\oplus_{1 \times 3}$ is performed through an element-wise summation of the hypercomplex components within the matrices. The matrix-like Hamilton product $\otimes_{3 \times 3}$ is defined as a product akin to matrix multiplication:
\begin{equation}\small
\begin{split}
\mathbf{T}_i^\prime =  \mathbf{T}_i \otimes_{3 \times 3} \mathbf{ \widehat{r} }_\theta = 
\left[
\begin{array}{c}
   \mathbf{h}_i\\
   \mathbf{r}_i\\
   \mathbf{t}_i\\
\end{array}
\right]^\top \times
\left[ 
\begin{array}{ccc}
   \mathbf{\widehat{r}}_{11}^\theta & \mathbf{\widehat{r}}_{12}^\theta & \mathbf{\widehat{r}}_{13}^\theta\\
   \mathbf{\widehat{r}}_{21}^\theta & \mathbf{\widehat{r}}_{22}^\theta & \mathbf{\widehat{r}}_{23}^\theta\\
   \mathbf{\widehat{r}}_{31}^\theta & \mathbf{\widehat{r}}_{32}^\theta & \mathbf{\widehat{r}}_{33}^\theta\\
\end{array}
\right] = \\
\left[
\begin{array}{c}
   \mathbf{h}_i \otimes \mathbf{\widehat{r}}_{11}^\theta + \mathbf{r}_i \otimes \mathbf{\widehat{r}}_{21}^\theta + \mathbf{t}_i \otimes \mathbf{\widehat{r}}_{31}^\theta   \\
   \mathbf{h}_i \otimes \mathbf{\widehat{r}}_{12}^\theta + \mathbf{r}_i \otimes \mathbf{\widehat{r}}_{22}^\theta + \mathbf{t}_i \otimes \mathbf{\widehat{r}}_{32}^\theta   \\
   \mathbf{h}_i \otimes \mathbf{\widehat{r}}_{13}^\theta + \mathbf{r}_i \otimes \mathbf{\widehat{r}}_{23}^\theta + \mathbf{t}_i \otimes \mathbf{\widehat{r}}_{33}^\theta   \\
\end{array}
\right]^\top = 
\left[
\begin{array}{c}
   \mathbf{h}_i^\prime\\
   \mathbf{r}_i^\prime\\
   \mathbf{t}_i^\prime\\
\end{array}
\right]^\top,
\end{split}
\end{equation}
where $\otimes$ is the Hamilton product. 

\textbf{Remarks} 
This matrix-like modeling of nested facts provides flexibility to capture diverse shapes of logical patterns inherent in nested relations. In essence, different shapes of situations or patterns can be effectively modeled by manipulating the $3\times3$ rotation matrix. For instance, relational implications can be represented using a diagonal matrix, while inversion can be captured using an anti-diagonal matrix. See \textbf{theoretical justification} for further analysis.

To assess the plausibility of the nested fact $(T_i, \widehat{r}, T_j)$, we calculate the inner product between the transformed head $\mathbf{T}_i^\prime$ fact and the tail fact $\mathbf{T}_j$ as:
\begin{equation}
\rho(T_i, \widehat{r}, T_j)= \langle\mathbf{T}_i^\prime, \mathbf{T}_j \rangle,
\end{equation}
where $\langle \cdot, \cdot \rangle$ denotes the matrix inner product.

\textbf{Learning objective}

We sum up the loss of atomic fact embedding $\mathcal{L}_{\text{atomic}}$, the loss of nested fact embedding $\mathcal{L}_{\text{meta}}$, and additionally the loss term $\mathcal{L}_{\text{aug}}$ for augmented triples generated by random walking as used in \cite{DBLP:conf/aaai/Chanyoung}. The overall loss is defined as
\begin{equation}
    \mathcal{L} = \mathcal{L}_{\text{atomic}} + \lambda_1 \mathcal{L}_{\text{nested}} + \lambda_2 \mathcal{L}_{\text{aug}}
\end{equation}
where $\lambda_1$ and $\lambda_2$ are the weight hyperparameters indicating the importance of each loss. 
Negative sampling is applied by randomly replacing one of the head or tail entity/triple. 
These losses are defined as follows:
\begin{equation}\footnotesize 
    \begin{array}{@{} *{6}{>{\displaystyle}c} @{}}
    \mathcal{L}_{\text{atomic}} = \sum_{(h,r,t) \in \mathcal{T}} g\left(-\phi\left(h,r,t\right) \right) + \sum_{g\left(\left(h^\prime,r^\prime,t^\prime\right)\right) \notin \mathcal{T}} g\left(\phi\left(h^\prime,r^\prime,t^\prime\right)\right)\\
     \mathcal{L}_{\text{aug}} = \sum_{(h,r,t) \in \mathcal{T}^\prime}g\left(-\phi\left(h,r,t\right)\right) + \sum_{(h^\prime,r^\prime,t^\prime) \notin \mathcal{T}^\prime}g\left(\phi(h^\prime,r^\prime,t^\prime)\right)\\
    \mathcal{L}_{\text{nested}} = \sum_{(T_i,\widehat{r},T_j) \notin \sTT } g\left(-\rho(T_i,\widehat{r},T_j)\right) + \sum_{(T_i^\prime,\widehat{r},T_j^\prime) \notin \sTT} g\left(\rho(T_i^\prime,\widehat{r},T_j^\prime)\right),\\
    \end{array}
\end{equation}
where $g=\log(1+\exp(x))$ and $\mathcal{T}^\prime$ is the set of augmented triples. 

\subsection{Theoretical Justification}
\label{sec:theory}

Modeling logical patterns is of great importance for KG embeddings because it enables generalization, i.e., once the patterns are learned, new facts that respect the patterns can be inferred. A logical pattern is a logical form $\psi \rightarrow \phi$ with $\psi$ and $\phi$ being the body and head, implying that if the body is satisfied then the head must also be satisfied. 

\textbf{First-order-logic-like logical patterns} 
Existing KG embeddings studied logical patterns expressed in the first-order-logic-like form. 
Prominent examples include symmetry $\forall h,t\colon (h,r,t) \rightarrow (t,r,h)$, anti-symmetry $\forall h,t\colon (h,r,t) \rightarrow \neg (h,r,t)$, inversion $\forall h,t\colon (h,r_1,t) \rightarrow (t,r_2,h)$ and composition $\forall e_1,e_2,e_3\colon (e_1,r_1,e_2) \wedge (e_2,r_2,e_3)  \rightarrow (e_1,r_3,e_3)$.

\begin{proposition}
FactE can infer symmetry, anti-symmetry, inversion, and composition, regardless of the specific choices of hypercomplex number systems.
\end{proposition}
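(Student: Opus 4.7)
My plan is to establish each of the four patterns by exhibiting explicit parameter values for the relation embeddings that make the required scoring-function implications hold for all entity embeddings, and then argue that the construction is uniform across the three hypercomplex systems $\mathcal{Q}, \mathcal{H}, \mathcal{S}$. Throughout I will set the translation component $\mathbf{r}_b = \mathbf{0}$ so that the primal-relation transformation reduces to a pure Hamilton rotation $\mathbf{h}' = \mathbf{h} \otimes \mathbf{r}_\theta$ and the score becomes $\phi(h,r,t) = \langle \mathbf{h}\otimes\mathbf{r}_\theta, \mathbf{t}\rangle$. The arguments will rely only on properties shared by all three systems: each contains the real subalgebra with the same multiplicative unit, the inner product $\langle\cdot,\cdot\rangle$ is bilinear and symmetric, and componentwise inverses and products of unit-norm rotation vectors exist in each system.

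For symmetry I will take $\mathbf{r}_\theta$ to be a purely real unit vector (imaginary parts set to zero), so Hamilton multiplication collapses to componentwise scalar scaling $\mathbf{h}\otimes\mathbf{r}_\theta = \mathbf{s}_r\odot\mathbf{h}$; then $\phi(h,r,t) = \langle\mathbf{s}_r\odot\mathbf{h},\mathbf{t}\rangle = \langle\mathbf{s}_r\odot\mathbf{t},\mathbf{h}\rangle = \phi(t,r,h)$ by symmetry of the inner product. For anti-symmetry I will pick $\mathbf{r}_\theta$ with a non-zero single imaginary component and zero real part, so that $\mathbf{r}_\theta\otimes\mathbf{r}_\theta = -\mathbf{1}$ in the generated $\mathbb{C}$-like subalgebra; the imaginary part of $\mathbf{h}\otimes\mathbf{r}_\theta$ then changes sign under swapping $\mathbf{h}$ and $\mathbf{t}$, giving $\phi(h,r,t) = -\phi(t,r,h)$. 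For inversion I will set $\mathbf{r}_{2,\theta} = \mathbf{r}_{1,\theta}^{-1}$ (the componentwise hypercomplex inverse of a unit-norm element, which exists in each system); whenever $(h,r_1,t)$ is satisfied in the sense $\mathbf{t} = \mathbf{h}\otimes\mathbf{r}_{1,\theta}$, applying $\mathbf{r}_{2,\theta}$ on the right of $\mathbf{t}$ recovers $\mathbf{h}$, so the score of $(t,r_2,h)$ is maximised as required.

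The main obstacle is the composition pattern, because the natural witness $\mathbf{r}_{3,\theta} := \mathbf{r}_{1,\theta}\otimes\mathbf{r}_{2,\theta}$ only works if the associativity identity $(\mathbf{e}_1\otimes\mathbf{r}_{1,\theta})\otimes\mathbf{r}_{2,\theta} = \mathbf{e}_1\otimes(\mathbf{r}_{1,\theta}\otimes\mathbf{r}_{2,\theta})$ holds. While $\mathcal{Q}$ and $\mathcal{S}$ are associative, hyperbolic quaternions $\mathcal{H}$ are not, so the naive argument breaks in $\mathcal{H}$. To obtain a uniform proof I will restrict the composition witnesses $\mathbf{r}_{1,\theta},\mathbf{r}_{2,\theta}$ to lie in a commutative, associative subalgebra common to all three systems --- concretely, the two-dimensional real subalgebra generated by the real unit together with a single imaginary unit $\textit{i}$, which is isomorphic to either the complex numbers or the split-complex numbers depending on the sign of $\textit{i}^2$ and is associative in each case (including inside $\mathcal{H}$). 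Inside such a subalgebra the QuatE-style composition argument transfers verbatim: setting $\mathbf{r}_{3,\theta} = \mathbf{r}_{1,\theta}\otimes\mathbf{r}_{2,\theta}$ yields $\phi(e_1,r_3,e_3) = \langle(\mathbf{e}_1\otimes\mathbf{r}_{1,\theta})\otimes\mathbf{r}_{2,\theta},\mathbf{e}_3\rangle$, so $(e_1,r_1,e_2)\wedge(e_2,r_2,e_3)$ entails $(e_1,r_3,e_3)$. Since every step uses only the shared real-plus-one-generator subalgebra, bilinearity of $\langle\cdot,\cdot\rangle$, and inverses of unit-norm elements, the entire proof applies uniformly to $\mathcal{Q}$, $\mathcal{H}$, and $\mathcal{S}$, establishing the proposition.
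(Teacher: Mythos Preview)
Your proposal is correct and follows essentially the same route as the paper: the paper's entire proof is the one-line observation that FactE subsumes ComplEx (each 4D hypercomplex system contains the 2D $\{1,i\}$ complex subalgebra), and your explicit per-pattern constructions simply unpack that subsumption claim, with the added care of explicitly flagging and resolving the non-associativity of $\mathcal{H}$ by restricting composition witnesses to the associative 2D subalgebra. Your treatment is more detailed and self-contained than the paper's, but the underlying idea is identical.
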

This proposition holds because FactE subsumes ComplEx \cite{DBLP:conf/icml/TrouillonWRGB16} (i.e., 4D complex numbers generalize 2D complex numbers).



\textbf{Logical patterns over nested facts}
We extend the vanilla logical patterns in KGs to include nested facts. 
This can be expressed in a non-first-order-logic-like form $\psi \stackrel{\widehat{r}}{\rightarrow}\phi$.

\begin{figure}[t!]
    \centering
    \includegraphics[width=\linewidth]{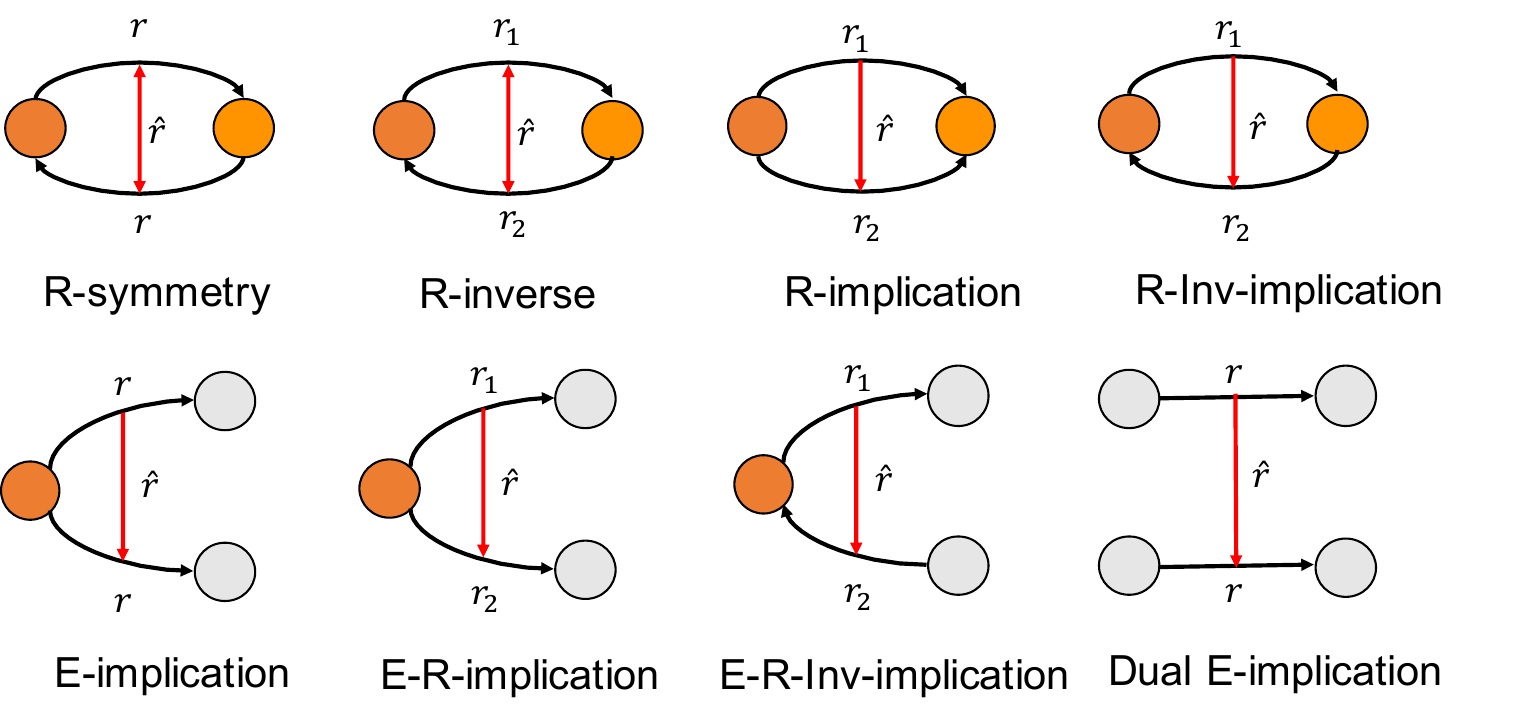}
    \vspace{-0.6cm}
    \caption{A structural illustration of different shapes of logical patterns, where the colored circles are free variables. 
    }
    \vspace{-0.1cm}
    \label{fig:pattern}
\end{figure}

\begin{itemize}
    \item \textbf{Relational symmetry (R-symmetry):} an atomic relation $r$ is symmetric w.r.t a nested relation $\widehat{r}$ if $\forall x,y \in \mathcal{E},  \langle x, r, y\rangle  \stackrel{\widehat{r}}{\leftrightarrow} \langle y, r, x\rangle$. 

    \item \textbf{Relational inverse (R-inverse):} two atomic relations $r_1$ and $r_2$ are inverse w.r.t a nested relation $\widehat{r}$ if $\forall x,y \in \mathcal{E}, $ $(\langle x, r_1, y\rangle \stackrel{\widehat{r}}{\leftrightarrow} \langle y, r_2, x\rangle)$.

    \item \textbf{Relational implication (R-implication):} an atomic relation $r_1$ implies a atomic relation $r_2$ w.r.t a nested relation $\widehat{r}$ if $\forall x,y \in \mathcal{E}, (\langle x, r_1, y\rangle \stackrel{\widehat{r}}{\rightarrow} \langle x, r_2, y\rangle)$.

    \item \textbf{Relational inverse implication (R-Inv-implication):} an atomic relation $r_1$ inversely implies an atomic relation $r_2$ w.r.t a nested relation $\widehat{r}$ if $\forall x,y \in \mathcal{E}, (\langle x, r_1, y\rangle \stackrel{\widehat{r}}{\rightarrow} \langle y, r_2, x\rangle)$.

    \item \textbf{Entity implication (E-implication):} an entity $x_1$ (resp. $y_1$) implies entity $x_2$ (resp. $y_2$) w.r.t an atomic relation $r$ and a nested relation $\widehat{r}$ if  $\forall y \in \mathcal{E}, (\langle x_1, r, y\rangle \stackrel{\widehat{r}}{\rightarrow} \langle x_2, r, y\rangle)$ (resp. $\forall x \in \mathcal{E}, (\langle x, r, y_1\rangle \stackrel{\widehat{r}}{\rightarrow} \langle x, r, y_2\rangle)$ ).

    \item \textbf{Entity relational implication (E-R-implication):} an entity $x_1$ and relation $r_1$ (resp. $y_1$ and relation $r_1$) implies entity $x_2$ and relation $r_2$ (resp. $y_2$ and relation $r_2$) 
    w.r.t a nested relation $\widehat{r}$ if  $\forall y \in \mathcal{E}, (\langle x_1, r_1, y\rangle \stackrel{\widehat{r}}{\rightarrow} \langle x_2, r_2, y\rangle)$ (resp. $\forall x \in \mathcal{E}, (\langle x, r_1, y_1\rangle \stackrel{\widehat{r}}{\rightarrow} \langle x, r_2, y_2\rangle)$).

    \item \textbf{Entity relational inverse implication (E-R-Inv-implication):}  an entity $x_1$ and relation $r_1$ (resp. $y_1$ and relation $r_1$) inversely implies entity $x_2$ and relation $r_2$ (resp. $y_2$ and relation $r_2$) w.r.t a nested relation $\widehat{r}$ if  $\forall y \in \mathcal{E}, (\langle x_1, r_1, y\rangle \stackrel{\widehat{r}}{\rightarrow} \langle y, r_2, x_2\rangle)$ (resp. $\forall x \in \mathcal{E}, (\langle x, r_1, y_1\rangle \stackrel{\widehat{r}}{\rightarrow} \langle y_2, r_2, x\rangle)$).
     \item \textbf{Dual Entity implication (Dual E-implication):} an entity pair ($x_1$, $x_2$) implies another entity pair ($y_1$, $y_2$)  iff both ($x_1$, $y_1$) and ($x_2$, $y_2$) satisfy E-implication. 

\end{itemize}

Fig. \ref{fig:pattern} illustrates the structure of the introduced patterns and Table \ref{tab:pattern} presents exemplary patterns of nested facts.

\begin{table}[]
    \centering
    \setlength{\tabcolsep}{0.1em}
     \caption{Exemplary logical patterns of nested facts.  }
    \resizebox{0.9\columnwidth}{!}{
    \begin{tabular}{c|ccccc}
        Pattern & $\widehat{r}$ & triple template & \\
        \midrule
        \multirow{2}{*}{R-Symmetry} & \multirow{2}{*}{EquivalentTo} & (Person A, \underline{IsMarriedTo}, Person B)  \\
         & & (Person B, \underline{IsMarriedTo}, Person A) \\
        \multirow{2}{*}{R-Inverse} & \multirow{2}{*}{EquivalentTo} & (Location A, \underline{UsesLanguage}, Language B)  \\
         & & (Language B, \underline{IsSpokenIn}, Location A) \\
         \multirow{2}{*}{R-Implication} & \multirow{2}{*}{ImpliesLocation} & (Country A, \underline{CapitalIsLocatedIn}, City B)  \\
         & & (Country A, \underline{Contains}, City B) \\
           \multirow{2}{*}{R-Inv-Implication} & \multirow{2}{*}{ImpliesLocation} & (Organization A, \underline{Headquarter}, Location B)  \\
         & & (Location B, \underline{Contains}, Organization A) \\
        \multirow{2}{*}{E-Implication} & \multirow{2}{*}{ImpliesTimeZone} & (\underline{Location A}, TimeZone, Time Zone B)  \\
         & & (\underline{Location in A}, TimeZone, Time Zone B) \\
         \multirow{2}{*}{E-R-Implication} & \multirow{2}{*}{ImpliesProfession} & (Person A, \underline{HoldsPosition, Government Position B})  \\
         & & (Person A, \underline{IsA, Politician}) \\
          \multirow{2}{*}{E-R-Inv-Implication} & \multirow{2}{*}{ImpliesProfession} & (\underline{Work A, CinematographyBy}, Person B)  \\
         & & (Person B, \underline{IsA, Cinematographer}) \\
           \multirow{2}{*}{Dual E-Implication} & \multirow{2}{*}{ImpliesLocation} & (\underline{Location A}, Contains, \underline{Location B})  \\
         & & (\underline{Location containing A}, Contains, \underline{Location in B}) \\
        \bottomrule
    \end{tabular}
    }
    \label{tab:pattern} 
\end{table}

\begin{proposition}
    FactE can infer R-symmetry, R-inverse, R-implication, R-Inv-implication, E-implication, E-R-implication, E-R-Inv-implication, and Dual E-implication.
\end{proposition}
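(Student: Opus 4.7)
The plan is to exhibit, for each of the eight patterns, an explicit parametrization of the nested-relation translation $\hat{\mathbf{r}}_b\in(\mathbb{H})^{1\times 3}$ and rotation matrix $\hat{\mathbf{r}}_\theta\in(\mathbb{H})^{3\times 3}$ such that the transformed triple $T_i'=(T_i\oplus_{1\times 3}\hat{\mathbf{r}}_b)\otimes_{3\times 3}\hat{\mathbf{r}}_\theta$ coincides with the pattern's head triple $T_j$. When $T_i'=T_j$, the score $\rho(T_i,\hat{r},T_j)=\langle T_i',T_j\rangle$ is maximized, so FactE can represent the pattern. The construction will be uniform across the three hypercomplex systems.

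First, I would isolate two elementary $3\times 3$ matrices. Let $\mathbf{1}$ and $\mathbf{0}$ denote the multiplicative unit and the additive zero in the chosen hypercomplex system, and set
\[
I=\begin{pmatrix}\mathbf{1}&\mathbf{0}&\mathbf{0}\\ \mathbf{0}&\mathbf{1}&\mathbf{0}\\ \mathbf{0}&\mathbf{0}&\mathbf{1}\end{pmatrix},\qquad
P=\begin{pmatrix}\mathbf{0}&\mathbf{0}&\mathbf{1}\\ \mathbf{0}&\mathbf{1}&\mathbf{0}\\ \mathbf{1}&\mathbf{0}&\mathbf{0}\end{pmatrix}.
\]
A direct evaluation of the matrix-like Hamilton product shows $[x,r,y]\otimes_{3\times 3}I=[x,r,y]$ (the identity action) and $[x,r,y]\otimes_{3\times 3}P=[y,r,x]$ (the head-tail swap), since $\mathbf{1}$ is neutral and $\mathbf{0}$ annihilates under $\otimes$.

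Next, I would dispatch the patterns by whether they require a swap and whether they are one-directional or biconditional. For the one-directional implication patterns (R-implication, E-implication, E-R-implication, Dual E-implication), I set $\hat{\mathbf{r}}_\theta=I$ and pick the translation $\hat{\mathbf{r}}_b=(x_2-x_1,\ r_2-r_1,\ y_2-y_1)$ with irrelevant coordinates $\mathbf{0}$; the pre-rotation addition already yields the target triple and multiplication by $I$ leaves it unchanged, so $T_i'=T_j$. For the inverse-direction implications (R-Inv-implication, E-R-Inv-implication), I keep the same translation but replace $I$ by $P$, so the anti-diagonal swap realizes the required head-tail exchange after translation. R-symmetry is the special case $\hat{\mathbf{r}}_\theta=P$, $\hat{\mathbf{r}}_b=\mathbf{0}$; because $P^2=I$ under the matrix-like Hamilton product, this transformation is its own inverse, so both directions of $\leftrightarrow$ are satisfied simultaneously.

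The main obstacle is R-inverse, which is biconditional \emph{and} requires altering the middle coordinate, so a naive translation $(\mathbf{0},r_2-r_1,\mathbf{0})$ fails to be involutive. My plan is to exploit the non-commutativity of the Hamilton product by placing a learnable hypercomplex unit $q$ at entry $\hat{\mathbf{r}}^\theta_{22}$ while keeping the remaining entries as in $P$, so that the middle row implements right-multiplication by $q$ followed by the swap. Choosing $q$ with $q\otimes q=\mathbf{1}$ (for example $q=-\mathbf{1}$, which exists in each of $\mathcal{Q},\mathcal{H},\mathcal{S}$) and tying the embedding of $r_2$ to that of $r_1$ via $r_2=r_1\otimes q$ gives $r_1\otimes q=r_2$ and $r_2\otimes q=r_1\otimes q\otimes q=r_1$. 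Hence the overall transformation maps $[x,r_1,y]\mapsto[y,r_2,x]$ and $[y,r_2,x]\mapsto[x,r_1,y]$, establishing both directions of the pattern. Since the embeddings of $r_1,r_2,\hat r$ are free parameters, such a joint choice is always feasible, completing the argument for all eight patterns.
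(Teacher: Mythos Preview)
Your proof is correct, but it takes a different route from the paper's. The paper's construction (in the appendix) never touches the translation component $\hat{\mathbf{r}}_b$; instead it achieves every pattern purely through the $3\times 3$ rotation matrix $\hat{\mathbf{r}}_\theta$ by placing learnable hypercomplex entries at strategic positions. For instance, R-implication is realized with a diagonal matrix whose $(2,2)$ entry $\mathbf{R}_{22}$ satisfies $\mathbf{r}_1\otimes\mathbf{R}_{22}=\mathbf{r}_2$, and E-implication uses an upper-left $2\times 2$ block so that the relation embedding can contribute to the new head entity via $\mathbf{x}_1\otimes\mathbf{R}_{11}+\mathbf{r}\otimes\mathbf{R}_{21}=\mathbf{x}_2$. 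Your construction instead offloads all entity/relation shifts onto the additive $\hat{\mathbf{r}}_b$ and keeps $\hat{\mathbf{r}}_\theta$ as either the identity $I$ or the head--tail swap $P$; only R-inverse forces you to put a nontrivial unit at position $(2,2)$, and there your argument coincides with the paper's (both require $q^2=\mathbf{1}$ and tie $\mathbf{r}_2=\mathbf{r}_1\otimes q$). Your approach is more elementary and handles all eight patterns uniformly (indeed, the paper's appendix omits explicit matrices for E-R-implication, E-R-Inv-implication, and Dual E-implication, whereas yours covers them); the paper's approach, on the other hand, makes the stronger point that the $3\times 3$ matrix-multiplicative design alone---without the translation---already suffices, which is the design choice the paper wants to justify.
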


\begin{proof}
To infer different logical patterns via different free variables, we can set some elements of the relation matrix to be zero-valued or one-valued complex numbers. For example, the implication and inverse implication relations can be inferred by setting the matrix to be diagonal or anti-diagonal. See Appendix for details.
\end{proof}


\subsection{Experimental Results}

\subsubsection{Experiment Setup}

\textbf{Datasets} 
We utilize three benchmark KGs: FBH, FBHE, and DBHE, that contain nested facts and are constructed by \cite{DBLP:conf/aaai/Chanyoung}. 
FBH and FBHE are based on FB15K237 from Freebase \cite{DBLP:conf/sigmod/BollackerEPST08} while DBHE is based on DB15K from DBpedia \cite{DBLP:conf/semweb/AuerBKLCI07}. 
FBH contains only nested facts that can be inferred from the triple facts, e.g., \emph{prerequisite\_for} and \emph{implies\_position}, while FBHE and DBHE further contain externally-sourced knowledge crawled from Wikipedia articles, e.g., \emph{next\_almaMater} and \emph{transfers\_to}. 
The authors of \cite{DBLP:conf/aaai/Chanyoung} spent six weeks manually defining these nested facts and adding them to the KGs. 
Besides, we employ the same data augmentation strategies as used in the original paper, i.e., adding reverse relations and reversed triple to the training set, and using random walks to augment plausible triples. 
The dataset details are presented in Table~\ref{tb:data}. We split $\sT$ and $\sTT$ into training, validation, and test sets in an 8:1:1 ratio. 

\begin{table}[t]
\small
\centering
\setlength{\tabcolsep}{0.65em}
\caption{Statistics of $\Ghat=(V, R, \sT, \Rhat, \sTT)$. $|\sT|^\prime$ denotes the number of atomic triples involved in the nested triples.}
\begin{tabular}{ccccccc}
\toprule
 & $|V|$ & $|R|$ & $|\sT|$ & $|\Rhat|$ & $|\sTT|$ & $|\sT|^\prime$ \\
\midrule
FBH & 14,541 & 237 & 310,117 & 6 & 27,062 & 33,157 \\
FBHE & 14,541 & 237 & 310,117 & 10 & 34,941 & 33,719 \\
DBHE & 12,440 & 87 & 68,296 & 8 & 6,717 & 8,206 \\
\bottomrule
\end{tabular}
\label{tb:data}
\end{table}

\textbf{Baselines} 
We consider BiVE-Q and BiVE-B \cite{DBLP:conf/aaai/Chanyoung} as our major baselines as they are specifically designed for KGs with nested facts and have demonstrated significant improvements over triple-based methods. 
We also compare some rule-based approaches as they indirectly consider relations between facts in first-order-logic-like expression, including Neural-LP \cite{DBLP:conf/nips/YangYC17}, DRUM \cite{DBLP:conf/nips/SadeghianADW19}, and AnyBURL \cite{DBLP:conf/ijcai/MeilickeCRS19}.
We further include QuatE \cite{DBLP:conf/nips/0007TYL19} and BiQUE \cite{DBLP:conf/emnlp/GuoK21} as they are the SoTA triple-based methods and they are also based on 4D hypercomplex numbers. 
However, these triple-based methods do not directly apply to the nested facts. 
Following \cite{DBLP:conf/aaai/Chanyoung}, we create a new triple-based KG $G_T$ where the atomic facts are converted into entities and nested facts are converted into triples (see Appendix for details).
For our approach, we implement three variants of FactE: FactE-Q (using quaternions), FactE-H (using hyperbolic quaternions), FactE-S (split quaternions), as well as their counterparts with translations: FactE-QB, FactE-HB, and FactE-SB. 
In the Appendix, we also extend BiVE-Q and BiVE-B to other hypercomplex numbers: BiVE-H, BiVE-HB BiVE-S, and BiVE-SB for further comparison. 
We employ three standard metrics: Filtered MR (Mean Rank), MRR (Mean Reciprocal Rank), and Hit@$10$. 
We report the mean performance over $10$ random seeds for each method, and the relatively small standard deviations are omitted. 


\textbf{Implementation details} 
We implement the framework based on OpenKE \footnote{https://github.com/thunlp/OpenKE} and the code  \footnote{https://github.com/bdi-lab/BiVE/}.
We train our methods on triple prediction and evaluate them on other tasks. The dimensionality is set to be $d=200$. 
We reuse the hyperparameters as used in \cite{DBLP:conf/aaai/Chanyoung} and do not perform further hyperparameter searches. The detailed hyperparameter settings can be found in the Appendix.

\begin{table}[t!]
\small
\centering
\setlength{\tabcolsep}{0.3em}
\renewcommand{\arraystretch}{0.9} 
\caption{Results of triple prediction. Shaded numbers are better results than the best baseline. The best scores are boldfaced and the second best scores are underlined. * denotes results taking from \cite{DBLP:conf/aaai/Chanyoung}. }
\resizebox{\columnwidth}{!}{
\begin{tabular}{cccccccccc}
\toprule
 & \multicolumn{3}{c}{FBH} & \multicolumn{3}{c}{FBHE} & \multicolumn{3}{c}{DBHE} \\
 & MR ($\downarrow$) & MRR  ($\uparrow$) & Hit@10 ($\uparrow$) & MR ($\downarrow$) & MRR ($\uparrow$) & Hit@10 ($\uparrow$) & MR ($\downarrow$) & MRR ($\uparrow$) & Hit@10 ($\uparrow$) \\
\midrule
QuatE*  & 145603.8 & 0.103 & 0.114 & 94684.4 & 0.101 & 0.209 & 26485.0 & 0.157 & 0.179 \\
BiQUE*  & 81687.5 & 0.104 & 0.115 & 61015.2 & 0.135 & 0.205 & 19079.4 & 0.163 & 0.185 \\
\midrule
Neural-LP* & 115016.6 &  0.070 &  0.073 &  90000.4 &  0.238 &  0.274 &  21130.5 &  0.170 & 0.209 \\
DRUM* & 115016.6 & 0.069 & 0.073 & 90000.3 & 0.261 &  0.274 & 21130.5 & 0.166 & 0.209 \\
AnyBURL* & 108079.6 & 0.096 & 0.108 & 83136.8 & 0.191 & 0.252 & 20530.8 & 0.177 & 0.214 \\
\midrule
BiVE-Q & 6.20 & 0.855 &	0.941 & 8.35 & 0.711 & 0.866 & 3.63 & 0.687 & 0.958 \\
BiVE-B & 8.63 & 0.833 & 0.924 & 9.53 & 0.705 & 0.860 & 4.66 & 0.718 & 0.945\\
\midrule
FactE-Q (Ours) & 6.56 & \cellcolor{gray!25}0.863 & \cellcolor{gray!25}0.953 & \cellcolor{gray!25}5.77 & \cellcolor{gray!25}0.811 & \cellcolor{gray!25}0.943 & \cellcolor{gray!25}3.51 & \cellcolor{gray!25}0.809 & \cellcolor{gray!25}0.960  \\
FactE-H (Ours) & \cellcolor{gray!25}4.69 & \cellcolor{gray!25}0.858 & \cellcolor{gray!25}0.964 & \cellcolor{gray!25}3.99 & \cellcolor{gray!25}0.781 & \cellcolor{gray!25}0.943 & \cellcolor{gray!25}2.65 & \cellcolor{gray!25}0.806 & \cellcolor{gray!25}0.969 \\
FactE-S (Ours) & \cellcolor{gray!25}\underline{3.87} & \cellcolor{gray!25}0.867 & \cellcolor{gray!25}\underline{0.977} & \cellcolor{gray!25}3.60 & \cellcolor{gray!25}0.795 & \cellcolor{gray!25}0.947 & \cellcolor{gray!25}2.55 & \cellcolor{gray!25}0.809 & \cellcolor{gray!25}0.966  \\
\midrule
FactE-QB (Ours) & \cellcolor{gray!25}6.04 & \cellcolor{gray!25}0.898 & \cellcolor{gray!25}0.958 & \cellcolor{gray!25}5.55 & \cellcolor{gray!25}\underline{0.845} & \cellcolor{gray!25}0.947 & \cellcolor{gray!25}\underline{2.54} & \cellcolor{gray!25}\underline{0.847} & \cellcolor{gray!25}\underline{0.973}  \\
FactE-HB (Ours) & \cellcolor{gray!25}3.82 & \cellcolor{gray!25}\underline{0.899} & \cellcolor{gray!25}0.971 & \cellcolor{gray!25}\underline{3.53} &  \cellcolor{gray!25}0.828 & \cellcolor{gray!25}\underline{0.955} & \cellcolor{gray!25}2.62 & \cellcolor{gray!25}0.842 & \cellcolor{gray!25}0.972 \\
FactE-SB (Ours) & \cellcolor{gray!25}\textbf{3.34} & \cellcolor{gray!25}\textbf{0.922} & \cellcolor{gray!25}\textbf{0.982} & \cellcolor{gray!25}\textbf{3.05} & \cellcolor{gray!25}\textbf{0.851} & \cellcolor{gray!25}\textbf{0.962} & \cellcolor{gray!25}\textbf{2.07} & \cellcolor{gray!25}\textbf{0.862} & \cellcolor{gray!25}\textbf{0.984} \\
\bottomrule
\end{tabular}
}
\vspace{-0.1cm}
\label{tb:tp}
\end{table}

\begin{table}[t!]
\small
\centering
\setlength{\tabcolsep}{0.3em}
\renewcommand{\arraystretch}{0.9} 
\caption{Results of conditional link prediction. Shaded numbers are better results than the best baseline. The best scores are boldfaced and the second best scores are underlined. * denotes results taking from \cite{DBLP:conf/aaai/Chanyoung}. }
\resizebox{\columnwidth}{!}{
\begin{tabular}{cccccccccc}
\toprule
 & \multicolumn{3}{c}{FBH} & \multicolumn{3}{c}{FBHE} & \multicolumn{3}{c}{DBHE} \\
 & MR ($\downarrow$) & MRR ($\uparrow$) & Hit@10 ($\uparrow$) & MR ($\downarrow$) & MRR ($\uparrow$) & Hit@10 ($\uparrow$) & MR ($\downarrow$) & MRR ($\uparrow$) & Hit@10 ($\uparrow$) \\
\midrule
QuatE* & 163.7 & 0.346 & 0.494 & 1546.4 & 0.124 & 0.189 & 551.6 & 0.208 & 0.309 \\
BiQUE* & 111.0 & 0.423 & 0.641 & 90.1 & 0.387 & 0.617 & 29.5 & 0.378 & 0.677 \\
\midrule
Neural-LP* & 185.9 & 0.433 & 0.648 & 146.2 & 0.466 & 0.716 & 32.2 & 0.517 & 0.756\\
DRUM* & 262.7 & 0.394 & 0.555 & 207.6 & 0.413 & 0.620 & 49.0 & 0.470 & 0.732\\
AnyBURL* & 228.5 & 0.380 & 0.563 & 166.0 & 0.418 & 0.607 & 81.7 & 0.403 & 0.594\\
\midrule
BiVE-Q & 4.33 & 0.826 & 0.948 & 6.56 & 0.761 & 0.886 & 2.69 & 0.852 & 0.971  \\
BiVE-B & 5.34 & 0.836 & 0.940 & 7.49 & 0.761 & 0.872 & 2.91 & 0.858 & 0.967  \\
\midrule
FactE-Q (Ours)  & \cellcolor{gray!25}1.70 & \cellcolor{gray!25}0.930 & \cellcolor{gray!25}0.986 & \cellcolor{gray!25}2.89 & \cellcolor{gray!25}0.863 & \cellcolor{gray!25}0.948 & \cellcolor{gray!25}\textbf{1.68} & \cellcolor{gray!25}\underline{0.930} & \cellcolor{gray!25}0.987  \\

FactE-H (Ours) & \cellcolor{gray!25}1.68 & \cellcolor{gray!25}0.909 &	\cellcolor{gray!25}0.987 & \cellcolor{gray!25}2.87 & \cellcolor{gray!25}0.843 & \cellcolor{gray!25}0.945 & \cellcolor{gray!25}1.82 & \cellcolor{gray!25}0.912 & \cellcolor{gray!25}0.986 \\
FactE-S (Ours)  & \cellcolor{gray!25}\underline{1.54} & \cellcolor{gray!25}0.925 & \cellcolor{gray!25}\textbf{0.991} & \cellcolor{gray!25}3.04 & \cellcolor{gray!25}0.850 & \cellcolor{gray!25}0.941 & \cellcolor{gray!25}1.76 & \cellcolor{gray!25}0.910 & \cellcolor{gray!25}\underline{0.988}  \\
\midrule
FactE-QB (Ours)  & \cellcolor{gray!25}1.71 & \cellcolor{gray!25}\textbf{0.935} & \cellcolor{gray!25}0.987 & \cellcolor{gray!25}3.00 & \cellcolor{gray!25}\underline{0.865} & \cellcolor{gray!25}0.949 & \cellcolor{gray!25}\underline{1.70} & \cellcolor{gray!25}\textbf{0.931} & \cellcolor{gray!25}0.986  \\
Fact-HB (Ours) & \cellcolor{gray!25}1.60 & \cellcolor{gray!25}0.924 & \cellcolor{gray!25}\underline{0.989} & \cellcolor{gray!25}\underline{2.76} & \cellcolor{gray!25}0.855 & \cellcolor{gray!25}\underline{0.950} & \cellcolor{gray!25}1.92 & \cellcolor{gray!25}0.918 & \cellcolor{gray!25}0.981 \\
FactE-SB (Ours) & \cellcolor{gray!25}\textbf{1.52} & \cellcolor{gray!25}\underline{0.934} & \cellcolor{gray!25}\textbf{0.991} & \cellcolor{gray!25}\textbf{2.61} & \cellcolor{gray!25}\textbf{0.867} & \cellcolor{gray!25}\textbf{0.951} & \cellcolor{gray!25}1.72 & \cellcolor{gray!25}0.919 & \cellcolor{gray!25}\textbf{0.990}  \\
\bottomrule
\end{tabular}
}
\vspace{-0.2cm}
\label{tb:clp}
\vspace{-0.2cm}
\end{table}

\subsubsection{Main Results}

\textbf{Triple prediction} 

Table \ref{tb:tp} presents the results of triple prediction. 
First, it shows that all triple-based approaches yield relatively modest results compared to BiVE-Q and BiVE-B, designed specifically for KGs with nested facts. Our approach, FactE-Q, the quaternionic version, already outperforms the baselines across most metrics. 
Particularly notable are the pronounced enhancements in FBHE and DBHE, with MRR improvements of 14.1\% and 17.7\% respectively, underscoring the efficacy of the proposed FactE model.
Furthermore, FactE-H and FactE-S demonstrate heightened performance over FactE-Q across various evaluation metrics, particularly in terms of MR. This highlights the advantages that hyperbolic quaternions and split quaternions offer over standard quaternions. Impressively, the split quaternionic version attains the highest performance, followed closely by the hyperbolic quaternionic variant.
Moreover, through the incorporation of a hypercomplex translation component, FactE-QB, Fact-HB, and FactE-SB consistently outperform their non-translation counterparts, illustrating the advantages of combining multiple transformations (rotation and translation) within the hypercomplex space.

\textbf{Conditional link prediction} 
Table \ref{tb:clp} shows the outcomes of conditional link prediction. It is evident that all three FactE variants substantially outperform the two SoTA baselines, BiVE-Q and BiVE-B, across all datasets. Notably, the best FactE variant surpasses the baselines by 11.8\%, 13.9\%, and 8.5\% in terms of MRR for FBH, FBHE, and DBHE, respectively. This remarkable performance gain underscores the effectiveness of the proposed method.
Similar to the trends observed in triple prediction, the incorporation of translation components in FactE-QB, Fact-HB, and Fact-SB leads to further improvements over their counterparts without translation components. This reaffirms the advantages gained from the integration of multiple hypercomplex transformations. Intriguingly, we noticed that varying hypercomplex number systems yield the best performance on different datasets, contrasting the observations from triple prediction. We conjecture that this stems from the inherent variance in inductive biases offered by different hypercomplex number systems, making them more suitable for certain datasets over others.
We believe the choices of spaces can be linked to a hyperparameter that offers flexibility in adapting to diverse dataset characteristics.

\begin{table}
\small
\centering
\setlength{\tabcolsep}{0.2em}
\renewcommand{\arraystretch}{0.8} 
\caption{Results of base link prediction. The best scores are boldfaced and the second best scores are underlined. * denotes results taking from \cite{DBLP:conf/aaai/Chanyoung}. }
\resizebox{\columnwidth}{!}
{
\begin{tabular}{ccccccc}
\toprule
 & \multicolumn{3}{c}{FBHE} & \multicolumn{3}{c}{DBHE} \\
 & MR ($\downarrow$) & MRR ($\uparrow$) & Hit@10 ($\uparrow$) & MR ($\downarrow$) & MRR ($\uparrow$) & Hit@10 ($\uparrow$) \\

\midrule
QuatE* & 139.0 & 0.354 & 0.581 & 409.6 & 0.264 & 0.440\\
BiQUE* &  134.9 & 0.356 & 0.583 & \textbf{376.6} & 0.274 & \textbf{0.446}\\
\midrule
Neural-LP* & 1942.5 & 0.315 & 0.486 & 2904.8 & 0.233 & 0.357 \\ 
DRUM* & 1945.6 & 0.317 & 0.490 & 2904.7 & 0.237 & 0.359 \\
AnyBURL* & 342.0 & 0.310 & 0.526 & 879.1 & 0.220 & 0.364\\
\midrule
BiVE-Q &  136.13 & 0.369 & 0.603 & 827.18 & 0.271 & 0.428  \\
BiVE-B & 136.54 & \underline{0.370} & \underline{0.607} & 795.59 & 0.274 & 0.422 \\
\midrule
FactE-Q (Ours) &  \underline{131.72} & 0.365 & 0.605  & 749.75 & \underline{0.284} & \textbf{0.446}  \\
Fact-H (Ours) & 153.00 & 0.349 & 0.593 & 868.82 & 0.266 & 0.423  \\
FactE-S (Ours) & 149.64 & 0.350 & 0.592 & 895.85 & 0.272 & 0.432   \\
\midrule
FactE-QB (Ours) & \textbf{130.13} & \textbf{0.371} & \textbf{0.608} & 751.18 & \textbf{0.289} & \underline{0.443}  \\
Fact-HB (Ours) & 155.74 & 0.353 & 0.594 & 801.76 & 0.271 & 0.423 \\
FactE-SB (Ours) & 149.73 & 0.355 & 0.594 & 827.89 & 0.273 & 0.431  \\
\bottomrule
\end{tabular}
}
\vspace{-0.2cm}
\label{tb:blp}
\end{table}

\textbf{Base link prediction}
Table \ref{tb:blp} illustrates the results of base link prediction. Among our approaches, namely FactE-Q, FactE-H, and FactE-S, we observe competitive or improved results in comparison to SoTA embedding-based and rule-based methods on the FBHE and DBHE datasets. The best performance is achieved by FactE-QB, which outperforms the baselines across a majority of metrics. This outcome substantiates the fact that the incorporation of nested facts into triple-based KGs indeed enhances the inference capabilities for base link prediction.

\subsubsection{Ablation Analysis}

\textbf{Embedding analysis of logical patterns} 
To verify whether the learned embeddings capture the inference of logical patterns over nested facts, 
we visualized the real part of the embeddings of the $8$ relations in DBHE. 
The analysis of the embeddings yields insightful observations. As shown in Fig. \ref{fig:patternanalysis}, the lower left element and upper right element of the embedding \emph{EquivalentTo} are $1$, showcasing that \emph{EquivalentTo} predominantly adheres to R-symmetry or R-inverse. On the other hand, the upper left element and lower right element of the \emph{ImpliesLang.} are $1$, affirming its alignment with the R-implication rule. Similarly, the embeddings of \emph{NextAlmaM.}, \emph{TransfersTo}, and \emph{ImpliesGenre} indicate high adherence to E-implicationsas as only one of the corners is $1$. 
We find that the embedding of relation \emph{ImpliesProf.} does not have a significant pattern. We conjecture that this is because \emph{ImpliesProf.} follows many rule patterns and there exists no global solution that satisfies all rules. 
See the Appendix for the statistics of the logical patterns in the datasets.

\begin{figure}
    \centering
    \includegraphics[width=\linewidth]{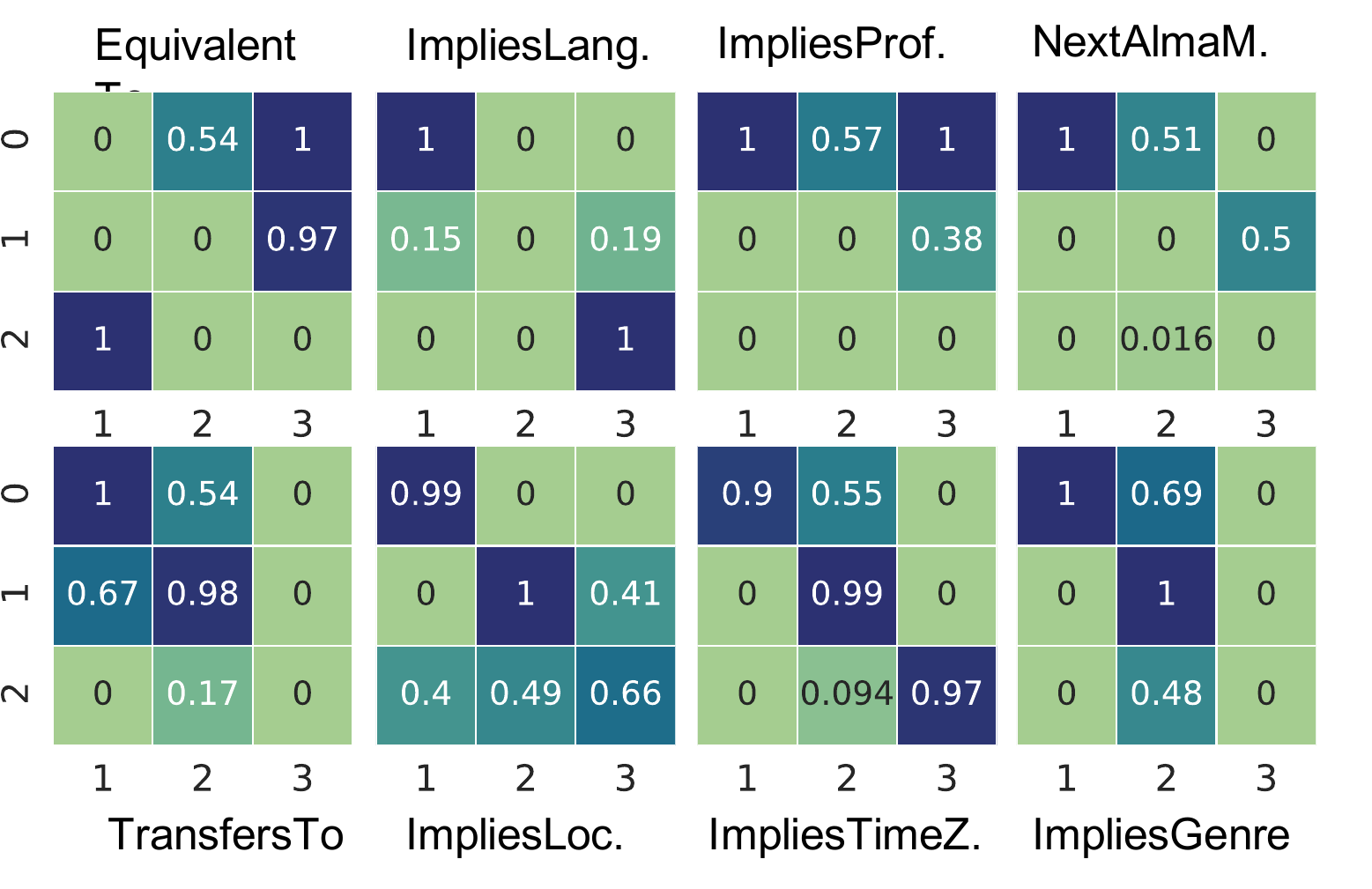}
     \vspace{-0.7cm}
    \caption{The visualization of the average of the real component embeddings of the $8$ nested relations in DBHE. }
    \vspace{-0.5cm}
    \label{fig:patternanalysis}
\end{figure}

\textbf{Influence of nested fact embeddings}
To evaluate the influence of nested fact embeddings, we perform a comparison by excluding the loss associated with nested fact embeddings (i.e., setting $\lambda_1=0$). The outcomes presented in Table \ref{tb:impact_nested} underscore the significant enhancements achieved by incorporating nested fact embeddings, particularly evident in the improvements in MR and H@10 for DBHE.

\begin{table}
\small
\centering
\setlength{\tabcolsep}{0.2em}
\renewcommand{\arraystretch}{0.9} 
\caption{Ablation study on the nested fact embeddings for base link prediction. 
Best results are boldfaced. 
}
\resizebox{\columnwidth}{!}
{
\begin{tabular}{ccccccc}
\toprule
 & \multicolumn{3}{c}{FBHE} & \multicolumn{3}{c}{DBHE} \\
 & MR ($\downarrow$) & MRR ($\uparrow$) & Hit@10 ($\uparrow$) & MR ($\downarrow$) & MRR ($\uparrow$) & Hit@10 ($\uparrow$) \\
\midrule
FactE-Q ($\lambda_1=0$) &  135.38 & \textbf{0.368} & 0.604 & 799.97 & 0.281 & 0.431  \\
FactE-Q ($\lambda_1=0.5$) &  \textbf{131.72} & 0.365 & \textbf{0.605}  & \textbf{749.75} & \textbf{0.284} & \textbf{0.446}   \\
\midrule
FactE-H ($\lambda_1=0$) &  154.75 &  0.347 & 0.589 & 922.34 & 0.267 & 0.420 \\
FactE-H ($\lambda_1=0.5$) & \textbf{153.00} & \textbf{0.349} & \textbf{0.593} & \textbf{868.82} & 0.266 & \textbf{0.423}  \\
\midrule
FactE-S ($\lambda_1=0$) & 151.84 & 0.347 & 0.589 & 910.16 & \textbf{0.272} & 0.427    \\
FactE-S ($\lambda_1=0.5$)& \textbf{149.64} & \textbf{0.350} & \textbf{0.592} & \textbf{895.85} & \textbf{0.272} & \textbf{0.432}  \\
\bottomrule
\end{tabular}
}
\vspace{-0.1cm}
\label{tb:impact_nested}
\end{table}

\begin{table}[t]
\scriptsize
\centering
\setlength{\tabcolsep}{0.2em}
\renewcommand{\arraystretch}{0.9} 
\caption{Performance per relation on triple prediction. Freq. indicates the number of nested facts in the test set.  }
\resizebox{\columnwidth}{!}
{
\begin{tabular}{cccccccc}
\toprule
 $\rhat$ & Freq. & FactE-Q & FactE-QB & FactE-H & FactE-HB  & FactE-S & FactE-SB  \\
\midrule
Equiv.To & 98 & 0.994 & \textbf{0.997} & 0.992 & \textbf{0.997} & \textbf{0.997} & \underline{0.995} \\
ImpliesLang. & 29 & \underline{0.671} & 0.602 & \textbf{0.680} & 0.662 & 0.614 & 0.622 \\
ImpliesProf. & 210 & 0.807 & 0.916 & 0.830 & \underline{0.935} & 0.832 & \textbf{0.936} \\
ImpliesLocat. & 163 & \underline{0.929} & 0.869 & 0.893 & 0.810 & \underline{0.929} & \textbf{0.958} \\
ImpliesTime. & 44 & 0.305 & 0.297 & \underline{0.307} & \textbf{0.329} & 0.290 & 0.293 \\
ImpliesGenre & 84 & 0.726 & \underline{0.762} & 0.719 & 0.741 & 0.742 & \textbf{0.796} \\
NextAlmaM. & 14 & 0.770 & \textbf{0.812} & 0.689 & 0.688 & 0.751 & \underline{0.795} \\
Transf.To & 29 &  \textbf{0.977} & \underline{0.952} & 0.964 & 0.949 & 0.921 & 0.953 \\
\bottomrule
\end{tabular}
}

\label{tb:rel}
\vspace{-0.5cm}
\end{table}

\textbf{Relation-specific performance}
In Table \ref{tb:rel}, we present the performance results for each relation within the DBHE dataset. Notably, the diverse hypercomplex number systems lead to optimal performance for different relations. This reiterates our conjecture that distinct benefits are offered by varying hypercomplex number systems, catering to the specific characteristics of different relation types.
Remarkably, our findings reveal that the incorporation of a hypercomplex translation component (as seen in FactE-QB, FactE-HB, and FactE-SB) notably enhances the embeddings of relations such as \emph{ImpliesProf.} and \emph{ImpliesGenre} across all variants of hypercomplex number systems. 
However, this does not extend to relations like \emph{ImpliesLocat.} and \emph{ImpliesLang.}, suggesting a more complex relationship between these specific relations and the hypercomplex translation components.

\subsection{Conclusion}

This paper considers a novel perspective by extending traditional atomic factual knowledge representation to include nested factual knowledge. This enables the representation of both temporal situations and logical patterns that go beyond conventional first-order logic expressions (Horn rules). Our proposed approach, FactE, presents a family of hypercomplex embeddings capable of embedding both atomic and nested factual knowledge. This framework effectively captures essential logical patterns that emerge from nested facts. Empirical evaluation demonstrates the substantial performance enhancements achieved by FactE compared to existing baseline methods. 
Additionally, our generalized hypercomplex embedding framework unifies previous algebraic (e.g., quaternionic) and geometric (e.g., hyperbolic) embedding methods, offering versatility in embedding diverse relation types. 

\cleardoublepage

\chapter{Conclusion, Limitations, and Future works}
\label{chap_conclusion}

\section{Conclusion}

Relational data offer a structured representation of real-world knowledge that has been applied in a wide range of applications. 
Many types of relational data, such as social networks, knowledge graphs, and ontologies, are incomplete and noisy due to the process of human curation. 
Relational representation learning aims to address these issues by mapping these relational objects into continual low-dimensional vector space such that the relational structure is preserved and missing relational knowledge can be inferred by the analogical or the similarity structure among the embedding vectors. 

However, mapping relational data to a continual vector space is more challenging than the embeddings of image and text data that typically require only preservation of "similarity" between data objects. Relational data, besides requiring capturing similarity, exhibit various discrete properties that cannot be easily captured by the plain vectors in Euclidean space. 
In this dissertation, we consider geometric relational embeddings that map relational objects as geometric objects that faithfully model the  discrete properties inherent in the relational data. In particular, we made the following contributions. 

\begin{description}[leftmargin=2.30cm]

\item[Chapter~\ref{chap_structral}] introduces pseudo-Riemannian manifold embeddings.  We propose a QGCN, pseudo-Riemannian graph convolutional networks (GCNs) that generalizes GCN in the pseudo-Riemannian manifolds. This GCN defines node embeddings in the pseudo-Riemannian manifolds allowing for capturing both hierarchical and cyclic structural patterns. Besides, we propose a pseudo-Riemannian knowledge graph embedding method that models entities in pseudo-Riemannian manifold and relations as pseudo-orthogonal transformation. The proposed KG embedding allows for simultaneous modeling of graph structural patterns (hierarchices and cycles) and relational patterns (symmstry, inversion, composition).
  
\item[Chapter~\ref{chap_ontological}] introduces BoxEL, a geometric embedding that allows for better capturing the logical structure in a knowledge base that is expressed in the Description Logic EL++. BoxEL models concepts as boxes and models relations as affine transformation. This modeling faithfully models the intersectional closure and the complex relational mapping between concepts.

\item[Chapter~\ref{chap_logical}] introduces HMI, a hyperbolic embedding method that explicitly encodes the class hierarchy and exclusion relations for structured multi-label prediction. Embedding such relational constraints of classes onto embedding space is useful as it encourages the predictions to be coherent to the given relational constraints.

\item[Chapter~\ref{chap_hyper}] introduces two high-order relational embeddings: 1) We propose ShrinkE, a hyper-relational embedding model that allows for modeling logical patterns over qualifiers, including monotonicity and qualifier-level logical patterns (ie..g, qualifier implication and qualifier exclusion); 2) We propose NestE, an innovative approach designed to embed the semantics of both atomic facts and nested facts that enable representing temporal situations and logical patterns over facts.

\end{description}

\section{Limitations and Future Works}

The current geometric embeddings still suffer from several limitations.

\begin{itemize}

\item \textbf{Shallow nature of geometric embeddings:} Although geometric embeddings have found applications in various relational reasoning tasks, many of these methods learn embeddings in a shallow manner. This becomes problematic when the input relational objects contain rich multi-modal features, such as images and text descriptions. In such cases, geometric embeddings should learn a neural network encoder that takes images or text as input and outputs the corresponding geometric objects.

\item \textbf{Lack of freely chosen geometric inductive biases:} Different geometric inductive biases may be suitable for various inference tasks. Typically, geometric inductive biases are chosen based on prior human knowledge about the tasks. However, when a task requires the joint consideration of multiple properties, developing a geometric relational embedding that simultaneously captures these diverse properties remains highly challenging.

\item \textbf{Increased computational and parameter requirements:} Geometric embeddings often necessitate additional computation or memory. For instance, hyperbolic embeddings involve the computation of exponential and logarithmic maps. Box embeddings require two embedding vectors, storing the lower-left corner and the upper-right corner, respectively.

\end{itemize}

In the future, we plan to explore the following directions to address and enhance our research:

\begin{itemize}

    \item \textbf{Geometric inductive biases for imperfect learning settings.} 
    Real-world data rarely align with perfection, and many existing methods assume balanced training data, leading to biased predictions, often referred to as minority collapse. This poses challenges in generalizing to real-world scenarios. 
    We plan to develop geometric relational biases explicitly incorporating inductive biases to promote equality. 
    An example of such a bias is the maximum class separation. 
    Generalizing these biases to geometric embeddings is non-trivial and requires specific geometric design considerations.

    \item \textbf{Heterogeneous hierarchies} Most current geometric embedding methods can only encode one hierarchy relation (e.g., \emph{is\_a}). However,  a real-world KG might simultaneously contain multiple hierarchical relations (e.g., \emph{is\_a} and \emph{has\_part}) \cite{patel2020representing}. We plan to develop embedding models that simultaneously encode multiple hierarchies. 

    \item \textbf{Injecting relational constraints into Large Language Models (LLMs).} 
    Relational knowledge graphs offer explicit factual knowledge that complements the inherent lack of factual knowledge in LLMs. 
    By learning explicit geometric KG embeddings, complex relational knowledge, such as logical rules, can be preserved. 
    However, LLMs do not guarantee the presence of such factual knowledge, leading to hallucination, especially in the case of domain-specific data like biomedical and healthcare information. In the future, we plan to inject relational constraints into LLMs.

    \item \textbf{Learning vector-symbolic representations.} 
    The fusion of knowledge representation (symbolic methods) and deep learning (neural methods) is particularly intriguing, leveraging the strengths of both approaches—the reliability of knowledge representations and the effectiveness of deep learning. 
    We aim to develop vector-symbolic representations that carry symbolic meanings while serving as continuous vectors suitable for input into neural networks. An example of such an approach is the Vector Symbolic Architecture (VSA) \cite{DBLP:journals/air/SchlegelNP22,DBLP:journals/csur/KleykoROR23}, which has seen limited exploration in the machine learning community.

    \item \textbf{Hybrid semantic search.} LLMs are shaping the future of data management by transforming unstructured data into numerical vectors stored in a Vector Database. However, these embeddings are structureless and do not carry any explicit symbolic information (e.g., item attributes) that plays essential roles in querying unstructured data. With geometric embedding, we may perform hybrid query that combines similarity search with structured attribute filtering. This can be done by transforming attribute information into geometric vector space. 

    \item \textbf{Efficiency improvements.} We plan to investigate strategies for enhancing the efficiency of geometric embeddings by reducing additional computational and memory requirements. This includes exploring optimizations for specific geometric embedding methods, potentially streamlining processes and making them more scalable for real-world applications.

    \item \textbf{Flexible geometric inductive biases.} We may explore to devise methods that allow for the incorporation of freely chosen geometric inductive biases. This involves exploring techniques that dynamically adapt to different inference tasks, especially those requiring the joint consideration of multiple properties, to improve the flexibility and adaptability of geometric relational embeddings.

    \item \textbf{Applications in biomedical sciences.}  Geometric embeddings benefit biomedical science in modeling various aspects of biomedical networks: 1) biomedical networks exhibit hierarchical structures, e.g., the structures of ICD-9 codes, and MeSH terms; 2) biomedical knowledge graphs are heterogeneous and many of the relations exhibits complex relational patterns (e.g., symmetry of drug-drug interaction); 3) biomedical ontologies and taxonomies have logical structures and the prediction models must be consistent with these logical structures. My future work would be exploring novel relational embeddings that are suitable for biomedical scenarios.

\end{itemize}

\addtocontents{toc}{\protect\setcounter{tocdepth}{0}}
\cleardoublepage
\appendix

\newpage

\chapter{Proof of Theorems}

\setcounter{proposition}{0}
\setcounter{corollary}{0}
\setcounter{theorem}{0}
\setcounter{lemma}{0}

\section{Proof of Theorems of QGCN}\label{app:proof}

\subsection{Proof of Theorem 1}\label{app:3.1}
\begin{theorem}[Theorem 4.1 in \cite{law2020ultrahyperbolic}]
\label{appendix_theorem_law_diff}
For any point $\mathbf{x} \in \mathcal{Q}_{\beta}^{s, t}$, there exists a diffeomorphism $\psi: \mathcal{Q}_{\beta}^{s, t} \rightarrow  \mathbb{S}_{1}^{t} \times \mathbb{R}^{s}$ that maps $\mathbf{x}$ into the product manifolds of an unit sphere and the Euclidean space, the mapping and its inverse are given by, 
\begin{equation}
\psi(\mathbf{x})=\left(\begin{array}{c}
\frac{1}{\|t\|} \mathbf{t} \\
\frac{1}{\sqrt{|\beta|}} \mathbf{s}
\end{array}\right), \\
\quad \psi^{-1}(\mathbf{z})=\sqrt{|\beta|}\left(\begin{array}{c}
\sqrt{1+\|\mathbf{v}\|^{2}} \mathbf{u} \\
\mathbf{v}
\end{array}\right),
\end{equation}
where $\mathbf{x}=\left(\begin{array}{c}
\mathbf{t} \\
\mathbf{s}
\end{array}\right) \in \mathcal{Q}_{\beta}^{s, t}$ with $\mathbf{t} \in \mathbb{R}_*^{t+1}$ and $\mathbf{s} \in \mathbb{R}^s$. $\mathbf{z}=\left(\begin{array}{c}
\mathbf{u} \\
\mathbf{v}
\end{array}\right) \in \mathbb{S}_{1}^{t} \times \mathbb{R}^{s}$ with $\mathbf{u} \in \mathbb{S}_{1}^t$ and $\mathbf{v} \in \mathbb{R}^{s}$. 
\end{theorem}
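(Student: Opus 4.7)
The statement gives explicit formulas for $\psi$ and $\psi^{-1}$, so the proof reduces to verifying three things: (i) the two maps are well-defined between the stated domains and codomains, (ii) they are mutual inverses, and (iii) they are both smooth. My plan is to carry out these verifications in that order, since the well-definedness is the only part that requires genuine geometric input from the pseudo-hyperboloid constraint; the remaining steps are algebraic.

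First I would check well-definedness of $\psi$. The formula for $\psi(\mathbf{x})$ divides by $\|\mathbf{t}\|$, so I need $\mathbf{t} \neq 0$ for every $\mathbf{x}\in\mathcal{Q}_{\beta}^{s,t}$ with $\beta<0$. This follows directly from the defining equation $-\|\mathbf{t}\|^{2}+\|\mathbf{s}\|^{2}=\beta$, which rearranges to $\|\mathbf{t}\|^{2}=\|\mathbf{s}\|^{2}+|\beta|\geq|\beta|>0$. Once $\mathbf{t}\neq 0$, the vector $\mathbf{t}/\|\mathbf{t}\|$ lies on the unit sphere $\mathbb{S}_{1}^{t}$ by construction, and the space part $\mathbf{s}/\sqrt{|\beta|}$ is an arbitrary element of $\mathbb{R}^{s}$, confirming $\psi(\mathbf{x})\in\mathbb{S}_{1}^{t}\times\mathbb{R}^{s}$. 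For $\psi^{-1}$, I would substitute its output into the quadratic form: a direct computation gives
\[
-\|\sqrt{|\beta|}\sqrt{1+\|\mathbf{v}\|^{2}}\,\mathbf{u}\|^{2}+\|\sqrt{|\beta|}\,\mathbf{v}\|^{2}=-|\beta|(1+\|\mathbf{v}\|^{2})\|\mathbf{u}\|^{2}+|\beta|\|\mathbf{v}\|^{2}=-|\beta|=\beta,
\]
using $\|\mathbf{u}\|=1$, so the image lands in $\mathcal{Q}_{\beta}^{s,t}$.

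Next I would check the two composition identities. The easy direction is $\psi\circ\psi^{-1}=\mathrm{id}$: normalizing $\sqrt{|\beta|}\sqrt{1+\|\mathbf{v}\|^{2}}\,\mathbf{u}$ recovers $\mathbf{u}$, and dividing $\sqrt{|\beta|}\,\mathbf{v}$ by $\sqrt{|\beta|}$ recovers $\mathbf{v}$. The other direction $\psi^{-1}\circ\psi=\mathrm{id}$ is the place where the pseudo-hyperboloid constraint reappears: the space component comes out as $\mathbf{s}$ trivially, while the time component is $\sqrt{|\beta|}\sqrt{1+\|\mathbf{s}\|^{2}/|\beta|}\cdot\mathbf{t}/\|\mathbf{t}\|=\sqrt{|\beta|+\|\mathbf{s}\|^{2}}\cdot\mathbf{t}/\|\mathbf{t}\|$, which equals $\mathbf{t}$ exactly because $\|\mathbf{t}\|^{2}=|\beta|+\|\mathbf{s}\|^{2}$ on $\mathcal{Q}_{\beta}^{s,t}$. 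Finally, smoothness of both maps follows from the fact that each component is a composition of smooth elementary functions (norms, square roots, scalar multiplication) whose arguments avoid their singular points: $\|\mathbf{t}\|>0$ by the first step, and $1+\|\mathbf{v}\|^{2}\geq 1>0$ everywhere.

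The main obstacle, as I see it, is nothing computational but rather making sure the constraint $\|\mathbf{t}\|>0$ is used in a way that cleanly justifies \emph{both} the division in $\psi$ and the equality of norms in the $\psi^{-1}\circ\psi$ check. These two uses of the same identity $\|\mathbf{t}\|^{2}=|\beta|+\|\mathbf{s}\|^{2}$ link the smoothness of $\psi$ to the surjectivity of $\psi^{-1}$, and keeping them separate in the write-up is essential so that circular reasoning is avoided. Once this is handled, the remaining arguments are routine algebraic manipulations, and the statement follows.
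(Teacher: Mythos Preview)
Your proposal is correct and complete. The paper itself does not actually prove this statement; it simply refers the reader to Appendix~C.5 of \cite{law2020ultrahyperbolic} for the proof. Your self-contained verification (well-definedness via the constraint $\|\mathbf{t}\|^{2}=|\beta|+\|\mathbf{s}\|^{2}$, the two composition identities, and smoothness away from the excluded singularities) is exactly the standard route and mirrors what the paper does explicitly for its closely related Theorem~\ref{lm:sbr}, so you are not taking a genuinely different approach but rather supplying the details the paper delegated to the cited source.
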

Please refer Appendix C.5 in \cite{law2020ultrahyperbolic} for proof of this theorem.

\subsection{Proof of Theorem 2}\label{app:lm:sbr}
\begin{theorem}
For any point $\mathbf{x} \in \mathcal{Q}_{\beta}^{s, t}$, there exists a diffeomorphism $\psi: \mathcal{Q}_{\beta}^{s, t} \rightarrow  \mathbb{S}_{-\beta}^{t} \times \mathbb{R}^{s}$ that maps $\mathbf{x}$ into the product manifolds of a sphere and the Euclidean space, the mapping and its inverse are given by,
\begin{equation}\label{eq:map_to_sbr}
\psi(\mathbf{x})=\left(\begin{array}{c}
\sqrt{|\beta|} \frac{\mathbf{t}}{\|\mathbf{t}\|} \\
 \mathbf{s}
\end{array}\right) \quad \text {,} \quad \psi^{-1}(\mathbf{z})=\left(\begin{array}{c}
\frac{\sqrt{|\beta|+\|\mathbf{v}\|^{2}}}{\sqrt{|\beta|}} \mathbf{u} \\
\mathbf{v}
\end{array}\right),
\end{equation}
where $\mathbf{x}=\left(\begin{array}{c}
\mathbf{t} \\
\mathbf{s} 
\end{array}\right) \in \mathcal{Q}_{\beta}^{s, t}$ with  $\mathbf{t} \in \mathbb{R}_*^{t}$ and $\mathbf{s} \in \mathbb{R}^s$. $\mathbf{z}=\left(\begin{array}{c}
\mathbf{u} \\
\mathbf{v}
\end{array}\right) \in \mathbb{S}_{-\beta}^{t} \times \mathbb{R}^{s}$ with $\mathbf{u} \in \mathbb{S}_{-\beta}^{t}$ and $\mathbf{v} \in \mathbb{R}^{s}$.
\end{theorem}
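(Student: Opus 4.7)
The plan is to verify that the explicit maps $\psi$ and $\psi^{-1}$ in Eq.~(\ref{eq:map_to_sbr}) are well defined, are mutual inverses, and are both smooth, thereby establishing the diffeomorphism. I will follow the same template as the proof of Theorem~\ref{appendix_theorem_law_diff} in Law et al., but adapted so that the curvature scale $|\beta|$ is preserved in the spherical component instead of being renormalised to the unit sphere.

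First I would check that $\psi$ is well defined on all of $\mathcal{Q}_{\beta}^{s,t}$. Writing $\mathbf{x}=(\mathbf{t},\mathbf{s})$ with $\mathbf{t}\in\mathbb{R}^{t+1}$ and $\mathbf{s}\in\mathbb{R}^s$, the defining equation $\|\mathbf{x}\|_t^2=\beta$ reads $-\|\mathbf{t}\|^2+\|\mathbf{s}\|^2=\beta$. Since $\beta<0$, this forces $\|\mathbf{t}\|^2=\|\mathbf{s}\|^2+|\beta|>0$, so $\mathbf{t}\in\mathbb{R}^{t+1}_*$ and the normalisation $\mathbf{t}/\|\mathbf{t}\|$ is smooth. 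The image $\sqrt{|\beta|}\,\mathbf{t}/\|\mathbf{t}\|$ then has Euclidean norm $\sqrt{|\beta|}$, confirming $\psi(\mathbf{x})\in\mathbb{S}_{-\beta}^{t}\times\mathbb{R}^{s}$. Symmetrically, for $\mathbf{z}=(\mathbf{u},\mathbf{v})$ with $\|\mathbf{u}\|^2=|\beta|$, the first block of $\psi^{-1}(\mathbf{z})$ has squared Euclidean norm $|\beta|+\|\mathbf{v}\|^2$, so the pseudo-Riemannian norm evaluates to $-(|\beta|+\|\mathbf{v}\|^2)+\|\mathbf{v}\|^2=-|\beta|=\beta$, placing $\psi^{-1}(\mathbf{z})$ in $\mathcal{Q}_{\beta}^{s,t}$.

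Next I would verify the two compositions. For $\psi^{-1}\circ\psi$, applying $\psi$ gives a vector whose first block has norm $\sqrt{|\beta|}$ and whose second block is $\mathbf{s}$; substituting into $\psi^{-1}$ rescales the first block by $\sqrt{|\beta|+\|\mathbf{s}\|^2}/\sqrt{|\beta|}$, which using $\|\mathbf{t}\|^2=|\beta|+\|\mathbf{s}\|^2$ recovers exactly $\mathbf{t}$. For $\psi\circ\psi^{-1}$, the first block of $\psi^{-1}(\mathbf{z})$ has norm $\sqrt{|\beta|+\|\mathbf{v}\|^2}$, and one checks that $\psi$ then rescales it back to $\mathbf{u}$ while leaving $\mathbf{v}$ untouched. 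Smoothness on both sides follows because every operation involved (coordinate projection, addition, scalar multiplication, and the square-root/norm operations applied to strictly positive arguments) is smooth; in particular the denominators $\|\mathbf{t}\|$ and $\sqrt{|\beta|}$ never vanish on the respective domains.

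I expect the only subtle point to be book-keeping the sign conventions: the curvature parameter $\beta$ of $\mathcal{Q}_{\beta}^{s,t}$ is negative, while the sphere $\mathbb{S}_{-\beta}^{t}$ carries the positive curvature $-\beta=|\beta|$, and the two ``time'' dimensions in $\mathcal{Q}_{\beta}^{s,t}$ and $\mathbb{S}_{-\beta}^{t}$ must be identified consistently. Once this bookkeeping is pinned down the computations above are routine algebra. Compared with Theorem~\ref{appendix_theorem_law_diff}, the essential new observation is that retaining the $\sqrt{|\beta|}$ factor in the spherical component (rather than normalising to $\mathbb{S}^t_1$) preserves curvature, which is precisely the property exploited later for Theorem~\ref{theory:tangent_sharing}.
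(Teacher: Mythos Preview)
Your proposal is correct and follows essentially the same approach as the paper's proof: verify directly that the explicit formulas are mutual inverses using the identity $\|\mathbf{t}\|^2=|\beta|+\|\mathbf{s}\|^2$ from the defining equation of $\mathcal{Q}_\beta^{s,t}$, and note that smoothness is immediate from the elementary operations involved. If anything, your version is more complete, since you explicitly check that $\psi$ and $\psi^{-1}$ land in the correct codomains, whereas the paper only computes the two compositions and asserts smoothness without elaboration.
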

\begin{proof}
It is easy to show that the $\psi$ and $\psi^{-1}$ are smooth functions as they only involve with a linear mapping with a constant scaling vector. Hence, we only need to show that $\psi\left(\psi^{-1}(\mathbf{z})\right)=\mathbf{z}$ and $\psi^{-1}(\psi(\mathbf{x}))=\mathbf{x}$. Here, we consider space dimensions and time dimensions separately.

For space dimensions, the mapping of the space dimensions of $\mathbf{x}$ to the space dimensions of $\mathbf{z}$ is an identity function (i.e. $\mathbf{v}=\mathbf{s}, \mathbf{s}=\mathbf{v}$). Thus, we only need to show the invertibility of the mappings taking time dimensions as inputs. 
For time dimensions, we first show that:
\begin{align}
\psi^{-1}(\psi(\mathbf{t})) &= \frac{\sqrt{|\beta|+\|\mathbf{v}\|^{2}}}{\sqrt{|\beta|}} \sqrt{|\beta|} \frac{\mathbf{t}}{\|\mathbf{t}\|} 
= \sqrt{|\beta|+\|\mathbf{v}\|^{2}}\frac{\mathbf{t}}{\|\mathbf{t}\|} \nonumber\\
& = \sqrt{|\beta|+\|\mathbf{s}\|^{2}}\frac{\mathbf{t}}{\|\mathbf{t}\|} 
= \|\mathbf{t}\|\frac{\mathbf{t}}{\|\mathbf{t}\|}=\mathbf{t}.
\end{align}
Note that the last equality can be inferred by the fact that $\mathbf{x} \in \mathcal{Q}_{\beta}^{s, t}$ and $\beta < 0$. 
We then show that:
\begin{equation}
\psi(\psi^{-1}(\mathbf{u})) = \sqrt{|\beta|} \frac{\sqrt{|\beta|+\|\mathbf{v}\|^{2}}}{\sqrt{|\beta|}} \frac{\mathbf{u}}{\| \frac{\sqrt{|\beta|+\|\mathbf{v}\|^{2}}}{\sqrt{|\beta|}} \mathbf{u}\|}  
= \sqrt{|\beta|}\frac{\mathbf{u}}{\|\mathbf{u}\|}=\mathbf{u}. \end{equation}
Note that the last equality can be inferred by the fact that $\mathbf{u} \in \mathbb{S}_{-\beta}^{t}$ and $\beta < 0$. 
\end{proof}

\subsection{Proof of Theorem 3}\label{app:invariance_space}
\begin{theorem}\label{theory:tangent_sharing_appendix}
For any reference point $\mathbf{x}=\left(\begin{array}{c}
\mathbf{t}\\
\mathbf{s}
\end{array}\right) \in \mathcal{Q}_{\beta}^{s, t}$ with space dimension $\mathbf{s}=\mathbf{0}$,
the induced tangent space of $\mathcal{Q}_{\beta}^{s, t}$ is equal to the tangent space of its diffeomorphic manifold $\mathbb{S}_{-\beta}^{t} \times \mathbb{R}^{s}$, namely, $\mathcal{T}_{\psi(\mathbf{x})}({\mathbb{S}_{-\beta}^{t} \times \mathbb{R}^{s}}) = \mathcal{T}_\mathbf{x} \mathcal{Q}_{\beta}^{s, t}$.
\end{theorem}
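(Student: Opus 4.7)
The plan is to show the identification of the two tangent spaces directly by computing the differential (pushforward) $d\psi_{\mathbf{x}}$ of the diffeomorphism $\psi$ at a reference point $\mathbf{x}=(\mathbf{t},\mathbf{0})^{\top}\in\mathcal{Q}_{\beta}^{s,t}$ with vanishing space part, and observing that it acts as the identity when restricted to $\mathcal{T}_{\mathbf{x}}\mathcal{Q}_{\beta}^{s,t}$. Because both tangent spaces sit as linear subspaces of the same ambient $\mathbb{R}^{t+s}$, identity of the pushforward forces their equality as subspaces.

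First I would use the defining equation of $\mathcal{Q}_{\beta}^{s,t}$ to note that when $\mathbf{s}=\mathbf{0}$ one has $-\|\mathbf{t}\|^{2}=\beta$, hence $\|\mathbf{t}\|=\sqrt{|\beta|}$, and therefore $\psi(\mathbf{x})=\bigl(\sqrt{|\beta|}\,\mathbf{t}/\|\mathbf{t}\|,\,\mathbf{0}\bigr)=(\mathbf{t},\mathbf{0})=\mathbf{x}$ in the ambient coordinates, so the base point is literally fixed. Next I would describe the two tangent spaces as linear subspaces of $\mathbb{R}^{t+s}$: the tangent space of $\mathcal{Q}_{\beta}^{s,t}$ at $\mathbf{x}$ is the $\langle\cdot,\cdot\rangle_{t}$-orthogonal complement of $\mathbf{x}$, which under $\mathbf{s}=\mathbf{0}$ collapses to $\{(\bm{\xi}_{t},\bm{\xi}_{s}):\langle\mathbf{t},\bm{\xi}_{t}\rangle=0\}$, while the tangent space of the product $\mathbb{S}_{-\beta}^{t}\times\mathbb{R}^{s}$ at $(\mathbf{t},\mathbf{0})$ is $\{(\bm{\zeta}_{t},\bm{\zeta}_{s}):\langle\mathbf{t},\bm{\zeta}_{t}\rangle=0\}$. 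These two subspaces are described by the very same linear equation, so they agree as sets; what remains is to verify that the diffeomorphism indeed identifies them via $d\psi_{\mathbf{x}}$ (rather than some nontrivial linear isomorphism between them).

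The key computation is the differential of the map $F(\mathbf{t})=\sqrt{|\beta|}\,\mathbf{t}/\|\mathbf{t}\|$ on the time block:
\begin{equation*}
dF_{\mathbf{t}}(\bm{\xi}_{t})=\sqrt{|\beta|}\left(\frac{\bm{\xi}_{t}}{\|\mathbf{t}\|}-\frac{\langle\mathbf{t},\bm{\xi}_{t}\rangle\,\mathbf{t}}{\|\mathbf{t}\|^{3}}\right).
\end{equation*}
Specialising to $\|\mathbf{t}\|=\sqrt{|\beta|}$ and to tangent vectors satisfying $\langle\mathbf{t},\bm{\xi}_{t}\rangle=0$ (which is precisely the condition $\bm{\xi}\in\mathcal{T}_{\mathbf{x}}\mathcal{Q}_{\beta}^{s,t}$ when $\mathbf{s}=\mathbf{0}$), the second term vanishes and the first simplifies to $\bm{\xi}_{t}$. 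Since the space block of $\psi$ is already the identity, the full pushforward reduces to $d\psi_{\mathbf{x}}(\bm{\xi}_{t},\bm{\xi}_{s})=(\bm{\xi}_{t},\bm{\xi}_{s})$.

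I expect the only mildly subtle step to be bookkeeping of the two different inner products involved (the pseudo-Riemannian form $\langle\cdot,\cdot\rangle_{t}$ defining $\mathcal{Q}_{\beta}^{s,t}$ versus the ordinary Euclidean one defining $\mathbb{S}_{-\beta}^{t}$): one must verify that the orthogonality conditions they induce on the relevant tangent vectors collapse to the same linear equation when $\mathbf{s}=\mathbf{0}$. Once this is observed, combined with the identity pushforward computation above, the equality $\mathcal{T}_{\psi(\mathbf{x})}(\mathbb{S}_{-\beta}^{t}\times\mathbb{R}^{s})=\mathcal{T}_{\mathbf{x}}\mathcal{Q}_{\beta}^{s,t}$ follows immediately, completing the proof.
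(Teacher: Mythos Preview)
Your proof is correct and takes essentially the same approach as the paper: both arguments observe that when $\mathbf{s}=\mathbf{0}$ the pseudo-Riemannian orthogonality condition $\langle\mathbf{x},\bm{\xi}\rangle_t=0$ defining $\mathcal{T}_{\mathbf{x}}\mathcal{Q}_{\beta}^{s,t}$ collapses to the Euclidean condition $\langle\mathbf{t},\bm{\xi}_t\rangle=0$ defining the tangent space of the product manifold. Your version is in fact more complete, since you explicitly note $\psi(\mathbf{x})=\mathbf{x}$ and compute that $d\psi_{\mathbf{x}}$ acts as the identity on tangent vectors (the paper only alludes to this as ``intuition'' in the main text and its appendix proof shows just the one-sided inclusion); one minor slip is that the ambient space has $s+t+1$ coordinates, not $s+t$.
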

\begin{proof}
For any point $\mathbf{x}=\left(\begin{array}{c}
\mathbf{t} \\
\mathbf{s}
\end{array}\right) \in \mathcal{Q}_{\beta}^{s, t}$ with $\mathbf{s}=\mathbf{0}$, the corresponding point in the diffeomorphic manifold is $\psi(\mathbf{x})=\left(\begin{array}{c}
\sqrt{|\beta|} \frac{\mathbf{t}}{\|\mathbf{t}\|} \\
\mathbf{0}
\end{array}\right)$. Based on the definition of tangent space, for any tangent vector $\bm{\xi} \in \mathcal{T}_\mathbf{x}\mathcal{Q}_{\beta}^{s, t}$, $\langle \mathbf{x}, \bm{\xi} \rangle_t =0$ implies $-\mathbf{t}\bm{\xi}_t +\mathbf{0}\bm{\xi}_s=0$, which means $\mathbf{t}\bm{\xi}_t=0$. 
Thus, $\langle \psi(\mathbf{x}), \bm{\xi} \rangle_t =0$. 
Based on the definition of tangent space, $\bm{\xi} \in \mathcal{T}_{\psi(\mathbf{x})}({\mathbb{S}_{-\beta}^{t} \times \mathbb{R}^{s}})$.
\end{proof}

\subsection{Proof of Theorem 4}\label{app:theorem_nn}
\begin{theorem}
For any point $\mathbf{x} \in \mathcal{Q}_{\beta}^{s, t}$, the union of the normal neighborhood of $\mathbf{x}$ and the normal neighborhood of its antipodal point $\mathbf{-x}$ cover the entire manifold. Namely, $\mathcal{U}_{\mathbf{x}} \cup \mathcal{U}_{\mathbf{-x}} = \mathcal{Q}_{\beta}^{s, t}$.
\end{theorem}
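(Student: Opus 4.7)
The plan is to prove this via a straightforward case split on the value of the pseudo-Euclidean inner product $\langle \mathbf{x}, \mathbf{y} \rangle_t$, exploiting the fact that the normal neighborhood condition is a strict inequality against $|\beta|$ and that $|\beta| > 0$ by definition (since $\beta$ is a nonzero real curvature parameter).

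First, I would fix an arbitrary point $\mathbf{y} \in \mathcal{Q}_{\beta}^{s,t}$ and set $c := \langle \mathbf{x}, \mathbf{y} \rangle_t$. The key observation is that the scalar product is bilinear, so $\langle -\mathbf{x}, \mathbf{y} \rangle_t = -c$. Therefore, membership of $\mathbf{y}$ in the two normal neighborhoods becomes
\begin{equation*}
\mathbf{y} \in \mathcal{U}_{\mathbf{x}} \iff c < |\beta|, \qquad \mathbf{y} \in \mathcal{U}_{-\mathbf{x}} \iff -c < |\beta|.
\end{equation*}
It remains to check that at least one of these two strict inequalities holds for every real number $c$.

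Next, I would split into two cases. If $c < |\beta|$, then trivially $\mathbf{y} \in \mathcal{U}_{\mathbf{x}}$ and we are done. Otherwise $c \geq |\beta|$, and since $|\beta| > 0$, this forces $-c \leq -|\beta| < |\beta|$, so $\mathbf{y} \in \mathcal{U}_{-\mathbf{x}}$. In either branch, $\mathbf{y}$ lies in $\mathcal{U}_{\mathbf{x}} \cup \mathcal{U}_{-\mathbf{x}}$, which establishes $\mathcal{Q}_{\beta}^{s,t} \subseteq \mathcal{U}_{\mathbf{x}} \cup \mathcal{U}_{-\mathbf{x}}$. The reverse inclusion is immediate from the definition of a normal neighborhood as a subset of the manifold.

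There is essentially no main obstacle here; the statement reduces to the arithmetic fact that for any real $c$ and any positive $|\beta|$, at least one of $c$ and $-c$ is strictly less than $|\beta|$. The only subtlety worth flagging is that one must use strict inequality $|\beta| > 0$ (which is guaranteed because the curvature parameter $\beta$ is assumed nonzero), and that the antipodal map $\mathbf{x} \mapsto -\mathbf{x}$ preserves the pseudo-hyperboloid, which follows directly from $\|-\mathbf{x}\|_t^2 = \|\mathbf{x}\|_t^2 = \beta$ by homogeneity of degree two of the quadratic form.
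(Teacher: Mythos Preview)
Your proposal is correct and follows essentially the same approach as the paper's proof: take an arbitrary $\mathbf{y}$, assume $\mathbf{y}\notin\mathcal{U}_{\mathbf{x}}$ (i.e., $\langle\mathbf{x},\mathbf{y}\rangle_t\geq|\beta|$), and use bilinearity together with $|\beta|>0$ to conclude $\langle-\mathbf{x},\mathbf{y}\rangle_t<|\beta|$, hence $\mathbf{y}\in\mathcal{U}_{-\mathbf{x}}$. Your write-up is in fact more careful than the paper's, since you make explicit the role of the strict positivity $|\beta|>0$ and the reverse inclusion, whereas the paper's version glosses over the passage from $\leq$ to $<$.
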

\begin{proof}
For any point $\mathbf{x} \in \mathcal{Q}_{\beta}^{s, t}$ and $\mathbf{y} \notin \mathcal{U}_x$. Based on the definition of normal neighborhood, $\langle \mathbf{x}, \mathbf{y} \rangle_t \geq |\beta| \rightarrow \langle \mathbf{-x}, \mathbf{y} \rangle_t \leq |\beta|$. Thus, $\mathbf{y} \in \mathcal{U}_{-x}$, and $\mathcal{U}_{\mathbf{x}} \cup \mathcal{U}_{\mathbf{-x}} = \mathcal{Q}_{\beta}^{s, t}$.
\end{proof}

\section{Proof of Pattern Inference of UltraE }

UltraE can naturally infer relation patterns including symmetry, anti-symmetry, inversion and composition. As discussed above, the defined relation transformation $f_{r}=U_{\theta_r} B_{\mu_r} V_{\Phi_r}$ consists of three operations, including a circular rotation, a hyperbolic rotation, and a circular reflection.
The three operation matrices can all be identified as identity matrices.
Therefore, there are several different combinations of parameter settings to meet the above inferred requirements, demonstrating the comprehensive capability of the proposed UltraE on encoding relational patterns. 
For the sake of proof, we assume $B_{\mu_r}$ is an identity matrix $\mathbf{I}$, and $\Theta_r,\Phi_r\in[-\pi,\pi)$.
\begin{proposition}
Let $r$ be a symmetric relation such that for each triple $(e_h, r, e_t)$, its symmetric triple $(e_t, r, e_h)$ also holds. This symmetric property of $r$ can be encoded into UltraE.
\end{proposition}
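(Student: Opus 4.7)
The plan is to reduce the symmetry property to an algebraic involution condition on the relation transformation, and then exhibit an explicit parameter setting under which this condition is met by construction. Recall that a triple $(e_h,r,e_t)$ is deemed true in UltraE when the relation transformation maps the head embedding to the tail embedding, i.e.\ $f_r(\mathbf{e}_h)=\mathbf{e}_t$. Hence, requiring both $(e_h,r,e_t)$ and $(e_t,r,e_h)$ to hold translates to $f_r(\mathbf{e}_h)=\mathbf{e}_t$ and $f_r(\mathbf{e}_t)=\mathbf{e}_h$ for every such pair, which is equivalent to $f_r \circ f_r = \mathrm{Id}$, i.e.\ $f_r$ is an involution on the ultrahyperbolic embedding space.

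First, I would invoke the proposition's standing assumption that $B_{\mu_r}=\mathbf{I}$, so that the relation transformation reduces to $f_r = \mathbf{U}_{\Theta_r}\,\mathbf{V}_{\Phi_r}$. I would then select $\Theta_r = \mathbf{0}$, which forces every $2\times 2$ Givens block $\mathbf{G}^{+}(\theta_{r,i})$ in \eqref{eq:uv} to be the identity, and hence $\mathbf{U}_{\Theta_r}=\mathbf{I}$. Consequently $f_r$ collapses to the pure circular reflection $\mathbf{V}_{\Phi_r}$, which is the block-diagonal matrix whose blocks are $\mathbf{G}^{-}(\phi_{r,i})$.

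Next, I would verify the involution property blockwise. For any angle $\phi \in [-\pi,\pi)$, a direct computation gives
\begin{equation*}
\mathbf{G}^{-}(\phi)^{2}=
\begin{pmatrix} \cos\phi & \sin\phi \\ \sin\phi & -\cos\phi \end{pmatrix}^{2}
= \begin{pmatrix} \cos^{2}\phi+\sin^{2}\phi & 0 \\ 0 & \sin^{2}\phi+\cos^{2}\phi \end{pmatrix}
= \mathbf{I}_{2},
\end{equation*}
so each Givens reflection block squares to the identity. Since $\mathbf{V}_{\Phi_r}$ is block diagonal, $\mathbf{V}_{\Phi_r}^{2}=\mathbf{I}$, and therefore $f_r^{2}=\mathbf{I}$. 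It follows that whenever $f_r(\mathbf{e}_h)=\mathbf{e}_t$ we automatically obtain $f_r(\mathbf{e}_t)=f_r^{2}(\mathbf{e}_h)=\mathbf{e}_h$, and the symmetric triple is also scored as true.

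There is no substantial technical obstacle in this argument; the only mild subtlety worth flagging is that the specified parameter choices must lie in the admissible ranges $\Theta_r,\Phi_r\in[-\pi,\pi)$, which they trivially do, and that the construction must preserve the pseudo-orthogonal structure, which is automatic because both circular rotations and reflections are pseudo-orthogonal isometries (the hyperbolic part is the identity). Hence any symmetric relation can be faithfully encoded by taking $\mathbf{U}_{\Theta_r}=\mathbf{I}$, $\mathbf{B}_{\mu_r}=\mathbf{I}$, and $\mathbf{V}_{\Phi_r}$ to be the relation-specific reflection, completing the proposition.
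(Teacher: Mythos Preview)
Your proof is correct and follows essentially the same route as the paper: set $B_{\mu_r}=\mathbf{I}$ and $\mathbf{U}_{\Theta_r}=\mathbf{I}$, reduce $f_r$ to the pure reflection $\mathbf{V}_{\Phi_r}$, and verify $\mathbf{V}_{\Phi_r}^2=\mathbf{I}$. Your argument is in fact sharper than the paper's, which concludes that $V_{\Phi_r}^2=\mathbf{I}$ ``holds true when $\Phi_r=0$ or $\Phi_r=-\pi$''; as your blockwise computation shows, every Givens reflection block $\mathbf{G}^{-}(\phi)$ is an involution for \emph{all} $\phi$, so no angular restriction is needed and the symmetric relation retains a full degree of freedom in $\Phi_r$.
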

\begin{proof}
If $r$ is a symmetric relation, by taking the $B_{\mathbf{b}_r}=\mathbf{I}$ and $U_{\Theta_r}=\mathbf{I}$, we have
\begin{align}
   \mathbf{e}_{h} = f_{r}\left(\mathbf{e}_{t}\right) =  V_{\Phi_r} \mathbf{e}_{t}, \ 
   \mathbf{e}_{t} = f_{r}\left(\mathbf{e}_{h}\right) =  V_{\Phi_r} \mathbf{e}_{h}
   \Rightarrow V_{\Phi_r}^2 = \mathbf{I} \nonumber
\end{align}
which holds true when $\Phi_r=0$ or $\Phi_r=-\pi$.
\end{proof}

\begin{proposition}
Let $r$ be an anti-symmetric relation such that for each triple $(e_h, r, e_t)$, its symmetric triple $(e_t, r, e_h)$ is not true. This anti-symmetric property of $r$ can be encoded into UltraE.
\end{proposition}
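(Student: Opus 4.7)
The plan is to mirror the proof strategy used for symmetry but instead choose parameters that force $f_r \circ f_r$ to be a non-identity map on the relevant embeddings. First I would reduce the transformation by setting $B_{\mu_r}=\mathbf{I}$ (so $\mu_{r,i}=0$ for all $i$) and $V_{\Phi_r}=\mathbf{I}$ (so $\Phi_r=\mathbf{0}$), leaving $f_r = U_{\Theta_r}$, a pure circular (pseudo-)rotation. The triple $(e_h,r,e_t)$ is encoded as $\mathbf{e}_t = U_{\Theta_r}\mathbf{e}_h$, and the "reverse" triple $(e_t,r,e_h)$ would additionally require $\mathbf{e}_h = U_{\Theta_r}\mathbf{e}_t = U_{\Theta_r}^{2}\mathbf{e}_h$.

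Next I would analyze the iterate $U_{\Theta_r}^{2}$. Since $U_{\Theta_r}$ is block-diagonal with $2\times 2$ Givens blocks $\mathbf{G}^{+}(\theta_{r,i})$, we have $U_{\Theta_r}^{2} = \operatorname{diag}(\mathbf{G}^{+}(2\theta_{r,1}),\dots,\mathbf{G}^{+}(2\theta_{r,(p+q)/2}))$. This equals $\mathbf{I}$ only when every $2\theta_{r,i} \equiv 0 \pmod{2\pi}$, i.e., $\theta_{r,i}\in\{0,-\pi\}$ under the convention $\Theta_r\in[-\pi,\pi)$. Choosing any $\Theta_r$ outside this finite set therefore produces $U_{\Theta_r}^{2}\neq\mathbf{I}$, and in particular for generic head embeddings $\mathbf{e}_h$ we get $U_{\Theta_r}^{2}\mathbf{e}_h \neq \mathbf{e}_h$, so $(e_t,r,e_h)$ cannot simultaneously hold whenever $(e_h,r,e_t)$ does. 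Concretely, writing
\begin{equation*}
\mathbf{e}_h = f_r(\mathbf{e}_t) = U_{\Theta_r}\mathbf{e}_t, \qquad \mathbf{e}_t = f_r(\mathbf{e}_h) = U_{\Theta_r}\mathbf{e}_h \;\Longrightarrow\; U_{\Theta_r}^{2}=\mathbf{I},
\end{equation*}
gives a contradiction when $\theta_{r,i}\notin\{0,-\pi\}$ for some $i$, which is exactly the anti-symmetry condition.

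The one subtle point, which I expect to be the main obstacle, is the quantifier: anti-symmetry requires that for \emph{every} valid $(e_h,r,e_t)$ we have $(e_t,r,e_h)$ false. Non-identity of $U_{\Theta_r}^{2}$ guarantees that the fixed-point set of $U_{\Theta_r}^{2}$ is a proper linear subspace of the ambient space, so away from this measure-zero locus the argument goes through. I would resolve this either by (i) observing that entity embeddings are learned freely in $\mathbb{R}^{p}\times\mathbb{R}_{*}^{q}$ and can be taken to avoid this subspace, or (ii) strengthening the parameter choice: pick $\theta_{r,i}$ so that no $2\theta_{r,i}$ is a multiple of $2\pi$ in every block, making the fixed-point set of $U_{\Theta_r}^{2}$ trivial, hence $U_{\Theta_r}^{2}\mathbf{e}_h=\mathbf{e}_h$ forces $\mathbf{e}_h=\mathbf{0}$, which is excluded for entity embeddings. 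This completes the encoding of the anti-symmetric property into UltraE.
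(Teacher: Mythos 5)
Your proof is correct, but it routes through a different component of the decomposition than the paper does. The paper sets $B_{\mu_r}=\mathbf{I}$ and $U_{\Theta_r}=\mathbf{I}$ and argues via the \emph{reflection} part $V_{\Phi_r}$, claiming that $\mathbf{e}_h = V_{\Phi_r}\mathbf{e}_t$ together with $\mathbf{e}_t = V_{\Phi_r}\mathbf{e}_h$ forces $\mathbf{e}_h=\mathbf{e}_t$ unless $\Phi_r\in\{0,-\pi\}$. You instead set $V_{\Phi_r}=\mathbf{I}$ and work with the \emph{rotation} part $U_{\Theta_r}$, reducing anti-symmetry to $U_{\Theta_r}^{2}\neq\mathbf{I}$, i.e.\ $\theta_{r,i}\notin\{0,-\pi\}$. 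Your choice is actually the more robust one: each Givens reflection block satisfies $\mathbf{G}^{-}(\phi)^{2}=\mathbf{I}$ for \emph{every} $\phi$, so a pure reflection is always an involution and the two equations $\mathbf{e}_h=V_{\Phi_r}\mathbf{e}_t$, $\mathbf{e}_t=V_{\Phi_r}\mathbf{e}_h$ are mutually equivalent without forcing $\mathbf{e}_h=\mathbf{e}_t$ (e.g.\ $\mathbf{e}_t=(1,1)$, $\mathbf{e}_h=(1,-1)$ under reflection in the first axis); by contrast $\mathbf{G}^{+}(\theta)^{2}=\mathbf{G}^{+}(2\theta)$ genuinely differs from the identity for $\theta\notin\{0,-\pi\}$. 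Your handling of the remaining quantifier issue — that $U_{\Theta_r}^{2}\mathbf{x}=\mathbf{x}$ confines $\mathbf{x}$ to the fixed-point subspace, which is trivial once every block angle satisfies $2\theta_{r,i}\not\equiv 0\pmod{2\pi}$ — is a point the paper's proof does not address at all, and it is needed to turn "the relation map is not an involution" into "the symmetric triple fails for all (nonzero) entity embeddings." In short, your argument buys a cleaner and actually watertight encoding of anti-symmetry, at the cost of no longer exhibiting anti-symmetry via the reflection operator the paper prefers to associate with symmetric relations.
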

\begin{proof}
If $r$ is an anti-symmetric relation, by taking the $B_{\mathbf{b}_r}=\mathbf{I}$ and $U_{\Theta_r}=\mathbf{I}$, we have
\begin{align}
   \mathbf{e}_{h} = f_{r}\left(\mathbf{e}_{t}\right) =  V_{\Phi_r} \mathbf{e}_{t}, \ 
   \mathbf{e}_{t} = f_{r}\left(\mathbf{e}_{h}\right) =  V_{\Phi_r} \mathbf{e}_{h}
   \Rightarrow \mathbf{e}_{h}  = \mathbf{e}_{t} \nonumber
\end{align}
which holds true when $\Phi_r\neq0$ and $\Phi_r\neq-\pi$.
\end{proof}

\begin{proposition}
Let $r_1$ and $r_2$ be inverse relations such that for each triple $(e_h, r_1, e_t)$, its inverse triple $(e_t, r_2, e_h)$ is also true. This inverse property of $r_1$ and $r_2$ can be encoded into UltraE.
\end{proposition}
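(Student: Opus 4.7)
The goal is to exhibit parameter settings for the two relations $r_1$ and $r_2$ under which the UltraE transformations satisfy $f_{r_2} \circ f_{r_1} = \mathrm{id}$, since the inverse pattern $\mathbf{e}_h = f_{r_1}(\mathbf{e}_t)^{-1}\text{-image}$ translates, in our embedding-level formulation, to $f_{r_2}(f_{r_1}(\mathbf{e}_h)) = \mathbf{e}_h$ for every head embedding $\mathbf{e}_h$. Since $f_{r_i} = U_{\Theta_{r_i}} H_{\mu_{r_i}} V_{\Phi_{r_i}}$, the task reduces to choosing $\Theta, \mu, \Phi$ for both relations so that the composed $J$-orthogonal matrix equals the identity.

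The plan is to first simplify the composition by setting the hyperbolic rotations trivially, that is $H_{\mu_{r_1}} = H_{\mu_{r_2}} = \mathbf{I}$ (taking $\mu_{r_1} = \mu_{r_2} = \mathbf{0}$, which makes every $2\times2$ hyperbolic block collapse to the identity via $\cosh 0 = 1$, $\sinh 0 = 0$). After this reduction, what remains is $U_{\Theta_{r_2}} V_{\Phi_{r_2}} U_{\Theta_{r_1}} V_{\Phi_{r_1}} = \mathbf{I}$, a purely orthogonal (circular) condition. I would then consider the cleanest sub-case of further setting $V_{\Phi_{r_1}} = V_{\Phi_{r_2}} = \mathbf{I}$ (take $\Phi_{r_1} = \Phi_{r_2} = 0$, so each Givens reflection block becomes the identity), reducing the requirement to $U_{\Theta_{r_2}} U_{\Theta_{r_1}} = \mathbf{I}$. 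Because each rotation is block-diagonal with $2\times2$ Givens blocks $\mathbf{G}^{+}(\theta)$ that satisfy $\mathbf{G}^{+}(\theta)\,\mathbf{G}^{+}(-\theta) = \mathbf{I}$, this is achieved by choosing $\Theta_{r_2} = -\Theta_{r_1}$, which lies in the admissible range $[-\pi,\pi)$ whenever $\Theta_{r_1}$ does (with the trivial edge case $\Theta_{r_1} = -\pi$ handled by wrap-around since $\mathbf{G}^{+}(\pi) = \mathbf{G}^{+}(-\pi)$).

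The remaining step is to verify that the chosen parameters are self-consistent with the UltraE structure: the inverse of a circular rotation is again representable within the same block-diagonal Givens parameterization (which is the case, as noted above), and the trivial hyperbolic/reflection choices leave a valid pseudo-orthogonal matrix since $\mathbf{I}$ is $J$-orthogonal. I would conclude by showing that with these settings $f_{r_2}(f_{r_1}(\mathbf{e}_h)) = \mathbf{e}_h$ for all $\mathbf{e}_h \in \mathbb{U}^{p,q}$, thereby establishing that whenever $(e_h,r_1,e_t)$ holds (i.e.\ $\mathbf{e}_t = f_{r_1}(\mathbf{e}_h)$), the inverse fact $(e_t,r_2,e_h)$ is automatically supported by the embeddings.

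The main obstacle I anticipate is purely notational rather than mathematical: one must be careful that the chosen sub-case is not so restrictive as to trivially collapse other expressive capabilities (e.g.\ symmetry/anti-symmetry inference under the same $H_\mu = \mathbf{I}$ reduction). A cleaner alternative that avoids this concern is to keep $U_{\Theta_{r_i}}$ free and set $\Theta_{r_2} = -\Theta_{r_1}$ and $\Phi_{r_2} = \Phi_{r_1}$, using the reflection identity $V_\Phi V_\Phi = \mathbf{I}$ together with a commutation check between $U$ and $V$ blocks; this yields a more general family of inverse-compatible parameterizations, at the cost of a short lemma on the block structure.
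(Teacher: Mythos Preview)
Your proposal is correct and follows essentially the same approach as the paper: set the hyperbolic rotation $H_{\mu_{r_1}} = H_{\mu_{r_2}} = \mathbf{I}$ and the reflection $V_{\Phi_{r_1}} = V_{\Phi_{r_2}} = \mathbf{I}$, reduce the inversion condition to $U_{\Theta_{r_1}} U_{\Theta_{r_2}} = \mathbf{I}$, and satisfy it via $\Theta_{r_1} + \Theta_{r_2} = 0$. Your treatment is in fact more careful than the paper's (edge-case handling for $\theta = -\pi$, explicit verification that the resulting map remains $J$-orthogonal), and your closing remark about an alternative family using $V_\Phi V_\Phi = \mathbf{I}$ goes beyond what the paper presents.
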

\begin{proof}
If $r_1$ and $r_2$ are inverse relations, by taking the $B_{\mathbf{b}_{r_1}}=B_{\mathbf{b}_{r_2}}=\mathbf{I}$ and $V_{\Phi_{r_1}}=V_{\Phi_{r_2}}=\mathbf{I}$, we have
\begin{align}
   \mathbf{e}_{h} = f_{r_1}\left(\mathbf{e}_{t}\right) =  U_{\Theta_{r_1}} \mathbf{e}_{t}, \ 
   \mathbf{e}_{t} = f_{r_2}\left(\mathbf{e}_{h}\right) =  U_{\Theta_{r_2}} \mathbf{e}_{h}
   \Rightarrow U_{\Theta_{r_1}}U_{\Theta_{r_2}} = \mathbf{I} \nonumber
\end{align}
which holds true when $\Theta_{r_1}+\Theta_{r_2}=0$.
\end{proof}

\begin{proposition}
Let relation $r_1$ be composed of $r_2$ and $r_3$ such that triple $(e_h, r_1, e_t)$ exists when $(e_h, r_2, e_t)$ and $(e_h, r_3, e_t)$ exist. This composition property can be encoded into UltraE.
\end{proposition}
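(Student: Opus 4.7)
The plan is to mirror the structure of the preceding three proofs (symmetry, anti-symmetry, inversion), exploiting the decomposition $f_r = U_{\Theta_r}\, B_{\mu_r}\, V_{\Phi_r}$ and the fact that any of the three factors can be collapsed to the identity by an appropriate choice of parameters. I will first reconcile the (probably erroneous) statement of the proposition with the standard composition pattern in KG embedding literature, i.e.\ $(e_h, r_2, e_m) \wedge (e_m, r_3, e_t) \Rightarrow (e_h, r_1, e_t)$, which is what can actually be encoded by a relational transformation on entity embeddings.

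The core idea is to reduce $f_{r_i}$ to a pure circular rotation for each of the three relations. Concretely, I would set $B_{\mu_{r_i}} = \mathbf{I}$ (trivial hyperbolic rotation) and $V_{\Phi_{r_i}} = \mathbf{I}$ (trivial reflection) for $i \in \{1,2,3\}$, so that $f_{r_i}(\mathbf{e}) = U_{\Theta_{r_i}} \mathbf{e}$. Then writing out the composition chain gives
\begin{align}
\mathbf{e}_m &= f_{r_2}(\mathbf{e}_h) = U_{\Theta_{r_2}}\, \mathbf{e}_h, \\
\mathbf{e}_t &= f_{r_3}(\mathbf{e}_m) = U_{\Theta_{r_3}} U_{\Theta_{r_2}}\, \mathbf{e}_h, \\
\mathbf{e}_t &= f_{r_1}(\mathbf{e}_h) = U_{\Theta_{r_1}}\, \mathbf{e}_h,
\end{align}
which forces the matrix identity $U_{\Theta_{r_1}} = U_{\Theta_{r_3}} U_{\Theta_{r_2}}$. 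Because $U_{\Theta}$ is block-diagonal with $2{\times}2$ Givens blocks parameterized by angles $\theta_{r,i}$, and the product of two such blocks corresponds to angle addition, this reduces to the componentwise scalar condition $\theta_{r_1,i} = \theta_{r_2,i} + \theta_{r_3,i}$ for every $i \in \{1,\dots,(p+q)/2\}$. Thus a valid parameter assignment exists and composition is encodable.

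The main subtlety, rather than a deep obstacle, will be handling the fact that $U_{\Theta_r}$ in our construction is built from two independent rotation blocks (one of size $p$ and one of size $q$, cf.\ Eq.~\ref{eq:uv}); I need to verify that the block structure is preserved under multiplication so that angle addition still works per block. Since the block-diagonal form $U_{\Theta_r} = \mathrm{diag}(\mathrm{Rot}(\Theta_{r_p}), \mathrm{Rot}(\Theta_{r_q}))$ is closed under multiplication, this is immediate. A secondary remark I would include is that, although I used the simplest setting where $B$ and $V$ are identity, one can alternatively trivialize $U$ and $B$ and prove the analogous identity for reflections, since reflections also compose predictably up to orientation---this highlights the flexibility already noted in the paragraph preceding Proposition~1. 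I do not anticipate any real technical difficulty, since the claim is existential (``can be encoded''), so exhibiting one valid parameterization suffices.
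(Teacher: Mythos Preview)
Your proposal is correct and matches the paper's approach: both set $B_{\mu_{r_i}}=\mathbf{I}$ and $V_{\Phi_{r_i}}=\mathbf{I}$ to reduce each $f_{r_i}$ to a pure Givens rotation $U_{\Theta_{r_i}}$, then obtain $U_{\Theta_{r_1}}=U_{\Theta_{r_2}}U_{\Theta_{r_3}}$ and conclude via componentwise angle addition. The only cosmetic differences are that the paper works with the proposition's literal (same-endpoint) formulation rather than your chain reinterpretation, and it additionally lists the $\pm 2\pi$ wraparound cases arising from the restriction $\Theta_r\in[-\pi,\pi)$.
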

\begin{proof}
If $r_1$ is composed of $r_2$ and $r_3$, by taking the $B_{\mathbf{b}_{r_1}}=B_{\mathbf{b}_{r_2}}=\mathbf{I}$ and $V_{\Phi_{r_1}}=V_{\Phi_{r_2}}=\mathbf{I}$, we have
\begin{align}
   &\mathbf{e}_{h} = f_{r_1}\left(\mathbf{e}_{t}\right) =  U_{\Theta_{r_1}} \mathbf{e}_{t}, \ 
   \mathbf{e}_{h} = f_{r_2}\left(\mathbf{e}_{t}\right) =  U_{\Theta_{r_2}} \mathbf{e}_{t}, \\
   &\mathbf{e}_{h} = f_{r_3}\left(\mathbf{e}_{t}\right) =  U_{\Theta_{r_3}} \mathbf{e}_{t} \
   \Rightarrow U_{\Theta_{r_1}} = U_{\Theta_{r_2}}U_{\Theta_{r_3}} \nonumber
\end{align}
which holds true when $\Theta_{r_1}=\Theta_{r_2}+\Theta_{r_3}$ or $\Theta_{r_1}=\Theta_{r_2}+\Theta_{r_3}+2\pi$ or $\Theta_{r_1}=\Theta_{r_2}+\Theta_{r_3}-2\pi$.
\end{proof}

\textbf{Connections with Hyperbolic Methods.}
UltraE has close connections with some existing hyperbolic KG embedding methods, including HyboNet \cite{DBLP:journals/corr/abs-2105-14686}, RotH/RefH \cite{chami2020low}, and MuRP \cite{DBLP:conf/nips/BalazevicAH19}. To show this, we first introduce Lorentz transformation. 
\begin{definition}\label{def:lorent}
Lorentz transformation is a pseudo-orthogonal transformation with signature $(p,1)$.
\end{definition}

\noindent
\textbf{HyboNet} \cite{DBLP:journals/corr/abs-2105-14686} embeds entities as points in a Lorentz space and models relations as Lorentz transformations. According to Definition \ref{def:lorent}, we have the following proposition.

\begin{proposition}
UltraE, if parameterized by a full $J$-orthogonal matrix, generalizes HyboNet to support arbitrary signatures. 
\end{proposition}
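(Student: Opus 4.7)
The plan is to instantiate UltraE at signature $(p,1)$ and verify component-by-component that every ingredient of the model collapses exactly onto the corresponding ingredient of HyboNet. The reduction has three pieces: the ambient manifold, the relation transformation, and the scoring function. Since HyboNet itself is defined through three analogous choices (Lorentz model, Lorentz transformation, Lorentzian distance), matching the three pairs suffices.

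First, I would look at the entity space. UltraE places entities on the pseudo-hyperboloid $\mathcal{Q}^{p,q}_\beta$ carved out of $\mathbb{R}^{p,q+1}$ by the bilinear form of signature $(p,q)$. The foundations chapter already records that $\mathbb{H}_\beta = \mathcal{Q}^{s,1}_\beta$, so setting $q=1$ reduces the ambient manifold to the Lorentz model of hyperbolic space used by HyboNet, with the UltraE bilinear form becoming the Lorentzian inner product $\langle\cdot,\cdot\rangle_{\mathcal{L}}$ from Eq.~\eqref{eq:lorentz}. Second, I would look at the relation map. In UltraE (used as a full $J$-orthogonal matrix rather than via the factorization through Givens blocks and hyperbolic rotations), the relation is represented by an arbitrary $Q$ with $Q^T J Q = J$. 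Specializing $q=1$ gives $J = \operatorname{diag}(I_p, -1)$, which is exactly the Lorentzian signature; by Definition~\ref{def:lorent}, the set of such $Q$ is precisely the Lorentz group, i.e.\ the relation parameterization of HyboNet.

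Third, I would handle the scoring function, which is where I expect the only real subtlety. UltraE uses the Manhattan-like distance $d_{\mathbb{U}}(\mathbf{x},\mathbf{y})$ defined by \eqref{eq:distance}, built from the projection $\rho_\mathbf{x}(\mathbf{y})$ in \eqref{eq:dis_proj} that splits a pair of points into a purely spherical leg $d_\mathbb{S}$ and a purely hyperbolic leg $d_\mathbb{H}$. At $q=1$, the time-dimensional component $\mathbf{x}_q$ of each embedding is a single real coordinate, so the sphere $\mathbb{S}^{q-1}$ on which the spherical leg lives is zero-dimensional. Concretely, $\mathbf{y}_q / \|\mathbf{y}_q\|$ reduces to a sign, and up to the discrete choice of connected component of the pseudo-hyperboloid (which is handled by the $\min$ in \eqref{eq:distance}) one has $\rho_\mathbf{x}(\mathbf{y}) = \mathbf{y}$ and $d_\mathbb{S}\equiv 0$. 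Thus $d_\mathbb{U}(\mathbf{x},\mathbf{y}) = d_\mathbb{H}(\mathbf{x},\mathbf{y})$, which is precisely the Lorentzian distance used in HyboNet's score.

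The main obstacle, and the step requiring most care, is this last one: checking that the $\rho$-projection is the identity at $q=1$ on the correct connected component and that the $\min$ over the two broken-geodesic branches in \eqref{eq:distance} selects the standard hyperbolic branch (otherwise one could pick up an additive $\pi\sqrt{|\beta|}$ constant, which would be harmless for ranking but would formally differ from HyboNet's distance). Once this is pinned down, combining the three reductions shows that every HyboNet model is an UltraE model with $q=1$, while the converse fails for $q\geq 2$ because UltraE admits non-Lorentzian $J$-orthogonal transformations; this establishes the generalization claim.
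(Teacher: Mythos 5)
Your proposal is correct and its core step is exactly the paper's argument: the paper proves this proposition essentially in one line, by noting that a Lorentz transformation is by definition a pseudo-orthogonal transformation of signature $(p,1)$, so a full $J$-orthogonal relation matrix with $q=1$ is precisely HyboNet's relation parameterization. Your additional third step — verifying that the Manhattan-like distance collapses to the hyperbolic distance at $q=1$ because the spherical leg lives on a zero-dimensional sphere and vanishes on the correct connected component — is a genuine check that the paper omits entirely, and it is a worthwhile tightening rather than a different route.
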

That is, HyboNet is the case of UltraE (with full $J$-orthogonal matrix parameterization) where $q=1$.

By exploiting the polar decomposition \cite{ratcliffe1994foundations}, a Lorentz transformation matrix $\mathbf{T}$ can be decomposed into $\mathbf{T}=\mathbf{R}_{\mathbf{U}} \mathbf{R}_{\mathbf{b}}$, where
\begin{equation}
    \mathbf{R_{U}}=\left[\begin{array}{cc}
\mathbf{U} & \mathbf{0} \\
\mathbf{0} & 1
\end{array}\right], \mathbf{R_{b}}=\left[\begin{array}{cc}
\left(\mathbf{I}+\mathbf{b} \mathbf{b}^{\top}\right)^{\frac{1}{2}} & \mathbf{b}^{\top} \\
\mathbf{b} & \sqrt{1+\|\mathbf{b}\|_{2}^{2}}
\end{array}\right],
\end{equation}
where $\mathbf{R_{U}}$ is an orthogonal matrix. 
In Lorentz geometry, $\mathbf{R_{U}}$ and $\mathbf{R_{b}}$ are called Lorentz rotation and Lorentz boost, respectively. $\mathbf{R_{U}}$ represents rotation or reflection in space dimension (without changing the time dimension), while $\mathbf{R_{b}}$ denotes a hyperbolic rotation across the time dimension and each space dimension. 
\cite{DBLP:journals/spl/TabaghiD21} established an equivalence between Lorentz boost and Möbius addition (or hyperbolic translation). Hence, HyboNet inherently models each relation as a combination of a rotation/reflection and a hyperbolic translation. 

\noindent
\textbf{RotH/RefH} \cite{chami2020low}, interestingly, also models each relation as a combination of a rotation/reflection and a hyperbolic translation that is implemented by Möbius addition. 
Hence, HyboNet subsumes RotH/RefH,\footnote{Note that RotH/RefH consider Poincaré Ball while HyboNet considers Lorentz model. The subsumption still holds since Poincaré Ball is isometric to the Lorentz model.} where the equivalence cannot hold because the rotation/reflection of RotH/RefH is parameterized by the Givens rotation/reflection \cite{chami2020low}. 

\noindent
\textbf{MuRP} \cite{DBLP:conf/nips/BalazevicAH19} models relations as a combination of Möbius multiplication (with diagonal matrix) and Möbius addition. Note that \cite{DBLP:journals/corr/abs-2105-14686} established a fact that a Lorentz rotation is equivalent to Möbius multiplication, and \cite{DBLP:journals/spl/TabaghiD21} proved that Lorentz boost is equivalent to Möbius addition. Hence, HyboNet subsumes MuRP, where the equivalence cannot hold because the Möbius multiplication in MuRP is parameterized by a diagonal matrix.

To sum up, UltraE generalizes HyboNet to allow for arbitrary signature $(p,q)$, while HyboNet subsumes RotH/RefH and MuRP. 




\section{Proofs of Theorems of BoxEL}

\begin{proposition}\label{prop:1}
We have 
\begin{enumerate}
    \item If $\mathcal{L}_{C(a)}(w) = 0$, then $\interpw \models C(a)$,
    \item If $\mathcal{L}_{r(a,b)}(w)=0$, then $\interpw \models r(a,b)$.
\end{enumerate}
\end{proposition}
Proposition \ref{prop:1} follows directly from the definitions.

\begin{lemma}
\label{contains_lemma}
\begin{enumerate}
    \item $0 \leq \contains(B_1, B_2) \leq 1$,
    \item $\contains(B_1, B_2) = 0$ implies $B_1 \subseteq B_2$,
    \item $\contains(B_1, B_2) = 1$ implies $B_1 \cap B_2 = \emptyset$.
\end{enumerate}
\end{lemma}
\begin{proof}
1. Since $B_1 \cap B_2 \subseteq B_1$, $\frac{\Vol(B_1 \cap B_2)}{\Vol(B_1)}\leq 1$.
The modified volume is also non-negative, so that the fraction is non-negative and
$0 \leq \contains(B_1, B_2) \leq 1$.

2. If $\contains(B_1, B_2) = 0$ then we must have $\Vol(B_1 \cap B_2)=1$ and
therefore $\Vol(B_1 \cap B_2) = \Vol(B_1) \neq 0$.
Since $B_1 \cap B_2 \subseteq B_1$, this is only possible if $B_1 \cap B_2 = B_1$,
but this implies that $B_1 \subseteq B_2$.

3. If $\contains(B_1, B_2) = 1$, we must have $\Vol(B_1 \cap B_2)=0$. By definition
of the modified volume this is only possible if $B_1 \cap B_2 = \emptyset$.
\end{proof}

\begin{proposition}
If $\mathcal{L}_{C \sqsubseteq D}(w)=0$, 
then $\interpw \models C \sqsubseteq D$,
where we exclude the inconsistent case $C=\{a\}, D=\bot$.
\end{proposition}
\begin{proof}
For $D\neq \bot$, the claim follows from Lemma \ref{contains_lemma}. 
For $D=\bot$, $\mathcal{L}_{C \sqsubseteq \bot}(w)=0$ implies that 
$\Box_w(C) = \emptyset$ and the claim is trivially true.
\end{proof}

\begin{proposition}
If $\mathcal{L}_{C \sqcap D \sqsubseteq E}(w)=0$,
then $\interpw \models C \sqcap D \sqsubseteq E$,
where we exclude the inconsistent case ${a} \sqcap {a} \sqsubseteq \bot$ (that is, $C=D=\{a\}, E=\bot$).
\end{proposition}
\begin{proof}
We have to show that for every $x \in \Box_w(C)$, there is 
a $y \in \Box_w(D)$ such that $T^r_w(x) = y$. Note that
$\contains(T^r_w(\Box_w(C)), \Box_w(D))=0$
implies that $T^r_w(\Box_w(C)) \subseteq \Box_w(D)$
according to Lemma \ref{contains_lemma}. 
Since $x \in \Box_w(C)$,
we have $T^r_w(x) = y \in \Box_w(D)$.
\end{proof}

\begin{proposition}
If $\mathcal{L}_{C \sqsubseteq \exists r.D}(w) = 0$, 
then $\interpw \models C \sqsubseteq \exists r.D$.
\end{proposition}
The proof of Proposition 4 is analogous to the proof of Proposition 3.

\begin{proposition}
If $\mathcal{L}_{\exists r.C \sqsubseteq D }(w) = 0$, 
then $\interpw \models \exists r.C \sqsubseteq D$.
\end{proposition}
\begin{proof}
We have to show that if $T^{-r}_w(x)=y$ and
$y \in \Box_w(C)$, then $x \in \Box_w(D)$. $\contains(T^{-r}_w(\Box_w(C)), \Box_w(D))=0$
implies that $T^{-r}_w(\Box_w(C)) \subseteq \Box_w(D)$
according to Lemma \ref{contains_lemma}.
Since $y \in \Box_w(C)$, we have 
$x = T^{-r}_w(y) \in \Box_w(D)$.
\end{proof}




\section{Proof of theorems of HMI }

\begin{proposition}\label{prop:p2ball}
Given a Poincaré hyperplane $H_\mathbf{c}$ where $\mathbf{c} \neq \mathbf{0}$, there exists an $n$-ball $\mathbb{B}_\mathbf{c}\left(\mathbf{o}_\mathbf{c}, r_\mathbf{c}\right)$ such that $H_\mathbf{c} \subset \mathbb{B}_\mathbf{c}\left(\mathbf{o}_\mathbf{c}, r_\mathbf{c}\right)$, i.e., $H_\mathbf{c}$ is a subset of $\mathbb{B}_\mathbf{c}\left(\mathbf{o}_\mathbf{c}, r_\mathbf{c}\right)$. $\mathbb{B}_\mathbf{c} $ is uniquely given by
\begin{equation}
    \mathbb{B}_\mathbf{c}^n = \mathbb{B}^n\left(  \frac{\left(1+\|\mathbf{c}\|^{2}\right)}{2\|\mathbf{c}\|}\mathbf{c},  \frac{1-\|\mathbf{c}\|^{2}}{2\|\mathbf{c}\|}\right)
\end{equation}
\end{proposition}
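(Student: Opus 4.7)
The plan is to exploit the two geometric conditions that uniquely characterize the enclosing ball of a Poincaré hyperplane: (i) the enclosing ball passes through the defining center point $\mathbf{c}$, and (ii) its boundary meets $\partial\mathbb{D}^n$ perpendicularly. Together these conditions reduce the problem to a single scalar equation whose solution yields the stated formulas.

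First, I will use rotational symmetry to parametrize the candidate ball. The Poincaré hyperplane $H_\mathbf{c}$ was defined to pass through $\mathbf{c}$ with normal direction $\vec{\mathbf{c}}$ taken to be $\mathbf{c}$ itself, so $H_\mathbf{c}$ is invariant under rotations fixing the line $\operatorname{span}(\mathbf{c})$. Any enclosing ball whose boundary contains $H_\mathbf{c}$ must share this symmetry, so its center must lie on this line. I therefore write $\mathbf{o}_\mathbf{c} = t\,\mathbf{c}/\|\mathbf{c}\|$ for an unknown scalar $t$, with radius $r_\mathbf{c} > 0$.

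Next, I will impose the two defining conditions to pin down $t$ and $r_\mathbf{c}$. Because $\mathbf{c} \in H_\mathbf{c} \subset \partial \mathbb{B}_\mathbf{c}$, the point $\mathbf{c}$ must lie at distance exactly $r_\mathbf{c}$ from $\mathbf{o}_\mathbf{c}$, giving
\[
r_\mathbf{c}^2 \;=\; \bigl\|\,t\tfrac{\mathbf{c}}{\|\mathbf{c}\|} - \mathbf{c}\,\bigr\|^2 \;=\; (t-\|\mathbf{c}\|)^2.
\]
For the perpendicularity condition, I will apply the Pythagorean theorem to the right triangle formed by the origin $\mathbf{0}$, the center $\mathbf{o}_\mathbf{c}$, and any point $\mathbf{p} \in \partial \mathbb{B}_\mathbf{c} \cap \partial \mathbb{D}^n$: since the two spheres meet orthogonally at $\mathbf{p}$, the radial direction $\mathbf{p}$ (from $\mathbf{0}$) and the radial direction $\mathbf{p}-\mathbf{o}_\mathbf{c}$ (from $\mathbf{o}_\mathbf{c}$) are perpendicular, so $\|\mathbf{o}_\mathbf{c}\|^2 = \|\mathbf{p}\|^2 + r_\mathbf{c}^2 = 1 + r_\mathbf{c}^2$, i.e.\ $t^2 = 1 + r_\mathbf{c}^2$.

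Finally, I will combine the two equations: substituting $r_\mathbf{c}^2 = (t-\|\mathbf{c}\|)^2$ into $t^2 = 1 + r_\mathbf{c}^2$ gives $2t\|\mathbf{c}\| = 1 + \|\mathbf{c}\|^2$, hence $t = \tfrac{1+\|\mathbf{c}\|^2}{2\|\mathbf{c}\|}$, and then $r_\mathbf{c} = |t - \|\mathbf{c}\|| = \tfrac{1-\|\mathbf{c}\|^2}{2\|\mathbf{c}\|}$, which is positive since $\mathbf{c} \in \mathbb{D}^n$ implies $\|\mathbf{c}\| < 1$. Substituting back yields the center and radius stated in the proposition, and uniqueness follows because the quadratic has a unique positive root under $\|\mathbf{c}\| < 1$. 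The main (and only) subtlety will be handling the division by $\|\mathbf{c}\|$ cleanly, which is why the hypothesis $\mathbf{c} \neq \mathbf{0}$ is required; the degenerate case $\mathbf{c} = \mathbf{0}$ corresponds exactly to the Euclidean hyperplanes through the origin which were excluded earlier.
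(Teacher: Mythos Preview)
Your proposal is correct and follows essentially the same approach as the paper: both arguments place the center of the enclosing ball on the line through $\mathbf{0}$ and $\mathbf{c}$, use that $\mathbf{c}$ lies on $\partial\mathbb{B}_\mathbf{c}$ so that $d(\mathbf{c},\mathbf{o}_\mathbf{c})=r_\mathbf{c}$, and invoke the Pythagorean relation $1+r_\mathbf{c}^2=\|\mathbf{o}_\mathbf{c}\|^2$ coming from the orthogonal intersection with $\partial\mathbb{D}^n$. The only cosmetic difference is that the paper compresses the two conditions into the single equation $d(\mathbf{0},\mathbf{q})^2+d(\mathbf{q},\mathbf{o}_\mathbf{c})^2=(d(\mathbf{0},\mathbf{c})+d(\mathbf{c},\mathbf{o}_\mathbf{c}))^2$ and solves directly for $r_\mathbf{c}$, whereas you introduce the scalar parameter $t$ and solve for it first.
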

\begin{proof} 
Since $c$ is the center point of the Poincaré hyperplane, the vector $\overrightarrow{c}$ must be a normal vector of the tangent space $T_{c} \mathbb{B}^{n}$ of $\mathbb{B}^{n}$ at $c$. Let $q$ be one of the point that the Poincaré hyperplane and the Poincaré ball intersect at. Then, the radius of $\mathbb{B}_\mathbf{c}\left(\mathbf{o}_\mathbf{c}, r_\mathbf{c}\right)$, the radius of $\mathbb{D}^n$, and the distance from the centers of $\mathbb{D}^n$ to the center of $\mathbb{B}_\mathbf{c}\left(\mathbf{o}_\mathbf{c}, r_\mathbf{c}\right)$ must satisfy the Pythagorean theorem, i.e., the three Euclidean distances $d(\mathbf{0},q)$, $d(q,\mathbf{o}_\mathbf{c})$ and $d(\mathbf{o}_\mathbf{c},\mathbf{0})$ must satisfy
\begin{equation}
d(\mathbf{0},\mathbf{q})^2+d(\mathbf{q},\mathbf{o}_\mathbf{c})^2=d(\mathbf{o}_\mathbf{c},\mathbf{0})^2 =\left(d\left(\mathbf{0},\mathbf{c}\right)+d\left(\mathbf{c},\mathbf{o}_\mathbf{c}\right)\right)^2.
\end{equation} 
Since we have $d\left(\mathbf{c},\mathbf{o}_\mathbf{c}\right)=d(\mathbf{q},\mathbf{o}_\mathbf{c})=r_\mathbf{c}$, by solving this quadratic equation, we have $r_\mathbf{c} =  \frac{1-\|\mathbf{c}\|^{2}}{2\|\mathbf{c}\|}$. Since $\mathbf{o}_\mathbf{c} = \mathbf{c} (1+ \frac{r_\mathbf{c}}{d(\mathbf{0},\mathbf{c})})$,
we have $\mathbf{o}_\mathbf{c} = \mathbf{c}\frac{\left(1+\|\mathbf{c}\|^{2}\right)}{2\|\mathbf{c}\|}$. 
Thus, $\mathbb{B}_\mathbf{c} = \mathbb{B}\left( \mathbf{o}_\mathbf{c} =  \mathbf{c}\frac{\left(1+\|\mathbf{c}\|^{2}\right)}{2\|\mathbf{c}\|}, r_\mathbf{c} = \frac{1-\|\mathbf{c}\|^{2}}{2\|\mathbf{c}\|}\right)$.
\end{proof}

\begin{proposition}[HEX-property]
The classification function $f$ has the HEX property with respect to $G$ if and only if for any constraint in $G$, the corresponding loss term is $0$. 
\end{proposition}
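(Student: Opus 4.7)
The plan is to decouple the logical content of the HEX-property into its two kinds of atomic constraints (implication and exclusion) and, for each kind, translate the condition ``loss $=0$'' into the corresponding geometric relation between enclosing $n$-balls via the correctness lemmas already established. Concretely, I would first unfold the definition $f(x)=\{l\mid h(x,l)\geq 0.5\}$ and observe that, because $\mathcal{L}_{\text{membership}}$ and $\mathcal{L}_{\text{non-membership}}$ cannot both be positive, the condition $h(x,l)\geq 0.5$ is equivalent to $E_\theta(x)\in\mathbb{B}_{\mathcal{C}(l)}$ up to the boundary. Thus HEX-property becomes a purely set-theoretic statement about which enclosing balls contain which encoded points.

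For the ``$\Leftarrow$'' direction, I would argue constraint-by-constraint. For a hierarchy edge $(v_i,v_j)\in E_h$, if $\mathcal{L}_{\text{inside}}(\mathbb{B}_{\mathcal{C}(l_i)},\mathbb{B}_{\mathcal{C}(l_j)})=0$, Lemma (Correctness of insideness) gives $\mathbb{B}_{\mathcal{C}(l_i)}\subseteq\mathbb{B}_{\mathcal{C}(l_j)}$, so $E_\theta(x)\in\mathbb{B}_{\mathcal{C}(l_i)}$ forces $E_\theta(x)\in\mathbb{B}_{\mathcal{C}(l_j)}$, hence $l_i\in f(x)\Rightarrow l_j\in f(x)$. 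For an exclusion edge $(v_i,v_j)\in E_e$, if $\mathcal{L}_{\text{disjoint}}(\mathbb{B}_{\mathcal{C}(l_i)},\mathbb{B}_{\mathcal{C}(l_j)})=0$, the Correctness lemma for disjointedness gives $\mathbb{B}_{\mathcal{C}(l_i)}\cap\mathbb{B}_{\mathcal{C}(l_j)}=\emptyset$, so no $x$ can satisfy both memberships simultaneously, yielding $\neg l_i\vee\neg l_j$. Ranging over all edges of $G$ yields the HEX property on $f$.

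For the ``$\Rightarrow$'' direction, the key is to promote a per-$x$ logical statement into a purely geometric one about balls. The plan is to use the fact that $E_\theta$ is a learnable map from $\mathcal{X}$ into $\mathbb{D}^n$ together with the standing assumption that $\mathcal{X}$ is rich enough (equivalently: the HEX property is interpreted as a property of the hypothesis itself, quantified over all possible inputs, so one may as well argue pointwise over $\mathbb{D}^n$). Under that reading, if a hierarchy constraint were violated geometrically, there would exist a point $p\in\mathbb{B}_{\mathcal{C}(l_i)}\setminus\mathbb{B}_{\mathcal{C}(l_j)}$ witnessing a failure of $l_i\Rightarrow l_j$, and by the Correctness lemmas $\mathcal{L}_{\text{inside}}>0$; contrapositively, HEX forces $\mathcal{L}_{\text{inside}}=0$. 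An analogous argument handles each exclusion edge via $\mathcal{L}_{\text{disjoint}}$.

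The main obstacle I anticipate is making the ``only if'' direction rigorous, because the HEX property, literally read, is quantified only over $x\in\mathcal{X}$, whereas the loss terms are purely geometric statements about the enclosing balls. I would resolve this either by (i) stating explicitly the (mild) assumption that the encoder's image is dense enough in $\mathbb{D}^n$ so that the two quantifications agree, or (ii) reformulating the proposition so that the HEX property is read as a property of the learned classifier on all of $\mathbb{D}^n$; either way, the remaining steps are immediate consequences of the Correctness lemmas combined with the equivalence between $h(x,l)\geq 0.5$ and $E_\theta(x)\in\mathbb{B}_{\mathcal{C}(l)}$.
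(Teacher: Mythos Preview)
Your plan for the ``$\Leftarrow$'' direction is essentially what the paper does: it uses the correctness lemmas to convert $\mathcal{L}_{\text{inside}}=0$ into ball containment and $\mathcal{L}_{\text{disjoint}}=0$ into ball disjointness, and then reads off the implication/exclusion constraints on $f$ from the resulting monotonicity of the membership losses (the paper phrases this as $\mathcal{L}_{\text{membership}}(\mathbf{p},\mathbb{B}_w)\geq \mathcal{L}_{\text{membership}}(\mathbf{p},\mathbb{B}_u)$ whenever $\mathbb{B}_w\subseteq\mathbb{B}_u$, which is equivalent to your set-theoretic reading via $h(x,l)\geq 0.5\Leftrightarrow E_\theta(x)\in\mathbb{B}_{\mathcal{C}(l)}$).

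For the ``$\Rightarrow$'' direction you are more careful than the paper. The paper's argument quantifies over ``any point $\mathbf{p}\in\mathbb{D}^n$'' and then simply asserts the biconditional, i.e.\ it silently adopts your option~(ii): the HEX property is interpreted as a statement about the classifier on all of $\mathbb{D}^n$, not merely on the image $E_\theta(\mathcal{X})$. Under that reading, your contrapositive witness argument (a point in $\mathbb{B}_{\mathcal{C}(l_i)}\setminus\mathbb{B}_{\mathcal{C}(l_j)}$, respectively in $\mathbb{B}_{\mathcal{C}(l_i)}\cap\mathbb{B}_{\mathcal{C}(l_j)}$) goes through immediately. Your explicit flagging of the alternative density assumption~(i) is a genuine refinement: without either assumption the ``only if'' direction is false in general, and the paper does not spell this out. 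So your proposal is correct and, on the converse, strictly more rigorous than the paper's own proof.
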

\begin{proof}
Note that the loss term of the constraint being $0$ implies that the corresponding constraint is respected. Our loss terms clearly connect the HEX property. That is, for any point $\mathbf{p} \in D^n$ and a pair of enclosing $n$-balls $(\mathbb{B}_w, \mathbb{B}_u)$,
$\mathcal{L}_{\text {membership}}\left(p, \mathbb{B}_w \right) \geq \mathcal{L}_{\text {membership}}\left(p, \mathbb{B}_u \right)$ for all $(\mathbb{B}_w, \mathbb{B}_u)$ where $\mathcal{L}_{\text {inside}}(\mathbb{B}_w, \mathbb{B}_u) =0 $ and $\neg \mathcal{L}_{\text {membership}}\left(p, \mathbb{B}_w \right) \vee \neg \mathcal{L}_{\text {membership}}\left(p, \mathbb{B}_u \right)$ for all $(\mathbb{B}_w, \mathbb{B}_u)$ where $\mathcal{L}_{\text{disjoint}}(\mathbb{B}_u, \mathbb{B}_w) = 0$. According to the definition of HEX-property, $f$ has the HEX property with respect to $G$ if and only if the corresponding loss term of the corresponding constraint is $0$.
\end{proof}

\begin{corollary}
Given a HEX graph $G$ of labels and if the loss of the embeddings is $0$, then the learned prediction function is logically consistent with respect to $G$.  
\end{corollary}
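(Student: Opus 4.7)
The plan is to derive this corollary directly from the preceding Proposition on the HEX-property via a straightforward monotonicity-of-summation argument. First, I would note that the global objective function decomposes into a finite sum of terms, each of which is non-negative by construction (every loss term is expressed via $\max\{0,\cdot\}$ clauses, e.g., $\mathcal{L}_{\text{inside}}$ and $\mathcal{L}_{\text{disjoint}}$ in Equations~(\ref{eq:implication}) and (\ref{eq:exclusion}), and similarly for the membership/non-membership terms). Consequently, the hypothesis that the total loss equals $0$ forces every individual summand to vanish simultaneously. In particular, for every implication edge $(v_i,v_j) \in E_h$ we obtain $\mathcal{L}_{\text{inside}}(\mathbb{B}_{\mathcal{C}(l_i)}, \mathbb{B}_{\mathcal{C}(l_j)})=0$, and for every exclusion edge $(v_i,v_j) \in E_e$ we obtain $\mathcal{L}_{\text{disjoint}}(\mathbb{B}_{\mathcal{C}(l_i)}, \mathbb{B}_{\mathcal{C}(l_j)})=0$.

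Next, I would invoke the HEX-property Proposition, which certifies that the classifier $f$ satisfies the HEX-property with respect to $G$ precisely when all such constraint losses are zero. A key conceptual point I would emphasize is that the insideness and disjointedness losses are properties of the label embeddings alone (independent of any particular instance): once the enclosing balls satisfy the appropriate containment/disconnectedness relations (by Lemmas 1--3), the membership-based classifier $f(x)=\{l\mid h(x,l)\ge 0.5\}$ automatically inherits the implication and exclusion constraints for \emph{every} $x\in\mathcal{X}$ via Lemmas 4--5, not only for training points. Unrolling the definition of the HEX-property then yields exactly the definition of being logically consistent with respect to $G$.

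The main obstacle is not mathematical depth but bookkeeping: one must spell out clearly that the correspondence between geometric relations and logical relations (implication $\leftrightarrow$ $n$-ball insideness, exclusion $\leftrightarrow$ $n$-ball disjointedness, positive/negative classification $\leftrightarrow$ point membership/non-membership) composes correctly, so that label-level geometric constraints translate into instance-level logical guarantees uniformly over $\mathcal{X}$. Once this composition is made explicit, the corollary reduces to an immediate application of the Proposition, and no additional computation or auxiliary lemma is required.
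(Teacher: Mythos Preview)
Your proposal is correct and matches the paper's own proof essentially line for line: observe that every loss term is non-negative (via the $\max\{0,\cdot\}$ structure), conclude that a vanishing total forces each summand---in particular each $\mathcal{L}_{\text{inside}}$ and $\mathcal{L}_{\text{disjoint}}$---to vanish, and then invoke the HEX-property Proposition to obtain logical consistency. The paper's proof is terser, but your added remark that the label-level geometric constraints propagate to all instances (not just training points) is a helpful clarification of exactly the same argument.
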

\begin{proof}
Note that the loss terms $\mathcal{L}_{\text {inside}}, \mathcal{L}_{\text {disjoint}},\mathcal{L}_{\text {membership}}, \mathcal{L}_{\text {non-membership}}$ in Eq.7 are all non-negative. Hence, the loss being $0$ implies that all losses are zeros (all constraints are satisfied). According to the definition of consistency, the prediction function is consistent.
\end{proof}

\section{Proof of propositions of ShrinkE}
\label{app:proof}

\begin{proposition}
Given any two facts $\mathcal{F}_1=\left(\mathcal{T},\mathcal{Q}_1\right)$ and $\mathcal{F}_2=\left(\mathcal{T},\mathcal{Q}_2\right)$ where $\mathcal{Q}_2 \subseteq \mathcal{Q}_1$, i.e., $\mathcal{F}_2$ is a partial fact of $\mathcal{F}_1$, the output of the scoring function $f(\cdot)$ of ShrinkE satisfy the constraint $f(\mathcal{F}_2) \geq f(\mathcal{F}_1)$, which implies Eq.(\ref{eq:momonotonicity}). 
\end{proposition}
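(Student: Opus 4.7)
The plan is to derive the claim from two fundamental observations: (i) adding qualifiers only shrinks the target box of a hyper-relational fact, and (ii) the point-to-box distance $D(\mathbf{e}_t, \cdot)$ is monotone with respect to box inclusion, so that the extra qualifiers in $\mathcal{Q}_1 \setminus \mathcal{Q}_2$ can only push the target box further away from (or at most leave it equally far from) the tail point.

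I would first establish the structural inclusion $\Box_{\mathcal{Q}_1}(\mathbf{m}, \mathbf{M}) \subseteq \Box_{\mathcal{Q}_2}(\mathbf{m}, \mathbf{M})$. Since the target box is defined as the intersection $\mathcal{I}\bigl(\{\mathcal{S}_{r,k,v}(\mathcal{B}_r(\mathbf{e}_h))\}_{(k,v) \in \mathcal{Q}}\bigr)$, and since intersection is a monotonically decreasing set operation, enlarging the qualifier set from $\mathcal{Q}_2$ to $\mathcal{Q}_1 \supseteq \mathcal{Q}_2$ can only shrink (or preserve) the intersection. The sigmoid-scaled offsets in the shrinking operator already guarantee $\mathcal{S}_q(\Box) \subseteq \Box$ at each step, so the intersected box for $\mathcal{Q}_1$ sits inside the intersected box for $\mathcal{Q}_2$, even when the intersection collapses to an empty box.

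Next, I would prove that the distance function is monotone under this containment. Writing $D$ as the sum of a ``center-normalized'' term $\frac{|\mathbf{e}_t - \mathbf{c}|_1}{|\max(\mathbf{0}, \mathbf{M} - \mathbf{m})|_1}$ and a squared ``outside-penalty'' term $\bigl(|\mathbf{e}_t - \mathbf{m}|_1 + |\mathbf{e}_t - \mathbf{M}|_1 - |\max(\mathbf{0}, \mathbf{M}-\mathbf{m})|_1\bigr)^2$, I would decompose coordinate-wise. In each coordinate $i$, the contribution to the outside-penalty equals $2 \max\{m_i - (e_t)_i, (e_t)_i - M_i, 0\}$, which is componentwise nondecreasing as the interval $[m_i, M_i]$ shrinks; for the normalized term, the denominator is strictly smaller when the box is smaller, and the numerator can be bounded by observing that any center of a sub-box lies inside the enclosing box. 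Summing over coordinates then yields the scalar monotonicity $D(\mathbf{e}_t, B_1) \geq D(\mathbf{e}_t, B_2)$ whenever $B_1 \subseteq B_2$, from which the inequality on $f$ follows directly after applying the convention that relates distance to fact plausibility, and Eq.~(\ref{eq:momonotonicity}) is then obtained by thresholding the scoring function.

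The main obstacle I expect is controlling the center-normalized term under asymmetric shrinkage: because the two offset vectors $\sigma(\mathbf{s}_{r,k,v})$ and $\sigma(\mathbf{S}_{r,k,v})$ are learned independently, the center of $\Box_{\mathcal{Q}_1}$ may be displaced from the center of $\Box_{\mathcal{Q}_2}$, so $|\mathbf{e}_t - \mathbf{c}_1|_1$ and $|\mathbf{e}_t - \mathbf{c}_2|_1$ cannot be compared naively. I would overcome this by reducing to a one-dimensional analysis: for each axis, the nested intervals $[m^{(1)}_i, M^{(1)}_i] \subseteq [m^{(2)}_i, M^{(2)}_i]$ together with a single real reference $(e_t)_i$ admit a direct algebraic comparison of the two distance contributions, and the L1 structure lets these per-coordinate inequalities be added to give the multivariate result. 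A convenient technical lemma to prove separately is that $|\mathbf{e} - \mathbf{c}|_1 / |\mathbf{L}|_1$ is nondecreasing when $[m_i, M_i]$ contracts inside a fixed enclosing interval, a statement that can be verified by differentiating in each coordinate with $\mathbf{e}_t$ held fixed.
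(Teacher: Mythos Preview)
Your overall architecture matches the paper's: both of you first establish the box containment $\Box_{\mathcal{Q}_1}\subseteq\Box_{\mathcal{Q}_2}$ from the intersection-based definition of the target box, and then argue that the point-to-box distance $D$ respects this containment. The paper, however, does not split $D$ into its two summands; it instead performs a three-way case analysis on the position of $\mathbf{e}_t$ (inside both boxes, between the two, outside both) and appeals in each case to qualitative monotonicity properties of $D$ with respect to distance-to-center and to box volume. Your coordinate-wise decomposition is more explicit and aims at a sharper statement than the paper actually proves.

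The gap is precisely where you anticipated it. The lemma you propose at the end --- that $|\mathbf{e}-\mathbf{c}|_1/|\mathbf{L}|_1$ is nondecreasing as the interval contracts --- is false. In one dimension take $e=0$ and shrink the interval from $[-1,10]$ (center $4.5$, length $11$) to $[-1+\epsilon,1]$ for small $\epsilon>0$ (center $\approx 0$, length $\approx 2$); the ratio drops from $4.5/11\approx 0.41$ to nearly $0$. Since $e$ lies in both intervals, the squared outside-penalty term vanishes in both cases, so the \emph{full} distance $D$ also strictly decreases under this shrinkage. Hence the blanket inequality $D(\mathbf{e}_t,B_1)\geq D(\mathbf{e}_t,B_2)$ for $B_1\subseteq B_2$ does not hold for this distance function, and your one-dimensional differentiation argument cannot rescue it: the asymmetric-shrinkage obstacle you correctly flagged is genuine. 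The paper's three-case argument is informal on exactly this point (it asserts but does not verify that the center moves in the required direction), so you are not overlooking an idea supplied there; the difficulty is intrinsic to the center-normalized term.
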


\begin{proof}
We first prove that the resulting box of $\mathcal{F}_2$ subsumes the resulting box of $\mathcal{F}_2$.  Since the primal triple of $\mathcal{F}_1$ and $\mathcal{F}_2$ are the same (let assume it is $\mathcal{T}=(h,r,t)$ ), the spanned boxes of the two facts are $\mathcal{H}_r(\mathbf{e}_h)$. Since $\mathcal{Q}_2 \subseteq \mathcal{Q}_1$, the final shrunken box of $\mathcal{F}_1$ must be a subset of the shrunken box of $\mathcal{F}_2$.  Hence, we have,
\begin{equation}
    \Box_{\mathcal{F}_2} \subseteq \Box_{\mathcal{F}_1}. 
\end{equation}
Given the tail entity $t$ whose embedding is denoted by $\mathbf{e}_t$,
we consider three cases of its position. 

1) If $\mathbf{e}_t$ is inside the small box $\Box_{\mathcal{F}_2}$, then $\mathbf{e}_t$ must also be inside $\Box_{\mathcal{F}_1}$ since $\Box_{\mathcal{F}_2} \subseteq \Box_{\mathcal{F}_1}$. Note that our point-to-box function is monotonically increasing w.r.t. the increase of distance from the tail point to the center of box. Hence, we will have $D(\mathbf{e}, \Box_{\mathcal{F}_2}) \geq D(\mathbf{e}, \Box_{\mathcal{F}_1})$, implying $f(\mathcal{F}_2) \geq f(\mathcal{F}_1)$. 

2) If $\mathbf{e}_t$ is outside the small box $\Box_{\mathcal{F}_2}$ but inside in the larger $\Box_{\mathcal{F}_1}$, according to the definition of the point-to-box distance function, we immediately have $D(\mathbf{e}, \Box_{\mathcal{F}_2}) \geq D(\mathbf{e}, \Box_{\mathcal{F}_1})$, implying $f(\mathcal{F}_2) \geq f(\mathcal{F}_1)$. 

3) If $\mathbf{e}_t$ is outside the larger box $\Box_{\mathcal{F}_1}$,, then $\mathbf{e}_t$ must also be outside $\Box_{\mathcal{F}_2}$ since $\Box_{\mathcal{F}_2} \subseteq \Box_{\mathcal{F}_1}$. Note that our point-to-box function is monotonically decreasing w.r.t. the increase of volume of box. Hence, we will have $D(\mathbf{e}, \Box_{\mathcal{F}_2}) \geq D(\mathbf{e}, \Box_{\mathcal{F}_1})$, implying $f(\mathcal{F}_2) \geq f(\mathcal{F}_1)$. 
\end{proof}

\begin{proposition}
    ShrinkE is able to infer hyper-relational symmetry, anti-symmetry, inversion, composition, hierarchy, intersection, and exclusion.  
\end{proposition}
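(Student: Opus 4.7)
The plan is to prove each of the seven inference patterns in turn by exhibiting explicit parameter configurations for ShrinkE that realise the pattern while still leaving enough freedom to violate it under other configurations (so that the model is \emph{capable} of expressing, but does not forcibly impose, each pattern). A uniform simplification will be exploited first: in every pattern of the statement, the qualifier set $\mathcal{Q}$ appearing in the body and in the head is identical. Because the qualifier shrinking $\mathcal{S}_{r,k,v}$ depends only on the primal relation and the qualifier key--value pairs, and because the shrinking is deterministic given these inputs, the intersection of qualifier boxes on the body side coincides with that on the head side whenever the primal relation $r$ is the same on both sides, and is a rotated/translated copy of it whenever the primal relation differs but the qualifiers are held fixed. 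Thus the argument reduces to controlling the primal point-to-box transform $\mathcal{B}_r(\mathbf{e}_h)=\Box(\mathcal{H}_r(\mathbf{e}_h)-\tau(\boldsymbol\delta_r),\,\mathcal{H}_r(\mathbf{e}_h)+\tau(\boldsymbol\delta_r))$ together with the rigid rotation $\mathcal{H}_r(\mathbf{x})=\Theta_r\mathbf{x}+\mathbf{b}_r$.

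\paragraph{Per-pattern constructions.}
First I would handle the ``relational'' patterns that depend only on the rotation/translation block $\mathcal{H}_r$, because the spanning offset $\boldsymbol\delta_r$ can be chosen freely and, in the shared-qualifier setting, the tail entity lands inside $\mathcal{B}_r(\mathbf{e}_h)$ exactly when $\mathbf{e}_t\approx\mathcal{H}_r(\mathbf{e}_h)$. For \emph{symmetry}, take $\Theta_r$ to be a block-diagonal Givens rotation with each 2D block having angle $\theta\in\{0,\pi\}$ and $\mathbf{b}_r=\mathbf{0}$, so that $\mathcal{H}_r\circ\mathcal{H}_r=\mathrm{Id}$; for \emph{anti-symmetry}, take any rotation with at least one block angle outside $\{0,\pi\}$ and/or a nonzero translation, so that $\mathcal{H}_r(\mathcal{H}_r(\mathbf{e}_h))\neq\mathbf{e}_h$ for generic $\mathbf{e}_h$. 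For \emph{inversion} of $r_1,r_2$, set $\Theta_{r_2}=\Theta_{r_1}^{\top}$ (the inverse of a Givens rotation is its transpose) and $\mathbf{b}_{r_2}=-\Theta_{r_1}^{\top}\mathbf{b}_{r_1}$, giving $\mathcal{H}_{r_2}=\mathcal{H}_{r_1}^{-1}$. For \emph{composition} of $r_1,r_2\to r_3$, set $\Theta_{r_3}=\Theta_{r_2}\Theta_{r_1}$ and $\mathbf{b}_{r_3}=\Theta_{r_2}\mathbf{b}_{r_1}+\mathbf{b}_{r_2}$ so that $\mathcal{H}_{r_3}=\mathcal{H}_{r_2}\circ\mathcal{H}_{r_1}$.

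Next I would turn to the ``box-geometric'' patterns, in which the offset $\boldsymbol\delta_r$ becomes essential. For \emph{relation implication} $r_1\Rightarrow r_2$, keep $\mathcal{H}_{r_1}=\mathcal{H}_{r_2}$ and choose $\tau(\boldsymbol\delta_{r_1})\le \tau(\boldsymbol\delta_{r_2})$ componentwise, so that $\mathcal{B}_{r_1}(\mathbf{e}_h)\subseteq\mathcal{B}_{r_2}(\mathbf{e}_h)$ for every $\mathbf{e}_h$; since the shared qualifier boxes contract both sides consistently and intersection preserves the inclusion, the point-to-box score for $r_2$ cannot exceed that for $r_1$. \emph{Relation intersection} $r_1\wedge r_2\Rightarrow r_3$ is obtained by choosing $\mathcal{H}_{r_3}=\mathcal{H}_{r_1}=\mathcal{H}_{r_2}$ and $\tau(\boldsymbol\delta_{r_3})=\min\{\tau(\boldsymbol\delta_{r_1}),\tau(\boldsymbol\delta_{r_2})\}$ componentwise, which makes $\mathcal{B}_{r_3}=\mathcal{B}_{r_1}\cap\mathcal{B}_{r_2}$. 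Finally, \emph{relation mutual exclusion} $r_1\wedge r_2\Rightarrow\bot$ is realised by choosing $\mathcal{H}_{r_1},\mathcal{H}_{r_2}$ and their offsets so that the boxes $\mathcal{B}_{r_1}(\mathbf{e}_h)$ and $\mathcal{B}_{r_2}(\mathbf{e}_h)$ are disjoint for all admissible $\mathbf{e}_h$, e.g.\ by fixing $\mathcal{H}_{r_1}=\mathcal{H}_{r_2}+\mathbf{b}$ with $\|\mathbf{b}\|$ strictly greater than $\tau(\boldsymbol\delta_{r_1})+\tau(\boldsymbol\delta_{r_2})$ in some coordinate, so their intersection is the empty box and the shared qualifier shrinking cannot create a non-empty answer region.

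\paragraph{Main obstacle.}
The routine computations above are straightforward; the main obstacle is handling the qualifier component cleanly. Concretely, one must ensure that when the primal relation changes (as in inversion, composition, implication, intersection, and exclusion) the qualifier shrinkings $\mathcal{S}_{r,k,v}$, which are produced by an MLP depending on $r$, still behave compatibly on both sides of the rule. The plan is to sidestep this by invoking the universal approximation capacity of the MLP: because the shrinking vectors $\mathbf{s}_{r,k,v},\mathbf{S}_{r,k,v}$ are unconstrained outputs of a learned network, we may assume they are set to the values required to make the shrunken boxes on both sides equal (for symmetry/implication/intersection) or disjoint (for exclusion/anti-symmetry). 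Under this assumption the qualifier contribution reduces to a common, fact-dependent shrinking factor that preserves all the inclusion and disjointness relations established at the primal-box level, and the proposition follows by combining the seven constructions above with Proposition~1 (monotonicity) to propagate scores through the intersection of qualifier boxes.
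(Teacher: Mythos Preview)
Your proposal is correct and follows essentially the same strategy as the paper: exhibit explicit parameter settings for $\Theta_r$, $\mathbf{b}_r$, and $\boldsymbol\delta_r$ for each of the seven patterns, reducing the rotational patterns (symmetry, anti-symmetry, inversion, composition) to algebraic identities among Givens rotations, and the box-geometric patterns (implication, intersection, exclusion) to containment/disjointness of the spanned boxes. The paper simplifies slightly more aggressively than you do---it sets $\boldsymbol\delta_r=\mathbf{0}$ and $\mathbf{b}_r=\mathbf{0}$ for all four rotational patterns and works purely with $\Theta_r$, whereas you retain the translations in inversion and composition---but this is a cosmetic difference.

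One point where you are actually more careful than the paper: you correctly flag that the qualifier shrinking $\mathcal{S}_{r,k,v}$ depends on the primal relation $r$, so that even with an identical qualifier set $\mathcal{Q}$ the shrunken boxes for $r_1$ and $r_2$ need not match automatically. The paper's proofs simply ignore this and argue only at the primal-box level; your resolution via the MLP's freedom to set the shrinking vectors compatibly is the right way to close that gap and is a genuine (if small) improvement in rigor over the paper's own argument.
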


We first prove that ShrinkE is able to infer symmetry, anti-symmetry, inversion, and composition. For the sake of proof, we assume $\theta_r \in [-\pi,\pi)$. We prove them by proving Lemma B.1-4 one by one.  

\begin{lemma}[Symmetry]
Let $r$ be a symmetric relation such that for each triple $(e_h, r, e_t)$, its symmetric triple $(e_t, r, e_h)$ also holds. This symmetric property of $r$ can be modeled by ShrinkE.
\end{lemma}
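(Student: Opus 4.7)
The plan is to exhibit an explicit parameter setting of the rotation matrix $\Theta_r$ and translation vector $\mathbf{b}_r$ (with trivial constraints on the spanning vector $\boldsymbol\delta_r$) under which both $(e_h,r,e_t)$ and $(e_t,r,e_h)$ achieve zero point-to-box distance, so that ShrinkE can assign both facts the optimal score simultaneously. First I would recall that for a primal triple the point-to-box transform evaluates to $\mathcal{B}_r(\mathbf{e}_h) = \Box(\mathcal{H}_r(\mathbf{e}_h)-\tau(\boldsymbol\delta_r),\mathcal{H}_r(\mathbf{e}_h)+\tau(\boldsymbol\delta_r))$ with $\mathcal{H}_r(\mathbf{e}) = \Theta_r\mathbf{e} + \mathbf{b}_r$, and that the point-to-box distance $D(\mathbf{e},\Box(\mathbf{m},\mathbf{M}))$ vanishes whenever $\mathbf{e}$ coincides with the box center, i.e.\ when $\mathcal{H}_r(\mathbf{e}_h)=\mathbf{e}_t$ in the absence of qualifiers.

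Next I would translate the symmetry requirement into the system of equations
\begin{equation}
\Theta_r \mathbf{e}_h + \mathbf{b}_r = \mathbf{e}_t, \qquad \Theta_r \mathbf{e}_t + \mathbf{b}_r = \mathbf{e}_h .
\end{equation}
Substituting the first identity into the second gives $\Theta_r^{2}\mathbf{e}_h + (\Theta_r + I)\mathbf{b}_r = \mathbf{e}_h$, which is satisfied for arbitrary $\mathbf{e}_h$ whenever $\Theta_r^{2}=I$ and $(\Theta_r+I)\mathbf{b}_r = \mathbf{0}$. I would then exhibit the concrete choice $\theta_{r,i}=\pi$ for every block, so that each Givens block $\mathbf{G}(\pi)=-I_2$ and hence $\Theta_r=-I$; with this choice $\Theta_r^{2}=I$ holds automatically, and $(\Theta_r+I)\mathbf{b}_r=\mathbf{0}$ is trivially satisfied for any $\mathbf{b}_r\in\mathbb{R}^{d}$. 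The first equation then reduces to $\mathbf{e}_h+\mathbf{e}_t=\mathbf{b}_r$, which does not force $\mathbf{e}_h=\mathbf{e}_t$ and thus accommodates arbitrarily many symmetric pairs $(e_h,e_t)$ by aligning them along the hyperplane $\{\mathbf{x}+\mathbf{y}=\mathbf{b}_r\}$.

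Finally I would argue that these point-level identities lift to the full ShrinkE score: since $\mathcal{H}_r(\mathbf{e}_h)$ coincides exactly with $\mathbf{e}_t$ (and symmetrically $\mathcal{H}_r(\mathbf{e}_t)=\mathbf{e}_h$), the tail point sits at the center of the spanned query box for any choice of $\boldsymbol\delta_r$, so $D(\mathbf{e}_t,\mathcal{B}_r(\mathbf{e}_h))=D(\mathbf{e}_h,\mathcal{B}_r(\mathbf{e}_t))=0$. Hence $f((h,r,t))=f((t,r,h))=0$ and ShrinkE can model the symmetric pattern. The main obstacle I anticipate is verifying that the chosen parameters do not inadvertently collapse distinct entities: I would emphasize that only $\mathbf{e}_h+\mathbf{e}_t=\mathbf{b}_r$ is required, which is a one-dimensional linear constraint on pairs of entity embeddings in a $d$-dimensional space, leaving $d$ degrees of freedom per symmetric pair and therefore posing no obstruction to simultaneously encoding many symmetric instances of $r$ together with asymmetric relations (whose rotation angles can be set independently).
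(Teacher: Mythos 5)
Your proposal is correct and follows essentially the same route as the paper: both reduce symmetry to making $\mathcal{H}_r$ an involution, i.e.\ $\Theta_r^2=\mathbf{I}$ via $\theta_{r,i}\in\{0,\pm\pi\}$, so that $\mathcal{H}_r(\mathbf{e}_h)=\mathbf{e}_t$ and $\mathcal{H}_r(\mathbf{e}_t)=\mathbf{e}_h$ hold simultaneously and the tail lands at the box center. The only difference is that the paper simply fixes $\mathbf{b}_r=\boldsymbol\delta_r=\mathbf{0}$, whereas you keep a nonzero translation subject to $(\Theta_r+\mathbf{I})\mathbf{b}_r=\mathbf{0}$ and explicitly check that the construction neither collapses entities nor breaks at the box level --- a slightly more careful version of the same argument.
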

\begin{proof}
If $r$ is a symmetric relation, by taking the $\mathbf{\boldsymbol\delta}_r=\mathbf{0}$, $\mathbf{b}_r=\mathbf{0}$, and $\mathbf{\Theta}_r=\operatorname{diag}\left(\mathbf{G}\left(\mathbf{\theta}_{r, 1}\right), \ldots, \mathbf{G}\left(\mathbf{\theta}_{r, \frac{d}{2}}\right)\right)$, where $\mathbf{G}(\theta)$ is a $2\times2$ diagonal matrix, we have
\begin{align}
\begin{split}
    & \mathbf{e}_{h} = f_{r}\left(\mathbf{e}_{t}\right) =  \mathbf{\Theta}_r \mathbf{e}_{t}, \ 
   \mathbf{e}_{t} = f_{r}\left(\mathbf{e}_{h}\right) =  \mathbf{\Theta}_r \mathbf{e}_{h} \\
   & \Rightarrow \mathbf{\Theta}_r^2 = \mathbf{I} \nonumber
\end{split}
\end{align}
which holds true when $\mathbf{\theta}_{r,i}=\mathbf{0}$ or $\mathbf{\theta}_{r,i}=-\mathbf{\pi}$ for $i =  1,\cdots,\frac{d}{2}$.
\end{proof}

\begin{lemma}[Anti-symmetry]
Let $r$ be an anti-symmetric relation such that for each triple $(e_h, r, e_t)$, its symmetric triple $(e_t, r, e_h)$ is not true. This anti-symmetric property of $r$ can be modeled by ShrinkE.
\end{lemma}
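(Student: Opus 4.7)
The plan is to mirror the parameter-choice strategy used in Lemma~B.1 (symmetry), but exploit the contrapositive of the condition that characterizes when the doubled rotation collapses to the identity. Concretely, I would again set $\boldsymbol\delta_r = \mathbf{0}$ and $\mathbf{b}_r = \mathbf{0}$, so that the primal-triple transform reduces to the pure rotation $\mathcal{H}_r(\mathbf{e}_h) = \mathbf{\Theta}_r\mathbf{e}_h$ with $\mathbf{\Theta}_r = \operatorname{diag}(\mathbf{G}(\theta_{r,1}),\dots,\mathbf{G}(\theta_{r,d/2}))$. Because $\boldsymbol\delta_r = \mathbf{0}$, the query box degenerates to the single point $\mathbf{\Theta}_r\mathbf{e}_h$ (modulo the softplus regularization; one can equivalently take $\boldsymbol\delta_r \to -\infty$ componentwise), so that $(h,r,t)$ is true iff $\mathbf{e}_t = \mathbf{\Theta}_r\mathbf{e}_h$.

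Next, I would argue that requiring $(e_h,r,e_t)$ and $(e_t,r,e_h)$ to be simultaneously true yields $\mathbf{e}_h = \mathbf{\Theta}_r\mathbf{e}_t = \mathbf{\Theta}_r^2\mathbf{e}_h$ for every such entity pair, which enforces $\mathbf{\Theta}_r^2 = \mathbf{I}$. Since each $\mathbf{G}(\theta_{r,i})$ is a $2\times 2$ Givens rotation, $\mathbf{G}(\theta_{r,i})^2 = \mathbf{I}$ holds exactly when $\theta_{r,i} \in \{0,-\pi\}$. Therefore, by choosing the relation angles so that at least one $\theta_{r,i} \notin \{0,-\pi\}$ (e.g.\ $\theta_{r,i} = \pi/2$ for every $i$), we obtain $\mathbf{\Theta}_r^2 \neq \mathbf{I}$, whence $(e_t,r,e_h)$ cannot hold whenever $(e_h,r,e_t)$ does, i.e.\ $r$ is encoded as anti-symmetric.

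The main obstacle is dealing carefully with the box spanning: with a strictly positive softplus offset, the query box has nonzero volume and the implication $\mathbf{e}_t \in \mathcal{B}_r(\mathbf{e}_h) \Rightarrow \mathbf{e}_h = \mathbf{\Theta}_r^{-1}\mathbf{e}_t$ is only approximate. The cleanest workaround, and the one I would adopt, is to observe that the construction models anti-symmetry in the limit $\boldsymbol\delta_r \to -\infty$; alternatively, one can note that the rotation by a non-involutive angle separates $\mathbf{\Theta}_r^2\mathbf{e}_h$ from $\mathbf{e}_h$ by a distance that can be made arbitrarily large relative to $\tau(\boldsymbol\delta_r)$, so $\mathbf{e}_h \notin \mathcal{B}_r(\mathbf{\Theta}_r\mathbf{e}_h) = \mathcal{B}_r(\mathbf{e}_t)$ whenever $\mathbf{e}_t = \mathbf{\Theta}_r\mathbf{e}_h$. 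This suffices to conclude that ShrinkE realizes anti-symmetry under the stated parameter choice.
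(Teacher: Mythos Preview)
Your proposal is correct and follows essentially the same approach as the paper: set $\boldsymbol\delta_r=\mathbf{0}$, $\mathbf{b}_r=\mathbf{0}$, reduce the primal transform to the pure block-rotation $\mathbf{\Theta}_r$, and observe that anti-symmetry is realized precisely when $\mathbf{\Theta}_r^2\neq\mathbf{I}$, i.e.\ when the Givens angles avoid $\{0,-\pi\}$. Your additional discussion of the softplus/box degeneracy is more careful than the paper's own argument, which simply takes $\boldsymbol\delta_r=\mathbf{0}$ without commenting on the nonzero softplus offset.
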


\begin{proof}
If $r$ is a anti-symmetric relation, by taking the $\mathbf{\boldsymbol\delta}_r=\mathbf{0}$, $\mathbf{b}_r=\mathbf{0}$, and $\mathbf{\Theta}_r=\operatorname{diag}\left(\mathbf{G}\left(\mathbf{\theta}_{r, 1}\right), \ldots, \mathbf{G}\left(\mathbf{\theta}_{r, \frac{d}{2}}\right)\right)$, where $\mathbf{G}(\theta)$ is a $2\times2$ diagonal matrix, we have
\begin{align}
\begin{split}
    & \mathbf{e}_{h} \neq f_{r}\left(\mathbf{e}_{t}\right) =  \mathbf{\Theta}_r \mathbf{e}_{t}, \ 
   \mathbf{e}_{t} = f_{r}\left(\mathbf{e}_{h}\right) =  \mathbf{\Theta}_r \mathbf{e}_{h} \\
   & \Rightarrow \mathbf{\Theta}_r^2 \neq \mathbf{I} \nonumber
\end{split}
\end{align}
which holds true when $\mathbf{\theta}_{r,i} \neq \mathbf{0}$ or $\mathbf{\theta}_{r,i} \neq -\mathbf{\pi}$ for $i =  1,\cdots,\frac{d}{2}$.
\end{proof}

\begin{lemma}[Inversion]
Let $r_1$ and $r_2$ be inverse relations such that for each triple $(e_h, r_1, e_t)$, its inverse triple $(e_t, r_2, e_h)$ is also true. This inverse property of $r_1$ and $r_2$ can be modeled by ShrinkE.
\end{lemma}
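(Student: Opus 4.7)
The plan is to mirror the structure of Lemmas B.1 and B.2 (symmetry and anti-symmetry), specializing the free parameters of ShrinkE so that each relation collapses to a pure rotation and then showing the composition of the two rotations is the identity. Concretely, I would start from the point-to-box transform $\mathcal{B}_r(\mathbf{e}_h) = \Box(\mathcal{H}_r(\mathbf{e}_h) - \tau(\boldsymbol\delta_r),\, \mathcal{H}_r(\mathbf{e}_h) + \tau(\boldsymbol\delta_r))$ with $\mathcal{H}_r(\mathbf{e}_h) = \Theta_r \mathbf{e}_h + \mathbf{b}_r$, and choose $\boldsymbol\delta_{r_1} = \boldsymbol\delta_{r_2} = \mathbf{0}$ together with $\mathbf{b}_{r_1} = \mathbf{b}_{r_2} = \mathbf{0}$ so that the query box degenerates to the single point $\Theta_{r_i}\mathbf{e}$ and the scoring function $D(\cdot,\cdot)$ attains its minimum exactly when the tail equals this point.

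Under this parameter choice, $(e_h, r_1, e_t)$ being true forces $\mathbf{e}_t = \Theta_{r_1}\mathbf{e}_h$, and its inverse $(e_t, r_2, e_h)$ being true forces $\mathbf{e}_h = \Theta_{r_2}\mathbf{e}_t$. Substituting the first equation into the second yields the constraint
\begin{equation}
\Theta_{r_2}\Theta_{r_1}\mathbf{e}_h = \mathbf{e}_h \quad \Longrightarrow \quad \Theta_{r_2}\Theta_{r_1} = \mathbf{I}.
\end{equation}
Since both rotations are block-diagonal with $2\times 2$ Givens blocks and the product of Givens blocks satisfies $\mathbf{G}(\theta_{r_2,i})\mathbf{G}(\theta_{r_1,i}) = \mathbf{G}(\theta_{r_1,i} + \theta_{r_2,i})$, the constraint reduces to the componentwise angular equation $\theta_{r_1,i} + \theta_{r_2,i} = 0$ (equivalently $\equiv 0 \bmod 2\pi$) for all $i = 1,\ldots,\tfrac{d}{2}$, which is realizable within the assumed parameter range $[-\pi,\pi)$ by setting $\theta_{r_2,i} = -\theta_{r_1,i}$.

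I do not anticipate a significant obstacle; the proof is essentially algebraic once the parameter specialization is made. The only subtle point worth stating explicitly is that shrinking $\boldsymbol\delta_r$ to $\mathbf{0}$ is admissible only in the limit (since $\tau$ is softplus and strictly positive), so I would either (i) phrase the lemma as ``can be arbitrarily well approximated'' in the spirit of other KG embedding works, or (ii) work with the degenerate case $\boldsymbol\delta_r \to -\infty$ coordinatewise, which drives $\tau(\boldsymbol\delta_r) \to 0$ and recovers the exact identity. Either framing suffices to close the argument in the same style as Lemmas B.1 and B.2.
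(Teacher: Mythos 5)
Your proof is correct and follows essentially the same route as the paper's: set $\boldsymbol\delta_r=\mathbf{0}$ and $\mathbf{b}_r=\mathbf{0}$ so the transform reduces to the rotation $\Theta_r$, derive $\Theta_{r_1}\Theta_{r_2}=\mathbf{I}$ from the two triples, and conclude $\theta_{r_1,i}+\theta_{r_2,i}=0$ componentwise. Your added remark that $\tau$ is a strictly positive softplus (so the box only degenerates to a point in the limit $\boldsymbol\delta_r\to-\infty$) is a rigor point the paper's own proof glosses over, but it does not change the argument.
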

\begin{proof}
If $r_1$ and $r_2$ are inverse relations, by taking the $\mathbf{\boldsymbol\delta}_r=\mathbf{0}$, $\mathbf{b}_r=\mathbf{0}$, and $\mathbf{\Theta}_r=\operatorname{diag}\left(\mathbf{G}\left(\mathbf{\theta}_{r, 1}\right), \ldots, \mathbf{G}\left(\mathbf{\theta}_{r, \frac{d}{2}}\right)\right)$, where $\mathbf{G}(\theta)$ is a $2\times2$ diagonal matrix, we have
\begin{equation}
\begin{split}
    &\mathbf{e}_{t} = f_{r_1}\left(\mathbf{e}_{h}\right) =  \Theta_{r_1} \mathbf{e}_{h}, \ 
   \mathbf{e}_{h} = f_{r_2}\left(\mathbf{e}_{t}\right) =  \Theta_{r_2} \mathbf{e}_{h} \\
   & \Rightarrow \Theta_{r_1} \Theta_{r_2} = \mathbf{I} \nonumber
\end{split}
\end{equation}
which holds true when for $\theta_{r_1,i}{r_1}+\theta_{r_2,i}=0$ for $i =  1,\cdots,\frac{d}{2}$.
\end{proof}

\begin{lemma}[Composition]
Let relation $r_1$ be composed of $r_2$ and $r_3$ such that triple $(e_1, r_1, e_3)$ exists when $(e_1, r_2, e_2)$ and $(e_2, r_3, e_3)$ exist. This composition property can be modeled by ShrinkE.
\end{lemma}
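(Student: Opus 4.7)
The plan is to follow the same strategy as used for symmetry, anti-symmetry, and inversion in Lemmas~B.1--B.3, namely to identify a parameter setting for which the ShrinkE scoring collapses onto the algebra of Givens rotations. Concretely, I would take $\mathbf{\boldsymbol\delta}_{r_j}=\mathbf{0}$ and $\mathbf{b}_{r_j}=\mathbf{0}$ for $j\in\{1,2,3\}$, so that the point-to-box transform $\mathcal{B}_{r_j}$ reduces to its rotational part $\mathcal{H}_{r_j}(\mathbf{e})=\Theta_{r_j}\mathbf{e}$, with $\Theta_{r_j}=\operatorname{diag}\bigl(\mathbf{G}(\theta_{r_j,1}),\ldots,\mathbf{G}(\theta_{r_j,d/2})\bigr)$. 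Under this reduction, a target tail is perfectly scored whenever it coincides with the image of the head under the corresponding rotation, so the three hypotheses $(e_1,r_2,e_2)$, $(e_2,r_3,e_3)$, and the intended consequence $(e_1,r_1,e_3)$ become
\begin{equation}
\mathbf{e}_2=\Theta_{r_2}\mathbf{e}_1,\quad \mathbf{e}_3=\Theta_{r_3}\mathbf{e}_2,\quad \mathbf{e}_3=\Theta_{r_1}\mathbf{e}_1.
\end{equation}

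Composing the first two yields $\mathbf{e}_3=\Theta_{r_3}\Theta_{r_2}\mathbf{e}_1$, so a sufficient condition for the composition pattern to be satisfied for all embeddings is $\Theta_{r_1}=\Theta_{r_3}\Theta_{r_2}$. The key observation I would then invoke is that the space of block-diagonal Givens rotations is closed under multiplication and that each $2\times 2$ block behaves like a planar rotation, so $\mathbf{G}(\alpha)\mathbf{G}(\beta)=\mathbf{G}(\alpha+\beta)$. Consequently, the matrix identity decouples into the scalar identities $\theta_{r_1,i}=\theta_{r_2,i}+\theta_{r_3,i}$ (mod $2\pi$) for $i=1,\ldots,d/2$, exhibiting an explicit parameter choice witnessing the claim.

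The main obstacle I foresee is not the algebra itself but the care needed around the spanning step: unlike the earlier lemmas, composition chains two transformations, and I must ensure that the intermediate entity $\mathbf{e}_2$ is treated as a point rather than being required to lie inside an intermediate query box of nonzero width. Setting $\boldsymbol\delta_{r_j}=\mathbf{0}$ is what keeps $\tau(\mathbf{\boldsymbol\delta}_{r_j})$ degenerate and collapses the box $\mathcal{B}_{r_j}(\mathbf{e})$ to the singleton $\{\Theta_{r_j}\mathbf{e}\}$, at which point the point-to-box distance vanishes iff the tail equals that image. A brief remark should verify that in this degenerate regime the \emph{softplus} approximation used during training does not spoil the exact equality, but for the purpose of expressing (i.e., admitting a zero-loss model of) the composition pattern the construction above is sufficient, mirroring the style of Lemmas~B.1--B.3.
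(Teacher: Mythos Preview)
Your proposal is correct and follows essentially the same approach as the paper: set $\boldsymbol\delta_{r_j}=\mathbf{0}$ and $\mathbf{b}_{r_j}=\mathbf{0}$ so the model collapses to the Givens rotations $\Theta_{r_j}$, then use additivity of the block angles to realize $\Theta_{r_1}=\Theta_{r_3}\Theta_{r_2}$ via $\theta_{r_1,i}=\theta_{r_2,i}+\theta_{r_3,i}\pmod{2\pi}$. Your extra remarks about the degenerate box and the softplus are more cautious than the paper, which simply states the angle condition without further comment.
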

\begin{proof}
If $r_1$ is composed of $r_2$ and $r_3$, by taking the $\mathbf{\boldsymbol\delta}_r=\mathbf{0}$, $\mathbf{b}_r=\mathbf{0}$, and $\mathbf{\Theta}_r=\operatorname{diag}\left(\mathbf{G}\left(\mathbf{\theta}_{r, 1}\right), \ldots, \mathbf{G}\left(\mathbf{\theta}_{r, \frac{d}{2}}\right)\right)$, where $\mathbf{G}(\theta)$ is a $2\times2$ diagonal matrix, we have
\begin{equation}
\begin{split}
    &\mathbf{e}_{3} = f_{r_1}\left(\mathbf{e}_{1}\right) =  \Theta_{r_1} \mathbf{e}_{1}, \ 
   \mathbf{e}_{2} = f_{r_2}\left(\mathbf{e}_{1}\right) =  \Theta_{r_2} \mathbf{e}_{1}, \\
   &\mathbf{e}_{3} = f_{r_3}\left(\mathbf{e}_{2}\right) =  \Theta_{r_3} \mathbf{e}_{2} \
   \Rightarrow \Theta_{r_1} = \Theta_{r_2}\Theta_{r_3} \nonumber
\end{split}
\end{equation}
which holds true when $\theta_{r_1,i}=\theta_{r_2,i}+\theta_{r_3,i}$ or $\theta_{r_1,i}=\theta_{r_2,i}+\theta_{r_3,i}+2\pi$ or $\theta_{r_1,i}=\theta_{r_2,i}+\theta_{r_3,i}-2\pi$ for $i =  1,\cdots,\frac{d}{2}$.
\end{proof}

We now prove that ShrinkE is able to infer relation implication, exclusion and intersection.

\begin{lemma}[Relation implication]
Let $r_1 \rightarrow r_2$ form a hierarchy such that for each triple $(e_h, r_1, e_t)$, $(e_h, r_2, e_t)$ also holds. This hierarchy property $r_1 \rightarrow r_2$ can be modeled by ShrinkE.
\end{lemma}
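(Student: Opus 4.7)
The plan is to show that the relation implication $r_1 \rightarrow r_2$ can be realized in ShrinkE by setting the parameters of the two primal relations so that, for any head embedding $\mathbf{e}_h$, the query box induced by $r_1$ is contained in the query box induced by $r_2$. This reduces the lemma to a purely geometric containment statement between two relation-specific boxes, in complete analogy with how subsumption has been modeled with box embeddings earlier in the thesis.

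First I would translate the semantic condition into a geometric one. Since $\phi(h,r,t)$ is small iff $\mathbf{e}_t \in \mathcal{B}_r(\mathbf{e}_h)$, the implication $(e_h,r_1,e_t)\Rightarrow(e_h,r_2,e_t)$ is guaranteed whenever
\begin{equation}
\mathcal{B}_{r_1}(\mathbf{e}_h)\;\subseteq\;\mathcal{B}_{r_2}(\mathbf{e}_h)\qquad\text{for every }\mathbf{e}_h\in\mathbb{R}^d.
\end{equation}
Recall that $\mathcal{B}_r(\mathbf{e}_h)=\Box\bigl(\mathcal{H}_r(\mathbf{e}_h)-\tau(\boldsymbol\delta_r),\;\mathcal{H}_r(\mathbf{e}_h)+\tau(\boldsymbol\delta_r)\bigr)$ with $\mathcal{H}_r(\mathbf{e}_h)=\Theta_r\mathbf{e}_h+\mathbf{b}_r$.

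Next I would give an explicit parameter choice: take $\Theta_{r_1}=\Theta_{r_2}$ and $\mathbf{b}_{r_1}=\mathbf{b}_{r_2}$, so that $\mathcal{H}_{r_1}(\mathbf{e}_h)=\mathcal{H}_{r_2}(\mathbf{e}_h)$ and the two boxes share the same center for every $\mathbf{e}_h$. Then impose $\boldsymbol\delta_{r_1}\leq\boldsymbol\delta_{r_2}$ componentwise. Since $\tau_t(\cdot)=t\log(1+\exp(\cdot/t))$ is strictly monotonically increasing, this gives $\tau(\boldsymbol\delta_{r_1})\leq\tau(\boldsymbol\delta_{r_2})$ componentwise, hence
\begin{equation}
\mathcal{H}_{r_2}(\mathbf{e}_h)-\tau(\boldsymbol\delta_{r_2})\;\leq\;\mathcal{H}_{r_1}(\mathbf{e}_h)-\tau(\boldsymbol\delta_{r_1}),\quad \mathcal{H}_{r_1}(\mathbf{e}_h)+\tau(\boldsymbol\delta_{r_1})\;\leq\;\mathcal{H}_{r_2}(\mathbf{e}_h)+\tau(\boldsymbol\delta_{r_2}),
\end{equation}
which is exactly the coordinatewise condition for $\mathcal{B}_{r_1}(\mathbf{e}_h)\subseteq\mathcal{B}_{r_2}(\mathbf{e}_h)$. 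Consequently every tail point that lies in the $r_1$-box also lies in the $r_2$-box, so the scoring function satisfies $\phi(e_h,r_2,e_t)\leq\phi(e_h,r_1,e_t)$ in the same spirit as the monotonicity argument in Proposition~\ref{prop:qualifier_monotonicity}, and $(e_h,r_1,e_t)$ being true forces $(e_h,r_2,e_t)$ to be true as well.

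The step I expect to require a little care is the claim that the same parameter pattern is \emph{sufficient} to capture the implication \emph{uniformly} in the head entity, rather than only for specific pairs $(e_h,e_t)$ seen in training. This is why I keep the point transformation $\mathcal{H}_r$ identical across $r_1$ and $r_2$: any mismatch between $\Theta_{r_1},\mathbf{b}_{r_1}$ and $\Theta_{r_2},\mathbf{b}_{r_2}$ would translate the two box centers differently as $\mathbf{e}_h$ varies, and for sufficiently large $\|\mathbf{e}_h\|$ one would inevitably leave the other box. Locking the affine part and only relaxing the offset vector therefore yields the cleanest, head-independent certificate for implication, and also makes it clear that the expressive budget of ShrinkE (one rotation, one translation, and a spanning offset per relation) is exactly what is needed.
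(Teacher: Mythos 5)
Your proof is correct and follows essentially the same route as the paper: fix the point transformation $\mathcal{H}_{r_1}=\mathcal{H}_{r_2}$ so the two query boxes share a center for every head, and use $\boldsymbol\delta_{r_1}\leq\boldsymbol\delta_{r_2}$ (with monotonicity of the softplus) to get coordinatewise box containment, hence the implication. If anything, your write-up is cleaner than the paper's, which states the offset condition slightly inconsistently before arriving at the same containment criterion.
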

\begin{proof}
If $r_1 \rightarrow r_2$, by taking $\mathcal{T}_{r_1} =\mathcal{T}_{r_2}$, i.e.,  $\boldsymbol\delta_{r_1}=\boldsymbol\delta_{r_2}$ and $\Theta_{r_1}=\Theta_{r_2}$, we have,
$(e_h, r_1, e_t) \rightarrow (e_h, r_2, e_t)$ implies that the spanning box of query $(e_h, r_1, x?)$ is subsumed by the spanning box of query $(e_h, r_2, x?)$. i.e., 
$\Box( \mathcal{H}_{r_1}(e_h)-\sigma(\boldsymbol\delta_{r_1}), \mathcal{H}_{r_1}(e_h)+\sigma(\boldsymbol\delta_{r_1}) ) 
\subseteq \Box( \mathcal{H}_{r_1}(e_h)-\sigma(\boldsymbol\delta_{r_2}), \mathcal{H}_{r_1}(e_h)+\sigma(\boldsymbol\delta_{r_2}))$,
which holds true when $\boldsymbol\delta_{r_1} \leq  \boldsymbol\delta_{r_2}$.
\label{lem:hierarchy}
\end{proof}

\begin{lemma}[Relation exclusion]
Let $r_1, r_2$ be mutually exclusive, that is, $(e_h, r_1, e_t)$, $(e_h, r_2, e_t)$ can not be simultaneously hold. This mutual exclusion property $r_1 \wedge r_2 \rightarrow \bot$ can be modeled by ShrinkE.
\end{lemma}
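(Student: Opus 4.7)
The plan is to exhibit a concrete parameter setting that forces the two query boxes associated with $r_1$ and $r_2$ to be spatially disjoint for every head entity, and then to argue that spatial disjointness of the query boxes rules out any common tail entity. The key observation is that, as in Eq.~(\ref{eq:span}), for any head $e_h$ the query box of $(e_h, r_i, x?)$ is axis-parallel with center $\mathcal{H}_{r_i}(\mathbf{e}_h)$ and half-widths $\tau(\boldsymbol\delta_{r_i})$; hence two such boxes are disjoint whenever there exists some coordinate $i$ for which the gap between their centers exceeds the sum of their half-widths in that coordinate.

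First, I would specialize the rotation blocks so that they do not interfere: take $\Theta_{r_1}=\Theta_{r_2}=\mathbf{I}$, so that $\mathcal{H}_{r_j}(\mathbf{e}_h)=\mathbf{e}_h+\mathbf{b}_{r_j}$. Then pick any coordinate $i^\star$ and choose the translation vectors $\mathbf{b}_{r_1},\mathbf{b}_{r_2}$ together with the offset vectors $\boldsymbol\delta_{r_1},\boldsymbol\delta_{r_2}$ such that
\[
\bigl|\,b_{r_1,i^\star}-b_{r_2,i^\star}\,\bigr| \;>\; \tau(\delta_{r_1,i^\star})+\tau(\delta_{r_2,i^\star}).
\]
This condition depends only on relation parameters, not on $\mathbf{e}_h$, because the shift by $\mathbf{e}_h$ cancels when subtracting the two centers coordinate-wise. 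Consequently, for every head entity the two query boxes are disjoint in coordinate $i^\star$, hence disjoint as sets.

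Next, I would invoke the semantics of the scoring function. By construction, any shrunken qualifier box is a subset of the primal query box, so the final target box in $\Box_{\mathcal Q}$ lies inside the primal one. Therefore, if a tail embedding $\mathbf{e}_t$ witnessed both $(e_h,r_1,e_t)$ and $(e_h,r_2,e_t)$, it would lie inside both primal query boxes, contradicting their disjointness. Thus under the chosen parameters no common tail can exist, which is exactly the semantics of $r_1\wedge r_2\to\bot$.

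The only delicate point will be checking that the two intended corner inequalities $\mathbf{M}_{1,i^\star}<\mathbf{m}_{2,i^\star}$ (or its symmetric counterpart) are simultaneously realizable given that the softplus map $\tau$ is strictly positive: since $\tau$ is bounded away from $0$ but still finite, one simply chooses $\mathbf{b}_{r_1}$ and $\mathbf{b}_{r_2}$ with a large enough gap in coordinate $i^\star$ to dominate $\tau(\delta_{r_1,i^\star})+\tau(\delta_{r_2,i^\star})$. I do not anticipate any substantive obstacle beyond this bookkeeping; the argument parallels the treatment of disjointness used implicitly in the monotonicity proof and in the Shrinking construction.
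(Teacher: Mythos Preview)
Your proposal is correct and follows essentially the same approach as the paper: model $r_1\wedge r_2\to\bot$ by forcing the two primal query boxes to be spatially disjoint for every head entity. The paper's own proof is little more than a one-line assertion that the spanning boxes of $(e_h,r_1,x?)$ and $(e_h,r_2,x?)$ are mutually exclusive; your version is in fact more explicit, since you fix $\Theta_{r_1}=\Theta_{r_2}=\mathbf{I}$ and exhibit the coordinate-wise separation condition on the translation biases that makes the disjointness hold uniformly in $\mathbf{e}_h$.
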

\begin{proof}
If $r_1 \wedge r_2 \rightarrow \bot$, we have
$(e_h, r_1, e_t) \wedge (e_h, r_2, e_t) \rightarrow \bot$, which implies that the spanning box of query $(e_h, r_1, x?)$ and the spanning box of query $(e_h, r_2, x?)$ are mutually exclusive, i.e., 
$\Box( \mathcal{H}_{r_1}(e_h)-\sigma(\boldsymbol\delta_{r_1}), \mathcal{H}_{r_1}(e_h)+\sigma(\boldsymbol\delta_{r_1}) ) 
\cap \Box( \mathcal{H}_{r_1}(e_h)-\sigma(\boldsymbol\delta_{r_2}), \mathcal{H}_{r_1}(e_h)+\sigma(\boldsymbol\delta_{r_2})) \rightarrow \bot$
\end{proof}

\begin{lemma}[Relation intersection]
Let $r_3$ be a intersection of $r_1, r_2$, that is, if $(e_h, r_1, e_t)$ and $(e_h, r_2, e_t)$ hold, then  $(e_h, r_3, e_t)$ also holds. This intersection property $r_1 \wedge r_2 \rightarrow r_3$ can be modeled by ShrinkE.
\end{lemma}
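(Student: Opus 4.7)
The plan is to mirror the strategy used for relation implication (Lemma~B.5), but apply it twice: once for each of the two premise relations $r_1$ and $r_2$. Concretely, I will exhibit an explicit parameterization of $r_3$ (in terms of $r_1,r_2$) such that whenever $\mathbf{e}_t$ lies in both the $r_1$-query box and the $r_2$-query box centered around $e_h$, it is forced to lie in the $r_3$-query box as well.

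First, I would couple the point-to-point components by choosing $\mathcal{H}_{r_3} = \mathcal{H}_{r_1} = \mathcal{H}_{r_2}$, i.e.\ taking the same rotation matrix $\Theta_{r_1}=\Theta_{r_2}=\Theta_{r_3}$ and the same translation $\mathbf{b}_{r_1}=\mathbf{b}_{r_2}=\mathbf{b}_{r_3}$. Under this choice, the three boxes $\mathcal{B}_{r_1}(\mathbf{e}_h), \mathcal{B}_{r_2}(\mathbf{e}_h), \mathcal{B}_{r_3}(\mathbf{e}_h)$ are all axis-aligned and share the same center $\mathbf{c} := \mathcal{H}(\mathbf{e}_h)$, differing only by their half-side-length vectors $\tau(\boldsymbol\delta_{r_1}), \tau(\boldsymbol\delta_{r_2}), \tau(\boldsymbol\delta_{r_3})$ obtained via the softplus spanning in Eq.~(\ref{eq:span}).

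Second, I would invoke a simple geometric fact about axis-aligned boxes sharing a center: their intersection is again an axis-aligned box with the same center whose half-side-lengths are the component-wise minima, namely
\begin{equation}
\mathcal{B}_{r_1}(\mathbf{e}_h) \cap \mathcal{B}_{r_2}(\mathbf{e}_h) = \Box\!\left(\mathbf{c}-\min(\tau(\boldsymbol\delta_{r_1}),\tau(\boldsymbol\delta_{r_2})),\; \mathbf{c}+\min(\tau(\boldsymbol\delta_{r_1}),\tau(\boldsymbol\delta_{r_2}))\right),
\end{equation}
with the minimum taken element-wise. Thus $(e_h,r_1,e_t) \wedge (e_h,r_2,e_t)$ implies $\mathbf{e}_t$ lies in this intersection box. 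Picking $\boldsymbol\delta_{r_3}$ so that $\tau(\boldsymbol\delta_{r_3,i}) \ge \min(\tau(\boldsymbol\delta_{r_1,i}),\tau(\boldsymbol\delta_{r_2,i}))$ for every coordinate $i$ then guarantees $\mathbf{e}_t \in \mathcal{B}_{r_3}(\mathbf{e}_h)$, which encodes $(e_h,r_3,e_t)$. Such $\boldsymbol\delta_{r_3}$ always exists because the softplus $\tau$ is continuous and strictly increasing with range $(0,\infty)$, so any real value large enough works (e.g.\ $\boldsymbol\delta_{r_3}=\max(\boldsymbol\delta_{r_1},\boldsymbol\delta_{r_2})$ component-wise already suffices since it yields $\tau(\boldsymbol\delta_{r_3,i}) \ge \max(\tau(\boldsymbol\delta_{r_1,i}),\tau(\boldsymbol\delta_{r_2,i}))$, which is at least the required minimum).

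The only subtle point I anticipate is making sure the argument handles the softplus spanning correctly rather than reasoning directly on the raw offsets $\boldsymbol\delta$: since $\tau$ is monotone, inequalities between $\boldsymbol\delta$-vectors translate faithfully to inequalities between $\tau(\boldsymbol\delta)$-vectors, so the coupling above is well-defined. With these three steps in place, the lemma follows in exactly the same style as Lemma~B.5, and it fits into the broader monotonicity picture established by Proposition~1, since enlarging the box (from the intersection up to $\mathcal{B}_{r_3}$) can only weaken the constraint and thus preserve truth.
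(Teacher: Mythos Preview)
Your proposal is correct and follows essentially the same approach as the paper. The paper's own proof is a one-line remark that boxes are closed under intersection and that the property reduces to two instances of relation implication (Lemma~B.5); your write-up simply unpacks this in detail by fixing a common center $\mathcal{H}_{r_1}=\mathcal{H}_{r_2}=\mathcal{H}_{r_3}$ and choosing the spanning offset for $r_3$ large enough, with your concrete choice $\boldsymbol\delta_{r_3}=\max(\boldsymbol\delta_{r_1},\boldsymbol\delta_{r_2})$ in fact realizing both implications $r_1\rightarrow r_3$ and $r_2\rightarrow r_3$ simultaneously.
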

\begin{proof}
Note that box is closed under intersection and this property can be view as a combination of two pairs of relation implication. Hence, the proof is similar to the proof of Lemma \ref{lem:hierarchy}. 
\end{proof}

\begin{proposition}
ShrinkE is able to infer qualifier implication, mutual exclusion, and intersection.
\end{proposition}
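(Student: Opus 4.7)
The plan is to reduce each qualifier-level inference pattern to a purely geometric relation between the shrinking boxes $\mathcal{S}_{r,k,v}(\Box)$ produced by ShrinkE, and then exhibit parameter settings (i.e., shrinking vectors $\mathbf{s}_{r,k,v}, \mathbf{S}_{r,k,v}$) that realize it. The key observation is that for any hyper-relational query, the answer set is encoded by the intersection $\bigcap_{q_i \in \mathcal{Q}} \mathcal{S}_{r,q_i}(\Box)$, and that boxes are closed under intersection. Hence the truth value is monotonic in this intersection box, which lets us translate \emph{set-theoretic} relations between $\mathcal{S}_{r,q_i}(\Box)$ and $\mathcal{S}_{r,q_j}(\Box)$ into \emph{logical} relations between $q_i$ and $q_j$.

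First I would prove the implication case $q_i \rightarrow q_j$. Let $\alpha_i = \sigma(\mathbf{s}_{r,k_i,v_i})$ and $\beta_i = \sigma(\mathbf{S}_{r,k_i,v_i})$ be the two shrinking coefficients induced by $q_i$, and similarly for $q_j$. Choosing $\mathbf{s}_{r,k_i,v_i}, \mathbf{S}_{r,k_i,v_i}$ so that $\alpha_i \geq \alpha_j$ and $\beta_i \geq \beta_j$ componentwise gives $\mathcal{S}_{r,q_i}(\Box) \subseteq \mathcal{S}_{r,q_j}(\Box)$ for every primal box $\Box$ (the inclusion is invariant under the side-length rescaling $\mathbf{L}$). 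Consequently, adding $q_j$ to any qualifier set that already contains $q_i$ does not alter the intersection box, so the truth value of the fact is preserved, which is exactly the definition of qualifier implication.

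Next I would handle the mutual exclusion case $q_i \wedge q_j \rightarrow \bot$. Here the required geometric relation is $\mathcal{S}_{r,q_i}(\Box) \cap \mathcal{S}_{r,q_j}(\Box) = \emptyset$, which can be enforced by choosing the shrinking vectors so that, along at least one coordinate, the lower-corner shrink induced by $q_j$ exceeds the upper-corner position induced by $q_i$ (e.g.\ $\alpha_{j,\ell} + \beta_{i,\ell} > 1$ for some dimension $\ell$). Any tail point would then lie outside the intersection, making the fact false via the point-to-box distance, which is the definition of qualifier exclusion. Finally, for intersection $q_i \wedge q_j \rightarrow q_k$, I would use closure of boxes under intersection: setting $\alpha_k = \max(\alpha_i, \alpha_j)$ and $\beta_k = \max(\beta_i, \beta_j)$ componentwise yields $\mathcal{S}_{r,q_k}(\Box) = \mathcal{S}_{r,q_i}(\Box) \cap \mathcal{S}_{r,q_j}(\Box)$ for every $\Box$, so replacing $\{q_i,q_j\}$ with $q_k$ leaves the answer-set box unchanged.

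The main obstacle I anticipate is the \emph{uniformity} requirement embedded in the qualifier-level definitions: the pattern must hold for \emph{every} primal triple $\mathcal{T}$ and every surrounding qualifier set $\mathcal{Q}$, not just for a chosen fact. Since the shrinking vectors in ShrinkE depend on $(r,k,v)$ but not on $\mathbf{e}_h$ or on other qualifiers, and since the shrinking operation is defined proportionally to the current side length $\mathbf{L}$, the required set-theoretic relations between $\mathcal{S}_{r,q_i}(\Box)$ and $\mathcal{S}_{r,q_j}(\Box)$ transfer from one box $\Box$ to any other. Spelling this invariance out carefully is the technical core of the proof; once established, the three patterns follow directly from the three geometric constructions above.
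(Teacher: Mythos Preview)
Your proposal is correct and follows essentially the same approach as the paper: reduce each qualifier-level pattern to a geometric relation between the shrinking boxes (containment for implication, disjointness for exclusion, intersection-containment for intersection). The paper's own proof is only a two-sentence sketch stating exactly this correspondence; you go considerably further by exhibiting explicit shrinking parameters that realize each relation and by addressing the uniformity over all primal boxes, which the paper leaves implicit.
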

\begin{proof}
Since each qualifier is associated with a box, the implication and mutual exclusion relationships between qualifiers can be modeled by their geometric relationships, i.e., box entailment and box disjointedness, respectively, between their corresponding boxes.  
Qualifier intersection can be modeled by enforcing the box of one qualifier to be inside the intersection of the boxes of another two qualifiers.  
\end{proof}

\section{Theoretical Justifications of NestE}

We extend the vanilla logical patterns in KGs to include nested facts. 
This can be expressed in a non-first-order-logic-like form $\psi \stackrel{\widehat{r}}{\rightarrow}\phi$.

\begin{itemize}
    \item \textbf{Relational symmetry (R-symmetry):} an atomic relation $r$ is symmetric w.r.t a nested relation $\widehat{r}$ if $\forall x,y \in \mathcal{E},  \langle x, r, y\rangle  \stackrel{\widehat{r}}{\leftrightarrow} \langle y, r, x\rangle$. 

    \item \textbf{Relational inverse (R-inverse):} two atomic relations $r_1$ and $r_2$ are inverse w.r.t a nested relation $\widehat{r}$ if $\forall x,y \in \mathcal{E}, $ $(\langle x, r_1, y\rangle \stackrel{\widehat{r}}{\leftrightarrow} \langle y, r_2, x\rangle)$.

    \item \textbf{Relational implication (R-implication):} an atomic relation $r_1$ implies a atomic relation $r_2$ w.r.t a nested relation $\widehat{r}$ if $\forall x,y \in \mathcal{E}, (\langle x, r_1, y\rangle \stackrel{\widehat{r}}{\rightarrow} \langle x, r_2, y\rangle)$.

    \item \textbf{Relational inverse implication (R-Inv-implication):} an atomic relation $r_1$ inversely implies an atomic relation $r_2$ w.r.t a nested relation $\widehat{r}$ if $\forall x,y \in \mathcal{E}, (\langle x, r_1, y\rangle \stackrel{\widehat{r}}{\rightarrow} \langle y, r_2, x\rangle)$.

    \item \textbf{Entity implication (E-implication):} an entity $x_1$ (resp. $y_1$) implies entity $x_2$ (resp. $y_2$) w.r.t an atomic relation $r$ and a nested relation $\widehat{r}$ if  $\forall y \in \mathcal{E}, (\langle x_1, r, y\rangle \stackrel{\widehat{r}}{\rightarrow} \langle x_2, r, y\rangle)$ (resp. $\forall x \in \mathcal{E}, (\langle x, r, y_1\rangle \stackrel{\widehat{r}}{\rightarrow} \langle x, r, y_2\rangle)$ ).

    \item \textbf{Entity relational implication (E-R-implication):} an entity $x_1$ and relation $r_1$ (resp. $y_1$ and relation $r_1$) implies entity $x_2$ and relation $r_2$ (resp. $y_2$ and relation $r_2$) 
    w.r.t a nested relation $\widehat{r}$ if  $\forall y \in \mathcal{E}, (\langle x_1, r_1, y\rangle \stackrel{\widehat{r}}{\rightarrow} \langle x_2, r_2, y\rangle)$ (resp. $\forall x \in \mathcal{E}, (\langle x, r_1, y_1\rangle \stackrel{\widehat{r}}{\rightarrow} \langle x, r_2, y_2\rangle)$).

    \item \textbf{Entity relational inverse implication (E-R-Inv-implication):}  an entity $x_1$ and relation $r_1$ (resp. $y_1$ and relation $r_1$) inversely implies entity $x_2$ and relation $r_2$ (resp. $y_2$ and relation $r_2$) w.r.t a nested relation $\widehat{r}$ if  $\forall y \in \mathcal{E}, (\langle x_1, r_1, y\rangle \stackrel{\widehat{r}}{\rightarrow} \langle y, r_2, x_2\rangle)$ (resp. $\forall x \in \mathcal{E}, (\langle x, r_1, y_1\rangle \stackrel{\widehat{r}}{\rightarrow} \langle y_2, r_2, x\rangle)$).
     \item \textbf{Dual Entity implication (Dual E-implication):} an entity pair ($x_1$, $x_2$) implies another entity pair ($y_1$, $y_2$)  iff both ($x_1$, $y_1$) and ($x_2$, $y_2$) satisfy E-implication. 

\end{itemize}

\begin{proposition}
\label{prop:facte}
    NestE can infer R-symmetry, R-inverse, R-implication, R-Inv-implication, E-implication, E-R-implication, E-R-Inv-implication, and Dual E-implication.
\end{proposition}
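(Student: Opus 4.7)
The plan is to verify each of the eight patterns by exhibiting, for each pattern, an explicit choice of the nested-relation parameters $(\widehat{\mathbf{r}}_\theta, \widehat{\mathbf{r}}_b)$ under which the matrix-like Hamilton product $\otimes_{3\times 3}$ transforms the head-fact embedding $\mathbf{T}_i$ exactly into the tail-fact embedding $\mathbf{T}_j$ demanded by the right-hand side of the implication. Since the score $\rho(T_i,\widehat{r},T_j)=\langle \mathbf{T}_i^\prime, \mathbf{T}_j\rangle$ is maximized when $\mathbf{T}_i^\prime=\mathbf{T}_j$, this suffices to show that the pattern is expressible. I will take $\widehat{\mathbf{r}}_b=\mathbf{0}$ throughout, because every one of the eight patterns is a structural rearrangement or rotation of the entries of the primal triple and does not require a translation component.

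The guiding observation is that the $3\times 3$ hypercomplex matrix $\widehat{\mathbf{r}}_\theta$ acts like a generalized permutation-rotation on the row vector $[\mathbf{h},\mathbf{r},\mathbf{t}]$. Two template shapes will be sufficient: a \emph{diagonal} matrix $\operatorname{diag}(A,B,C)$, which sends $[\mathbf{h},\mathbf{r},\mathbf{t}]$ to $[\mathbf{h}\otimes A,\mathbf{r}\otimes B,\mathbf{t}\otimes C]$, and an \emph{anti-diagonal} matrix with entries $(A',B,C')$ on the anti-diagonal, which sends $[\mathbf{h},\mathbf{r},\mathbf{t}]$ to $[\mathbf{t}\otimes A',\mathbf{r}\otimes B,\mathbf{h}\otimes C']$. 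By instantiating each of $A,B,C$ (or $A',B,C'$) either as the hypercomplex unit $\mathbf{1}=1+0i+0j+0k$, which acts as the two-sided identity of $\otimes$ in $\mathcal{Q}$, $\mathcal{H}$, and $\mathcal{S}$ alike, or as a specific unit-norm hypercomplex number that rotates one prescribed vector into another, each of the eight patterns becomes an explicit matrix construction. For example, R-symmetry uses the anti-diagonal with all entries equal to $\mathbf{1}$; R-implication uses the diagonal with $A=C=\mathbf{1}$ and $B$ chosen so that $\mathbf{r}_1\otimes B=\mathbf{r}_2$; R-Inv-implication uses the anti-diagonal with $A'=C'=\mathbf{1}$ and $B$ chosen so that $\mathbf{r}_1\otimes B=\mathbf{r}_2$; E-implication uses the diagonal with $B=C=\mathbf{1}$ and $A$ chosen so that $\mathbf{x}_1\otimes A=\mathbf{x}_2$; Dual E-implication combines two such entity rotations on the outer diagonal entries with $B=\mathbf{1}$; and E-R-(Inv-)implication combines an entity-rotation entry in the head (or tail) slot with a relation-rotation entry in the middle slot, using a diagonal or anti-diagonal template respectively.

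The main obstacle is handling the universally quantified free variable ($y$ or $x$) that appears in patterns such as E-implication, E-R-implication, and Dual E-implication: for these, a single $\widehat{\mathbf{r}}_\theta$ must be valid for \emph{every} instantiation of the free variable at once. This forces the slot of $\widehat{\mathbf{r}}_\theta$ acting on the free coordinate to be the hypercomplex unit $\mathbf{1}$, since $\mathbf{1}$ is the only element whose Hamilton product acts as the identity on every hypercomplex vector. The key lemma to establish is therefore: for every pair of unit-norm hypercomplex vectors $\mathbf{u},\mathbf{v}\in\mathbb{R}^{4d}$ and every choice of hypercomplex system $\mathcal{Q}$, $\mathcal{H}$, $\mathcal{S}$, there exists a unit-norm hypercomplex element $A$ with $\mathbf{u}\otimes A=\mathbf{v}$. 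In the Euclidean quaternionic case this is standard (left-multiplication by $A=\mathbf{u}^{-1}\otimes\mathbf{v}$ suffices); in the split and hyperbolic cases, the same algebraic identity goes through on the open dense set where $\mathbf{u}$ is invertible, and one can always normalize entities to lie in this set. Establishing invertibility in the non-Euclidean systems is the only subtlety, and I expect to handle it by noting that the relation embeddings used to instantiate $A$ are free parameters that we may place anywhere in the invertible locus.

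The full proof will then proceed as a pattern-by-pattern table: for each of the eight patterns I specify the template (diagonal or anti-diagonal), the three entries $A,B,C$ or $A',B,C'$, and verify by direct computation using the formula for $\otimes_{3\times 3}$ that the row-vector product $[\mathbf{h},\mathbf{r},\mathbf{t}]\cdot \widehat{\mathbf{r}}_\theta$ equals the prescribed target triple. This yields $\mathbf{T}_i^\prime=\mathbf{T}_j$ and therefore $\rho(T_i,\widehat{r},T_j)=\lVert\mathbf{T}_j\rVert^2$ attains its maximum whenever the left-hand side of the implication is instantiated. Because the argument depends only on the existence of the multiplicative identity $\mathbf{1}$ and on the solvability of $\mathbf{u}\otimes A=\mathbf{v}$, the proof is uniform across the three hypercomplex systems $\mathcal{Q},\mathcal{H},\mathcal{S}$, which gives Proposition~\ref{prop:facte} in its full generality.
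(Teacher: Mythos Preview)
Your proposal is correct and follows essentially the same route as the paper: exhibit, pattern by pattern, a $3\times 3$ nested-relation matrix with zeros off a diagonal or anti-diagonal template, set the slots touching free variables to the hypercomplex identity $\mathbf{1}$, and solve $\mathbf{u}\otimes A=\mathbf{v}$ for the remaining entries. The paper's appendix does exactly this, listing the explicit matrices for R-symmetry, R-inverse, R-implication, R-Inv-implication, and E-implication with the same $\mathbf{0}/\mathbf{1}$ structure you describe.

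Two small points of comparison. First, for E-implication the paper actually allows a $2\times 2$ block in the upper-left rather than a single diagonal entry, so that the fixed relation $\mathbf{r}$ can contribute to mapping $\mathbf{x}_1\mapsto\mathbf{x}_2$; your purely diagonal choice is a special case of this and already suffices. Second, you are more careful than the paper about the solvability of $\mathbf{u}\otimes A=\mathbf{v}$ in the hyperbolic and split systems, where zero-divisors exist; the paper simply writes the constraint $\mathbf{r}_1\otimes\mathbf{R}_{22}=\mathbf{r}_2$ without comment, so your remark that one can restrict embeddings to the invertible locus is a genuine (if minor) strengthening.
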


To infer different logical patterns via different free variables, we can set some elements of the relation matrix to be zero-valued or one-valued complex numbers. 
We prove proposition \ref{prop:facte} by showing the following special solutions for each case of pattern. 

\begin{proposition}
    Relational symmetry can be inferred by setting
    \begin{equation}
        \mathbf{R} = 
         \left[ 
              {\begin{array}{ccc}
               \mathbf{0} & \mathbf{0} & \mathbf{1}\\
               \mathbf{0} & \mathbf{R}_{22} & \mathbf{0}\\
               \mathbf{1} & \mathbf{0} & \mathbf{0}\\
              \end{array} } 
        \right]
    \end{equation}  
    where $\mathbf{r} \otimes \mathbf{R}_{22} = \mathbf{r}$ implying $\mathbf{R}_{22}=\mathbf{1}$. 
\end{proposition}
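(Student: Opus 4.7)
My plan is to verify the proposition by direct computation: plug the prescribed rotation matrix $\mathbf{R}$ into the $3\times 3$ Hamilton matrix product $\otimes_{3\times 3}$ defined for nested fact embeddings, apply it to a generic atomic-fact matrix $\mathbf{T}_i=[\mathbf{h},\mathbf{r},\mathbf{t}]$, and check that the resulting matrix is $[\mathbf{t},\mathbf{r},\mathbf{h}]$, which is precisely the embedding of the symmetric triple $(y,r,x)$. This is a computation with a clear target, so the argument will be short once the bookkeeping is in place.

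\textbf{Key steps.} First, I will recall the general formula for the matrix-like Hamilton product: the $k$-th column of $\mathbf{T}_i\otimes_{3\times 3}\mathbf{R}$ is $\sum_{j=1}^{3}\mathbf{T}_{i,j}\otimes \mathbf{R}_{jk}$, where $\otimes$ is the component-wise hypercomplex Hamilton product. Second, I substitute the anti-diagonal block structure: the $(1,3)$ and $(3,1)$ entries are $\mathbf{1}$ (the multiplicative identity in the chosen 4D hypercomplex system, so that $\mathbf{x}\otimes\mathbf{1}=\mathbf{x}$ for all $\mathbf{x}$), the $(2,2)$ entry is $\mathbf{R}_{22}$, and all other entries are $\mathbf{0}$ (so that $\mathbf{x}\otimes\mathbf{0}=\mathbf{0}$). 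Evaluating the three columns gives
\begin{equation*}
\mathbf{T}_i\otimes_{3\times 3}\mathbf{R}
=[\,\mathbf{t}\otimes\mathbf{1},\; \mathbf{r}\otimes\mathbf{R}_{22},\; \mathbf{h}\otimes\mathbf{1}\,]
=[\,\mathbf{t},\; \mathbf{r}\otimes\mathbf{R}_{22},\; \mathbf{h}\,].
\end{equation*}
Third, for the transformed triple to equal the embedding $[\mathbf{t},\mathbf{r},\mathbf{h}]$ of $(y,r,x)$, I must force $\mathbf{r}\otimes\mathbf{R}_{22}=\mathbf{r}$ for every atomic relation embedding $\mathbf{r}$; since $\mathbf{1}$ is the unique two-sided identity of the Hamilton product in all three hypercomplex systems considered, this pins down $\mathbf{R}_{22}=\mathbf{1}$.

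\textbf{Conclusion via the scoring function.} Finally, I will close the argument by appealing to the scoring rule $\rho(T_i,\widehat r,T_j)=\langle \mathbf{T}_i',\mathbf{T}_j\rangle$. With the computed $\mathbf{T}_i'=[\mathbf{t},\mathbf{r},\mathbf{h}]$, the nested fact $(\langle x,r,y\rangle,\widehat r,\langle y,r,x\rangle)$ attains the same score as $\langle [\mathbf{t},\mathbf{r},\mathbf{h}],[\mathbf{t},\mathbf{r},\mathbf{h}]\rangle$, which is maximal; by symmetry of the construction, reversing the roles of $(x,y)$ gives the converse direction, establishing the biconditional $\langle x,r,y\rangle \stackrel{\widehat r}{\leftrightarrow}\langle y,r,x\rangle$ demanded by R-symmetry.

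\textbf{Expected obstacle.} The computation itself is mechanical; the only subtle point is justifying that $\mathbf{1}$ is indeed a two-sided multiplicative identity for the Hamilton product in each of the three systems (spherical quaternions, hyperbolic quaternions, split quaternions). This requires a brief check against the three multiplication tables from Eq.~(1)–(3), since the imaginary-unit squares differ across systems; however, the real unit $1$ commutes with every imaginary basis element by construction of all three algebras, so the identity property is preserved and $\mathbf{R}_{22}=\mathbf{1}$ works uniformly. Once this observation is recorded, the proposition follows.
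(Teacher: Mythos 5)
Your proposal is correct and follows essentially the same route as the paper: the paper's appendix simply exhibits the anti-diagonal matrix as a ``special solution'' and states the constraint $\mathbf{r}\otimes\mathbf{R}_{22}=\mathbf{r}$, while you supply the column-by-column evaluation of $\mathbf{T}_i\otimes_{3\times 3}\mathbf{R}=[\mathbf{t},\,\mathbf{r}\otimes\mathbf{R}_{22},\,\mathbf{h}]$ that the paper leaves implicit. Your added checks (that $\mathbf{1}$ is a two-sided identity in all three hypercomplex systems, and that the construction is symmetric under swapping $x$ and $y$ so the biconditional holds) are consistent with, and slightly more careful than, the paper's presentation.
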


\begin{proposition}
    Relational inversion can be inferred by setting
    \begin{equation}
        \mathbf{R} = 
         \left[ 
              {\begin{array}{ccc}
               \mathbf{0} & \mathbf{0} & \mathbf{1}\\
               \mathbf{0} & \mathbf{R}_{22} & \mathbf{0}\\
               \mathbf{1} & \mathbf{0} & \mathbf{0}\\
              \end{array} } 
        \right]
    \end{equation}  
    where $\mathbf{r}_1 \otimes \mathbf{R}_{22} = \mathbf{r}_2$ and $\mathbf{r}_2 \otimes \mathbf{R}_{22} = \mathbf{r}_1$ implying $\mathbf{R}_{22} = \mp 1$.

\end{proposition}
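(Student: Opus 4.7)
The plan is to verify the proposition by direct computation of the matrix-like Hamilton product $T_i \otimes_{3\times 3} \mathbf{R}$ using the definition given earlier in the paper, and then matching the result against the two triples required by the bidirectional R-inverse relationship.

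First I would instantiate the head triple matrix as $\mathbf{T}_i = [\mathbf{x}, \mathbf{r}_1, \mathbf{y}]$ and explicitly multiply by the given block-antidiagonal matrix $\mathbf{R}$. Using the definition of $\otimes_{3\times 3}$ from Section 5.2 (treating each hypercomplex entry via the Hamilton product $\otimes$), the three output components reduce to $\mathbf{x}\otimes\mathbf{0} + \mathbf{r}_1\otimes\mathbf{0} + \mathbf{y}\otimes\mathbf{1} = \mathbf{y}$, then $\mathbf{x}\otimes\mathbf{0} + \mathbf{r}_1\otimes\mathbf{R}_{22} + \mathbf{y}\otimes\mathbf{0} = \mathbf{r}_1\otimes\mathbf{R}_{22}$, and finally $\mathbf{x}\otimes\mathbf{1} + \mathbf{r}_1\otimes\mathbf{0} + \mathbf{y}\otimes\mathbf{0} = \mathbf{x}$. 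Thus $\mathbf{T}_i\otimes_{3\times 3}\mathbf{R} = [\mathbf{y}, \mathbf{r}_1\otimes\mathbf{R}_{22}, \mathbf{x}]$, which is exactly the head-tail swapped structure needed for inversion.

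Next I would enforce both directions of the R-inverse axiom. The forward direction $\langle x, r_1, y\rangle \stackrel{\widehat{r}}{\rightarrow} \langle y, r_2, x\rangle$ forces the middle entry to equal $\mathbf{r}_2$, giving $\mathbf{r}_1 \otimes \mathbf{R}_{22} = \mathbf{r}_2$. Applying the same computation to $\mathbf{T}_j = [\mathbf{y}, \mathbf{r}_2, \mathbf{x}]$ and invoking the reverse direction $\langle y, r_2, x\rangle \stackrel{\widehat{r}}{\rightarrow} \langle x, r_1, y\rangle$ yields $\mathbf{r}_2 \otimes \mathbf{R}_{22} = \mathbf{r}_1$. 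Composing the two constraints gives $\mathbf{r}_1 \otimes \mathbf{R}_{22}^{\,2} = \mathbf{r}_1$, so $\mathbf{R}_{22}^{\,2} = \mathbf{1}$, whose admissible scalar hypercomplex solutions are $\mathbf{R}_{22} = \pm \mathbf{1}$. I would also verify that with this choice the scoring function $\rho(T_i,\widehat{r},T_j) = \langle \mathbf{T}_i \otimes_{3\times 3} \mathbf{R}, \mathbf{T}_j\rangle$ is simultaneously maximized for the pair $(T_i, T_j)$ and its swap, confirming representational soundness.

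The main obstacle is not computational but conceptual: one must argue why the equation $\mathbf{R}_{22}^{\,2}=\mathbf{1}$ in the hypercomplex algebra should only admit the purely real solutions $\pm\mathbf{1}$, since in each of the three hypercomplex systems (quaternions, hyperbolic quaternions, split quaternions) there can exist non-real square roots of unity. I would resolve this by restricting to the unit-norm parameterization used for rotation factors in FactE (analogous to the normalization $\mathbf{r}_\theta / \|\mathbf{r}_\theta\|$ for atomic relations) and arguing that within the family of rotation-valued embeddings, the real pair $\pm\mathbf{1}$ is the canonical solution that guarantees the pattern holds for \emph{all} entity pairs $(x,y)$, which is the quantifier strength required by the R-inverse definition.
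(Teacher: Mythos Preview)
Your proposal is correct and follows essentially the same approach as the paper: exhibit the anti-diagonal block matrix for $\widehat{r}$ and verify via the matrix-like Hamilton product that it swaps head and tail while acting on the relation slot by $\mathbf{R}_{22}$. The paper's own justification is in fact far terser than yours---it simply states the matrix form and the two constraints $\mathbf{r}_1\otimes\mathbf{R}_{22}=\mathbf{r}_2$, $\mathbf{r}_2\otimes\mathbf{R}_{22}=\mathbf{r}_1$ as a ``special solution'' without spelling out the column-by-column computation or the composition argument that forces $\mathbf{R}_{22}^{2}=\mathbf{1}$; your write-up supplies those missing steps. Your closing caveat about non-real square roots of unity in the three hypercomplex algebras is a genuine subtlety the paper does not address, so you are being more careful than the original here rather than missing something.
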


\begin{proposition}
    Relational implication can be inferred by setting
    \begin{equation}
        \mathbf{R} = 
         \left[ 
              {\begin{array}{ccc}
               \mathbf{1} & \mathbf{0} & \mathbf{0}\\
               \mathbf{0} & \mathbf{R}_{22} & \mathbf{0}\\
               \mathbf{0} & \mathbf{0} & \mathbf{1}\\
              \end{array} } 
        \right]
    \end{equation}  
    where $\mathbf{r}_1 \otimes \mathbf{R}_{22} = \mathbf{r}_2$.
    
\end{proposition}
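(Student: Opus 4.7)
The plan is to verify the proposition by directly computing the matrix-like Hamilton product on the left-hand side of the implication and showing that it coincides with the representation of the right-hand side triple. Specifically, given an atomic triple $T_i=(x,r_1,y)$ with hypercomplex matrix representation $\mathbf{T}_i=[\mathbf{x},\mathbf{r}_1,\mathbf{y}]$, I would set the translation component $\mathbf{\widehat{r}}_b=\mathbf{0}$ (the neutral element of hypercomplex addition) so that the transformed representation reduces to $\mathbf{T}_i^\prime = \mathbf{T}_i \otimes_{3\times 3}\mathbf{R}$ with $\mathbf{R}$ as stated in the proposition.

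Next I would expand the product column by column, using the fact that Hamilton multiplication by $\mathbf{1}$ (the real-part-one hypercomplex unit) acts as the identity and that multiplication by $\mathbf{0}$ annihilates. For the chosen block-diagonal $\mathbf{R}$, the first column gives $\mathbf{x}\otimes\mathbf{1}+\mathbf{r}_1\otimes\mathbf{0}+\mathbf{y}\otimes\mathbf{0}=\mathbf{x}$, the second column gives $\mathbf{x}\otimes\mathbf{0}+\mathbf{r}_1\otimes\mathbf{R}_{22}+\mathbf{y}\otimes\mathbf{0}=\mathbf{r}_1\otimes\mathbf{R}_{22}$, and the third column gives $\mathbf{x}\otimes\mathbf{0}+\mathbf{r}_1\otimes\mathbf{0}+\mathbf{y}\otimes\mathbf{1}=\mathbf{y}$. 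Invoking the assumption $\mathbf{r}_1\otimes\mathbf{R}_{22}=\mathbf{r}_2$, the middle column collapses to $\mathbf{r}_2$, so that $\mathbf{T}_i^\prime=[\mathbf{x},\mathbf{r}_2,\mathbf{y}]=\mathbf{T}_j$, which is precisely the matrix representation of the implied triple $(x,r_2,y)$.

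Finally, I would plug this into the nested scoring function $\rho(T_i,\widehat r,T_j)=\langle \mathbf{T}_i^\prime,\mathbf{T}_j\rangle$ and note that whenever the body triple $(x,r_1,y)$ is valid (so $\mathbf{T}_i$ is a well-scored atomic fact), the transformation lands exactly on the head triple $(x,r_2,y)$. Because the argument holds for any $x,y\in\mathcal{E}$, this realizes the universally quantified pattern $\forall x,y\,(\langle x,r_1,y\rangle\stackrel{\widehat r}{\rightarrow}\langle x,r_2,y\rangle)$ regardless of the specific hypercomplex number system chosen, since the proof only uses the identity and annihilator properties of $\mathbf{1}$ and $\mathbf{0}$.

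The main obstacle will be showing independence from the hypercomplex algebra: the equation $\mathbf{r}_1\otimes \mathbf{R}_{22}=\mathbf{r}_2$ must admit a solution $\mathbf{R}_{22}$ for arbitrary pairs $(\mathbf{r}_1,\mathbf{r}_2)$. For (spherical) quaternions and split-quaternions this follows from the existence of multiplicative inverses (so $\mathbf{R}_{22}=\mathbf{r}_1^{-1}\otimes \mathbf{r}_2$), but in hyperbolic quaternions the algebra is non-associative, so I would need a brief remark that for the pointwise (per-block) 4D rotational parameterization used by FactE the required $\mathbf{R}_{22}$ can still be constructed (e.g., by treating each 4D slice separately and solving the defining equation component-wise). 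A small caveat on the normalization used for rotational hypercomplex vectors would also need to be addressed, but otherwise no heavy computation is required.
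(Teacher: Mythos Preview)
Your approach is correct and essentially identical to the paper's: the paper's ``proof'' for this proposition consists solely of exhibiting the block-diagonal matrix together with the constraint $\mathbf{r}_1\otimes\mathbf{R}_{22}=\mathbf{r}_2$, leaving the column-by-column verification implicit; you have simply made that verification explicit. Your additional remarks on solvability of $\mathbf{r}_1\otimes\mathbf{R}_{22}=\mathbf{r}_2$ across the three hypercomplex systems and on normalization go beyond what the paper addresses, so they are extra care rather than a required part of the argument.
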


\begin{proposition}
    Relational inverse implication can be inferred by setting
    \begin{equation}
        \mathbf{R} = 
         \left[ 
              {\begin{array}{ccc}
               \mathbf{0} & \mathbf{0} & \mathbf{1}\\
               \mathbf{0} & \mathbf{R}_{22} & \mathbf{0}\\
               \mathbf{1} & \mathbf{0} & \mathbf{0}\\
              \end{array} } 
        \right]
    \end{equation}  
    where $\mathbf{r}_1 \otimes \mathbf{R}_{22} = \mathbf{r}_2$.
\end{proposition}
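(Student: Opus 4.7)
The plan is to verify directly that the given matrix $\mathbf{R}$ realizes the rewriting of a body triple $(x, r_1, y)$ into the head triple $(y, r_2, x)$ required by R-Inv-implication, by computing the matrix-like Hamilton product $\mathbf{T}_i \otimes_{3\times 3} \mathbf{R}$ column by column. Concretely, I will set $\mathbf{T}_i = [\mathbf{x}, \mathbf{r}_1, \mathbf{y}]$ and apply the formula for $\otimes_{3\times 3}$ from the embedding definition, so that each column of $\mathbf{T}_i'$ becomes a sum of three Hamilton products against the corresponding column of $\mathbf{R}$.

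Concretely, I would first compute the three columns of $\mathbf{T}_i'$: the first column reduces to $\mathbf{x}\otimes\mathbf{0} + \mathbf{r}_1\otimes\mathbf{0} + \mathbf{y}\otimes\mathbf{1} = \mathbf{y}$ thanks to the $(3,1)$-entry $\mathbf{1}$; the third column analogously reduces to $\mathbf{x}$ thanks to the $(1,3)$-entry $\mathbf{1}$; and the middle column collapses to $\mathbf{r}_1\otimes\mathbf{R}_{22}$ because only the $(2,2)$-entry of $\mathbf{R}$ is nonzero. Matching this output against the desired head-triple embedding $\mathbf{T}_j=[\mathbf{y},\mathbf{r}_2,\mathbf{x}]$ yields the single scalar constraint $\mathbf{r}_1 \otimes \mathbf{R}_{22} = \mathbf{r}_2$, which is precisely the condition stated in the proposition. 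The existence of a valid $\mathbf{R}_{22}$ is not an obstacle: in each of the hypercomplex systems introduced in the chapter (quaternions, hyperbolic quaternions, split quaternions), one can generically take $\mathbf{R}_{22} = \mathbf{r}_1^{-1}\otimes \mathbf{r}_2$.

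The subtle step, and the one I expect to require the most care, is arguing that the constructed $\mathbf{R}$ enforces the pattern uniformly over all $(x,y)$, as required by the universal quantifier in R-Inv-implication. This amounts to observing that $\mathbf{R}$ depends only on the nested relation $\widehat{r}$ and the relation pair $(r_1, r_2)$, and not on any entity embedding, so the same transformation sends every body embedding $[\mathbf{x},\mathbf{r}_1,\mathbf{y}]$ to $[\mathbf{y},\mathbf{r}_2,\mathbf{x}]$ simultaneously. Finally, to conclude, I would invoke the scoring function $\rho(T_i,\widehat{r},T_j) = \langle \mathbf{T}_i',\mathbf{T}_j\rangle$ and note that once $\mathbf{T}_i' = \mathbf{T}_j$ exactly, the inner product is maximized, so the model assigns the nested fact its intended truth value whenever the pattern holds, completing the inference argument.
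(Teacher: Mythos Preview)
Your proposal is correct and follows essentially the same approach as the paper: the paper's appendix simply exhibits the anti-diagonal matrix configuration as a ``special solution'' and leaves the column-by-column verification implicit, which is exactly the computation you carry out. Your write-up in fact supplies more detail than the paper does (the explicit column reductions, the universality-over-$(x,y)$ remark, and the scoring-function conclusion), but the underlying argument is identical.
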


\begin{proposition}
    Entity implication can be inferred by setting
    \begin{equation}
        \mathbf{R} = 
         \left[ 
              {\begin{array}{ccc}
               \mathbf{R}_{11} & \mathbf{R}_{12} & \mathbf{0}\\
               \mathbf{R}_{21} & \mathbf{R}_{21} & \mathbf{0}\\
               \mathbf{0} & \mathbf{0} & \mathbf{1}\\
              \end{array} } 
        \right]
    \end{equation}  
    where   
    \begin{align}
    \mathbf{x}_1 \otimes \mathbf{R}_{11} + \mathbf{r} \otimes  \mathbf{R}_{21} = \mathbf{x}_2 \\
    \mathbf{x}_1 \otimes \mathbf{R}_{12} + \mathbf{r} \otimes  \mathbf{R}_{22} = \mathbf{r}
      \end{align}
    or 
    
    \begin{equation}
        \mathbf{R} = 
         \left[ 
              {\begin{array}{ccc}
               \mathbf{1} & \mathbf{0} & \mathbf{0}\\
               \mathbf{0} & \mathbf{R}_{22} & \mathbf{R}_{23}\\
               \mathbf{0} & \mathbf{R}_{32} & \mathbf{R}_{33}\\
              \end{array} } 
        \right]
    \end{equation}  
    where 
    \begin{align}
        \mathbf{r} \otimes  \mathbf{R}_{32} + \mathbf{y}_1 \otimes  \mathbf{R}_{33} = \mathbf{y}_2\\
        \mathbf{r} \otimes  \mathbf{R}_{22} + \mathbf{y}_1 \otimes \mathbf{R}_{32}  = \mathbf{r}
    \end{align}
    
\end{proposition}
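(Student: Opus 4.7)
The plan is to directly expand the matrix-like Hamilton product $\mathbf{T}_1 \otimes_{3\times 3} \mathbf{R}$ using its definition from the main body and to verify that, under either block form of $\mathbf{R}$, the resulting triple equals $\mathbf{T}_2$ for every choice of the free entity variable. Since the three columns of the product decouple (each column of $\mathbf{T}_1 \otimes \mathbf{R}$ is a sum $\mathbf{h}\otimes \mathbf{R}_{1j}+\mathbf{r}\otimes \mathbf{R}_{2j}+\mathbf{t}\otimes \mathbf{R}_{3j}$), the verification reduces to three independent equalities per case.

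For the first case (head entity implication, with free variable $\mathbf{y}$), I would write $\mathbf{T}_1=[\mathbf{x}_1,\mathbf{r},\mathbf{y}]$ and observe that the zeros in the third row and third column of $\mathbf{R}$ (with $\mathbf{R}_{33}=\mathbf{1}$) make the third column of $\mathbf{T}_1\otimes \mathbf{R}$ collapse to $\mathbf{y}\otimes\mathbf{1}=\mathbf{y}$, and eliminate any appearance of $\mathbf{y}$ in the first two columns. The implication $\langle x_1,r,y\rangle \stackrel{\widehat{r}}{\rightarrow}\langle x_2,r,y\rangle$ for \emph{all} $\mathbf{y}$ is thereby reduced to two purely $(\mathbf{x}_1,\mathbf{r})$-linear equalities, and these are precisely the two displayed constraints $\mathbf{x}_1\otimes \mathbf{R}_{11}+\mathbf{r}\otimes \mathbf{R}_{21}=\mathbf{x}_2$ and $\mathbf{x}_1\otimes \mathbf{R}_{12}+\mathbf{r}\otimes \mathbf{R}_{22}=\mathbf{r}$ (the apparent duplication of $\mathbf{R}_{21}$ in the matrix is a typo for $\mathbf{R}_{22}$).

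For the second case (tail entity implication, with free variable $\mathbf{x}$), the argument is symmetric: writing $\mathbf{T}_1=[\mathbf{x},\mathbf{r},\mathbf{y}_1]$, the block form now has zeros throughout the first row and first column except for $\mathbf{R}_{11}=\mathbf{1}$, so the first column of $\mathbf{T}_1\otimes \mathbf{R}$ becomes $\mathbf{x}\otimes \mathbf{1}=\mathbf{x}$ and $\mathbf{x}$ drops out of the remaining columns. The universal implication over $\mathbf{x}$ then reduces to the two $(\mathbf{r},\mathbf{y}_1)$-linear conditions on $\mathbf{R}_{22},\mathbf{R}_{23},\mathbf{R}_{32},\mathbf{R}_{33}$ that guarantee preservation of $\mathbf{r}$ in the second column and mapping of $\mathbf{y}_1$ to $\mathbf{y}_2$ in the third.

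There is essentially no hard step here: the Hamilton product is $\mathbb{R}$-bilinear componentwise, so once the zero pattern kills the coupling between the free variable and the other two slots, the "for all $\mathbf{y}$" (or "for all $\mathbf{x}$") quantifier collapses to the identity condition on the decoupled slot and a solvability condition on the remaining $2\times 2$ block. The only mildly delicate point is to confirm that a solution of the displayed constraint system actually exists for generic embeddings $\mathbf{x}_1,\mathbf{x}_2,\mathbf{r}$ (respectively $\mathbf{y}_1,\mathbf{y}_2,\mathbf{r}$); this follows because the system has four hypercomplex unknowns ($\mathbf{R}_{11},\mathbf{R}_{12},\mathbf{R}_{21},\mathbf{R}_{22}$) constrained by two equations, leaving ample degrees of freedom in each of the hypercomplex systems $\mathcal{Q},\mathcal{H},\mathcal{S}$ considered in the paper.
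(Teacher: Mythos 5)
Your proof is correct and takes essentially the same approach as the paper: the paper simply exhibits these block matrices as ``special solutions,'' and your column-by-column expansion of the matrix Hamilton product (the zero pattern decoupling the free entity slot, so the universal quantifier collapses to the identity condition on that slot plus the two displayed constraints on the remaining $2\times2$ block) is exactly the verification the paper leaves implicit. You also correctly diagnose the typo in the $(2,2)$ entry of the first matrix ($\mathbf{R}_{21}$ should be $\mathbf{R}_{22}$); note there is a matching one in the second case, where the third-column constraint should read $\mathbf{r}\otimes\mathbf{R}_{23}+\mathbf{y}_1\otimes\mathbf{R}_{33}=\mathbf{y}_2$ rather than $\mathbf{r}\otimes\mathbf{R}_{32}+\mathbf{y}_1\otimes\mathbf{R}_{33}=\mathbf{y}_2$.
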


\begin{proposition}
    Entity inverse implication can be inferred by setting
    \begin{equation}
        \mathbf{R} = 
         \left[ 
              {\begin{array}{ccc}
               \mathbf{0} & \mathbf{0} & \mathbf{R}_{13}\\
               \mathbf{0} & \mathbf{1} & \mathbf{0}\\
               \mathbf{R}_{31} & \mathbf{0} & \mathbf{1}\\
              \end{array} } 
        \right]
    \end{equation}  
    where $\mathbf{x}_1 \otimes \mathbf{R}_{31} = \mathbf{x}_2$. Or,
    \begin{equation}
        \mathbf{R} = 
         \left[ 
              {\begin{array}{ccc}
               \mathbf{0} & \mathbf{0} & \mathbf{R}_{13}\\
               \mathbf{0} & \mathbf{1} & \mathbf{0}\\
               \mathbf{R}_{31} & \mathbf{0} & \mathbf{0}\\
              \end{array} } 
        \right]
    \end{equation}  
    where $\mathbf{y}_1 \otimes \mathbf{R}_{33} = \mathbf{y}_2$.
\end{proposition}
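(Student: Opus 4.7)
The plan is to verify the proposition by explicit substitution of the specified $3\times 3$ matrices into the matrix-like Hamilton product $\otimes_{3\times 3}$, matching the transformed source triple to the tail triple of the pattern, and then invoking the inner-product scoring function $\rho(T_i,\widehat{r},T_j)=\langle \mathbf{T}_i',\mathbf{T}_j\rangle$. The pattern to establish is of the form $\forall y\colon \langle x_1,r,y\rangle \stackrel{\widehat{r}}{\rightarrow} \langle y,r,x_2\rangle$ (and symmetrically for the $y_1\mapsto y_2$ case), so the key requirement is that plugging the source triple $\mathbf{T}_i=[\mathbf{x}_1,\mathbf{r},\mathbf{y}]$ into the transformation yields something proportional to the target $[\mathbf{y},\mathbf{r},\mathbf{x}_2]$ for every $\mathbf{y}$.

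First, I would expand $\mathbf{T}_i'=\mathbf{T}_i\otimes_{3\times 3}\mathbf{R}$ componentwise using the definition given in the excerpt, namely $(\mathbf{T}_i')_k=\sum_{j}(\mathbf{T}_i)_j\otimes\mathbf{R}_{jk}$ with $\otimes$ the Hamilton product in the chosen hypercomplex system. For the first matrix, only $\mathbf{R}_{13}$, $\mathbf{R}_{22}=\mathbf{1}$, $\mathbf{R}_{31}$, and $\mathbf{R}_{33}=\mathbf{1}$ are non-trivial, so the transformed row collapses to
\begin{equation}
\mathbf{T}_i' \;=\; \bigl[\,\mathbf{t}_i\otimes\mathbf{R}_{31},\; \mathbf{r}_i,\; \mathbf{h}_i\otimes\mathbf{R}_{13}+\mathbf{t}_i\,\bigr].
\end{equation}
Second, I would substitute $\mathbf{h}_i=\mathbf{x}_1$, $\mathbf{r}_i=\mathbf{r}$, $\mathbf{t}_i=\mathbf{y}$, apply the condition $\mathbf{x}_1\otimes\mathbf{R}_{31}=\mathbf{x}_2$ (interpreting $\mathbf{R}_{31}$ as the operator that rewrites the head-of-source slot into the tail-of-target slot via the anti-diagonal coupling of positions $1$ and $3$), and compute the inner product against the target triple $\mathbf{T}_j=[\mathbf{y},\mathbf{r},\mathbf{x}_2]$. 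The first slot contributes $\langle \mathbf{y}\otimes\mathbf{R}_{31},\mathbf{y}\rangle$, the middle slot contributes $\langle \mathbf{r},\mathbf{r}\rangle$, and the third slot contributes terms involving $\mathbf{R}_{13}$. The universal quantifier over $\mathbf{y}$ forces the $\mathbf{y}$-dependent residue to contribute uniformly, which is achievable by a canonical degenerate choice of the free parameter $\mathbf{R}_{13}$. The second matrix case is handled symmetrically: the transformation now produces a row in which the third slot is $\mathbf{y}_1\otimes\mathbf{R}_{33}$, and the condition $\mathbf{y}_1\otimes\mathbf{R}_{33}=\mathbf{y}_2$ guarantees that this slot aligns with the tail of the target triple $[\mathbf{x},\mathbf{r},\mathbf{y}_2]$.

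The main obstacle will be handling the remaining free parameter $\mathbf{R}_{13}$ in the first case (respectively $\mathbf{R}_{31}$ in the second) and the universally quantified $\mathbf{y}$ (respectively $\mathbf{x}$). The transformed row carries a residual term of the form $\mathbf{x}_1\otimes\mathbf{R}_{13}+\mathbf{y}$ in the third slot, which must be made compatible with the target tail $\mathbf{x}_2$ for \emph{every} possible $\mathbf{y}\in\mathcal{E}$. My plan is to exhibit the canonical solution $\mathbf{R}_{13}=\mathbf{0}$ (a legitimate degenerate choice as in the preceding propositions where entries are fixed to be zero- or one-valued hypercomplex numbers), for which the tail slot reduces to $\mathbf{y}$, after which a standard sign/orientation argument in the 4D hypercomplex algebra shows that the inner product $\langle \mathbf{T}_i',\mathbf{T}_j\rangle$ is maximized precisely on triples consistent with the pattern. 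Since all preceding propositions in the excerpt are proved by a similar "exhibit a special parameterization and compute" strategy, this establishes inferability in the same sense used throughout Section~\ref{sec:theory}.
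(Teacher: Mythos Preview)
Your expansion $\mathbf{T}_i'=[\mathbf{t}_i\otimes\mathbf{R}_{31},\ \mathbf{r}_i,\ \mathbf{h}_i\otimes\mathbf{R}_{13}+\mathbf{t}_i]$ is correct, but the subsequent matching argument does not go through. Note first that the paper supplies no separate proof here: the appendix establishes the umbrella proposition merely ``by showing the following special solutions for each case of pattern,'' and this statement \emph{is} one of those exhibited solutions, offered without any accompanying verification. So the paper's own argument is just the parameter display, not a computation.

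Your attempted verification contains a concrete gap. By your own expansion, $\mathbf{R}_{31}$ acts on the \emph{tail} $\mathbf{t}_i=\mathbf{y}$ of the source and feeds the \emph{head} slot of the output; your parenthetical reinterpretation (``$\mathbf{R}_{31}$ rewrites the head-of-source slot into the tail-of-target slot'') is backwards, and consequently the stated side condition $\mathbf{x}_1\otimes\mathbf{R}_{31}=\mathbf{x}_2$ never appears in your transformed row. More seriously, the first matrix has $\mathbf{R}_{33}=\mathbf{1}$, so the third output slot is $\mathbf{x}_1\otimes\mathbf{R}_{13}+\mathbf{y}$; your ``canonical'' choice $\mathbf{R}_{13}=\mathbf{0}$ reduces this to $\mathbf{y}$, not to $\mathbf{x}_2$, and the resulting row $[\mathbf{y}\otimes\mathbf{R}_{31},\mathbf{r},\mathbf{y}]$ cannot be aligned with the target $[\mathbf{y},\mathbf{r},\mathbf{x}_2]$ for all $\mathbf{y}$ by any remaining free parameter. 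The same pathology hits the second case: there $\mathbf{R}_{33}=\mathbf{0}$, yet the side condition is phrased in terms of $\mathbf{R}_{33}$. These inconsistencies indicate that the proposition as printed carries index typos (the roles of $\mathbf{R}_{13}$ and $\mathbf{R}_{31}$, and the entry at position $(3,3)$, are not compatible with the pattern you identified); a configuration that actually realises $\langle x_1,r,y\rangle\stackrel{\widehat r}{\rightarrow}\langle y,r,x_2\rangle$ needs $\mathbf{R}_{31}=\mathbf{1}$, $\mathbf{R}_{33}=\mathbf{0}$, and the condition $\mathbf{x}_1\otimes\mathbf{R}_{13}=\mathbf{x}_2$. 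Your plan glosses over this index mismatch rather than resolving it, so the argument as written would fail.
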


\renewcommand{\chapter}{\OOOchapter}
\renewcommand{\section}{\OOOsection}

\newgeometry{outer=2.8cm, inner=3.7cm, top=68pt}
\KOMAoptions{headwidth=412pt}

\cleardoublepage
\phantomsection
\addcontentsline{toc}{chapter}{\tocEntry{\nomname}}
\printnomenclature
\label{nomenclature}




\cleardoublepage
\begingroup
  \phantomsection
  \addcontentsline{toc}{chapter}{\tocEntry{\listfigurename}}
  \listoffigures
\endgroup

\cleardoublepage

\begingroup
  \phantomsection
  \addcontentsline{toc}{chapter}{\tocEntry{\listtablename}}
  \listoftables
\endgroup

\cleardoublepage
\defbibheading{bibintoc}[\bibname]{%
  \phantomsection
  \manualmark
  \markboth{\spacedlowsmallcaps{#1}}{\spacedlowsmallcaps{#1}}%
  \addcontentsline{toc}{chapter}{\tocEntry{#1}}%
  \chapter*{#1}%
}
\printbibliography[heading=bibintoc]

\cleardoublepage
\pdfbookmark[0]{Declaration}{declaration}
\chapter*{Declaration}
\thispagestyle{empty}

\textbf{Erklärung über die Eigenständigkeit der Dissertation}

Ich versichere, dass ich die vorliegende Arbeit mit dem Titel 
\glqq Geometric relational embeddings\grqq{} selbständig verfasst und keine anderen als die angegebenen Quellen und Hilfsmittel benutzt habe; aus fremden Quellen entnommene Passagen und Gedanken sind als solche kenntlich gemacht.

\bigskip
\textbf{Declaration of authorship}

I hereby certify that the dissertation entitled
``Geometric relational embeddings''
is entirely my own work except where otherwise indicated. Passages and ideas from other sources have been clearly indicated. 

\bigskip

\noindent\textit{Stuttgart, \submissiondate}

\smallskip

\begin{flushright}
    \begin{tabular}{m{5cm}}
        \\ \hline
        \centering Bo Xiong \\
    \end{tabular}
\end{flushright}

\end{document}